\documentclass[10pt]{article}
\input{macros}
\usepackage{mathrsfs}
\usepackage{todonotes}

\allowdisplaybreaks

\definecolor{wjs}{RGB}{200,0,50}

\begin{document}

\title{When Does Model Collapse Occur in Structured Interactive Learning?
}
\author{%
	Yuchen Wu\thanks{The first two authors contribute equally to this work.}\,\,\thanks{School of Operations Research and Information Engineering,  Cornell University; \,\, Email: \href{mailto:yuchen.wu@cornell.edu}{\texttt{yuchen.wu@cornell.edu}}. } \and
 $\mbox{Kangjie Zhou}^{\ast}$\thanks{Department of Statistics, Columbia University; \,\, Email: \href{mailto:kz2326@columbia.edu}{\texttt{kz2326@columbia.edu}}. }  \and
Weijie Su\thanks{Department of Statistics and Data Science, University of Pennsylvania;  \,\, Email: {\href{mailto:suw@wharton.upenn.edu}{\texttt{suw@wharton.upenn.edu}}}. } 
} 
\date{\today}
\maketitle

\begin{abstract}

The proliferation of generative artificial intelligence has given rise to an \emph{interactive learning} environment, where model parameters are continuously updated using not only data generated by natural processes, but also synthetic outputs produced by other models.
This paradigm introduces two major challenges: (1) training data are no longer drawn exclusively from the target population, undermining a core assumption of classical statistical learning, and (2) model training processes become inherently correlated, as models interact with one another through repeated exposure to each other's synthetic outputs in a potentially complex manner. 
Establishing reliable statistical inference in such structured interactive learning environments therefore remains an important open problem.
In particular, there is growing concern about \emph{model collapse}, a phenomenon in which the performance of generative models progressively degrades as they are trained on synthetic data produced by earlier model generations.

Prior work on model collapse primarily focuses on a single model trained on its own output, failing to capture model performance in multi-model interactive settings. 
In this work, we fill this gap by investigating the performance of generative models in an interactive learning environment with general interaction patterns. 
In particular, we formalize model interactions using directed graphs and show that the occurrence of model collapse depends critically on the topology of the interaction graph. We further derive an explicit necessary and sufficient condition characterizing when model collapse occurs, and establish finite-sample results for linear regression and asymptotic guarantees for general M-estimators.
We support our theoretical findings through extensive numerical experiments.
\end{abstract}

\tableofcontents

\section{Introduction}\label{sec:intro}

Generative modeling approaches are a powerful class of machine learning methods with broad applications \citep{yang2023diffusion,openai2023gpt4,gemini2023,abramson2024accurate}, 
driving numerous innovations across various fields such as image and text generation, drug discovery, and healthcare. Traditionally, generative models are trained on samples drawn from a target distribution. Given independent and identically distributed (i.i.d.) observations from this distribution, a probabilistic model is learned to approximate the underlying ground-truth distribution and subsequently generate new samples.

The classical pipeline for training generative models has been extensively studied and is relatively well understood. However, its core assumptions may no longer hold in the modern data science era, where large-scale generative artificial intelligence (AI), such as GPT \citep{openai2023gpt4}, Gemini \citep{gemini2023}, DALL-E \citep{betker2023improving} and Stable Diffusion \citep{podellsdxl}, have revolutionized how data are generated and used. 
Generative models are increasingly being trained on not only data produced by human or natural processes,
but also on
\emph{synthetic data}. 
We use several examples below to illustrate this trend.


\paragraph{Evolution of online data ecosystem.}
Generative AI has become deeply integrated into our daily life. As a consequence, the internet is increasingly saturated with synthetic content that is indistinguishable from real data, and future training on data scraped from the web will inevitably be influenced by AI-generated data.
Indeed, emerging evidence suggests that synthetic data has not only contaminated internet content \citep{sun2025we,spennemann2025delving}, but has also infiltrated benchmark datasets \citep{alemohammad2024selfconsuming} and even human annotations produced via crowdsourcing \citep{veselovsky2023artificial}.

\paragraph{Accessing real data can be difficult.}
In certain applications, data generated by natural processes can be costly or hard to obtain. 
In such settings, synthetic data produced by generative AI offers a practical solution to mitigate data scarcity.
For example,
synthetic Electronic Health Record (EHR) data has been employed in scenarios where the original dataset is incomplete or inaccessible due to privacy concerns \citep{pmlr-v68-choi17a,rankin2020reliability,vaid2023implications}. 
Synthetic data also plays a central role in the development of Large Language Models (LLMs), enabling significant self-improvement through LLM-generated data \citep{huang2023large, lee2024rlaif}.
Beyond these settings, synthetic data has been widely adopted across diverse domains, including computer vision, finance, education, and agriculture. 
We refer the reader to \cite{lu2023machine} for a comprehensive review.

\paragraph{Knowledge distillation.}
Generative AI has demonstrated remarkable potential across a wide range of applications, yet its real-world deployment may be constrained by substantial computational demands. 
In particular, the high inference costs of these models can hinder adoption, especially in resource-constrained environments (e.g., mobile devices and edge computing platforms).
Knowledge distillation offers a promising solution to address this challenge \citep{hinton2015distilling,xu2024survey}. 
Specifically, the distillation process starts with a large ``teacher" model, which typically has high capacity but can be expensive to deploy.
During training, a smaller and more efficient ``student" model is trained on the outputs of the teacher model. 
The goal is to build a model with fewer parameters and reduced inference-time costs, while maintaining performance that is comparable to the teacher model on selected tasks.

\paragraph{}
Another feature of this synthetic learning regime is that the trained generative models are deployed to produce additional synthetic data, which may re-enter the training pool and be used to train subsequent generations of models. 
For example, a significant fraction of web-scraped data is now AI-generated, and such content may be incorporated into the training pipelines of future LLMs \citep{grattafiori2024llama,achiam2023gpt,team2024gemini}. 
This process naturally induces \emph{interdependent training cycles} whose dynamics differ significantly from those of classical regimes.
We refer to Figure \ref{fig:hybrid} for an illustration of this process. 
In particular, there is growing concern that continuously incorporating synthetic data into training may lead to \emph{model collapse} \citep{Shumailov2024AIMC}, a phenomenon in which model performance degrades progressively with each training cycle until it becomes useless\footnote{The definition of model collapse varies across contexts, see \cite{schaeffer2025position} for a comprehensive discussion. In this work, we define model collapse as realized risk being significantly larger than the risk achieved when training on the same amount of naturally generated data. See \cref{eq:risk-ratio} for a precise formulation. }.
Model collapse has been empirically observed in various applications \citep{2023Will,Arcaute2023CombiningGA,Martnez2023TowardsUT,2023Nepotistically,2024The,alemohammad2024selfconsuming,bertrand2024on},
and has been rigorously characterized for a number of standard machine learning methods \citep{Shumailov2024AIMC,kazdan2025collapse,dohmatob2024model,feng2025beyond}.

\begin{figure}[!ht]
    \centering
    \vspace{0.7em}\includegraphics[width=0.75\linewidth]{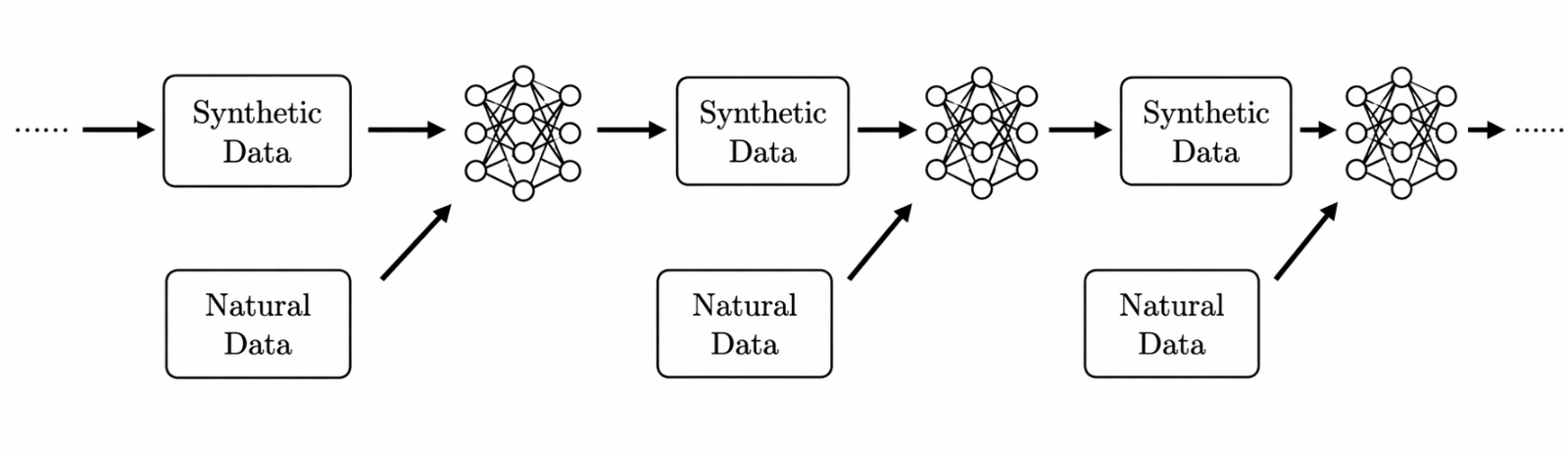}
    \caption{Illustration of the interdependent training cycle. In each iteration, models are updated using a mixture of synthetic and real data. Note that the synthetic data used for training may be generated by models from earlier generations, and not necessarily by the immediately preceding model. }
    \label{fig:hybrid}
\end{figure}

\subsection{Learning with self-generated content}
\label{sec:self_learning}

Theoretical guarantees for learning from natural or human-generated data abound  \citep{van2000asymptotic}, whereas comparatively fewer rigorous results exist for learning from synthetic data.
Prior work in this direction has largely focused on settings in which a single model is recursively trained on natural data and  \emph{its own outputs} \citep{alemohammad2024selfconsuming,Shumailov2024AIMC,Martnez2023TowardsUT,Arcaute2023CombiningGA,dohmatob2024model,dohmatob2024a,gerstgrasser2024is,bertrand2024on,feng2025beyond,kazdan2025collapse,he2025golden,garg2025preventing}.
Within this framework, one line of research examines the extreme setting where the training data at each iteration are entirely generated by the model from the previous iteration \citep{alemohammad2024selfconsuming,Shumailov2024AIMC,shumailov2023curse}.
This regime is known to suffer from severe model collapse. 
Another line of work considers a more practically relevant setting, where the model is updated at each training cycle using a mixture of clean and synthetic data. 
This hybrid training regime has been theoretically shown to mitigate model collapse for a broad class of learning methods, 
covering linear models \citep{gerstgrasser2024is,dohmatob2025strong,jain2024scaling}, Gaussian models \citep{kazdan2025collapse,he2025golden}, 
kernel methods \citep{jain2024scaling,kazdan2025collapse}, generalized linear models \citep{dey2024universality,he2025golden}, maximum likelihood estimation \citep{barzilai2025mle}, and softmax classifiers \citep{seddik2024how}. 
At a high level, model collapse tends to occur when the training process lacks sufficient access to natural data. 

\subsection{A structured interactive learning environment}

The papers discussed in Section \ref{sec:self_learning} focus primarily on the consequences of model training on a mixture of self-generated outputs and natural data, whereas relatively few works consider the more realistic setting in which models are trained on outputs generated also by other models.
Such \emph{interactive learning} regime captures a much broader class of practical applications. 
For instance, web-scale data generally consists of a mixture of real content and synthetic outputs produced by different generative AI models, as users may employ various AI tools to accomplish their tasks.
In knowledge distillation, the outputs of a teacher model are used to train a separate student model, whose generated content may subsequently enter the web and be used to update the teacher model.

In addition, in many real-world settings, interactions are structured and selective rather than all-to-all. 
For instance, in knowledge distillation, a student model learns exclusively from a preselected set of teacher models. 
In clinical applications, EHR foundation models trained on synthetic data typically rely on domain-specific synthetic EHR data rather than general web data. 
Moreover, language, regional, and licensing constraints may further restrict the data accessible to different AI developers.
The complexity of real-world model interaction highlights the need for a generic theoretical framework to capture the learning dynamics of modern generative AI systems, thereby raising the following important question:
\begin{center}
\emph{How to assess model performance in a structured interactive learning environment, where models are repeatedly trained on each other's synthetic outputs? }
\end{center}


\subsection{Main contributions}

In this paper, we develop a general framework for analyzing interactive learning under arbitrary interaction patterns. 
We summarize our main contributions below:

\paragraph{A structured interactive learning framework.} 
We introduce a general theoretical framework that models interactive learning among generative models using directed graphs, thereby enabling the representation of arbitrary interaction patterns between models.
As we show later, our framework is flexible enough to accommodate both synthetic and natural data sources.
Within this framework, we assume that models are updated using architectures similar to those of previous generations. 
For example, models in the GPT-1 through GPT-5 series descend from one another and partially inherit their predecessors' architectures \citep{radford2018improving,radford2019language,brown2020language,achiam2023gpt}. 

To the best of our knowledge, the only prior work on interactive learning is \cite{vu2025happens}, 
which studies linear regression under a dense interaction regime where every model learns from all others.
Their results suggest that interactive learning can improve model performance and homogenize model behaviors. 
Our framework generalizes their results by allowing for arbitrary model interaction patterns. 

\paragraph{Precise characterization of model collapse.} 
Under mild regularity conditions, we establish explicit necessary and sufficient conditions for model collapse (as defined in \cref{eq:risk-ratio}) in the interactive learning environment.
In particular, we show that model collapse depends critically on the topology of the interaction graph, i.e., whether a model receives information from ``unstable'' models that lack a path to ``stable'' data sources, such as natural data.
We present our findings formally in Section \ref{sec:setup}. 
Our results are consistent with prior work in single-model settings, where the occurrence of model collapse critically depends on the amount of natural data available to the model.

\paragraph{Comprehensive theoretical guarantee.} 
Our analysis covers a broad class of models used in modern machine learning. 
In particular, we provide a finite-sample characterization of model collapse under linear models, and an asymptotic characterization for general M-estimators learned via empirical risk minimization.
In both settings, we show that under mild assumptions, model collapse is governed by the topology of the interaction graph.
Our work provides theoretical insights into model behavior when multiple models co-evolve and the training datasets contain a mixture of natural data and synthetic data produced by different models.
A key takeaway of our results is that model error remains controllable as long as the interaction structure is sufficiently rich to ensure that every model receives information (directly or indirectly) from reliable data sources, regardless of the number of training iterations. 
This finding aligns with the empirical evidence that generative models can still be trained successfully even when the training data is fully contaminated with synthetic samples (for instance, in knowledge distillation, models are trained on entirely synthetic labels).


\subsection{Organization} 
The remainder of this paper is organized as follows. 
In \cref{sec:setup}, we introduce the interactive learning framework, using directed graphs to encode interactions between different models.
In \cref{sec:linear-regression}, we derive finite-sample results characterizing the occurrence of model collapse in the linear regression setting.
We derive in \cref{sec:M} asymptotic results for general M-estimators and illustrate them with concrete examples, such as generalized linear models (GLMs).
Finally, in \cref{sec:experiments}, we perform extensive numerical experiments to validate our theoretical findings across a range of models and network structures.  
Proofs of all main theorems and supporting lemmas are deferred to the appendices.

\subsection{Notation} 
For $n \in \N_+$, we let $[n] = \{1, 2, \cdots, n\}$. 
For two random objects $X$ and $Y$, we say $X \equiv Y$ if $\P(X = Y) = 1$, and write $X \perp\!\!\!\perp Y$ if $X$ is independent of $Y$. 
We use $1\{\cdot\}$ to represent indicator functions.
For $r \in \R$, we denote by $\lfloor r \rfloor $ the maximum integer that is no larger than $r$, and denote by $\lceil r \rceil$ the minimum integer that is no smaller than $r$. 
For $a, b \in \R$, we let $a \vee b = \max \{a, b\}$ and $a \wedge b = \min \{a, b\}$. 
For a matrix $A$, we denote by $\sigma_{\max}(A)$ the largest eigenvalue of $A$, and denote by $A^{\dagger}$ the pseudoinverse of $A$. 
For two symmetric matrices $A$ and $B$, we write $A \succeq B$ if $A - B$ is positive semidefinite, and $A \succ B$ if $A - B$ is positive definite. For matrices $A$ and $B$, we use $A \otimes B$ to represent the Kronecker product of $A$ and $B$.  We use $\norm{\cdot}_2$ to denote the Euclidean norm of a vector, and $\norm{\cdot}_{\op}$ and $\Tr (\cdot)$ to denote the operator norm and trace of a matrix, respectively.
For a set $S$ in Euclidean space, we denote its interior by $\operatorname{int} S$. 

\section{A directed graph framework for interactive learning}
\label{sec:setup}


We consider an interactive learning environment with $K$ models, where each model can learn from the outputs of other models.
Let the collection of models be $\cM = \{\mu_k: k \in [K]\}$. We represent their interactions using a directed graph $\cG = (\cM, \cE)$, 
where the nodes correspond to the models, 
and the set of directed edges $\cE$ encode their interactions.
For $\mu, \nu \in \cM$, we use $(\nu, \mu)$ to represent a directed edge connecting the origin $\nu$ to the destination $\mu$. 
We say $(\nu, \mu) \in \cE$ if and only if model $\mu$ learns from the outputs of model $\nu$. 
Self-learning is captured by allowing self-loops $(\mu, \mu) \in \cE$ whenever model $\mu$ learns from its own generated data, thereby encompassing the setting of \cref{sec:self_learning} (see \cref{ex:self_replace} for more details). 
Our framework also accommodates natural data sources, which can be represented as nodes in $\cG$ without incoming edges. 
Throughout this paper, we assume that all models share the common objective of learning a target population $P_{\mathrm{pop}}$.
Figure \ref{fig:graph} presents an illustrative example of an interaction graph. 

\begin{figure}[!ht]
\vspace{0.25cm}
    \centering    \includegraphics[width=0.36\linewidth]{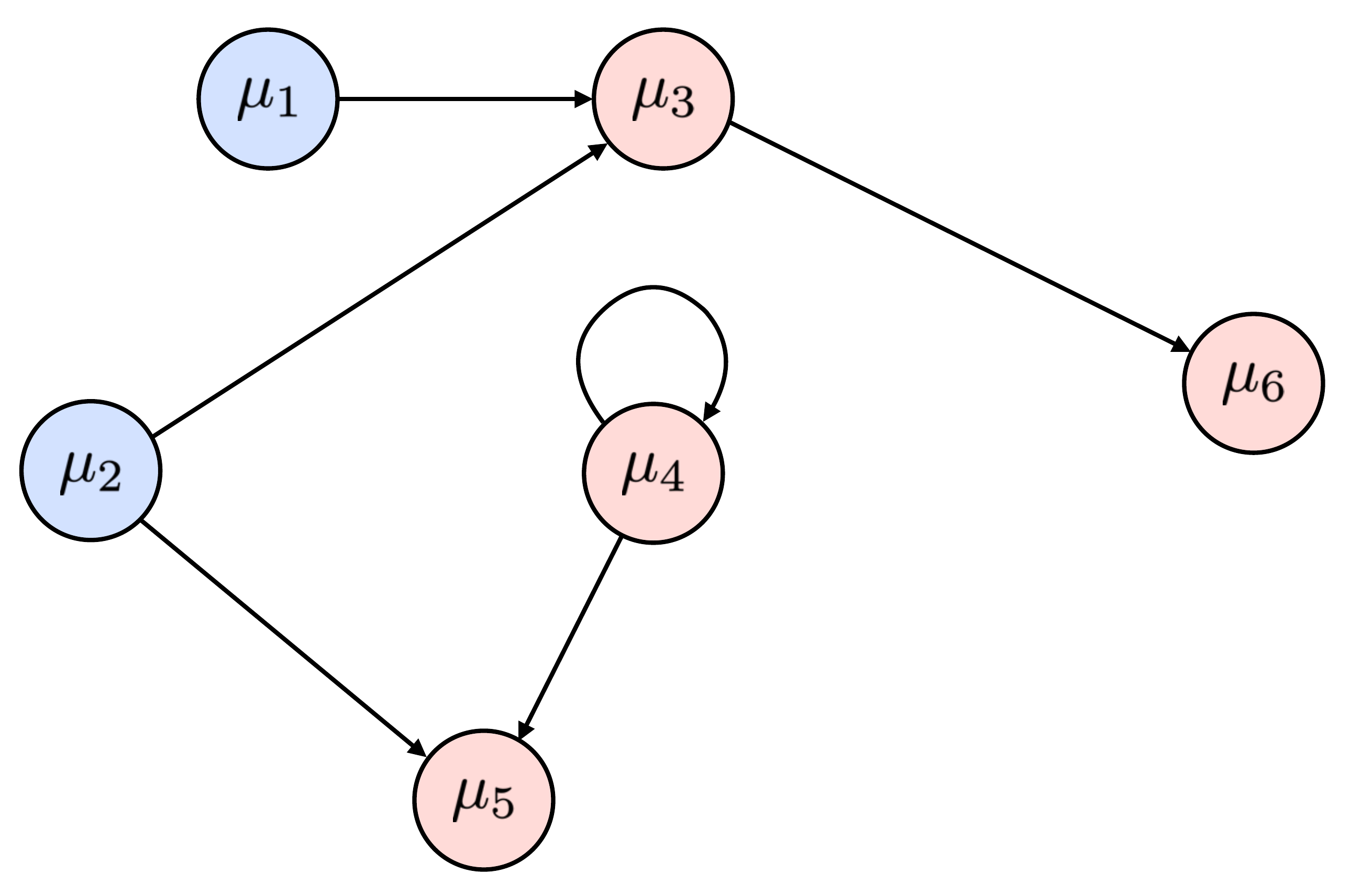}
    \vspace{0.25cm}
    \caption{An example of an interaction graph. In this example, $\cM = \{\mu_1, \mu_2, \mu_3, \mu_4, \mu_5, \mu_6\}$ and $\cE = \{(\mu_1, \mu_3), (\mu_2, \mu_3), (\mu_2, \mu_5),(\mu_3, \mu_6), (\mu_4, \mu_4), (\mu_4, \mu_5)\}$. 
    Nodes in $\cM_l$ are colored red, and nodes in $\cM_u$ are colored blue. 
    By definitions \eqref{eq:Ml-infty} and \eqref{eq:cM_cnc}, 
    we have $\cM_l^{\infty} = \{\mu_4\}$, $\cM_l^{\rm c} = \{\mu_4, \mu_5\}$, and $\cM_l^{\rm nc} = \{\mu_3, \mu_6\}$. 
    }
    \label{fig:graph}
\end{figure}

For $\mu \in \cM$, let $\Nin{\mu} = \{\nu: (\nu, \mu) \in \cE\}$ denote the set of models whose outputs are used to train $\mu$, and define $N_{\max} = \sup_{\mu \in \cM} |\Nin{\mu}|$. 
We set $\Nin{\mu} = \emptyset$ when $\mu$ does not update its parameters. 
We assume that interactions occur in rounds. 
Prior to any interactions, all models are initialized either using data drawn from the target distribution $P_{\mathrm{pop}}$ (corresponding to models trained on real data), or directly as $P_{\mathrm{pop}}$ itself, in which case the model represents a natural data source.
For $t \in \N$ and $\mu \in \cM$, we denote by $\hat P_{t, \mu}$ the learned distribution associated with model $\mu$ after the $t$-th interactive training cycle, where initialization is treated as the $0$-th cycle.
For each $t \in \N_+$, during the $t$-th interactive training cycle and for every directed edge $(\nu, \mu) \in \mathcal{E}$, model $\nu$ generates a dataset $\mathcal{D}_{t,\nu \to \mu}$ according to its current distribution $\hat P_{t-1,\nu}$, which is then passed to model $\mu$. 
Model $\mu$ then aggregates the datasets $\{\mathcal{D}_{t,\nu \to \mu} : \nu \in \Nin{\mu} \}$ to construct its updated distribution $\hat P_{t,\mu}$.
When $\Nin{\mu} = \emptyset$, model $\mu$ receives no incoming information and therefore does not update, and $\hat P_{t, \mu} = \hat P_{0, \mu}$ for all $t \in \N$. 

Depending on whether a model learns from other models, we partition $\cM$ into two disjoint subsets $\cM_l$ and $\cM_u$, where 
\begin{align}
\label{eq:MlMu}
\begin{split}
    & \cM_l = \left\{ \mu \in \cM: \, \Nin{\mu} \neq \emptyset \right\}, \\
    & \cM_u = \left\{ \mu \in \cM: \, \Nin{\mu} = \emptyset \right\}. 
\end{split}
\end{align}
For instance, for the interaction graph shown in Figure \ref{fig:graph}, 
we have $\cM_l = \{\mu_3, \mu_4, \mu_5, \mu_6\}$ and $\cM_u = \{\mu_1, \mu_2\}$. 
Models in $\cM_u$ do not update their parameters and can therefore be regarded as stable data sources. 
With these definitions, we summarize the interactive learning procedure as follows:
\begin{itemize}
    \item[-] \textbf{Initialization: } For $\mu \in \cM$, we either set $\hat P_{0, \mu} = P_{\mathrm{pop}}$ (in this case, $\mu \in \cM_u$ represents a natural data source), or learn $\hat P_{0, \mu}$ by training on a dataset $\cD_{0, \mu}$ drawn i.i.d. from $P_{\mathrm{pop}}$.
    We denote by $\cM_{\rm nature}$ the collection of natural data sources, and assume that $\cM_{\rm nature} \subseteq \cM_u$.  
    Namely, $\hat P_{0, \mu} = P_{\rm pop}$ for all $\mu \in \cM_{\rm nature} \subseteq \cM_u$, and $\hat P_{0, \mu}$ is learned from $\cD_{0, \mu}$ for all $\mu \in \cM \backslash \cM_{\rm nature}$. 
    \item[-] \textbf{Interactive training: }Recursively for $t \in \mathbb{N}_+$ and each directed edge $(\nu, \mu) \in \cE$, model $\nu$ generates a dataset $\cD_{t, \nu \to \mu}$ consisting of i.i.d. samples from $\hat P_{t - 1, \nu}$. 
    For each $\mu \in \cM_l$, we construct $\hat P_{t, \mu}$ by training on the aggregated dataset $\cup_{\nu \in \Nin{\mu}} \cD_{t, \nu \to \mu}$. 
    For all $\mu \in \cM_u$, we set $\hat P_{t, \mu} = \hat P_{t - 1, \mu}$. 
\end{itemize}

We study the evolution of $\hat P_{t, \mu}$ as $t \to \infty$, focusing on characterizing its distance to the target distribution $P_{\mathrm{pop}}$.
When $\hat P_{t,\mu}$ is trained on $n_{t,\mu}$ samples, a natural benchmark is the oracle estimator $\hat P^{\ast}_{t,\mu}$, obtained by applying the same training algorithm to $n_{t,\mu}$ i.i.d. samples drawn from $P_{\mathrm{pop}}$.
Specifically, 
we compare the risk $r_{t, \mu}$ associated with $\hat P_{t,\mu}$   
with the oracle risk $r_{t, \mu}^{\ast}$ associated with $\hat P^{\ast}_{t,\mu}$, where the precise definition of the risk depends on the specific context.  
We say that \emph{model collapse occurs for model $\mu$} if 
\begin{align}
\label{eq:risk-ratio}
    \limsup_{t \to \infty} \frac{r_{t, \mu}}{r_{t, \mu}^{\ast}} = \, \infty. 
\end{align}
As discussed in \cite{schaeffer2025position}, the definition of model collapse is context-dependent. In this work, we adopt the definition in \cref{eq:risk-ratio}. 

By construction, models in $\cM_u$ retain fixed estimates and therefore do not evolve over time.
For models in $\cM_l$, our results show that with reasonable risk choices, whether the risk ratio $r_{t,\mu} / r^{\ast}_{t,\mu}$ remains bounded as $t \to \infty$ 
depends critically on the structure of the interaction graph $\cG$.
%
To present our results, we define $\cM_l^{\infty}$ as the subset of nodes in $\cM_l$ that are not reachable from any node in $\mathcal M_u$:
\begin{align}
\label{eq:Ml-infty}
    \cM_l^{\infty} = \, \left\{\, \mu \in \cM_l:\, \forall \nu \in \cM_u, \mbox{ there is no directed path from } \nu \mbox{ to } \mu \, \right\}. 
\end{align}
Intuitively, $\mathcal M_l^{\infty}$ consists of the models in $\mathcal M_l$ that do not receive information from stable data sources. Taking the directed graph in Figure \ref{fig:graph} as an example, $\cM_l^{\infty} = \{\mu_4\}$ since there is no directed path from $\{ \mu_1, \mu_2 \}$ to $\mu_4$.
Our main results show that, for a model $\mu \in \cM_l$, whether it collapses or not is completely determined by the existence of a path from some model in $\mathcal M_l^{\infty}$ to $\mu$.
Partitioning $\mathcal M_l$ into two disjoint subsets, $\mathcal M_l^{\mathrm{c}}$ and $\mathcal M_l^{\mathrm{nc}}$:    
\begin{align}
\label{eq:cM_cnc}
\begin{split}
    & \cM_l^{\rm c}  = \,  \cM_l^{\infty} \cup \left\{ \,\mu \in \cM_l: \exists \nu \in \cM_l^{\infty},\, \mbox{such that there is a directed path from } \nu \mbox{ to } \mu \, \right\}, \\
    & \cM_l^{\rm nc} = \,  \left\{\, \mu \in \cM_l \backslash \cM_l^{\infty}: \forall \nu \in \cM_l^{\infty},\, \mbox{ there is no directed path from } \nu \mbox{ to } \mu \, \right\},
\end{split}
\end{align}
%
we establish that across a wide range of scenarios, models in $\cM_l^{\mathrm{c}}$ suffer from model collapse, whereas those in $\cM_l^{\mathrm{nc}}$ do not. For the network presented in Figure~\ref{fig:graph}, this means that $\mu_4, \mu_5 \in \cM_l^{\rm c}$ will collapse and $\mu_3, \mu_6 \in \cM_l^{\rm nc}$ will not collapse. We present a rigorous statement of these findings for linear regression in Section~\ref{sec:linear-regression}, followed by an extension to general M-estimators in Section~\ref{sec:M}. 

An important implication of our results is that, the set of models that will collapse in an interactive learning environment is uniquely determined by the graph structure, a property that is universal across many distinct learning scenarios. At a high level, one can think of $\mathcal{M}_l^{\infty}$ as the subset of models in $\mathcal{M}_l$ that do not receive information from stable data sources, causing their performance to vary widely. Consequently, models in $\mathcal{M}_l^{\mathrm{c}}$ are affected by these ``unstable'' data sources, leading to their collapse as the number of training cycles approaches infinity. Conversely, we demonstrate that models in $\mathcal{M}_l^{\mathrm{nc}}$ do not collapse because they exclusively receive information from ``stable'' sources. 
This dichotomy reveals the fundamental rationale behind model collapse in an interactive environment: it is not an inevitable consequence of learning from synthetic data, but rather hinges on contamination from unstable data sources.
For example, as shown in Figure~\ref{fig:graph}, model $\mu_6 \in \cM_l^{\mathrm{nc}}$ does not collapse despite training exclusively on synthetic data generated by $\mu_3$, since $\mu_3$ receives information only from stable data sources under our definition.

\subsection{Examples}
\label{sec:examples}


In this section, we present several illustrative examples of interaction graphs.

\begin{exm}[Complete replacement training regime]\label{ex:self_replace}
    In the complete replacement training regime, a single model is retrained at each cycle using data generated by itself with parameters from the previous round. The resulting training dynamics can thus be represented by a single-node interaction graph with $\cM = \{\mu\}$ and $\cE = \{(\mu, \mu)\}$.
    
\end{exm}


\begin{exm}[Accumulating training regime]

In this setting, a single model is retrained at each cycle using all data generated in previous rounds. 
Specifically, we denote by $\hat P_t$ the distribution estimate after training round $t$, with $\hat P_0$ representing the natural distribution. For $t \in \mathbb{N}_+$, the estimate $\hat P_t$ is obtained by training on the aggregated dataset $\cup_{i = 0}^{t - 1} \,\mathcal{D}_i$, where each $\mathcal{D}_i$ is sampled from $\hat P_i$.
We assume that training proceeds for a total of $T$ rounds.

We next show that this accumulating training regime can be represented by the interaction graph in Figure~\ref{fig:accumulate}, where there is a directed edge from $\mu_a$ to $\mu_b$ for any $a < b$.
To see this, note that $\mu_1$ in Figure \ref{fig:accumulate} may represent a natural data source, so that $\hat P_{s, \mu_1} = \hat P_0$ for all $s \in \mathbb{N}$. 
If we further assume that $\mu_1$ generates the same dataset $\mathcal{D}_0$ and passes it to all $\mu_i$ with $i \geq 2$ in every training cycle (indeed, $\mathcal{D}_0$ follows $\hat P_0 = \hat P_{s, \mu_1}$), then for all $s \geq 1$, we have $\hat P_{s, \mu_2} = \hat P_1$. 
We may therefore assume that $\mu_2$ generates the dataset $\mathcal{D}_1$ and passes it to all $\mu_i$ with $i \geq 3$ starting from round 2.  
Therefore, for all $s \geq 2$ we have $\hat P_{s, \mu_3} = \hat P_2$. 
Continuing this construction inductively, we conclude that for all $s \geq t - 1$ we have $\hat P_{s, \mu_t} = \hat P_{t - 1}$. This holds for all $t \in [T]$, hence capturing $\hat P_i$ for all $i \in \{0\} \cup [T - 1]$.



\end{exm}
    \begin{figure}[!ht]
        \centering
        \includegraphics[width=0.5\linewidth]{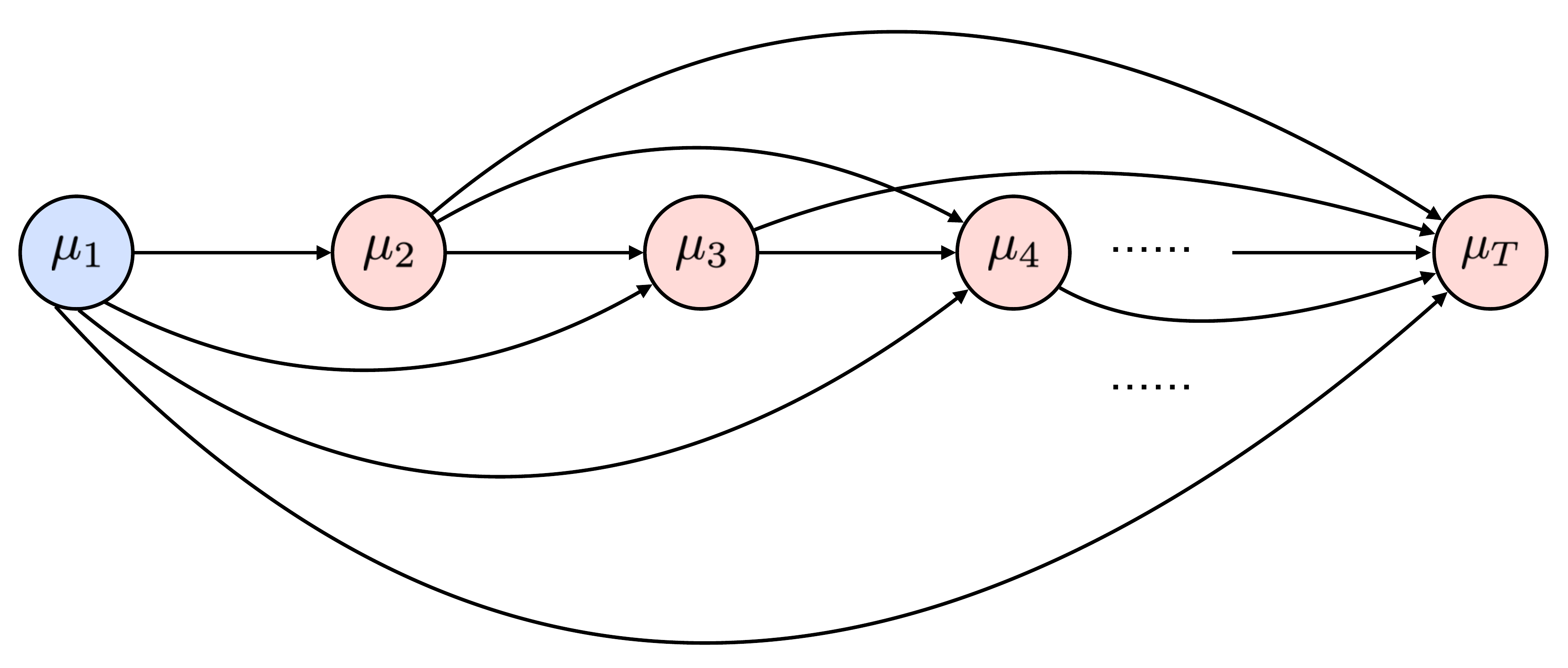}
        \vspace{0.5em}\caption{Interaction graph that represents the accumulating training regime. }
        \label{fig:accumulate}
    \end{figure}

\begin{exm}[Two patterns of model collapse]
\label{exm:two-patterns}
    In many real-world scenarios, interactions are highly structured, leading to different patterns of model collapse.  
 In this example, we consider a 5-node network $\cG = (\cM, \cE)$, where
    \begin{equation*}
        \cM = \, \left\{ \mu_i : 1 \le i \le 5 \right\}, \quad \cE = \, \left\{ (\mu_1, \mu_2), (\mu_5, \mu_2) \right\} \cup \left\{ (\mu_i, \mu_j): 3 \le i, j \le 5, i \neq j \right\}.
    \end{equation*}
    Namely, $\mu_3, \mu_4$ and $\mu_5$ distill from one another, $\mu_1$ represents a natural data source, and $\mu_2$ learns from both natural data and synthetic data produced by model $\mu_5$. 
    See Figure~\ref{fig:exm3} for an illustration. 
    In this example, we observe two types of collapsing models:
    \begin{itemize}
    \item \emph{Models that do not receive information from $\mu_1$: } Note that models $\mu_3, \mu_4$ and $\mu_5$ do not receive information (directly or indirectly) from the only stable data source $\mu_1$, hence they belong to the set $\cM_l^{\infty}$ and will collapse as the number of training cycles $t$ goes to infinity.
\item \emph{Models that receive information from $\mu_1$: }
Model $\mu_2$ is trained on a mixture of natural data produced by $\mu_1$ and synthetic data produced by $\mu_5$, and the proportion of natural data in the training set can be large\footnote{As we will see later, it can be any positive constant strictly less than 1.}.
Despite this, $\mu_2$ still collapses because it receives information from an unstable source $\mu_5$. This phenomenon is not captured by results in the single-model setting, where sufficient access to natural data can typically prevent model collapse. 
    \end{itemize}
\end{exm}

\begin{figure}[!ht]
\vspace{-0.5em}
    \centering
\includegraphics[width=0.33\linewidth]{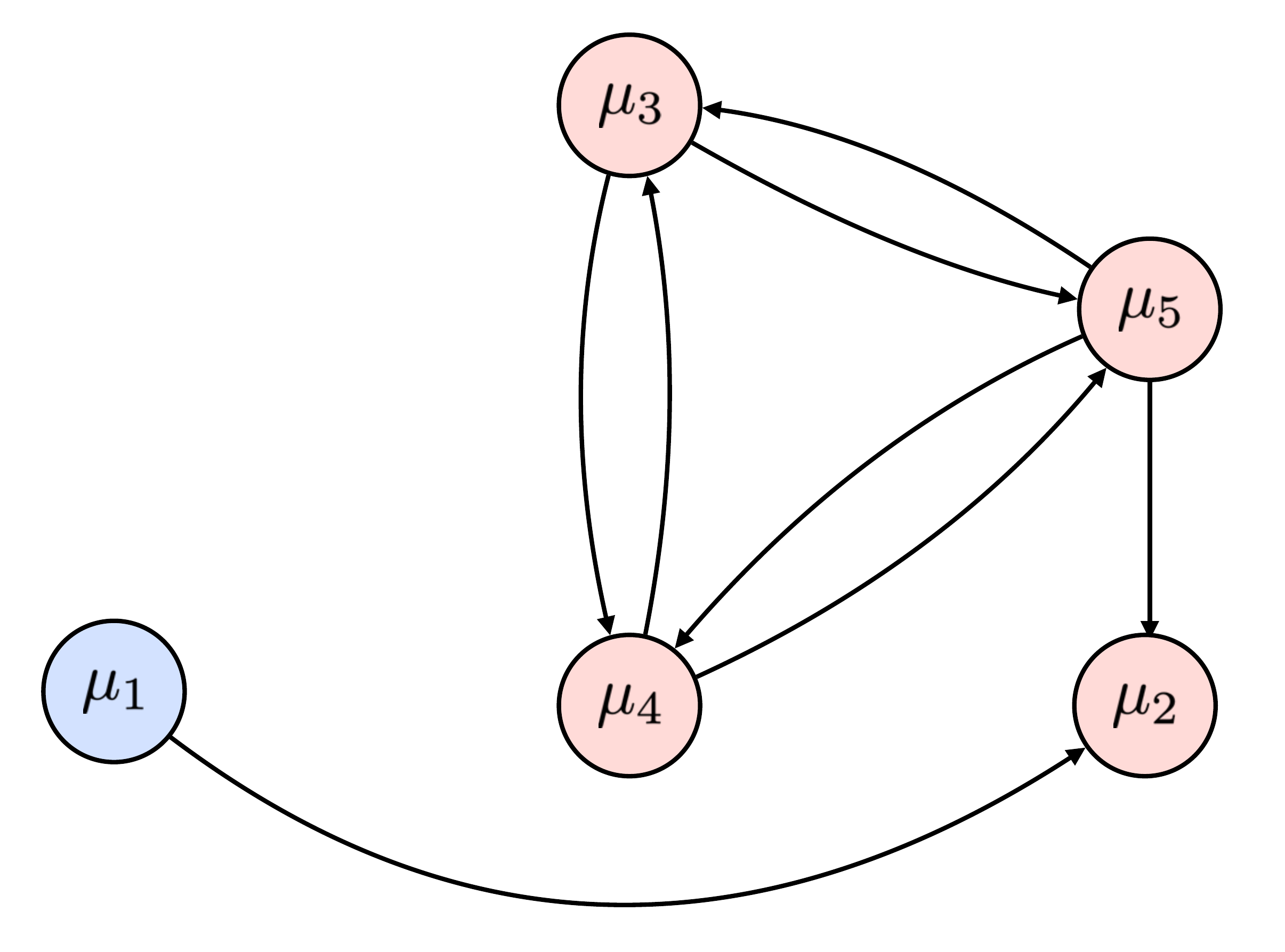}
    \caption{The $5$-node interaction graph that appears in Example \ref{exm:two-patterns}.}
    \label{fig:exm3}
\end{figure}

\begin{exm}[Model collapse can be sensitive to learning patterns]
\label{exm:one-diff}
    In this example, we show that model collapse can be sensitive to small changes in learning patterns. 
    Specifically, consider the two interaction graphs in the left and right panels of Figure \ref{fig:one-diff}, which differ by only a single edge (the left panel is obtained by removing the edge $\mu_2 \to \mu_1$ in the right panel).
    The left panel presents a hierarchical distillation learning pattern, where the model $\mu_1$ does not update its parameters and serves as a stable data source, and for every $2 \le i \le 5$, the model $\mu_i$ distills from $\{\mu_j: 1 \leq j \leq i - 1\}$.  
    In addition, we assume that apart from distilling from $\mu_4$ and $\mu_5$, model $\mu_3$ is also trained on data produced by a natural data source $\mu_6$. 
    Since $\mu_1$ is a stable data source and $\mu_2$, $\mu_3$, $\mu_4$ and $\mu_5$ all receive information from it, no model collapses in the left panel.

    In the right panel, we add a single edge $(\mu_2, \mu_1)$, which moves $\mu_1$ from the stable data source set $\mathcal{M}_u$ to the unstable data source set $\mathcal{M}_l^{\infty}$. Consequently, $\mu_2$, $\mu_3$, $\mu_4$ and $\mu_5$ all receive information from unstable sources, and all models collapse in the right panel (except for the natural data source $\mu_6$). 
    
    This example also suggests that models should be selective about the datasets they learn from, as blindly incorporating synthetic data may be detrimental. 
\end{exm}

\begin{figure}[!ht]
    \centering
\includegraphics[width=0.5\linewidth]{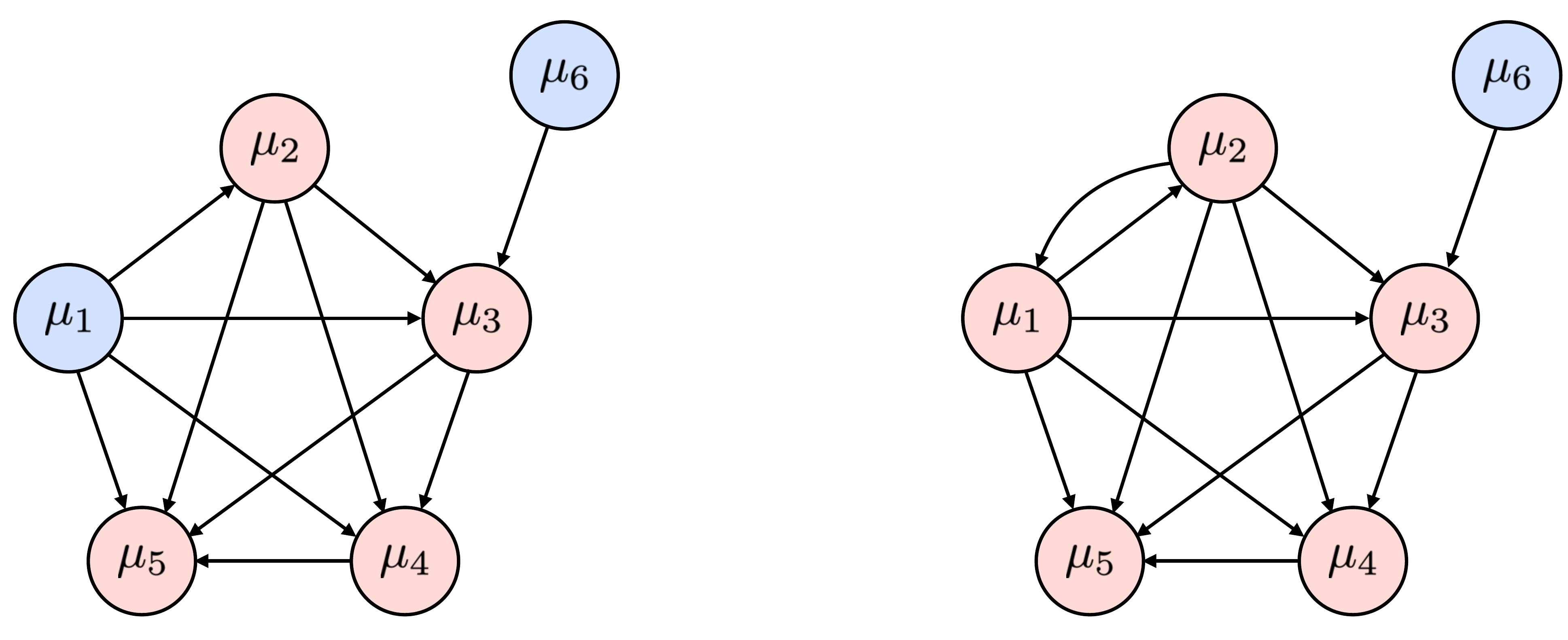}
    \caption{The interaction graph that appears in Example \ref{exm:one-diff}.}
    \label{fig:one-diff}
\end{figure}

\section{Results for linear regression}
\label{sec:linear-regression}

In this section, we present our main results for linear regression. 
Linear regression is a classical statistical model that relates a response vector $y \in \R^n$ to a covariate matrix $X \in \R^{n \times d}$ through a linear relationship: 
\begin{align}
\label{eq:linear-model}
    y = X \beta_{\ast} + \varepsilon. 
\end{align}
Here, $\beta_{\ast} \in \R^d$ is an unknown parameter vector, 
and $\varepsilon \in \R^n$ represents measurement noise. 
Given observations $(y, X)$, the goal is to estimate $\beta_{\ast}$ that defines the conditional distribution of $y$ given $X$. 

The most standard method for fitting model \eqref{eq:linear-model} is ordinary least squares (OLS). 
Specifically, when $X$ has full column rank, the OLS estimate for $\beta$ admits an explicit form 
\begin{align}
\label{eq:OLS}
    \hat\beta_{\rm OLS} = (X^\top X)^{-1} X^\top y. 
\end{align}
OLS forms the foundation of many classical and modern regression methods.
In this section, we study interactive learning in the linear regression setting using the OLS estimator.
As in other discriminative modeling frameworks, our objective is to model the conditional distribution of the response given the predictors, rather than the full joint distribution.
For linear regression, this means that $P_{\rm pop}$ in Section \ref{sec:setup} denotes the ground-truth conditional distribution parameterized by $\beta_{\ast}$, and $\hat P_{t, \mu}$ denotes an estimated conditional distribution parameterized by $\hat\beta_{t, \mu}$.

In what follows, we carry out a finite-sample analysis for linear regression. 
Asymptotic results for general M-estimators, including OLS, are deferred to Section~\ref{sec:M}.

\subsection{Interactive learning with linear regression}
\label{sec:interactive-linear}

We now describe the interactive learning regime for linear regression, under a given interaction graph $\cG = (\cM, \cE)$. 
This serves as a concrete example of the general setup introduced in Section~\ref{sec:setup}.

At initialization, for each model $\mu \in \cM$, we either set $\hat\beta_{0, \mu} = \beta_{\ast}$, in which case $\mu$ represents a natural data source (and hence $\mu \in \cM_{\rm nature} \subseteq \cM_u$), 
or compute the OLS estimator $\hat\beta_{0, \mu}$ using nature-generated data $(y_{0, \mu}, X_{0, \mu})$. 
In particular, assuming $X_{0, \mu}$ has full column rank, we compute
\begin{align}
\label{eq:linear-round0}
\begin{split}
    & y_{0, \mu} = X_{0, \mu} \beta_{\ast} + \varepsilon_{0, \mu} \in \mathbb{R}^{n_{0, \mu}}, \\
    & \hat\beta_{0, \mu} = (X_{0, \mu}^{\top} X_{0, \mu})^{-1} X_{0, \mu}^{\top} y_{0, \mu}.
\end{split}
\end{align}
At the $t$-th training cycle for $t = 1, 2, \cdots$, models in $\cM_u$ remain unchanged, whereas each model $\mu$ in $\cM_l$ updates its parameters using data generated by the models in $\Nin{\mu}$ (assuming $\sum_{\nu \in \Nin{\mu}} X_{t, \nu \to \mu}^{\top} X_{t, \nu \to \mu}$ has full column rank): 
\begin{align}\label{eq:learning_update}
\begin{split}
    & \hbeta_{t, \mu} = \Big( \sum_{\nu \in \Nin{\mu}} X_{t, \nu \to \mu}^{\top} X_{t, \nu \to \mu} \Big)^{-1} \sum_{\nu \in \Nin{\mu}} X_{t,\nu \to \mu}^{\top} y_{t, \nu \to \mu}, \\
    & y_{t, \nu \to \mu} = X_{t, \nu \to \mu} \hbeta_{t - 1, \nu} + \eps_{t, \nu \to \mu} \in \R^{n_{t, \nu \to \mu}}. 
\end{split}
\end{align}
Here, $X_{t, \nu \to \mu}$ is an $n_{t, \nu \to \mu} \times d$ data matrix.
The estimator $\hat\beta_{t, \mu}$ corresponds to the OLS solution obtained by stacking the datasets $(y_{t, \nu \to \mu}, X_{t, \nu \to \mu})$ over all $\nu \in \Nin{\mu}$. 
Note that the training datasets $(y_{t, \nu \to \mu}, X_{t, \nu \to \mu})$ for different $(\nu, \mu)$ are not necessarily independent.
\begin{exm}\label{exm:two_node_linear}
    To illustrate this updating rule for the parameters $\{ \hbeta_{t, \mu} \}_{t \ge 1, \mu \in \cM}$, we consider a simple two-node network: $\cM = \{\mu_1, \mu_2\}$ and $\cE = \{(\mu_1, \mu_2), (\mu_2, \mu_2)\}$. In this setup, $\mu_1$ acts as a static natural data source, while $\mu_2$ learns from both $\mu_1$ and its own outputs from the previous training cycle. The general update equations~\eqref{eq:learning_update} for $t \ge 1$ then reduce to:
    $$
    \begin{aligned}
    \hbeta_{t, \mu_1} &= \hbeta_{t-1, \mu_1} = \cdots = \hbeta_{0, \mu_1} = \beta_{\ast}, \\
    \hbeta_{t, \mu_2} &= \Big( X_{t, \mu_1 \to \mu_2}^{\top} X_{t, \mu_1 \to \mu_2} + X_{t, \mu_2 \to \mu_2}^{\top} X_{t, \mu_2 \to \mu_2} \Big)^{-1} \Big( X_{t, \mu_1 \to \mu_2 }^{\top} y_{t, \mu_1 \to \mu_2 } + X_{t, \mu_2 \to \mu_2 }^{\top} y_{t, \mu_2 \to \mu_2 } \Big),
\end{aligned}
$$
where for each $k \in \{1, 2\}$, the dataset $(X_{t, \mu_k \to \mu_2 }, y_{t, \mu_k \to \mu_2 }) \subseteq (X_{t, \mu_k }, y_{t, \mu_k })$ is generated via:
$$
y_{t, \mu_k \to \mu_2 } = X_{t, \mu_k \to \mu_2 } \hbeta_{t - 1, \mu_k} + \eps_{t, \mu_k \to \mu_2}, \quad (X_{t, \mu_k \to \mu_2 }, \eps_{t, \mu_k \to \mu_2}) \subseteq (X_{t, \mu_k }, \eps_{t, \mu_k }).
$$
\end{exm}

\subsection{A matrix-form update equation}
\label{sec:matrix-form}

In the linear regression setting of Section~\ref{sec:interactive-linear}, the update equations for individual model parameters are given in \cref{eq:learning_update}. To establish our main results, we next assemble these equations to form a single matrix-form update equation.
We begin by introducing some notation.
For $t \in \N$ and $\mu, \nu \in \cM$, we define 
\begin{align}
\label{eq:cT-def}
    \cT_{t, \mu, \nu} = \left\{ \begin{array}{ll}
       \big( \sum_{m \in \Nin{\mu}} X_{t, m \to \mu }^{\top} X_{t, m \to \mu} \big)^{-1} X_{t, \nu \to \mu}^{\top} X_{t, \nu \to \mu}  &  \mbox{if }  t \in \N_+,\, \mu \in \cM_l, \,\mbox{and }\nu \in \Nin{\mu}, \\
       I_d & \mbox{if } t \in \N_+, \, \mu \in \cM_u, \,\mbox{and } \mu = \nu, \\
       I_d & \mbox{if }t = 0 \mbox{ and }\mu = \nu,  \\
        0_{d \times d} & \mbox{otherwise. }
    \end{array} \right.
\end{align}
The $d \times d$ matrix $\cT_{t, \mu, \nu}$ quantifies the information transmitted from model $\nu$ to model $\mu$ during training cycle $t$.  
The identity matrix represents full information transfer (e.g., copying parameter values from the source model), whereas the zero matrix indicates no information transfer.

When the rows of each design matrix are i.i.d.\ according to a common distribution with invertible second-moment matrix $\Sigma$, and the sample size is sufficiently large, the law of large numbers implies concentration:  $X_{t, \nu \to \mu}^{\top} X_{t, \nu \to \mu} \approx n_{t, \nu \to \mu} \Sigma$. 
Applying this approximation for all models in $\Nin{\mu}$, we obtain $ \sum_{m \in \Nin{\mu}} X_{t, m \to \mu }^{\top} X_{t, m \to \mu}  \approx (\sum_{m \in \Nin{\mu}} n_{t, m \to \mu}) \Sigma$. 
Consequently, for $t \in \N_+$ and $(\nu, \mu) \in \cE$, we have $\cT_{t, \mu, \nu} \approx n_{t, \nu \to \mu} / (\sum_{m \in \Nin{\mu}} n_{t, m \to \mu}) I_d$. 
This observation motivates the definition of $\cT_{t, \mu, \nu}^{\ast}$ as the ``population'' version of $\cT_{t, \mu, \nu}$: 
\begin{align}
\label{eq:cT-star-def}
    \cT_{t, \mu, \nu}^{\ast} = \left\{ \begin{array}{ll}
       ( n_{t, \nu \to \mu} / (\sum_{m \in \Nin{\mu}} n_{t, m \to \mu} )) I_d  &  \mbox{if }  t \in \N_+,\, \mu \in \cM_l, \,\mbox{and }\nu \in \Nin{\mu}, \\
       I_d & \mbox{if } t \in \N_+, \, \mu \in \cM_u, \,\mbox{and } \mu = \nu, \\
       I_d & \mbox{if }t = 0 \mbox{ and }\mu = \nu,  \\
        0_{d \times d} & \mbox{otherwise. }
    \end{array} \right.
\end{align}
Recall that $\cM = \{\mu_1, \mu_2, \cdots, \mu_K\}$. 
For $t \in \N$, we define  $\cT_t \in \R^{dK \times dK}$ (resp. $\cT_t^{\ast} \in \R^{dK \times dK}$) as a $K \times K$ block matrix with the $(i, j)$-th block being $\cT_{t, \mu_i, \mu_j}$ (resp. $\cT_{t, \mu_i, \mu_j}^{\ast}$).
We also define the collection of estimated coefficients at time $t$ as:
\begin{align}
\label{eq:collected-betat}
    \hbeta_t = (\hbeta_{t, \mu_1}^{\top}, \hbeta_{t, \mu_2}^{\top}, \cdots, \hbeta_{t, \mu_K}^{\top})^{\top} \in \R^{dK}. 
\end{align}
We make the convention that $\hat\beta_{-1} = (\beta_{\ast}^{\top}, \beta_{\ast}^{\top}, \cdots, \beta_{\ast}^{\top})^{\top} \in \mathbb{R}^{dK}$.

For $\mu \in \cM_l$ and $t \in \N_+$, define the ``mixed noise vector''
\begin{align}
\label{eq:v-t-mu}
    v_{t, \mu} = \, \Big( \sum_{\nu \in \Nin{\mu}} X_{t, \nu \to \mu}^{\top} X_{t, \nu \to \mu} \Big)^{-1} \sum_{\nu \in \Nin{\mu}} X_{t, \nu \to \mu}^{\top} \eps_{t, \nu \to \mu} \in \R^d.
\end{align}
For $\mu \in \cM_u$ and $t \in \N_+$, we simply set $v_{t, \mu} = 0_d$.
When $t = 0$, we let $v_{0, \mu} = 0_d$ for $\mu \in \cM_{\rm nature}$, and for $\mu \in \cM \setminus \cM_{\rm nature}$, we define:
\begin{align}
\label{eq:v-t-mu2}
	v_{0, \mu} = \big( X_{0, \mu}^{\top} X_{0, \mu} \big)^{-1} X_{0, \mu}^{\top} \, \varepsilon_{0, \mu} \in \R^d. 
\end{align}
Let $v_t = (v_{t, \mu_1}^\top,  v_{t, \mu_2}^{\top}, \cdots, v_{t, \mu_K}^{\top})^{\top}\in \R^{dK}$. 
With the definitions in \eqref{eq:cT-def}, \eqref{eq:collected-betat}, and \eqref{eq:v-t-mu}, we obtain the following recursive formula for all $t \in \N$: 
\begin{align}
\label{eq:hbeta-update}
    \hbeta_t = \, \cT_t \hbeta_{t-1} + v_t.
\end{align}
Note that for $t \in \N$, the matrix $\cT_t^{\ast}$ can be expressed as the Kronecker product of a $K \times K$ matrix $P_t$ and a $d \times d$ identity matrix. 
For $i, j \in [K]$, let $P_{t, \mu_i, \mu_j}$ denote the $(i,j)$-th entry of $P_t$, defined as follows:
\begin{equation}\label{eq:markov_transition_def}
    P_{t, \mu, \nu} = \left\{ \begin{array}{ll}
        n_{t, \nu \to \mu} / (\sum_{m \in \Nin{\mu}} n_{t, m \to \mu}) &  \mbox{if }  t \in \N_+,\, \mu \in \cM_l, \,\mbox{and }\nu \in \Nin{\mu}, \\
       1 & \mbox{if }t \in \N_+, \, \mu \in \cM_u, \,\mbox{and } \mu = \nu, \\
       1 & \mbox{if }t = 0 \mbox{ and }\mu = \nu,  \\
        0 & \mbox{otherwise. }
    \end{array} \right.
\end{equation}
It follows that $\cT_t^* = P_t \otimes I_d$.

\subsection{Assumptions for the linear regression setting}
\label{sec:assumptions-linear}

In this section, we state the assumptions required to establish our linear regression results, beginning with the assumption of temporal independence. 

\begin{ass}[Temporal independence]
\label{ass:independent_covariate}
    For $t \in \N_+$, let $\cX_t = (X_{t, \nu \to \mu})_{(\nu, \mu) \in \cE}$ denote the collection of design matrices from the $t$-th training cycle, and let $E_t = (\eps_{t, \nu \to \mu})_{(\nu, \mu) \in \cE}$ denote the corresponding collection of noise vectors.
    When $t = 0$, we define $\cX_0 = (X_{0, \mu})_{\mu \in \cM \backslash \cM_{\rm nature}}$ and $E_0 = (\eps_{0, \mu})_{\mu \in \cM \backslash \cM_{\rm nature}}$. 
    We assume that $(\cX_t, E_t)$ are mutually independent for all ${t \in \N}$.     
\end{ass}

\begin{rem}
    Assumption \ref{ass:independent_covariate} does not preclude deterministic design matrices: the covariates may be fixed or random, and in the deterministic case, independence holds trivially. 
\end{rem}

We next state the assumptions imposed on the data transmitting pattern. 
In words, Assumption \ref{assumption:data-distribution} states that at each round, every model $\nu$ generates data using the parameters fitted in the previous round and distributes a subset of this data to every model $\mu$ for which $(\nu, \mu) \in \cE$.

\begin{ass}[Data transmitting pattern]
\label{assumption:data-distribution}
    For each $t \in \N_+$ and $\mu \in \cM$, we denote by $(y_{t, \mu}, X_{t, \mu})$ a dataset produced by model $\mu$ in the $t$-th training cycle, generated according to the model fitted in the previous round: 
    \begin{align*}
        y_{t, \mu} = X_{t, \mu} \hat\beta_{t - 1, \mu} + \varepsilon_{t, \mu} \in \R^{n_{t, \mu}}. 
    \end{align*}
    We assume that for every $(\nu, \mu) \in \cE$, the dataset $(y_{t, \nu \to \mu}, X_{t, \nu \to \mu})$ sent from $\nu$ to $\mu$ is a subset of $(y_{t, \nu}, X_{t, \nu})$.
\end{ass}

\begin{rem}
    Note that for $\nu \in \cM$, the datasets $\{(y_{t, \nu \to \mu}, X_{t, \nu \to \mu}): (\nu, \mu) \in \cE \}$ may overlap with each other, and are thus not necessarily independent. 
\end{rem}

We then assume that for each $(\nu, \mu) \in \cE$, a nontrivial portion of the data used by model $\mu$ to update its parameters is generated by model $\nu$. 
Additionally, we assume that the number of training samples in each round does not grow too quickly.

\begin{ass}\label{ass:sample-size-ratio}
There exists $\alpha \in (0, 1)$, such that for any $t \in \N_+$ and $(\nu, \mu) \in \cE$, 
\begin{align*}
	p_{t, \nu \to \mu} = \frac{n_{t, \nu \to \mu}}{\sum_{m \in \Nin{\mu}} n_{t, m \to \mu} } \geq \alpha. 
\end{align*}
Recall $n_{t, \mu}$ is from Assumption \ref{assumption:data-distribution}. 
For $t \in \N_+$, we define $n_{t, \max} = \sup_{\mu \in \cM} n_{t, \mu}$. 
We also define $n_{0, \max} = \sup_{\mu \in \cM \backslash \cM_{\rm nature}} n_{0, \mu}$. 
We assume that $\sup_{s \in \{0\} \cup [t]} n_{s, \max} / t  \to 0$ as $t \to \infty$. 
\end{ass}

By definition, for all $t \in \N_+$ and $\Nin{\mu} \neq \emptyset$, it holds that $\sum_{\nu \in \Nin{\mu}} p_{t, \nu \to \mu} = 1$. 
Recall that $P_t$ is defined in \cref{eq:markov_transition_def}. 
For $t \ge s \ge 0$, we define $J_{t, s + 1} = P_t P_{t - 1} \cdots P_{s + 1}$, with the convention that $J_{t,t+1} = I_K$. 
One can verify that each row of $J_{t, s+1}$ sums to 1.  
By definition, we know that a model $\nu_1$ in $\cM_l^{\rm nc}$ receives information from at least one model in $\mathcal{M}_u$.
In other words, there exists a directed path from some $\nu_2 \in \mathcal{M}_u$ to $\nu_1$. Let $\mathcal{S}(\nu_1;0)$ denote a model in $\mathcal{M}_u$ that achieves the shortest path to $\nu_1$, and let $\ell(\nu_1)$ denote the length of this path. 
We further denote this path by $$\mathcal{S}(\nu_1; 0) \to \mathcal{S}(\nu_1; 1) \to \cdots \to \mathcal{S}(\nu_1; \ell(\nu_1)) = \nu_1. $$   
Let $T_0 = \sup_{\nu_1 \in \cM_l^{\mathrm{nc}}} \ell(\nu_1)$.
Leveraging Assumption \ref{ass:sample-size-ratio}, we know that for all $\nu_1 \in \cM_l^{\rm nc}$, $t \in \N_+$ and $t_0 \geq T_0$,
\begin{align*}
    \sum_{\nu_2 \in \cM_u} J_{t + t_0, t + 1, \nu_1, \nu_2} \geq \prod_{i = 1}^{\ell(\nu_1)} p_{t + i, \mathcal{S}(\nu_1; i - 1) \to \mathcal{S}(\nu_1; i)} \geq \alpha^{T_0}. 
\end{align*}
The above lower bound further implies the existence of a constant $\omega \in (0, 1)$, such that for any $t_0 \geq T_0$, 
\begin{align}
\label{eq:94}
    \sup_{t \in \N_+, \, \nu_1 \in \cM_l^{\rm nc}} \sum_{\nu_2 \in \cM_l} J_{t + t_0, t + 1, \nu_1, \nu_2} = 1 - \inf_{t \in \N_+, \, \nu_1 \in \cM_l^{\rm nc}} \sum_{\nu_2 \in \cM_u} J_{t + t_0, t + 1, \nu_1, \nu_2}  \leq 1 - \alpha^{T_0} \leq \omega \in (0, 1). 
\end{align}
%
We next state the assumptions on the noise vectors. Specifically, we assume that they are independent of the design matrices and that their variances are bounded both above and below.

\begin{ass}
\label{assumption:noise-vectors}
We impose the following assumptions on the noise vectors: 
\begin{enumerate}
    \item Recall that $E_t$, $\cX_t$ are defined in Assumption \ref{ass:independent_covariate}. 
    We assume that for all $t \in \N$,  $E_t$ is a mean-zero random vector that is independent of $\cX_t$.
    \item For $t \in \N_+$, we let $\eps_t = (\eps_{t, \mu_1}^{\top}, \eps_{t, \mu_2}^{\top}, \cdots, \eps_{t, \mu_K}^{\top})^{\top}$. 
    For $t = 0$, we let $\eps_0 = (\eps_{0, \mu}^{\top})_{\mu \in \cM \backslash \cM_{\rm nature}}^{\top}$.
    We assume that there exists $\rho_1 > 0$, such that for all $t \in \N$, $\mathrm{Cov}[\eps_t] \preceq \rho_1 I$. 
    \item  
    We assume that there exists $\rho_2 > 0$, such that for all $t \in \N$, $\mathrm{Cov}[\eps_t] \succeq \rho_2 I$. 
    %
\end{enumerate}

\end{ass}

We next state assumptions on the design matrices. 
In particular, we assume that $\cT_{t, \mu, \nu}$ defined in \cref{eq:cT-def} concentrates around its population counterpart $\cT^{\ast}_{t, \mu, \nu}$ defined in \cref{eq:cT-star-def}. Recall that $X_{t, \mu} \in \mathbb{R}^{n_{t, \mu} \times d}$ is defined in Assumption \ref{assumption:data-distribution}. 
We further assume that, the operator norm of $X_{t, \mu}^\top X_{t, \mu} / n_{t, \mu}$ is bounded with high probability.

\begin{ass}\label{ass:concentration}
There exist $\delta, \kappa > 0$ and $\gamma_1, \gamma_2  \in (0, 1)$, such that for any $t \in \N_+$, $\mu \in \cM$, and $(\nu, \mu) \in \cE$, 
\begin{align*}
        & \P \left( \norm{\cT_{t, \mu, \nu} - \cT_{t, \mu, \nu}^* }_{\op} \ge \delta \right) \le \, \gamma_1, \\
        & \P\left( \norm{X_{t, \mu}^\top X_{t, \mu} / n_{t, \mu} }_{\op}  \geq \kappa\right) \leq \gamma_2. 
\end{align*}
When $t = 0$, for any $\mu \in \cM \backslash \cM_{\rm nature}$, we assume
\begin{align*}
    \P\left( \norm{X_{0, \mu}^\top X_{0, \mu} / n_{0, \mu} }_{\op}  \geq \kappa\right) \leq \gamma_2. 
\end{align*}
In addition, we assume $\delta$, $\gamma_1$ and $\gamma_2$ are small enough such that
\begin{align*}
	& \Big(1 + \frac{\delta + 2\gamma_1}{\alpha}\Big)^{K} - \alpha^{K} \in (0, c(\alpha, K) ), \\
	& c_1 = \Big(1 + \frac{\delta + 2\gamma_1}{\alpha}\Big) (1 - \alpha^K)^{1 / K}  \in (0, 1), \\
	& c_2 =  \alpha^K - \frac{K (\delta + 2 \gamma_1)}{\alpha(1 - \alpha^K)^{1 / K + 1} (1 - c_1)^2} > \frac{9 K^3 \gamma_2}{\alpha^{K+1}}, \\
    & \omega + K \Big( \gamma_1 + \sqrt{2\gamma_1} + \sqrt{\delta} N_{\max}^{2T_0 - 1} \Big) \sum_{j = 1}^{2T_0 - 1} N_{\max}^j \in (0, 1), 
\end{align*}
where $c (\alpha, K) = 1 - \alpha^K / 2$  depends only on $\alpha$ and $K$, and we recall that $N_{\max} = \sup_{\mu \in \cM} |\Nin{\mu}|$, $\alpha  \in (0, 1)$ is from \cref{ass:sample-size-ratio}, and $(\omega, T_0)$ are from \cref{eq:94}. 
\end{ass}
\begin{rem}
\label{rem:matrix-concentration}
    When the rows of the design matrices are i.i.d. samples from a common distribution and the sample size is sufficiently large, Assumption~\ref{ass:concentration} holds by standard matrix concentration inequalities, with the exact bound determined by the tail behavior of the underlying distribution.
\end{rem}

Finally, we assume that the design matrices have full column rank and that the inverse of their Gram matrices has finite expectation.

\begin{ass}
\label{assumption:full-column-rank}
We assume the following: 
\begin{enumerate}
    \item For all $t \in \N_+$, $(\nu_1, \nu_2) \in \cE$, and $\mu \in \cM \backslash \cM_{\rm nature}$, the matrices $X_{t, \nu_1 \to \nu_2}$ and $X_{0, \mu}$ have full column rank with probability one. 
    \item There exists a constant $C_0 > 0$, such that
\begin{equation*}
    \sup_{t \in \N_+, \, (\nu_1, \nu_2) \in \cE} \E\Big[ \Tr\big[(X_{t, \nu_1 \to \nu_2}^{\top} X_{t, \nu_1 \to \nu_2})^{-1}\big]\Big] \le \, C_0, \quad \sup_{\mu \in \cM \backslash \cM_{\rm nature}} \E\Big[ \Tr\big[(X_{0, \mu }^{\top} X_{0, \mu })^{-1}\big]\Big] \le \, C_0.
\end{equation*}
\end{enumerate}

\end{ass}

\begin{rem}
    By \cite{srivastava2013covariance}, if the rows of $X \in \R^{n \times p}$ are independent and identically distributed with a finite $(2 + w)$-th moment for some $w > 0$ and an invertible covariance matrix, and if $n \geq C' d$ for some constant $C' > 0$, then $\E[(X^\top X)^{-1}] $ exists and is finite.
\end{rem}

\subsection{Main results}

In this section, we present our main results for the linear regression setting.
For comparison, let $\hat\beta_{t, \mu}^{\ast}$ denote the OLS estimator computed using the same covariate matrix as $\hat\beta_{t, \mu}$, but with responses generated according to the ground-truth linear model \eqref{eq:linear-model}. 
Specifically, $\hat\beta_{t, \mu}^{\ast}$ is trained on the dataset obtained by stacking $(X_{t, \nu \to \mu}, y_{t, \nu \to \mu}^{\ast})$, where, unlike $y_{t, \nu \to \mu}$ (which is generated using $\hat\beta_{t-1, \nu}$), the responses $y_{t, \nu \to \mu}^{\ast}$ are generated from the true conditional model: 
\begin{align*}
    y_{t, \nu \to \mu}^{\ast} = X_{t, \nu \to \mu} \beta_{\ast} + \varepsilon_{t, \nu \to \mu}.
\end{align*}
In this case, 
\begin{align*}
	\hat\beta_{t, \mu}^{\ast} = \beta_{\ast} + \Big( \sum_{\nu \in \Nin{\mu}} X_{t, \nu \to \mu}^{\top} X_{t, \nu \to \mu}\Big)^{-1} \sum_{\nu \in \Nin{\mu}} X_{t, \nu \to \mu}^{\top} \varepsilon_{t, \nu \to \mu},   
\end{align*}
where we recall that $(X_{t, \nu \to \mu}, \varepsilon_{t, \nu \to \mu})$ are introduced in Section \ref{sec:interactive-linear}.

As is standard in linear regression, we use the mean squared error to quantify the discrepancy between the fitted and target distributions.
Specifically, for $t \in \N$ and $\mu \in \cM$, we define $r_{t, \mu} = \E[\|\hat\beta_{t, \mu} - \beta^{\ast}\|_2^2]$ and $r_{t, \mu}^{\ast} = \E[\|\hat\beta_{t, \mu}^{\ast} - \beta^{\ast}\|_2^2]$. 
Recall from \cref{eq:risk-ratio} that, we say a model $\mu$ collapses if and only if the limit superior of the ratio between these two quantities diverges as $t \to \infty$.
Our findings imply that, under the assumptions in Section \ref{sec:assumptions-linear}, whether a model collapses depends only on its relative position within the interaction graph $\cG$.
In particular, our first main theorem states that every model in $\cM_l^{\mathrm{c}}$ experiences model collapse.
\begin{thm}
\label{thm:linear-collapse}
	Under Assumptions \ref{ass:independent_covariate}\,--\,\ref{assumption:full-column-rank}, for every $\mu \in \cM_l^{\rm c}$, we have 
	\begin{align*}
		\lim_{t \to \infty} \frac{\E[\|\beta_{\ast} - \hat\beta_{t, \mu}\|_2^2]}{\E[\|\beta_{\ast} - \hat\beta_{t, \mu}^{\ast}\|_2^2]} = \infty. 
	\end{align*}
\end{thm}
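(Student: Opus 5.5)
We plan to compare the two risks directly. The first step is to bound the denominator. Since $\hat\beta^{\ast}_{t,\mu}-\beta_\ast = v_{t,\mu}$, the noise is mean zero and independent of the design (Assumption~\ref{assumption:noise-vectors}), and $\mathrm{Cov}[\eps_t]\preceq\rho_1 I$, we get
\[
r^\ast_{t,\mu}\le \rho_1\,\E\,\Tr\Big[\big(\textstyle\sum_{\nu\in\Nin{\mu}} X_{t,\nu\to\mu}^\top X_{t,\nu\to\mu}\big)^{-1}\Big]\le \rho_1 C_0 ,
\]
where the last step uses Assumption~\ref{assumption:full-column-rank}, since the sum of Gram matrices dominates any single one. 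It therefore suffices to show that $r_{t,\mu}\to\infty$ for every $\mu\in\cM_l^{\rm c}$.

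To do this, we unroll \eqref{eq:hbeta-update}:
\[
\hat\beta_t-\mathbf 1\otimes\beta_\ast=\sum_{s=0}^t \cT_t\cdots\cT_{s+1}v_s ,
\]
using that each block row of $\cT_t$ sums to $I_d$, so $\cT_t(\mathbf 1\otimes\beta_\ast)=\mathbf 1\otimes\beta_\ast$. By temporal independence (Assumption~\ref{ass:independent_covariate}) and the mean-zero noise, the summands are uncorrelated. This gives
\[
r_{t,\mu}=\sum_{s=0}^t \E\big\|[\cT_t\cdots\cT_{s+1}]_{\mu,\cdot}\,v_s\big\|_2^2 .
\]
Using $\mathrm{Cov}[\eps_s]\succeq\rho_2 I$, each term is at least $\rho_2$ times a trace involving the products of $\cT$'s and the matrices $(X^\top X)^{-1}X^\top$. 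Restricting the noise to the block of a single node $\nu\in\cM_l^\infty$ whose $v_{s,\nu}$ is a nondegenerate OLS noise, and using the bounded-Gram assumption (Assumption~\ref{ass:concentration}, second part), which holds with probability $\ge 1-\gamma_2$, the $s$-th term is lower bounded by roughly
\[
\frac{\rho_2}{\kappa\, n_{s,\max}}\;\E\big\|[\cT_t\cdots\cT_{s+1}]_{\mu,\nu}\big\|^2 .
\]
If the block $[\cT_t\cdots\cT_{s+1}]_{\mu,\nu}$ stays bounded below for many $s$, then summing over $s\le t$ gives a lower bound of order $t/\sup_{s\le t} n_{s,\max}$. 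This diverges by Assumption~\ref{ass:sample-size-ratio}.

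The key structural fact concerns $\cM_l^\infty$. This set is closed under in-neighbors: every in-neighbor of a node of $\cM_l^\infty$ lies in $\cM_l^\infty$, since otherwise a path from $\cM_u$ would exist. Hence the restriction of $P_t$ to $\cM_l^\infty$ is row-stochastic, and in the population version mass never leaks out. Since $\mu\in\cM_l^{\rm c}$ is reached from $\cM_l^\infty$ by a path of length at most $K$, each edge carrying weight $\ge\alpha$, the population product satisfies $\sum_{\nu\in\cM_l^\infty} J_{t,s+1,\mu,\nu}\ge\alpha^K$ for all $s\le t-K$. With at most $K$ nodes, some $\nu$ receives weight at least $\alpha^K/K$.

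The main obstacle is transferring this to the random products $\cT_t\cdots\cT_{s+1}$, where errors may accumulate over $t-s$ steps. We would argue on the event where $\|\cT_{r}-\cT_{r}^\ast\|\le\delta$ (failure probability at most $\gamma_1$ per step), grouping time into blocks of length $K$. Within each block we show that the $\cM_l^\infty$-block of the product has row sums bounded below by $c_2>0$, while its operator norm grows at most geometrically with rate $c_1<1$ off the invariant direction. This is the role of the constants $c_1,c_2$ in Assumption~\ref{ass:concentration}.

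Next we would take expectations, using an $L^2$ lower bound of the form
\[
\E\|A\|^2\ge \big(\E\, \mathbf 1^\top A\,\mathbf 1/K\big)^2 ,
\]
together with independence across rounds, so that the expected row-sum mass of $[\cT_t\cdots\cT_{s+1}]_{\mu,\cM_l^\infty}$ is bounded below by a constant uniformly in $t-s$. Bad events are absorbed through the $\gamma$-terms, whose budget is given by the inequality $c_2>9K^3\gamma_2/\alpha^{K+1}$. Finally, summing over $s$ and dividing by $\rho_1C_0$ gives the claimed divergence of the ratio.
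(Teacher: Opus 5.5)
Your outline matches the skeleton of the paper's proof: the denominator bounded by $\rho_1 C_0$, the uncorrelated round-by-round decomposition, the $\rho_2$ and $\kappa$ lower bounds, temporal independence to factor expectations, in-closedness of $\cM_l^{\infty}$, and divergence at rate $t/\sup_{s\le t} n_{s,\max}$. However, two steps do not go through as written.

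The first is the per-round bound by a single block $\E\|\Omega_{t,s+1,\mu,\nu}\|^2$, where $\Omega_{t,s+1}=\cT_t\cdots\cT_{s+1}$. Assumption~\ref{assumption:data-distribution} allows one sample of a source $m$ to be sent to several receivers. Such a noise coordinate enters $\hat\beta_{t,\mu}$ with the \emph{combined} coefficient $\sum_{\nu}\Omega_{t,s+1,\mu,\nu}G_{s,\nu}^{-1}x$, summed over all receivers $\nu$ of that sample, where $G_{s,\nu}=\sum_{m}X_{s,m\to\nu}^\top X_{s,m\to\nu}$. If two receivers $\nu_1,\nu_2$ get identical data, then $v_{s,\nu_1}=v_{s,\nu_2}$ and only $\Omega_{t,s+1,\mu,\nu_1}+\Omega_{t,s+1,\mu,\nu_2}$ is visible. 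So $\mathrm{Cov}[\varepsilon_s]\succeq\rho_2 I$ gives no realization-wise control by one block, and nondegeneracy of each $v_{s,\nu}$ separately is not enough.

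The paper handles this in three steps. It cuts the noise into disjoint pieces $\zeta_{t,i}$ and collects all receivers of a piece into $A_{t,i,\mu}$. It then uses the exact identity $\sum_i A_{t,i,\mu}=\sum_{s}\sum_{\nu\in\cM_l}\Omega_{t,s+1,\mu,\nu}$, which follows from $\sum_m\cT_{s,\nu,m}=I_d$, before applying Cauchy--Schwarz. The quantity to bound below is therefore $\|\E\sum_{\nu\in\cM_l}\Omega_{t,s+1,\mu,\nu}\|_{\mathrm{op}}$, and cancellations are excluded only after taking expectations. If you do this round by round, the $\kappa$-bad event is handled simply by independence of round $s$ from $\Omega_{t,s+1}$. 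The condition $c_2>9K^3\gamma_2/\alpha^{K+1}$ is not what absorbs it.

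The second and more serious gap is the step you yourself call the main obstacle. Working on the event $\{\|\cT_r-\cT_r^\ast\|_{\mathrm{op}}\le\delta \text{ for all } r\in(s,t]\}$ cannot give bounds uniform in $t-s$, because that event has probability of order $(1-\gamma_1)^{|\cE|(t-s)}$. The route that works is the one you mention at the end: $\E\|\Omega\|^2\ge\|\E\Omega\|^2$ with $\E\Omega_{t,s+1}=\E\cT_t\cdots\E\cT_{s+1}$, where each block of $\E\cT_r$ lies within $\delta+2\gamma_1$ of $p_{r,\nu\to\mu}I_d$. But you then need a deterministic estimate for arbitrarily long products of perturbed matrices, and naive bounds grow like $(1+(\delta+2\gamma_1)/\alpha)^{t-s}$. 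Three ingredients are missing:
(a) The rows of every $\cT_r$ indexed by $\cM_l^\infty$ are supported on $\cM_l^\infty$ with block sums exactly $I_d$. Hence $\sum_{\nu\in\cM_l^\infty}\Omega_{t,s+1,\mu,\nu}=I_d$ for every realization when $\mu\in\cM_l^\infty$, and that case is immediate.
(b) For $\mu\in\cM_l^{\rm c}\setminus\cM_l^\infty$, decompose paths by their exit edge from $\cM_l^\infty$. The expected $\cM_l^\infty$-mass becomes $\sum_\ell Q_\ell$, with each $Q_\ell^\ast$ a nonnegative multiple of $I_d$ and $\sum_{\ell\le K}\|Q_\ell^\ast\|_{\mathrm{op}}\ge\alpha^K$.
(c) Path sums of $\prod\|\E\cT\|_{\mathrm{op}}$ over paths avoiding $\cM_l^\infty$ contract by the factor $(1+\frac{\delta+2\gamma_1}{\alpha})^K-\alpha^K<1$ every $K$ steps (Lemma~\ref{lemma:sum-decay}). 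This makes $\sum_\ell\|Q_\ell-Q_\ell^\ast\|_{\mathrm{op}}$ small and the $(\cM_l\setminus\cM_l^\infty)$-mass negligible (Lemma~\ref{lemma:sum-grow}).

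This is where $c_1<1$ and $c_2>0$ enter. The contraction acts on the transient part outside $\cM_l^\infty$, not on the $\cM_l^\infty$-block ``off an invariant direction'' as you describe. With (a)--(c) inserted, your round-by-round variant would give each round $s\le t-O(1)$ a contribution of order at least $c_2^2/\sup_{s'\le t}n_{s',\max}$ and would close the proof. Without them, the lower bound on the numerator is not established.
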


\begin{proof}[Proof of Theorem \ref{thm:linear-collapse}]
    We prove Theorem \ref{thm:linear-collapse} in Appendix \ref{appendix:thm:linear-collapse}. 
\end{proof}

Our second main theorem states that all models in $\cM_l^{\rm nc}$ do not collapse, in the sense that the associated risk ratios stay bounded as $t \to \infty$.

\begin{thm}
\label{thm:linear-non-collapse}
    Under Assumptions \ref{ass:independent_covariate}\,--\,\ref{assumption:full-column-rank}, and additionally assuming that the model performances are comparable, in the sense that
    \begin{align*}
        \sup_{t \geq 1}\frac{\sup_{s \in \{0 \} \cup [t]} \sup_{\nu \in \cM} \,\E[\|\beta_{\ast} - \hat\beta_{s, \nu}^{\ast}\|_2^2]}{\E[\|\beta_{\ast} - \hat\beta_{t, \mu}^{\ast}\|_2^2]} < \infty \qquad \mbox{for all }\mu \in \cM_l^{\rm nc}, 
    \end{align*}
    then for every $\mu \in \cM_l^{\rm nc}$, we have 
    \begin{align*}
		\limsup_{t \to \infty} \frac{\E[\|\beta_{\ast} - \hat\beta_{t, \mu}\|_2^2]}{\E[\|\beta_{\ast} - \hat\beta_{t, \mu}^{\ast}\|_2^2]} < \infty. 
	\end{align*}
\end{thm}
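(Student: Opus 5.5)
Unrolling the recursion \eqref{eq:hbeta-update} gives, for every $t$,
\begin{align*}
    \hbeta_t - \mathbf 1_K\otimes\beta_\ast = \sum_{s=0}^{t} \cT_t\cT_{t-1}\cdots\cT_{s+1}\, v_s,
\end{align*}
with the convention that the empty product is the identity. To see this, note that each block row of $\cT_t$ sums to $I_d$: for $\mu\in\cM_l$ the blocks $\cT_{t,\mu,\nu}$, $\nu\in\Nin{\mu}$, add up to the identity, and for $\mu\in\cM_u$ the row is a single $I_d$. Hence $\cT_t(\mathbf 1_K\otimes\beta_\ast)=\mathbf 1_K\otimes\beta_\ast$. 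By \cref{ass:independent_covariate} and the first item of \cref{assumption:noise-vectors}, $v_s$ is mean zero conditionally on all design matrices, and the $v_s$ are independent across $s$. The products $\cT_t\cdots\cT_{s+1}$ depend only on $\cX_{s+1},\dots,\cX_t$, so the cross terms vanish and
\begin{align*}
    \E\|\hbeta_{t,\mu}-\beta_\ast\|_2^2=\sum_{s=0}^t \E\big\|[\cT_t\cdots\cT_{s+1}]_{\mu,\cdot}\,v_s\big\|_2^2 .
\end{align*}
The plan is to bound each summand by $C\,\omega'^{\,t-s}\sup_\nu r^\ast_{s,\nu}$ for some $\omega'<1$. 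Summing the geometric series and invoking the comparability hypothesis then gives the claim.

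The first step handles the noise. Conditionally on $\cX_s$, the vector $v_{s,\nu}$ has covariance at most $\rho_1$ times $(\sum_m X^\top X)^{-1}\big(\sum X^\top X\big)(\sum X^\top X)^{-1}$. This uses \cref{assumption:data-distribution}: a shared $\eps_{s,\nu}$ feeds several edges, so cross-correlations between edges are controlled by $\rho_1$ times the full Gram matrix $X_{s,\nu}^\top X_{s,\nu}$. Combining this with the bound $\|X_{s,\nu}^\top X_{s,\nu}\|_{\op}\lesssim \kappa n_{s,\nu}$ from \cref{ass:concentration}, with \cref{ass:sample-size-ratio}, and with the lower bound $\rho_2$, I would show that $\E\|v_{s,\nu}\|^2\lesssim r^\ast_{s,\nu}$. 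The contribution of $\mu\in\cM_u$ vanishes for $s\ge1$, since $v_{s,\mu}=0$, and for $s=0$ it is exactly $r^\ast_{0,\mu}$.

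The second step, which I expect to be the main obstacle, is a decay bound on the random products. Write $\Pi_{t,s}=\cT_t\cdots\cT_{s+1}$. I want
\begin{align*}
    \E\Big[\big\|[\Pi_{t,s}]_{\mu,\nu}\big\|_{\op}^2\Big]\le C\,\omega'^{\,t-s}\quad\text{for } \mu\in\cM_l^{\rm nc},\ \nu\in\cM_l .
\end{align*}
The graph structure enters here. The nodes of $\cM_l^{\rm nc}$ receive edges only from $\cM_u\cup\cM_l^{\rm nc}$: an edge from $\cM_l^{\rm c}$ into $\mu$ would place $\mu$ in $\cM_l^{\rm c}$. So the restriction of $\Pi$ to $\cM_l^{\rm nc}$ is itself a product of the restricted blocks of the $\cT_r$, and the entries $[\Pi_{t,s}]_{\mu,\nu}$ for $\nu\in\cM_l^{\rm c}$ vanish. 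Moreover, \eqref{eq:94} shows that the population product $J$ restricted to $\cM_l^{\rm nc}$ has row sums at most $\omega$ over every window of length $t_0\ge T_0$. I would group time into blocks of length $2T_0-1$. On each block, the product of the $\cT_r$ differs from the corresponding $\cT^\ast$ product by the sum of terms in $\cT_r-\cT_r^\ast$. On the good event $\|\cT_r-\cT_r^\ast\|_{\op}<\delta$ this difference is at most $\sqrt\delta\, N_{\max}^{j}$ times a path count; on the bad event, of probability $\gamma_1$, I would bound it crudely, using that the blocks $\cT_{r,\mu,\cdot}$ are positive semidefinite-type averages with operator norm at most $1$ in expectation. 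This bad-event term is what produces the $\gamma_1+\sqrt{2\gamma_1}$ terms. The last condition in \cref{ass:concentration} then says exactly that the expected block contraction factor is below $1$. Independence across blocks, from \cref{ass:independent_covariate}, lets the factors multiply in expectation, which yields geometric decay. Making the norms compatible (working with the maximum over rows of the sum of block operator norms), and controlling second moments rather than first, will need care.

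Finally, I would combine the two steps:
\begin{align*}
    r_{t,\mu}\lesssim \sum_{s=0}^t \omega'^{\,t-s}\sup_{\nu}r^\ast_{s,\nu}+\sum_{\nu\in\cM_u}(\cdots)\le C\sup_{s\le t,\nu}r^\ast_{s,\nu}.
\end{align*}
The $\cM_u$ term is nonzero only at $s=0$, where it is bounded by $\sup_\nu r^\ast_{0,\nu}$. The comparability hypothesis then bounds this by $C' r^\ast_{t,\mu}$, giving a finite $\limsup$ of the ratio.
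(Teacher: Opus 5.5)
Your plan has the same skeleton as the paper's proof. Both unroll \eqref{eq:hbeta-update}, use that $\cM_l^{\rm nc}$ receives edges only from $\cM_u\cup\cM_l^{\rm nc}$, and compare $\cT_r$ with $\cT_r^\ast=P_r\otimes I_d$ over windows of length between $T_0$ and $2T_0-1$. Both then get a contraction factor below one from \eqref{eq:94} and the last condition of \cref{ass:concentration}, multiply over independent windows, and close with the comparability hypothesis. The differences are in bookkeeping:
\begin{itemize}
\item You kill cross terms using conditional mean-zero noise, which gives an exact sum of squares. The paper only uses Minkowski, $\E\|\hbeta_{t,\mu}-\hbeta_{0,\mu}\|_2^2\le(\sum_s E_{t,s,\mu})^2$. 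Either suffices once you have geometric decay.
\item The paper never proves a decay bound for $\E\|[\Pi_{t,s}]_{\mu,\nu}\|_{\op}^2$ through a max-row-sum norm. It recurses on the mixed quantity $E_{t,s,\mu}$ instead. Independence of the new window and the matrix Cauchy--Schwarz inequality give $\sup_\mu E_{t+t_0,s,\mu}\le\sup_\mu E_{t,s,\mu}\cdot\sup_\mu\sum_{m\in\cM_l^{\rm nc}}\|\E[\Omega_{t+t_0,t+1,\mu,m}^\top\Omega_{t+t_0,t+1,\mu,m}]\|_{\op}^{1/2}$. The maximum over rows therefore never sits inside an expectation, which removes the difficulty you anticipate.
\item In your notation the same device is the entrywise recursion $a_{t+t_0}(\mu,\nu)\le\sum_m b(\mu,m)\,a_t(m,\nu)$. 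Here $a_t(\mu,\nu)=(\E\|[\Pi_{t,s}]_{\mu,\nu}\|_{\op}^2)^{1/2}$ and $b(\mu,m)=(\E\|[\Pi_{t+t_0,t}]_{\mu,m}\|_{\op}^2)^{1/2}$; it follows from Minkowski and independence.
\end{itemize}

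Your unrolling from $s=0$ is the correct one. Iterating \eqref{eq:hbeta-update} gives $\hbeta_t-\hbeta_0=\sum_{s\ge1}\Omega_{t,s+1}v_s+(\Omega_{t,1}-I)v_0$, and the paper's \eqref{eq:hat_beta_expression} drops the last term. But this creates a step you have not justified. For $\nu\in\cM_u\setminus\cM_{\rm nature}$ the weight $[\Pi_{t,0}]_{\mu,\nu}$ need not decay in $t$, and your Step 2 covers only $\nu\in\cM_l$. So the claim that this term is $\lesssim\sup_\nu r^\ast_{0,\nu}$ needs a bound on $\E\|[\Pi_{t,0}]_{\mu,\nu}\|_{\op}^2$ that is uniform in $t$. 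A first-exit decomposition provides it. Paths cannot re-enter $\cM_u$, so $[\Pi_{t,0}]_{\mu,\nu}=\sum_{r=1}^t\sum_{m\in\cM_l^{\rm nc}}[\Pi_{t,r}]_{\mu,m}\cT_{r,m,\nu}$, and independence plus your Step 2 decay make the resulting series summable.

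Smaller points:
\begin{itemize}
\item Step 1 is an identity. Since $v_{s,\nu}=\hbeta^\ast_{s,\nu}-\beta_\ast$ exactly, $\E\|v_{s,\nu}\|_2^2=r^\ast_{s,\nu}$, with no appeal to $\kappa,\rho_1,\rho_2$.
\item The crude bad-event bound, and the first-exit step, need control of $\|\cT_{r,\mu,\nu}\|_{\op}$. These blocks are non-symmetric: their eigenvalues lie in $[0,1]$, but their operator norm can exceed $1$ when the Gram matrix is ill-conditioned. The paper asserts $\|\cT\|_{\op}\le 1$ too, so this caveat is shared.
\item With your (correct) convention, deviation $\delta$ and failure probability $\gamma_1$, the good event produces the terms governed by $\delta$. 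The bad event produces a term of order $\sqrt{\gamma_1}$ times the crude bound. This is the reverse of what you wrote.
\item The last display of \cref{ass:concentration} inherits a $\delta\leftrightarrow\gamma_1$ swap from \cref{prop:concentration_Omega}. So it does not match your window condition ``exactly'', and you should derive that condition yourself.
\end{itemize}
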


\begin{proof}[Proof of Theorem \ref{thm:linear-non-collapse}]
    We prove Theorem \ref{thm:linear-non-collapse} in Appendix \ref{appendix:thm:linear-non-collapse}. 
\end{proof}

\begin{rem}
The additional assumption in Theorem \ref{thm:linear-non-collapse} requires that the MSEs of different models, when trained solely on naturally generated data, do not differ substantially. Otherwise, if one model is intrinsically much worse than another when trained on nature-produced data (for example, due to having significantly fewer training samples), and its outputs are used by another model for parameter updates, then we would expect the second model to collapse regardless of the communication pattern.     
\end{rem}

\section{Results for M-estimation}
\label{sec:M}
In this section, we study general M-estimators defined as minimizers of the empirical risk induced by a loss function $L$. 
Specifically, we consider generative models defined by a vector $\beta \in \mathbb{R}^d$ and a function $\varphi: \R^d \times \R \mapsto \R^p$. 
We denote by $P_{\beta}$ the distribution associated with $\beta \in \R^d$, 
and assume that $z \sim P_{\beta}$ admits the representation
\begin{align*}
    z \overset{d}{=} \varphi(\beta, \varepsilon),
\end{align*}
where $\varepsilon \sim \mu_{\varepsilon}$ for some distribution $\mu_{\varepsilon}$ on $\R$. 
Given $z_1, z_2, \cdots, z_n \sim_{i.i.d.} P_{\beta}$, we estimate $\beta$ via empirical risk minimization (ERM):
\begin{align}
\label{eq:ERM}
    \hat\beta =  \, \arg\min_{\beta \in \mathcal{B}} \frac{1}{n} \sum_{i=1}^{n} L (\beta, z_i),
\end{align}
where $\mathcal{B} \subseteq \R^d$. In the case of unconstrained optimization, we simply take $\mathcal{B} = \R^d$. 
Estimators defined by \cref{eq:ERM} are quite general and cover a broad class of model training procedures. 

%
%

\subsection{Interactive learning with general M-estimators}

We now describe the interactive learning procedure for M-estimation under the interaction graph $\cG= (\cM, \cE)$, where $\cM = \{\mu_1, \mu_2, \cdots, \mu_K\}$. 
As in earlier sections, $\beta_* \in \R^d$ denotes the ground-truth parameter, and $\hat\beta_{t, \mu} \in \R^d$ represents the estimate retained by model $\mu$ after the $t$-th training cycle.
At initialization, we set $\hbeta_{0, \mu} = \beta_*$ for $\mu \in \cM_{\rm nature} \subseteq \cM_u$, and for the remaining models, $\hbeta_{0, \mu}$ is set to be the M-estimator obtained via solving the ERM problem \eqref{eq:ERM} over a dataset $Z_{0, \mu} \in \R^{n_{0, \mu} \times p}$. 
Here, the rows of $Z_{0, \mu}$ are i.i.d. generated from the ground-truth distribution $P_{\beta_{\ast}}$. 
To be specific, for $\mu \in \cM \backslash \cM_{\rm nature}$, 
\begin{align*}
    \hat\beta_{0, \mu} = \arg\min_{\beta \in \mathcal{B}} \frac{1}{n_{0, \mu}} \sum_{i = 1}^{n_{0, \mu}} L(\beta, z_{0, \mu, i}), 
\end{align*}
where $z_{0, \mu, i} \in \R^{p}$ denotes the $i$-th row of $Z_{0, \mu}$, generated according to $z_{0, \mu, i} = \varphi(\beta_{\ast}, \varepsilon_{0, \mu, i})$ for $\varepsilon_{0, \mu, i} \sim_{i.i.d.} \mu_{\varepsilon}$. 
In the $t$-th training cycle for $t \in \N_+$, 
models in $\cM_u$ remain unchanged and we set $\hat\beta_{t, \mu} = \hat\beta_{t - 1, \mu}$ for all $\mu \in \cM_u$. 
As for a model $\mu$ in $\cM_l$, we update its parameters based on data generated by models in $\Nin{\mu}$: 
\begin{align*}
    & \hat\beta_{t, \mu} = \arg\min_{\beta \in \mathcal{B}} \,  \frac{1}{\sum_{\nu \in \Nin{\mu}} n_{t, \nu \to \mu} } \sum_{\nu \in \Nin{\mu}} \sum_{i = 1}^{n_{t, \nu \to \mu}} L(\beta, z_{t, \nu \to \mu, i}), \\
    & z_{t, \nu \to \mu, i} \sim_{i.i.d.} P_{\hat\beta_{t - 1, \nu }} \qquad \mbox{for\, } i = 1, 2, \cdots, n_{t, \nu \to \mu},  
\end{align*}
where $z_{t, \nu \to \mu, i}$ is the $i$-th row of the matrix $Z_{t, \nu \to \mu} \in \R^{n_{t, \nu \to \mu} \times p}$, and $Z_{t, \nu \to \mu}$ represents the data passed from model $\nu$ to model $\mu$ during the $t$-th training cycle. 
We write $z_{t, \nu \to \mu, i} = \varphi(\hat\beta_{t - 1, \nu}, \varepsilon_{t, \nu \to \mu, i})$ for $\varepsilon_{t, \nu \to \mu, i} \sim \mu_{\varepsilon}$. 
Analogous to Assumption~\ref{assumption:data-distribution} in the linear setting, we assume that $Z_{t, \nu \to \mu}$ is a subset of $Z_{t, \nu} \in \R^{n_{t, \nu} \times p}$ indexed by $\mathcal{I}_{t, \nu \to \mu} \subseteq [n_{t, \nu}] $, i.e., the rows of $Z_{t, \nu \to \mu}$ consist of $\{ z_{t, \nu, i}: i \in \mathcal{I}_{t, \nu \to \mu} \}$.
Here, $z_{t, \nu, i} \in \R^p$ denotes the $i$-th row of $Z_{t, \nu}$.  
We further assume that the rows of $Z_{t, \nu}$ are sampled i.i.d. from $P_{\hat\beta_{t - 1, \nu}}$, and that the matrices $Z_{t, \nu}$ are mutually independent.
Note that $\{Z_{t, \nu \to \mu}: (\nu, \mu) \in \cE\}$ are not necessarily mutually independent and may overlap with each other.

    
%

\subsection{Assumptions for the M-estimation setting}
\label{sec:assumptions-M}

In this section, we summarize the assumptions needed to establish our main results for M-estimation, beginning with our requirement on data independence. 

\begin{ass}\label{ass:iid_data}
    For all $(t, \mu) \in \N_+ \times \cM$ or $(t, \mu) \in \{0\} \times (\cM \backslash \cM_{\rm nature})$, we denote by $z_{t, \mu, i} \in \R^p$ the $i$-th row of $Z_{t, \mu} \in \R^{n_{t, \mu} \times p}$. 
    For $t \in \N_+$, we assume that $(z_{t, \mu, i})_{i \in [n_{t, \mu}]}$ are i.i.d. sampled from $P_{\hat\beta_{t - 1, \mu}}$, and we write $z_{t, \mu, i} = \varphi(\hat\beta_{t - 1, \mu}, \varepsilon_{t, \mu, i})$. 
    For $\mu \in \cM \backslash \cM_{\rm nature}$, we assume that $(z_{0, \mu, i})_{i \in [n_{0, \mu}]}$ are i.i.d. sampled from $P_{\beta_{\ast}}$, with $z_{0, \mu, i} = \varphi(\beta_{\ast}, \varepsilon_{0, \mu, i})$. 
    We further assume that all relevant $\varepsilon_{t, \mu, i}$ are mutually independent and follow $\mu_{\varepsilon}$.
\end{ass}

We work in an asymptotic regime where the ratios of sample sizes in different training cycles converge to fixed constants as the sample sizes grow to infinity, as formalized in Assumption~\ref{ass:limit_proportion}.

\begin{ass}\label{ass:limit_proportion}
    For $t \in \N_+$, let $n_t = \sum_{\mu \in \cM} n_{t, \mu}$ denote the total number of samples used for training in the $t$-th training cycle, and let $n_0 = \sum_{\mu \in \cM \backslash \cM_{\rm nature}} n_{0, \mu}$ denote the total number of samples used for initialization.  
    We assume that the sample sizes tend to infinity simultaneously, and there exist two positive constants $\underline{\alpha}, \overline{\alpha} > 0$, such that the following holds:
    \begin{enumerate}
        \item For all $t \in \N_+$ and $(\nu, \mu) \in \cE$, we assume $n_{t, \nu \to \mu} / n_t \to \bar{p}_{t, \nu \to \mu} \geq \underline{\alpha}$ as the sample sizes tend to infinity. 
        For all $\mu \in \cM \backslash \cM_{\rm nature}$, we assume $n_{0, \mu} / n_0 \to \bar{p}_{0, \mu} \ge \underline{\alpha}$ as the sample sizes tend to infinity. 
        
        \item For all $t, s \in \N$, we assume that $n_t / n_{s} \to b_{t, s} \in [ \underline{\alpha}, \overline{\alpha} ]$ as the sample sizes tend to infinity.

        \item For all $t \in \N_+$ and $\nu_1, \nu_2 \in \cM$, we assume that the proportion of common data points used to train $\nu_1$ and $\nu_2$ in the $t$-th round converges as the sample sizes tend to infinity, i.e.,
        \begin{equation*}
         \frac{1}{n_t} \left\vert \left\{ (t, \nu, i) : \nu \in \Nin{\nu_1}, \, i \in \mathcal{I}_{t, \nu \to \nu_1} \right\} \cap \left\{ (t, \nu, i) : \nu \in \Nin{\nu_2}, \, i \in \mathcal{I}_{t, \nu \to \nu_2} \right\} \right\vert \to \, q_{t, \nu_1 \cap \nu_2} \in [0, 1].
        \end{equation*}
    \end{enumerate}

\end{ass}

Under Assumption \ref{ass:limit_proportion}, we define a matrix $\bar P_t \in \mathbb{R}^{K \times K}$ as follows: for $i, j \in [K]$, let $\bar P_{t,\mu_i,\mu_j}$ denote the $(i,j)$-th entry of $\bar P_t$, defined by:
\begin{equation}\label{eq:bar-P}
    \bar P_{t, \mu, \nu} = \left\{ \begin{array}{ll}
        \bar p_{t, \nu \to \mu} / (\sum_{m \in \Nin{\mu}} \bar p_{t, m \to \mu}) &  \mbox{if }  t \in \N_+,\, \mu \in \cM_l, \,\mbox{and }\nu \in \Nin{\mu}, \\
       1 & \mbox{if }t \in \N_+, \, \mu \in \cM_u, \,\mbox{and } \mu = \nu, \\
       1 & \mbox{if }t = 0 \mbox{ and }\mu = \nu,  \\
        0 & \mbox{otherwise. }
    \end{array} \right.
\end{equation}

Next, we assume that the model is well specified and  satisfies Fisher consistency, in the sense that the corresponding population risk is uniquely minimized at the true data-generating parameter. 
We further assume that the population Hessian at $\beta_*$ is positive definite.
These assumptions are imposed to ensure the consistency and asymptotic normality of the estimated parameters.

\begin{ass}\label{ass:well_specified}
We assume that the mapping $\beta \mapsto L(\beta, z)$ is twice differentiable for any $z \in \mathbb{R}^p$, and that the mapping $(\beta_1, \beta_2) \mapsto \E [L (\beta_1, \varphi(\beta_2, \veps))]$ is continuous.
For any $\beta \in \mathcal{B}$, we assume that
    \begin{equation*}
        \beta = \arg\min_{\beta' \in \mathcal{B}} \E \left[  L ( \beta', \, \varphi(\beta, \varepsilon)) \right]
    \end{equation*}
    is the unique minimizer. 
    In the above equation, the expectation is taken over $\varepsilon \sim \mu_{\varepsilon}$. Further, we assume that the population Hessian is strictly positive at $\beta_*$:
    \begin{equation*}
        \E \left[ \nabla_{\beta}^2 L (\beta, \varphi (\beta_*, \veps) ) \right] \Big\vert_{\beta  = \beta_*} \succ 0.
    \end{equation*}
    We also assume that for all $\beta_1, \beta_2 \in \mathcal{B}$:
    \begin{align*}
        \E[L(\beta_1, \varphi(\beta_2, \varepsilon))] < \infty, \qquad \E[\nabla_{\beta_1}L(\beta_1, \varphi(\beta_2, \varepsilon)) ] < \infty, \qquad \E[\nabla_{\beta_1}^2 L(\beta_1, \varphi(\beta_2, \varepsilon))] < \infty,
    \end{align*}
    and in addition, $\E[\|\nabla_{\beta}L(\beta, \varphi(\beta_\ast, \varepsilon)) |_{\beta = \beta_{\ast}}\|_2^2] < \infty$. 
\end{ass}
We additionally assume that either $L$ is a convex function, or $\mathcal{B}$ is a compact subset of $\R^d$. This condition guarantees that the ERM problem~\eqref{eq:ERM} has a well-defined solution.
\begin{ass}
\label{ass:L-convex}
We assume $\beta_* \in \operatorname{int} \mathcal{B}$. 
In addition, we assume at least one of the following conditions hold: (1) $L (\beta, z)$ is convex in $\beta$; (2) $\mathcal{B}$ is compact. 
\end{ass}

We next state our key assumptions on the loss function $L$. 
In particular, we require that $L$ and its Hessian belong to Glivenko-Cantelli function classes, while the Jacobian of $L$ forms a Donsker class. We begin by recalling the definitions of Glivenko-Cantelli and Donsker classes. For more details, see Chapter 2 of \cite{van1996weak}.

\begin{defn}[Glivenko–Cantelli class]
\label{def:GC-class}
We say that a function class $\mathcal{F}$ is $P$-Glivenko–Cantelli, if 
\begin{align*}
    \sup_{f \in \mathcal{F}} \Big| \frac{1}{n} \sum_{i=1}^{n} f(X_i) - \mathbb{E}[f(X_1)] \Big| \overset{a.s.}{\to} 0 
\end{align*}
as $n \to \infty$, for $X_i \sim_{i.i.d.} P$. 
\end{defn}

\begin{defn}[Donsker class]
    We say that a function class $\mathcal{F}$ is $P$-Donsker, if 
\begin{align*}
    & \mathbb{G}_n \overset{d}{\to}\mathbb{G} \qquad \mathrm{ in }\,\,\,\ell^{\infty}(\mathcal{F}),  \\
    & \mathbb{G}_n(f) = \frac{1}{\sqrt{n}} \sum_{i = 1}^n \big( f(X_i) - \E[f(X_1)] \big)\qquad \mathrm{ for }\,\,\, f \in \mathcal{F}. 
\end{align*}
Here, $X_i \sim_{i.i.d.} P$, and $\mathbb{G}$ is a mean-zero Gaussian process with covariance
\begin{align*}
    \mathrm{Cov}\big( \mathbb{G}(f), \, \mathbb{G}(g) \big) = \mathrm{Cov}\big( f(X), \, g(X) \big), \qquad X \sim P, \,\,\,\, \forall f, g \in  \mathcal{F},
\end{align*}
known as the $P$-Brownian Bridge.
\end{defn}

We now state our assumptions on the loss function $L$:  

\begin{ass}\label{ass:glivenko_cantelli}
    For any compact set $\Omega \subseteq \R^d$, we assume the following: 
\begin{enumerate}
    \item The function classes 
        \begin{align*}
        & \left\{ \varepsilon \mapsto L \left( \beta_1,\, \varphi (\beta_2, \, \varepsilon )\right) \big\vert \, (\beta_1, \beta_2) \in \Omega^2 \right\}, \\
        & \left\{ \varepsilon \mapsto \nabla_{\beta_1}^2 L \left( \beta_1,\, \varphi (\beta_2, \, \varepsilon )\right) \big\vert \,(\beta_1, \beta_2) \in \Omega^2 \right\}
    \end{align*}
    are $\mu_{\varepsilon}$-Glivenko-Cantelli.
    \item For any fixed $\beta_1 \in \R^d$, the function class 
    \begin{equation*}
		\left\{ \varepsilon \mapsto \nabla_{\beta_1} L \left( \beta_1,\, \varphi (\beta_2, \, \varepsilon )\right) \big\vert \,\beta_2 \in \Omega \right\}
	\end{equation*}
    is $\mu_{\varepsilon}$-Donsker. 
\end{enumerate}

\begin{rem}
    Assumption~\ref{ass:glivenko_cantelli} ensures that tools from empirical process theory (such as the uniform law of large numbers and the uniform central limit theorem) can be applied to analyze the asymptotic behavior of M-estimators in this multi-round interactive learning setting. 
    Notably, these conditions are weaker than many of the regularity assumptions commonly imposed in the literature, and hold for a broad class of models, including several canonical generalized linear models (GLMs). See \cref{sec:example_assumption} for further details.
\end{rem}    
\end{ass}

Finally, we assume that the population risk has sufficient regularity to allow for certain interchange of differentiation and integration. 

\begin{ass}\label{ass:population_regularity}
    The mapping $(\beta_1, \beta_2) \mapsto \E_{\varepsilon \sim \mu_{\varepsilon}} [ \nabla_{\beta_1} L ( \beta_1,\, \varphi (\beta_2, \, \varepsilon )) ]$ is continuously differentiable, and
    \begin{equation*}
    	\nabla_{\beta_1} \E_{\varepsilon \sim \mu_{\varepsilon}} [ \nabla_{\beta_1} L ( \beta_1,\, \varphi (\beta_2, \, \varepsilon )) ] = \, \E_{\varepsilon \sim \mu_{\varepsilon}} [ \nabla_{\beta_1}^2 L ( \beta_1,\, \varphi (\beta_2, \, \varepsilon )) ].
    \end{equation*} 
\end{ass}

Later in Section~\ref{sec:example_assumption}, we provide examples demonstrating that the assumptions listed above are satisfied by many commonly used M-estimators. 

\subsection{Main results}

Following the notation in Section~\ref{sec:matrix-form}, we define $\hat{\beta}_t = (\hbeta_{t, \mu_1}^{\top}, \hbeta_{t, \mu_2}^{\top}, \cdots, \hbeta_{t, \mu_K}^{\top})^{\top} \in \R^{dK}$ as the collection of learned parameters after the $t$-th training cycle, with the convention that $\hat\beta_{-1} = (\beta_{\ast}^{\top}, \beta_{\ast}^{\top}, \cdots, \beta_{\ast}^{\top})^{\top} \in \mathbb{R}^{dK}$. Let $\hbeta_{t, \mu}^*$ denote the M-estimator trained using the same number of samples as $\hbeta_{t, \mu}$, except that the samples are drawn directly from $P_{\beta_*}$. Using standard arguments in asymptotic statistics (cf. Chapter 3.2 of \cite{van1996weak}), we know that $\hbeta_{t, \mu}^*$ is consistent and asymptotically normal as the sample sizes tend to infinity (see Appendix \ref{sec:consistency-normality} for a proof). 
Further, the asymptotic covariance of $\hbeta_{t, \mu}^*$ is given by
\begin{equation}\label{eq:defn_V_star}
    V_* = \, \E \left[ \nabla_{\beta}^2 L (\beta, \varphi (\beta_{\ast},  \varepsilon )) \right]^{-1} \E \left[ \nabla_{\beta} L (\beta, \varphi (\beta_{\ast},  \varepsilon )) \nabla_{\beta} L (\beta, \varphi (\beta_{\ast},  \varepsilon ))^\top \right] \E \left[ \nabla_{\beta}^2 L (\beta, \varphi (\beta_{\ast},  \varepsilon )) \right]^{-1} \Big|_{\beta = \beta_{\ast}}.
\end{equation}
%
Recall the limiting proportions $\bar{p}_{0, \mu}$, $\bar{p}_{t, \nu \to \mu}$, and $q_{t, \mu_1 \cap \mu_2}$ from \cref{ass:limit_proportion}, we define the covariance matrices $\Sigma_0 \in \R^{dK \times dK}$ and $V_t \in \R^{dK \times dK}$ for $t \in \mathbb{N}_+$ as follows: For $i, j \in [K]$, the $(i, j)$-th block of $\Sigma_0$ is a $d \times d$ matrix proportional to $V_*$:
\begin{equation}\label{eq:def_Sigma_0}
    \Sigma_{0, \mu_i, \mu_j} = \, \frac{\bone \{ \mu_i = \mu_j, \, \mu_i \not \in \cM_{\rm nature} \} }{\bar{p}_{0, \mu_i}} V_*,
\end{equation}
and the $(i, j)$-th block of $V_t$ is also a constant multiple of $V_*$:
\begin{equation}\label{eq:def_V_t}
    V_{t, \mu_i, \mu_j} = \, \frac{q_{t, \mu_i \cap \mu_j} \bone \{ \mu_i \in \cM_l, \, \mu_j \in \cM_l \} }{(\sum_{\nu \in \Nin{\mu_i}} \bar p_{t, \nu \to \mu_i})(\sum_{\nu \in \Nin{\mu_j}} \bar p_{t, \nu \to \mu_j})} V_*.
\end{equation}
With these definitions, our first main result establishes the large-sample properties of $\hat{\beta}_{t}$ as the sample sizes grow to infinity. 

%
\begin{thm}\label{thm:m_est_variance}
Under Assumptions \ref{ass:iid_data}-\ref{ass:population_regularity}, 
as the sample sizes grow to infinity, each $\hat{\beta}_{t, \mu}$ is consistent and asymptotically normal. 
In particular,
\begin{equation*}
    \sqrt{n_t} \left( \hat{\beta}_{t} - \hat\beta_{-1} \right) \stackrel{d}{\to} \normal \left( 0, \Sigma_{t} \right),
\end{equation*}
where the asymptotic covariance $\Sigma_{t} \in \R^{dK \times dK}$ is defined recursively as follows: 
\begin{equation}\label{eq:asym_var_recursive}
    \Sigma_{t} = \, b_{t, t-1} (\bar P_t \otimes I_d) \Sigma_{t-1} (\bar P_t \otimes I_d)^\top + V_t,
\end{equation}
where we recall that $\bar P_t$ is defined in \cref{eq:bar-P}, $b_{t, t-1}$ is from Assumption \ref{ass:limit_proportion}, and $\Sigma_0$ and $V_t$ are defined via \cref{eq:def_Sigma_0,eq:def_V_t}, respectively.
\end{thm}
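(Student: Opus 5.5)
We argue by induction on $t$, proving jointly that $\hat\beta_t \to \hat\beta_{-1}$ in probability and that $\sqrt{n_t}(\hat\beta_t - \hat\beta_{-1})$ is asymptotically normal with covariance $\Sigma_t$. The base case $t=0$ is the classical M-estimation result applied separately to each $\mu \notin \cM_{\rm nature}$ (Appendix~\ref{sec:consistency-normality}). It gives $\sqrt{n_{0,\mu}}(\hat\beta_{0,\mu}-\beta_*) \to \normal(0,V_*)$, hence $\sqrt{n_0}(\hat\beta_{0,\mu}-\beta_*) \to \normal(0, V_*/\bar p_{0,\mu})$. Independence across $\mu$ comes from Assumption~\ref{ass:iid_data}, and nature nodes are exactly $\beta_*$. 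This yields $\Sigma_0$ in \eqref{eq:def_Sigma_0}.

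For the inductive step, fix $t\ge1$ and condition on $\cF_{t-1} = \sigma(\hat\beta_{t-1})$. The fresh noises $\varepsilon_{t,\nu,i}$ are independent of $\cF_{t-1}$. For $\mu\in\cM_u$ the update is trivial, and both $\bar P_t$ and $V_t$ (whose blocks vanish for $\mu\notin\cM_l$) reproduce this. For $\mu\in\cM_l$, write $N_\mu = \sum_{\nu\in\Nin{\mu}} n_{t,\nu\to\mu}$ and let $\hat R_\mu(\beta)$ be the empirical risk.

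\textbf{Consistency.} By the induction hypothesis $\hat\beta_{t-1,\nu}$ lies in a compact neighbourhood $\Omega$ of $\beta_*$ with probability tending to one. Assumption~\ref{ass:glivenko_cantelli}(1) is uniform over $\beta_2\in\Omega$, so $\hat R_\mu(\beta)$ converges uniformly on compacts to $\sum_\nu \bar P_{t,\mu,\nu}\,\E[L(\beta,\varphi(\beta_*,\varepsilon))]$. Here we use continuity in $\beta_2$ from Assumption~\ref{ass:well_specified}. By Fisher consistency this limit is uniquely minimized at $\beta_*$. The argmax theorem, combined with either compactness of $\mathcal B$ or the convexity argument under Assumption~\ref{ass:L-convex}, gives $\hat\beta_{t,\mu}\to\beta_*$.

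\textbf{Linearization.} A Taylor expansion of $\nabla\hat R_\mu(\hat\beta_{t,\mu})=0$ around $\beta_*$ is made with the Hessian GC class and consistency, so the empirical Hessian tends to $H=\E\nabla^2L(\beta_*,\varphi(\beta_*,\varepsilon))$. This gives
\[
\hat\beta_{t,\mu}-\beta_* = -H^{-1}\frac1{N_\mu}\sum_{\nu}\sum_{i} \nabla L(\beta_*, z_{t,\nu\to\mu,i}) + o_P(\|\cdot\|).
\]
Split each gradient as $G_\nu(\hat\beta_{t-1,\nu}) + [\nabla L(\beta_*,z_i) - G_\nu(\hat\beta_{t-1,\nu})]$, where $G(\beta_2) = \E_\varepsilon \nabla_{\beta_1}L(\beta_*,\varphi(\beta_2,\varepsilon))$.

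\emph{Drift term.} By Assumption~\ref{ass:population_regularity}, $G$ is $C^1$ with $G(\beta_*)=0$. Differentiating the Fisher-consistency identity $\E\nabla_{\beta_1}L(\beta_1,\varphi(\beta_2,\varepsilon))|_{\beta_1=\beta_2}=0$ in $\beta_2$ gives $\nabla G(\beta_*) = -H$. Hence $-H^{-1}G(\hat\beta_{t-1,\nu}) = (\hat\beta_{t-1,\nu}-\beta_*) + o_P$. Averaging with weights $n_{t,\nu\to\mu}/N_\mu\to \bar P_{t,\mu,\nu}$ produces $(\bar P_t\otimes I_d)(\hat\beta_{t-1}-\hat\beta_{-1})$. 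Multiplying by $\sqrt{n_t}=\sqrt{n_t/n_{t-1}}\sqrt{n_{t-1}}$ gives the factor $b_{t,t-1}$.

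\emph{Fluctuation term.} Because Assumption~\ref{ass:glivenko_cantelli}(2) is Donsker uniformly in $\beta_2\in\Omega$, the centered empirical process evaluated at the random $\hat\beta_{t-1,\nu}\to\beta_*$ is asymptotically equivalent to its value at $\beta_*$; this is asymptotic equicontinuity. Conditionally on $\cF_{t-1}$, it is therefore a sum of i.i.d.\ mean-zero vectors with covariance $\to \E[\nabla L\nabla L^\top]$ at $\beta_*$. Since $N_\mu/n_t\to\sum_\nu \bar p_{t,\nu\to\mu}$, the cross-covariance between the fluctuation terms of $\mu_i$ and $\mu_j$ counts only shared sample points, i.e.\ $q_{t,\mu_i\cap\mu_j}n_t$. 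This gives exactly $V_t$.

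\textbf{Combining.} The fluctuation term is conditionally asymptotically $\normal(0,V_t)$ with a deterministic limiting covariance, so it is asymptotically independent of $\cF_{t-1}$. A characteristic-function argument (conditional CLT plus the induction hypothesis) then gives the joint limit with covariance $b_{t,t-1}(\bar P_t\otimes I_d)\Sigma_{t-1}(\bar P_t\otimes I_d)^\top+V_t$.

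The main obstacle is the step replacing the random, data-dependent generating parameter $\hat\beta_{t-1,\nu}$ by $\beta_*$ inside the empirical process, while simultaneously establishing conditional asymptotic normality. The approach is first to work on the event $\{\hat\beta_{t-1}\in\Omega\}$. Then use the uniform Donsker property to obtain $\sup_{\beta_2\in\Omega}$ equicontinuity, which holds because the noises are independent of $\hat\beta_{t-1}$. Finally, invoke a conditional Lindeberg CLT, with the second-moment condition from Assumption~\ref{ass:well_specified}.
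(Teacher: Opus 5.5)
Your proposal is correct and follows essentially the same route as the paper's proof. The shared structure is: induction on $t$; consistency from the Glivenko--Cantelli classes together with continuity, Fisher consistency and compactness/convexity; a Taylor expansion of the first-order condition using the Hessian class; and a split of the score into two pieces, combined via independence of the fresh noise to give the recursion for $\Sigma_t$. The two pieces are a centered empirical-process term, where the Donsker property lets you replace $\hat\beta_{t-1,\nu}$ by $\beta_*$, and a drift term whose Jacobian $-H$ comes from differentiating the Fisher-consistency identity.

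One remark applies equally to the paper: the replacement step via asymptotic equicontinuity tacitly needs $\beta_2 \mapsto \nabla_{\beta_1} L(\beta_*, \varphi(\beta_2, \cdot))$ to be continuous in $L^2(\mu_{\varepsilon})$ at $\beta_*$, which is not literally among the stated assumptions.
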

\begin{proof}[Proof of \cref{thm:m_est_variance}]
We prove \cref{thm:m_est_variance} in \cref{sec:proof_m_variance}.
\end{proof}
Building on the recursive formula \eqref{eq:asym_var_recursive}, our second main result characterizes the limiting behavior of $\Sigma_t$ as $t \to \infty$. 
To be specific, we define $r_{t,\mu} = \Tr(\Sigma_{t,\mu,\mu})$ and $r_{t,\mu}^* = \Tr(V_{\ast})$ as proxies for the mean squared errors of $\hat{\beta}_{t,\mu}$ and $\hat{\beta}_{t,\mu}^*$, respectively, where we recall that $\hat{\beta}_{t,\mu}^*$ denotes the benchmark estimator trained exclusively on natural data using the same algorithm and the same number of training samples.
We then compute the risk ratio 
\begin{equation*}
    \frac{r_{t, \mu}}{r_{t, \mu}^*} = \, \frac{\Tr (\Sigma_{t, \mu, \mu})}{\Tr (V_*)}
\end{equation*}
to quantify the asymptotic relative risk of $\hat{\beta}_{t, \mu}$ compared to that of $\hat{\beta}_{t, \mu}^*$ in the large-sample limit. 
Similarly as before, we say that a model $\mu$ collapses if the limit superior of this ratio diverges as $t \to \infty$. 
Since the transition matrices $\overline{P}_t$ are completely determined by the network structure, we should expect the same qualitative behavior as in the linear regression setting: As $t \to \infty$, every model in $\cM_l^{\rm c}$ collapses, while every model in $\cM_l^{\rm nc}$ remains non-collapsing.
We establish this rigorously in Theorem \ref{thm:network_collapse} below. 
\begin{thm}\label{thm:network_collapse}
Under Assumptions \ref{ass:iid_data}-\ref{ass:population_regularity}, the following holds: 
    \begin{itemize}
        \item [(a)] For all $\mu \in \cM_l^{\rm c}$:
            \begin{equation*}
                \lim_{t \to \infty} \frac{r_{t, \mu}}{r_{t, \mu}^*} = \lim_{t \to \infty} \frac{\Tr (\Sigma_{t, \mu, \mu})}{\Tr (V_*)} = \, \infty.
            \end{equation*}
        \item [(b)] For all $\mu \in \cM_l^{\rm nc}$:
            \begin{equation*}
                \limsup_{t \to \infty} \frac{r_{t, \mu}}{r_{t, \mu}^*} = \limsup_{t \to \infty} \frac{\Tr (\Sigma_{t, \mu, \mu})}{\Tr (V_*)} < \, \infty.
            \end{equation*}
    \end{itemize}
\end{thm}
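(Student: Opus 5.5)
Since every block of $\Sigma_0$ and $V_t$ is a scalar multiple of $V_*$, and $\bar P_t \otimes I_d$ acts blockwise by scalars, the recursion \eqref{eq:asym_var_recursive} keeps this structure. By induction on $t$, $\Sigma_t = S_t \otimes V_*$, where the $K\times K$ positive semidefinite matrices $S_t$ satisfy
\begin{equation*}
S_t = b_{t,t-1}\, \bar P_t S_{t-1} \bar P_t^\top + W_t, \qquad W_t \succeq 0 .
\end{equation*}
Here $S_0$ is diagonal with entries $\bone\{\mu\notin\cM_{\rm nature}\}/\bar p_{0,\mu}$, and $W_t$ is read off from \eqref{eq:def_V_t}. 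Hence $\Tr(\Sigma_{t,\mu,\mu})/\Tr(V_*) = S_{t,\mu,\mu}$, and everything reduces to a scalar problem about $S_{t,\mu,\mu}$.

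First I would record the elementary facts. The matrix $\bar P_t$ is row-stochastic. Its support is fixed by $\cG$: for $\mu\in\cM_l$ the row is supported exactly on $\Nin{\mu}$ with entries $\ge \underline\alpha$ (the denominator is at most $1$), and for $\mu\in\cM_u$ the row is $e_\mu$. The $b_{t,t-1}$ lie in $[\underline\alpha,\overline\alpha]$. Their telescoping products equal $n_t/n_s \to b_{t,s}\in[\underline\alpha,\overline\alpha]$, so $\prod_{r=s+1}^t b_{r,r-1} = b_{t,s}$ is bounded above and below uniformly. The diagonal of $W_t$ satisfies $W_{t,\mu,\mu} = q_{t,\mu\cap\mu}/(\sum_{\nu}\bar p_{t,\nu\to\mu})^2$. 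Here $q_{t,\mu\cap\mu}$ is the fraction of round-$t$ samples used by $\mu$, namely $\sum_\nu \bar p_{t,\nu\to\mu}$, so $W_{t,\mu,\mu} = 1/\sum_\nu\bar p_{t,\nu\to\mu} \in [1, 1/\underline\alpha]$ for $\mu\in\cM_l$, and $0$ otherwise. Unrolling the recursion gives
\begin{equation*}
S_t=\sum_{s=0}^t b_{t,s}\, J_{t,s+1} W_s J_{t,s+1}^\top, \qquad J_{t,s+1}=\bar P_t\cdots\bar P_{s+1},\ W_0:=S_0 .
\end{equation*}

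\textbf{Part (a).} Take $\mu\in\cM_l^{\rm c}$. There is $\nu\in\cM_l^\infty$ and a path from $\nu$ to $\mu$ of length $\ell\le K$. Let $C\subseteq\cM_l^\infty$ be the set of ancestors of $\nu$ together with $\nu$. Every in-neighbor of a node in $\cM_l^\infty$ lies in $\cM_l^\infty$, because an in-neighbor in $\cM_u$ or one reachable from $\cM_u$ is impossible. Hence $C$ is closed under taking in-neighbors, and the rows of $J_{t,s+1}$ indexed by $C$ are probability vectors supported on $C$. Moreover, every node in $C$ has an in-neighbor, so $C$ contains a cycle.

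For $s\le t-\ell-K$, I would show that the row $J_{t,s+1,\mu,\cdot}$ places mass at least $\underline\alpha^{\ell}$ on $C$. Then I would use a quadratic-form lower bound:
\begin{equation*}
(J W_s J^\top)_{\mu\mu} = x^\top W_s x \ge \textstyle\sum_{m} x_m^2 W_{s,m,m} - \text{(cross terms)} .
\end{equation*}
The cross terms are the main obstacle. Since $W_s$ is only positive semidefinite with nonnegative entries, and $q_{s,\mu_i\cap\mu_j}\ge 0$, all entries of $W_s$ are nonnegative. That gives $x^\top W_s x\ge \sum_m x_m^2 W_{s,m,m}$ for $x\ge 0$, so the cross terms only help. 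Combining this with Cauchy--Schwarz gives
\begin{equation*}
x^\top W_s x \ge \frac{\bigl(\sum_{m\in C}x_m\bigr)^2}{|C|} \ge \frac{\underline\alpha^{2\ell}}{K} .
\end{equation*}
Summing over $s$ yields $S_{t,\mu,\mu}\ge c\,(t-\ell-K)\to\infty$, using $b_{t,s}\ge\underline\alpha$.

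\textbf{Part (b).} Take $\mu\in\cM_l^{\rm nc}$. Every ancestor of $\mu$ lies in $\cM_u\cup\cM_l^{\rm nc}$, and $\cM_u$ rows are absorbing. The argument behind \eqref{eq:94} carries over verbatim to $\bar P$ with $\alpha$ replaced by $\underline\alpha$. It gives that the row $J_{t,s+1,\mu,\cdot}$ has mass on $\cM_l$ at most $\omega^{\lfloor (t-s)/T_0\rfloor}$, with $\omega<1$. Since $W_s$ vanishes on $\cM_u$ rows and columns for $s\ge1$, its entries are bounded by $1/\underline\alpha^2$. Hence
\begin{equation*}
(J W_s J^\top)_{\mu\mu}\le \underline\alpha^{-2}\,\omega^{2\lfloor (t-s)/T_0\rfloor} .
\end{equation*}
The $s=0$ term is bounded by $\max_\mu 1/\bar p_{0,\mu}$. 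Multiplying by $b_{t,s}\le\overline\alpha$ and summing the geometric series gives $\sup_t S_{t,\mu,\mu}<\infty$.

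The step needing the most care is the entry-nonnegativity and support bookkeeping for the unrolled products. The key point there is that $C$ is in-closed, so that no mass leaks out of it.
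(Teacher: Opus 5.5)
Your proposal is correct and follows essentially the same route as the paper. Both proofs unroll the recursion into $\sum_s b_{t,s}\, J_{t,s+1}(\cdot)J_{t,s+1}^\top$. Both then obtain the lower bound by dropping the nonnegative cross terms and applying Cauchy--Schwarz, and the upper bound from a uniform bound on the blocks, as in Lemma~\ref{thm:m_est_collapse}. This reduces everything to the mass that row $\mu$ of $J_{t,s+1}$ places on $\cM_l$. That mass is bounded below by a path from $\cM_l^\infty$ in part (a), and decays geometrically by the analogue of \eqref{eq:94} for $\cM_l^{\rm nc}$ in part (b). Your $\Sigma_t = S_t \otimes V_*$ packaging is just a cleaner scalar form of the same argument.
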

\begin{proof}[Proof of \cref{thm:network_collapse}]
    We prove \cref{thm:network_collapse} in \cref{sec:proof_m_collapse}.
\end{proof}


\subsection{Examples}\label{sec:example_assumption}

In this section, we present several examples of $(L, \varphi)$ pairs that satisfy Assumptions \ref{ass:well_specified}-\ref{ass:population_regularity}.  
In particular, we focus on the widely used generalized linear models (GLMs).
Specifically, for a GLM with true parameter $\beta_{\ast} \in \mathbb{R}^d$, the data $z = (x, y)$ satisfies the following condition:  
%
\begin{align}
\label{eq:glm_specification}
\E_{\beta_{\ast}} [y \vert x] = A' (\beta_{\ast}^\top x),
\end{align}
where 
$A$ is a strictly convex and twice continuously differentiable cumulant function, $x$ represents the vector of covariates, and $y$ denotes the response variable.
We present a few examples below. 
%
\begin{itemize}
    \item [(i)] Linear regression: $A (\xi) = \xi^2 / 2$, and
    \begin{equation*}
        y = \, \beta_*^\top x + \veps, \quad \veps \perp\!\!\!\perp x, \,\, \E [\veps] = 0, \,\, \E [\veps^2] < \infty.
    \end{equation*}
    \item [(ii)] Logistic regression: $A(\xi) = \log (1 + \exp (\xi) )$, and
    \begin{equation*}
        y \,\vert\, x \sim \, \operatorname{Bernoulli} \left( \frac{\exp(\beta_*^\top x)}{1 + \exp(\beta_*^\top x)} \right).
    \end{equation*}
    \item [(iii)] Poisson regression: $A(\xi) = \exp(\xi)$, and
    \begin{equation*}
        y \,\vert\, x \sim \, \operatorname{Poisson} \big( e^{\beta_*^\top x } \big).
    \end{equation*}
\end{itemize}
We consider the following standard loss function $L$ for GLMs: 
\begin{equation}\label{eq:glm_loss}
    L (\beta, z) = \, - y \beta^\top x + A (\beta^\top x).
\end{equation}
Below, we show that under mild moment conditions on the covariates, Assumptions~\ref{ass:well_specified}--\ref{ass:population_regularity} are satisfied for these specific GLMs.
In Proposition~\ref{prop:verify_glm_assumption} in the appendix, we present some general conditions under which a GLM satisfies Assumptions~\ref{ass:well_specified}--\ref{ass:population_regularity}.
\begin{thm}\label{thm:glm_example}
    The following is true:
    \begin{itemize}
        \item [(a)] For linear regression, Assumptions~\ref{ass:well_specified}--\ref{ass:population_regularity} are satisfied, provided that $\E[\norm{x}_2^4] < \infty$.
        \item [(b)] For logistic regression, Assumptions~\ref{ass:well_specified}-\ref{ass:population_regularity} are satisfied, provided that $\E [\| x \|_2^3] < \infty$.
        \item [(c)] For Poisson regression, Assumptions~\ref{ass:well_specified}-\ref{ass:population_regularity} are satisfied, provided that $\E [\exp (R \| x \|_2)] < \infty$ for any $R > 0$.
    \end{itemize}
\end{thm}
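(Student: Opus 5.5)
The plan is to verify Assumptions~\ref{ass:well_specified}--\ref{ass:population_regularity} for each GLM separately. All three share the loss \eqref{eq:glm_loss}, so we first record the derivative formulas once:
\begin{equation*}
\nabla_\beta L(\beta,z) = \big(A'(\beta^\top x) - y\big)x, \qquad \nabla^2_\beta L(\beta,z) = A''(\beta^\top x)\, xx^\top .
\end{equation*}
We take $\varphi(\beta,\varepsilon)$ to generate $(x,y)$, where $\varepsilon$ encodes both $x$ and an independent source of randomness for $y$. For linear regression, $y=\beta^\top x+\varepsilon'$. For the logistic and Poisson models, $y = F^{-1}_{\beta^\top x}(U)$ with $U$ uniform, where $F_{\eta}$ is the Bernoulli or Poisson CDF with natural parameter $\eta$. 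The paper writes $\varepsilon$ as scalar, but the same argument goes through with a vector-valued $\varepsilon$, or by encoding it through a measurable bijection.

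We then check the conditions in the following order.

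\emph{Fisher consistency and positive Hessian.} By \eqref{eq:glm_specification} and the tower property,
\begin{equation*}
\E[L(\beta',\varphi(\beta,\varepsilon))] = \E\big[A(\beta'^\top x) - A'(\beta^\top x)\,\beta'^\top x\big].
\end{equation*}
This is convex in $\beta'$ and its gradient vanishes at $\beta'=\beta$. Uniqueness of the minimizer follows from strict convexity of $A$, provided $\E[xx^\top]\succ 0$. We add this identifiability condition on $x$; it is implicit in the statement. The same condition gives $\E[A''(\beta_*^\top x)xx^\top]\succ 0$, since $A''>0$.

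\emph{Convexity.} Assumption~\ref{ass:L-convex} holds via its option (1): $L$ is convex in $\beta$ because $A$ is convex. We take $\mathcal B=\R^d$.

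\emph{Moment conditions.} The required moments follow from the stated moment hypotheses:
\begin{itemize}
\item for linear regression, $\E\|x\|^4<\infty$ gives the finite second moment of the score $(\beta^\top x - y)x$;
\item for logistic regression, $A'$ and $A''$ are bounded and $y\in\{0,1\}$, so only low moments of $x$ are needed;
\item for Poisson regression, the bound $y\le$ (Poisson with mean $e^{\beta^\top x}$) together with the exponential-moment hypothesis controls every term.
\end{itemize}
Continuity of $(\beta_1,\beta_2)\mapsto\E L$, and the differentiation under the integral in Assumption~\ref{ass:population_regularity}, follow from dominated convergence. The dominating envelopes are taken uniformly over compact $\Omega$, e.g. $\sup_{\beta\in\Omega}e^{\beta^\top x}\le e^{R\|x\|}$ in the Poisson case. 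Note that $\E_{\varepsilon}[\nabla_{\beta_1}L] = \E[(A'(\beta_1^\top x)-A'(\beta_2^\top x))x]$ is explicitly $C^1$.

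\emph{Empirical-process conditions (main obstacle).} Assumption~\ref{ass:glivenko_cantelli} is the delicate step. For the Glivenko–Cantelli part, we use that the classes are indexed by the compact parameter set $\Omega^2$, and that for fixed $\varepsilon$ the maps $(\beta_1,\beta_2)\mapsto L$ and $\nabla^2 L$ are continuous. Continuity holds for Bernoulli and Poisson as well, because $y=F^{-1}_{\beta_2^\top x}(U)$ is continuous in $\beta_2$ except on a $U$-null set. Given an integrable envelope, the standard result (van der Vaart, Example 19.8) then yields GC.

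The Donsker part, for fixed $\beta_1$, concerns the class $\{(A'(\beta_1^\top x)-y_{\beta_2})x:\beta_2\in\Omega\}$. For linear regression, this class is Lipschitz in $\beta_2$ with square-integrable Lipschitz constant $\|x\|^2$, and is therefore Donsker via bracketing (van der Vaart, Example 19.7). This is exactly where the fourth moment is used.

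For the discrete models, $y_{\beta_2}$ is not Lipschitz in $\beta_2$, so a different argument is needed. We will write $y_{\beta_2}=\sum_k \mathbf 1\{U> F_{\beta_2^\top x}(k)\}$; for Bernoulli this is a single indicator. The resulting indicator class $\{\mathbf 1\{U>g(\beta_2^\top x)\}\}$ with monotone $g$ is VC-subgraph, so it has uniformly bounded entropy. Multiplying by the fixed vector $x$ preserves this, since the envelope $\|x\|$ is square-integrable, and the permanence of uniform-entropy Donsker classes under multiplication by a square-integrable envelope then gives the Donsker property.

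For Poisson, the infinite sum needs extra care. We will truncate at level $k\le M$, and control the tail uniformly over $\beta_2\in\Omega$ using the exponential moment of $\|x\|$.

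Bounding these bracketing and uniform-entropy integrals with the stated moments is where we expect the real work to be. We will package this as a general GLM proposition and then specialize it to each of the three models.
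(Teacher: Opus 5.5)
Your plan is correct in outline and has the same skeleton as the paper (a general GLM proposition, Proposition~\ref{prop:verify_glm_assumption}, then specialization to the three models). The empirical-process step, however, goes a different way.

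The paper handles it as follows:
\begin{itemize}
\item The Glivenko--Cantelli conditions come from preservation under continuous maps with an integrable envelope (Lemma~\ref{lem:preserve_gc}).
\item The linear-regression Donsker condition comes from Lipschitz preservation (Lemma~\ref{lem:preserve_donsker}).
\item For logistic and Poisson regression it proves a bespoke bracketing result, Lemma~\ref{lem:indicator_donsker}. It writes $y=\sum_k \mathbf{1}\{\varepsilon\le f_k(\beta^\top x)\}$ with nondecreasing $f_k$. It brackets every $\beta$ near a net point $\beta_j$ by evaluating at $\beta_j^\top x\pm\delta\|x\|_2$. Decay conditions on $f_k$ and $f_k'$ then control the infinite sum in one stroke.
\end{itemize}
You instead use parametric continuity (Example 19.8) for GC, Lipschitz-in-parameter bracketing (Example 19.7) for the linear Donsker class, and VC/uniform-entropy arguments for the discrete models.

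Your route is more standard and needs fewer moments. For logistic regression it only uses $\mathbb{E}\|x\|_2^2<\infty$, whereas the paper bounds its bracket widths through the Lipschitz constant of $f_k$, which costs $\mathbb{E}\|x\|_2^3$. The paper's route buys a single lemma with explicit brackets that covers both discrete models and avoids VC and measurability bookkeeping.

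Three things to tighten.

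(1) For Poisson, truncating at $k\le M$ and bounding only the $L^2$ size of the tail's envelope does not control $\sup_{\beta_2\in\Omega}|\mathbb{G}_n f|$ over the tail class. You would also need an entropy-based maximal inequality there. It is simpler not to truncate at all. The map $\eta\mapsto F^{-1}_{\eta}(U)$ is nondecreasing, so each subgraph of $(x,U)\mapsto F^{-1}_{\beta_2^\top x}(U)$ is a set $\{(x,U,t):\beta_2^\top x>a(U,t)\}$ for a fixed function $a$. Hence the whole class is VC-subgraph. Its envelope times $\|x\|_2$, namely $\|x\|_2\,F^{-1}_{r_\Omega\|x\|_2}(U)$, is square-integrable under the exponential-moment hypothesis.

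(2) For the discrete models, $\beta_2\mapsto L$ is not continuous on all of $\Omega$ for every fixed $\varepsilon$. When $\Omega$ has nonempty interior, a positive-probability set of $U$ produces a jump inside $\Omega$. What does hold is continuity at each fixed $\beta_2$ for almost every $\varepsilon$. That is exactly what the bracketing proof behind Example 19.8 uses, so invoke the proof rather than the statement, or get GC directly from the VC property.

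(3) Your added condition $\mathbb{E}[xx^\top]\succ 0$ is genuinely necessary: with $x\equiv 0$, every $\beta'$ minimizes the population risk. The paper's proof relies on it implicitly when it deduces uniqueness of the minimizer and a positive definite Hessian from convexity of $L$ and strict convexity of $A$ alone.
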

\begin{proof}[Proof of \cref{thm:glm_example}]
    We prove \cref{thm:glm_example} in \cref{sec:proof_glm_example}.
\end{proof} 

\section{Numerical experiments}\label{sec:experiments}
In this section, we present several numerical experiments to validate our theoretical findings, including synthetic experiments on linear regression, M-estimation, and learning with non-convex single-index models, as well as real data experiments with MNIST and CIFAR-10.

\subsection{Synthetic data experiment I}
\label{sec:experimentI}
\paragraph{Setting and algorithms. }
In our first experiment, we consider two interaction graphs presented in Figure \ref{fig:2-graphs}. 
In the left panel, $\cG = (\cM, \cE)$ contains 5 nodes, where $\cM = \{ \mu_1, \mu_2, \mu_3, \mu_4, \mu_5 \}$ and $\cE = \{ (\mu_1, \mu_2), (\mu_2, \mu_5), (\mu_3, \mu_4), (\mu_3, \mu_5), (\mu_4, \mu_3) \}$.  
In this interaction graph, we have $\cM_l = \{ \mu_2, \mu_3, \mu_4, \mu_5 \}$ and $\cM_u = \{ \mu_1 \}$. 
Moreover, $\cM_l^{\rm c} = \{ \mu_3, \mu_4, \mu_5 \}$ and $\cM_l^{\rm nc} = \{ \mu_2 \}$.
Models in $\cM_u$ are colored blue and those in $\cM_l$ are colored red.

In the right panel of Figure \ref{fig:2-graphs}, we consider a slightly more complicated interaction graph that contains 8 nodes, where $\cM = \{ \mu_i : 1 \le i \le 8 \}$ and $\cE = \{ (\mu_1, \mu_3), (\mu_2, \mu_4), (\mu_3, \mu_4), (\mu_4, \mu_8), (\mu_7, \mu_8) \} \cup \{ (\mu_i, \mu_j): 5 \le i, j \le 7, \, i \neq j \}$. 
By definition, we have $\cM_l = \{ \mu_i : 3 \le i \le 8 \}$ and $\cM_u = \{ \mu_1, \mu_2 \}$. 
In addition, $\cM_l^{\rm c} = \{ \mu_5, \mu_6, \mu_7, \mu_8 \}$ and $\cM_l^{\rm nc} = \{ \mu_3, \mu_4 \}$.

\begin{figure}[!ht]
    \centering
{{\includegraphics[width=5cm]{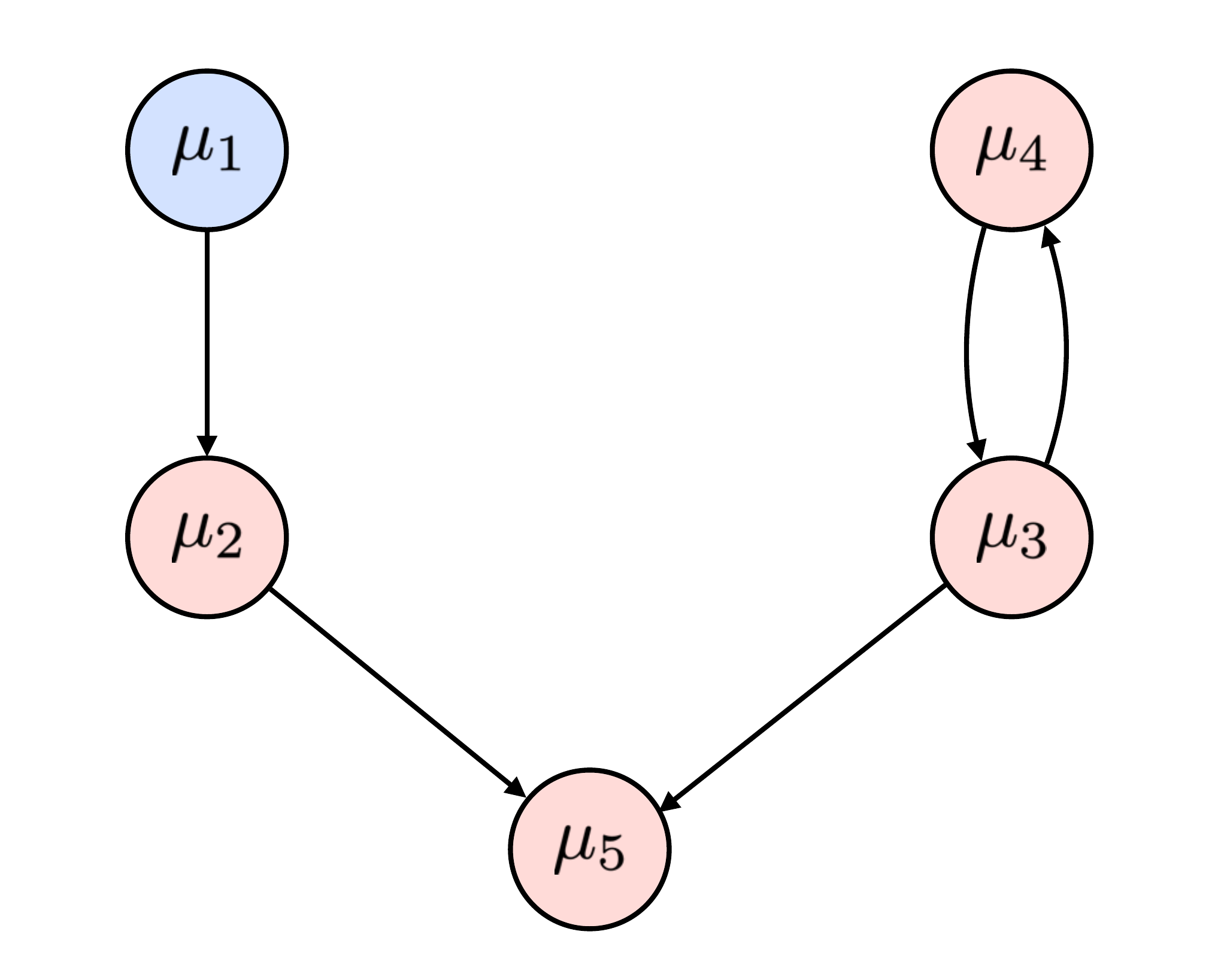} }}%
\qquad \qquad \qquad  
{{\includegraphics[width=8cm]{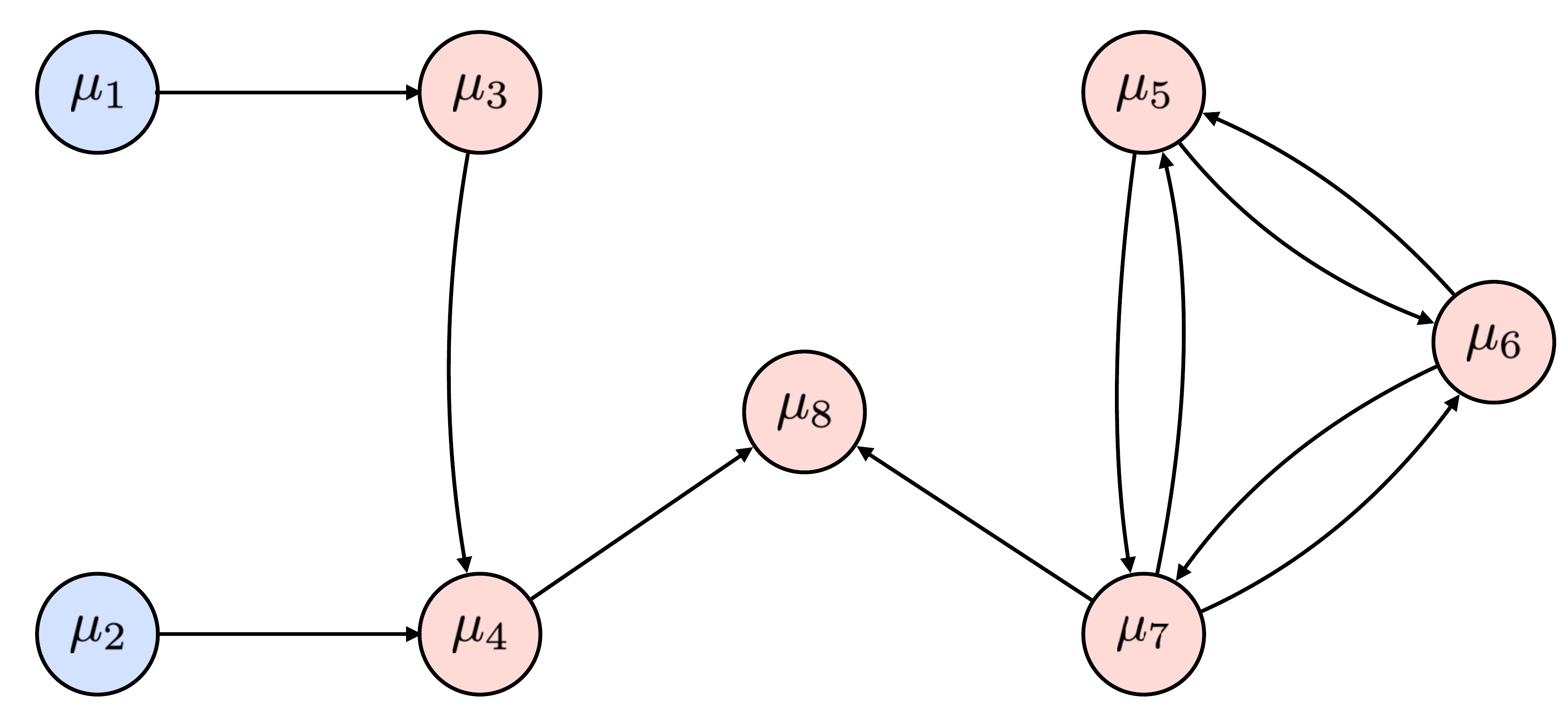} }}%
\vspace{0.5em}
    \caption{Interaction graphs considered in synthetic data experiment I. }%
    \label{fig:2-graphs}%
\end{figure}
For edges in both examples, we use a common sample size by setting all $n_{t, \nu \to \mu}$ and $n_{0, \mu}$ equal to a fixed value $n_{\rm sample} \in \mathbb{N}_+$.
Here, we set $n_{\rm sample} = 1000$, $d = 5$, and train the models in $\cM$ according to the interaction graphs specified in Figure \ref{fig:2-graphs} for $T = 50$ rounds. 
We further assume that samples associated with different edges are independent of each other. 
We generate the ground-truth parameter $\beta_{\ast}$ randomly at the beginning of each experiment, and hold it fixed throughout the subsequent training cycles. 
We conduct numerical simulations for three models: (a) linear regression as described in \cref{sec:linear-regression}; (b) logistic regression as described in \cref{sec:example_assumption}; and (c) a single-index model with a quadratic link function, where
\begin{equation*}
    z = \, (x, y), \quad y = \, (\beta_*^\top x)^2 + \veps, \,\, \mbox{and} \,\, L (\beta, z) = \, \frac{1}{2} \big( y - (\beta^\top x)^2 \big)^2.
\end{equation*}
Throughout, we assume that the noise in both the linear regression and single-index models follows a standard Gaussian distribution.
The key difference between model (c) and the other two models is that its loss function is non-convex, making the global minimizer of the empirical risk computationally challenging to obtain.
In our experiments, we use the Broyden–Fletcher–Goldfarb–Shanno (BFGS) algorithm to minimize the empirical risk and obtain the estimator $\hat{\beta}_{t, \mu}$ for model (c). 
For models (a) and (b), we fit them using standard GLM implementations available in common Python libraries.

\paragraph{Results of the experiments.}
For both interaction graphs in Figure~\ref{fig:2-graphs} and each model $\mu \in \cM_l$, we compute the risk ratio 
\begin{equation*}
    \frac{r_{t, \mu}}{r_{t, \mu}^*} = \, \frac{\E [\| \hbeta_{t, \mu} - \beta_* \|_2^2]}{\E [\| \hbeta_{t, \mu}^* - \beta_* \|_2^2]}. 
\end{equation*}
We plot the risk ratios as a function of $t$ for $t \in \{1,\cdots,50\}$. 
The mean squared errors of $\hat{\beta}_{t,\mu}$ and $\hat{\beta}^*_{t,\mu}$ are estimated via Monte Carlo simulation using $1000$ independent trials, with each trial based on $1000$ independent samples.
We present and discuss the results for the two interaction graphs in Figure~\ref{fig:2-graphs} separately below.
Experimental results for linear regression are presented in Figure \ref{fig:exp-linear}, those for logistic regression in Figure \ref{fig:exp-logistic}, and those for the single-index model in Figure \ref{fig:exp-single-index}.

\begin{itemize}
    \item[--] \textbf{Left panel of Figure \ref{fig:2-graphs}: }Our theoretical results in \cref{sec:linear-regression,sec:M} imply that models $\mu_3$, $\mu_4$ and $\mu_5$ will collapse as $t \to \infty$, while $\mu_2$ will not.
We confirm this prediction in the left panels of Figures~\ref{fig:exp-linear}-\ref{fig:exp-single-index} for settings (a)-(c), all of which exhibit the same qualitative behavior. 
From this figure, we see that the risk ratio for $\mu_2$ remains bounded because it belongs to $\mathcal{M}_l^{\rm nc}$ and receives information directly from $\mu_1 \in \cM_u$. 
In contrast, the risk ratios for $\mu_3$, $\mu_4$ and $\mu_5$ exhibit linear growth with respect to $t$, confirming that they will collapse as $t \to \infty$.
Notably, the risk ratios of $\mu_3$ and $\mu_4$ are nearly identical and they grow more rapidly than that of $\mu_5$. This is because that $\mu_3$ and $\mu_4$ are confined to the loop $\mu_3 \leftrightarrow \mu_4$, whereas $\mu_5$ benefits from information propagated along the directed path $\mu_1 \to \mu_2 \to \mu_5$.
    \item[--] \textbf{Right panel of Figure \ref{fig:2-graphs}:}
    We similarly plot the risk ratios for $\mu_3$ and $\mu_4$ (which learn only from stable data sources), $\mu_8$ (which learns from both stable and unstable data sources), and $\mu_5$, $\mu_6$, and $\mu_7$ (which are confined to an isolated synthetic loop hence learn only from unstable data sources), and observe the same qualitative behavior as in the first interaction graph.
    We additionally observe that, although neither $\mu_3$ nor $\mu_4$ collapses, the risk ratio of $\mu_4$ is slightly larger than that of $\mu_3$. 
    This is because that $\mu_3$ learns only from the natural data source $\mu_1$, whereas $\mu_4$ also distills information from $\mu_3$.
\end{itemize}

\begin{figure}[!ht]
\centering
\includegraphics[width=0.52\linewidth]{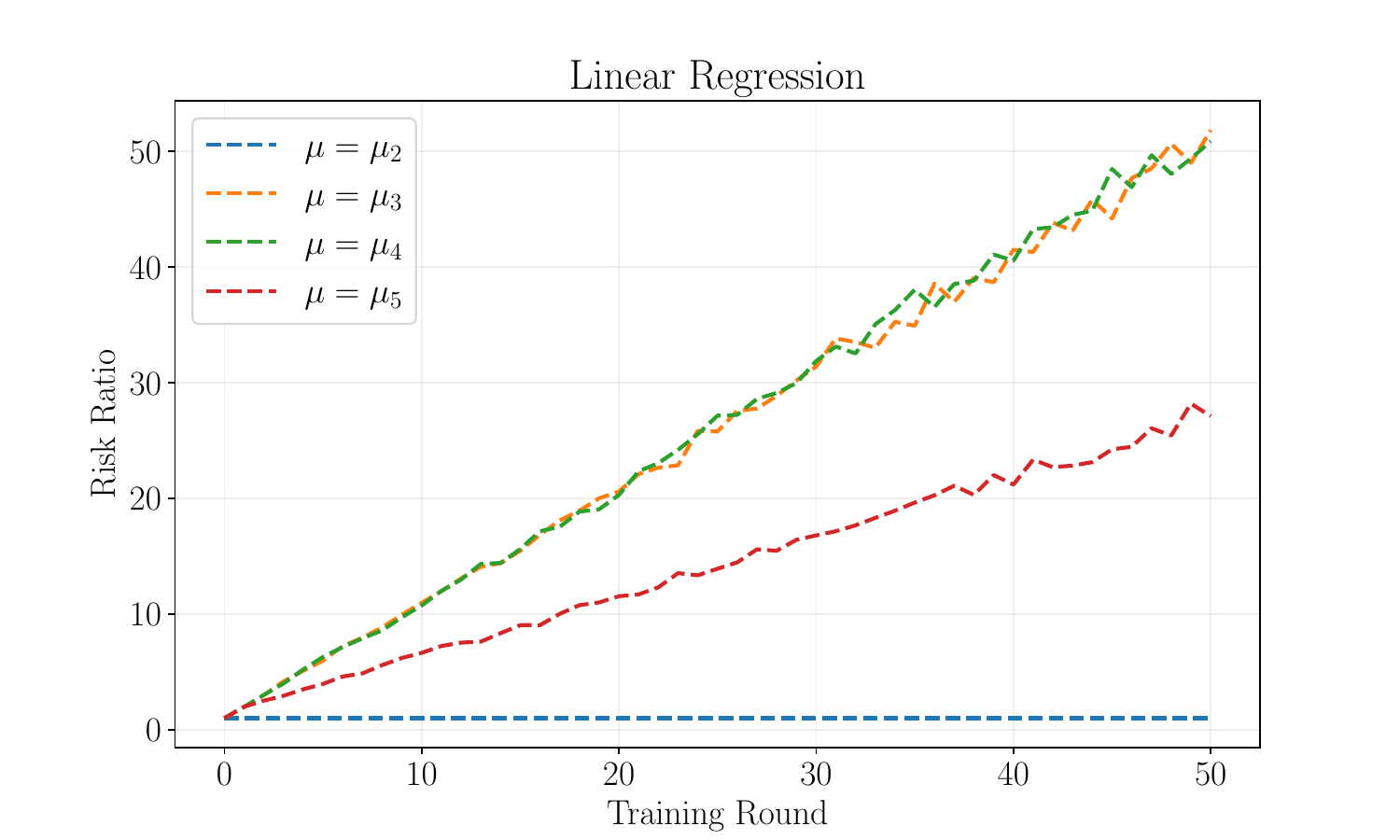} 
\hspace{-3em}
\includegraphics[width=0.52\linewidth]{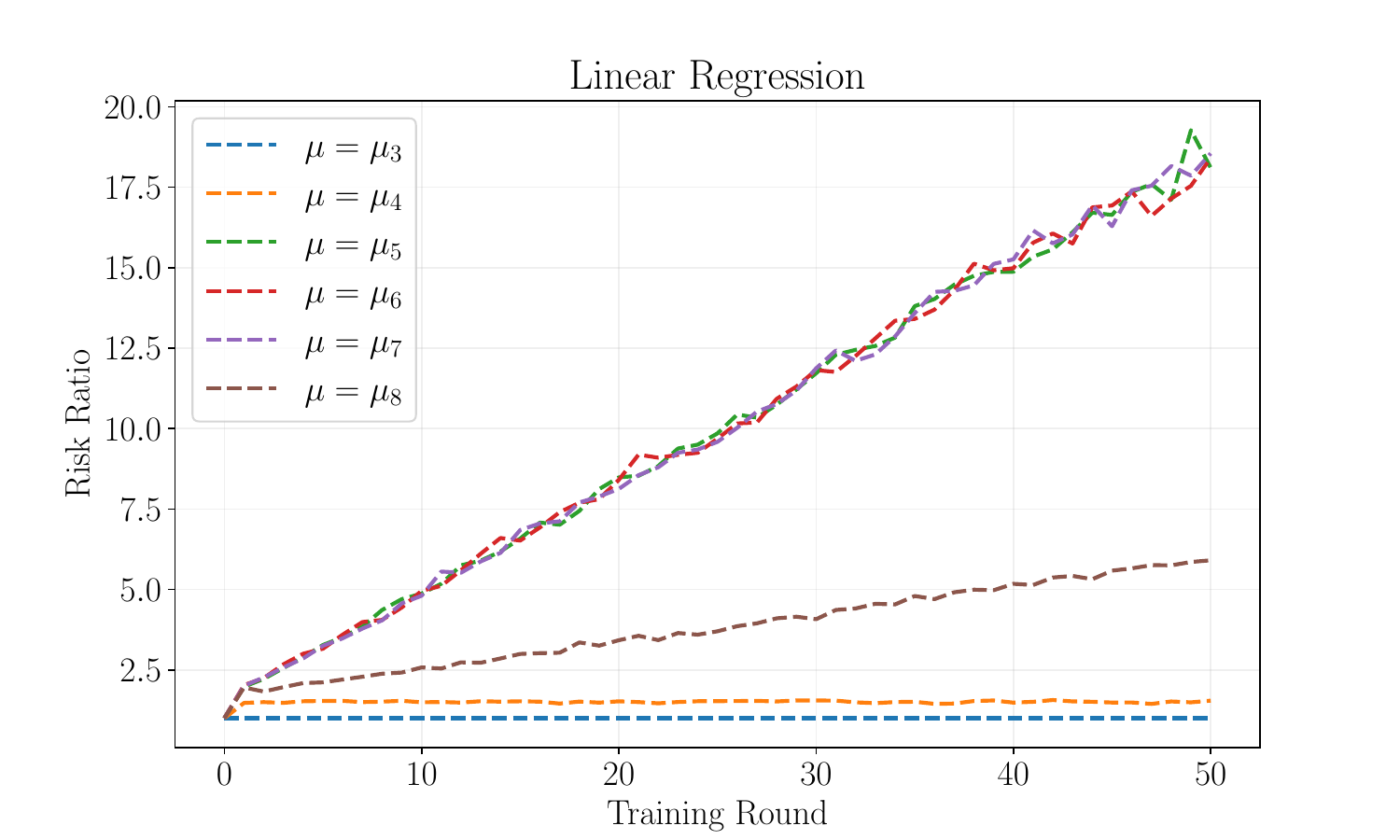}
    \caption{Plots of the risk ratios under the linear regression setting over the first 50 training cycles. The left panel shows results for the 5-node interaction graph in Figure~\ref{fig:2-graphs}, while the right panel shows results for the 8-node interaction graph in the same figure. }
    \label{fig:exp-linear}
\end{figure}

\begin{figure}[!ht]
\centering
\includegraphics[width=0.5\linewidth]{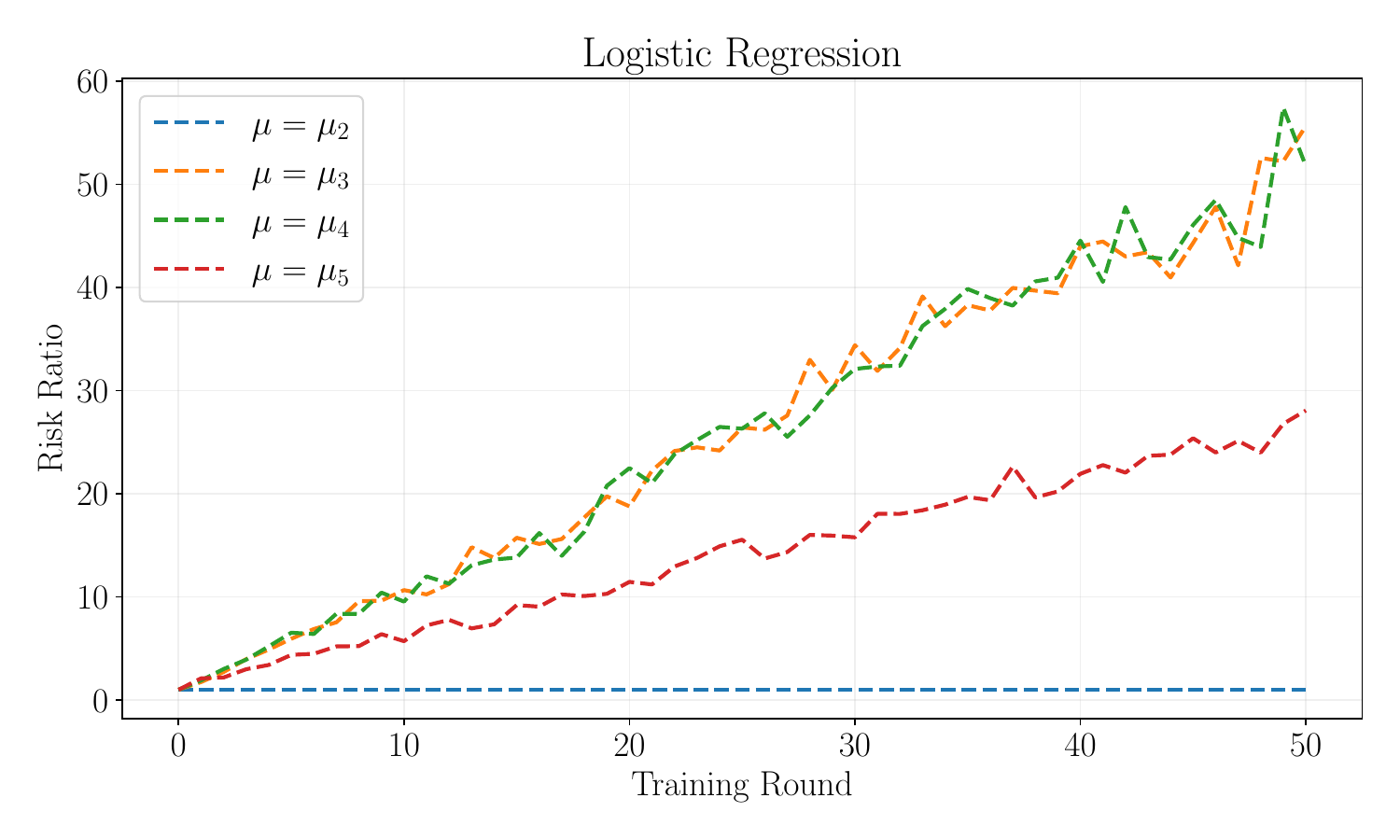} 
\hspace{-0.6em}
\includegraphics[width=0.5\linewidth]{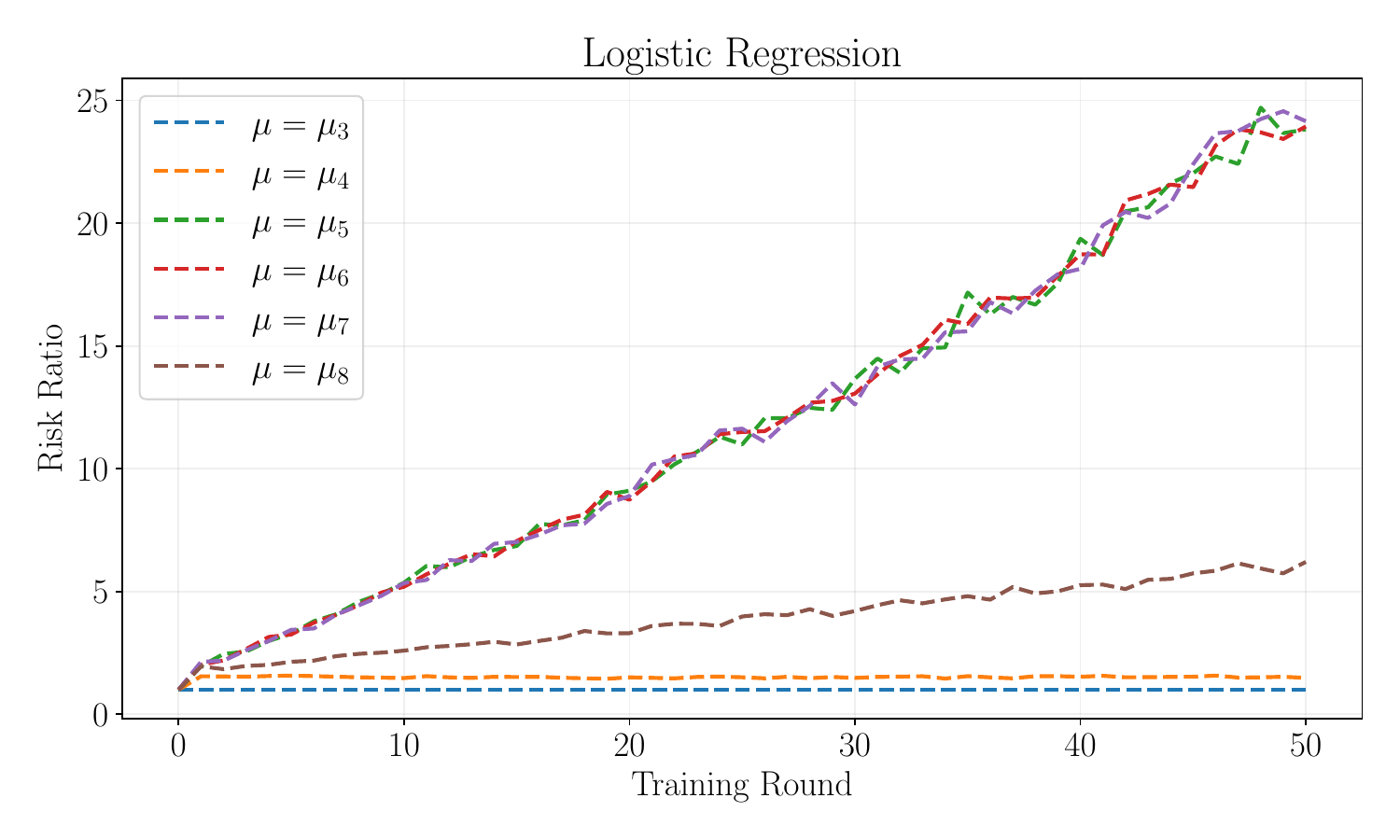}
    \caption{Plots of the risk ratios under the logistic regression setting over the first 50 training cycles. The left panel shows results for the 5-node interaction graph in Figure~\ref{fig:2-graphs}, while the right panel shows results for the 8-node interaction graph in the same figure. }
    \label{fig:exp-logistic}
\end{figure}

\begin{figure}[!ht]
\centering
\includegraphics[width=0.5\linewidth]{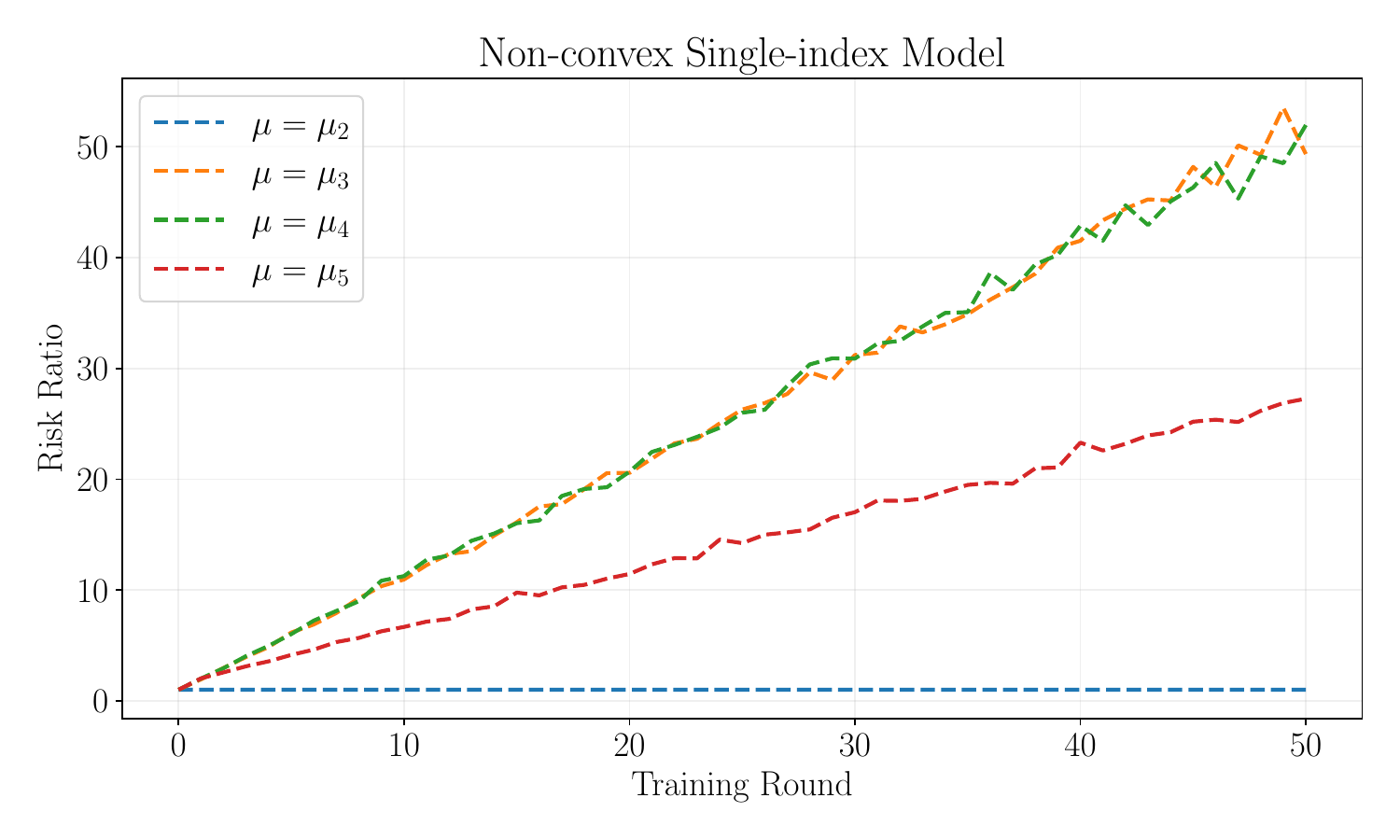} 
\hspace{-0.8em}
\includegraphics[width=0.5\linewidth]{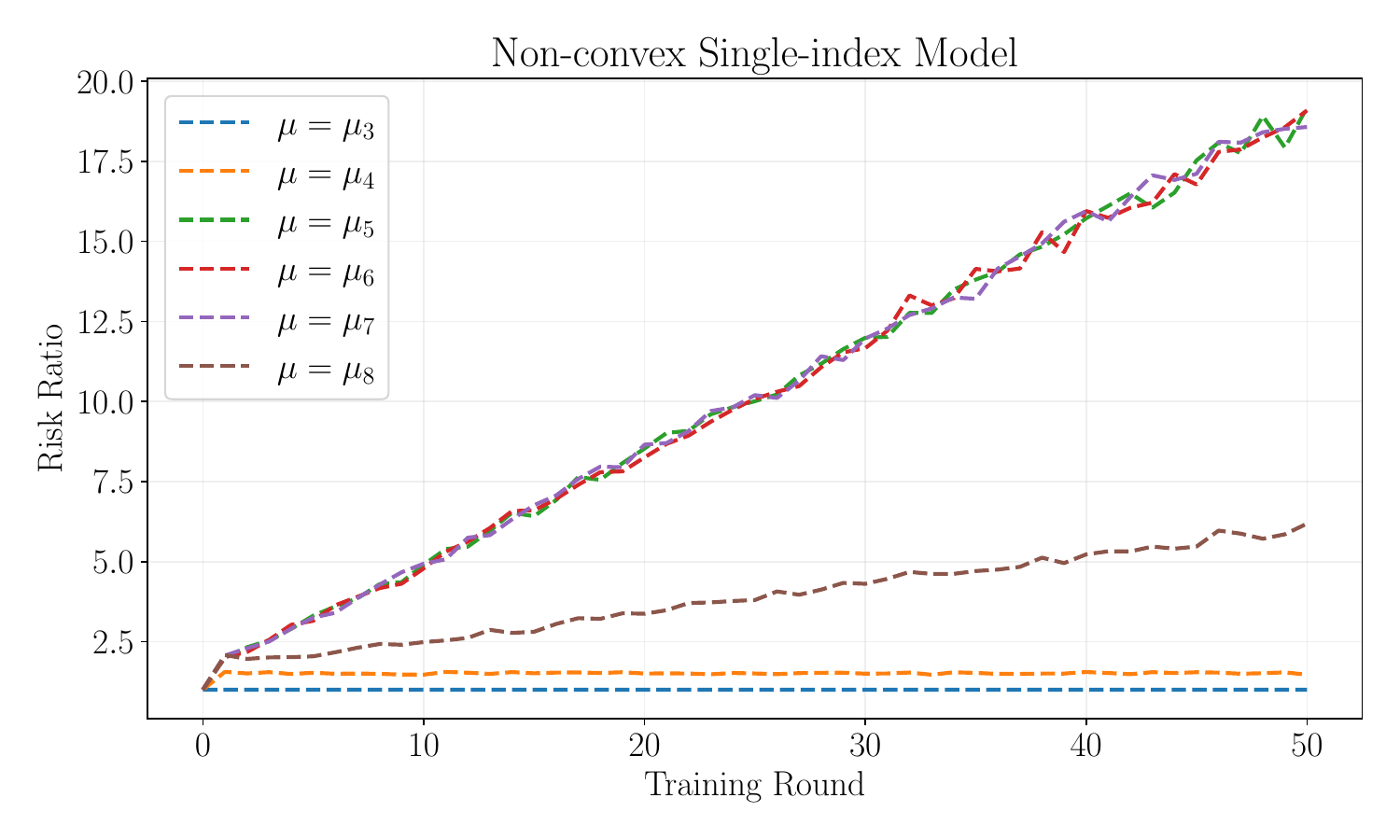}
    \caption{Plots of the risk ratios under the non-convex single index model setting over the first 50 training cycles. The left panel shows results for the 5-node interaction graph in Figure~\ref{fig:2-graphs}, while the right panel shows results for the 8-node interaction graph in the same figure. }
    \label{fig:exp-single-index}
\end{figure}

\subsection{Synthetic data experiment II}
In our second experiment, we compare the two interaction graphs presented in \cref{exm:one-diff}. 
Similar to the experiments in Section~\ref{sec:experimentI}, we plot the evolution of risk ratios for models in the two interaction graphs under (a) linear regression, (b) logistic regression, and (c) a single-index model with a quadratic link function.
The remaining settings are identical to those in Section~\ref{sec:experimentI}.

The numerical results are presented in Figure~\ref{fig:one_diff_compare}, from which we observe a sharp difference between the two graphs in terms of model stability: adding a single edge $\mu_2 \to \mu_1$ to the first graph in \cref{exm:one-diff} destroys its stable structure. 
Specifically, in the left panel of Figure~\ref{fig:one_diff_compare}, the models $\mu_i$ for $1 \le i \le 5$ do not collapse, with risk ratios remaining bounded by $5$. In contrast, in the right panel, all risk ratios grow approximately linearly with the number of training rounds, exceeding $10$ after only a few iterations, and with the largest surpassing $80$ after 50 rounds.
Once again, this confirms that model collapse can be highly sensitive to the underlying learning patterns. 

\begin{figure}[!ht]
     \centering
     \begin{subfigure}[b]{\textwidth}
         \centering
         \includegraphics[width=\textwidth]{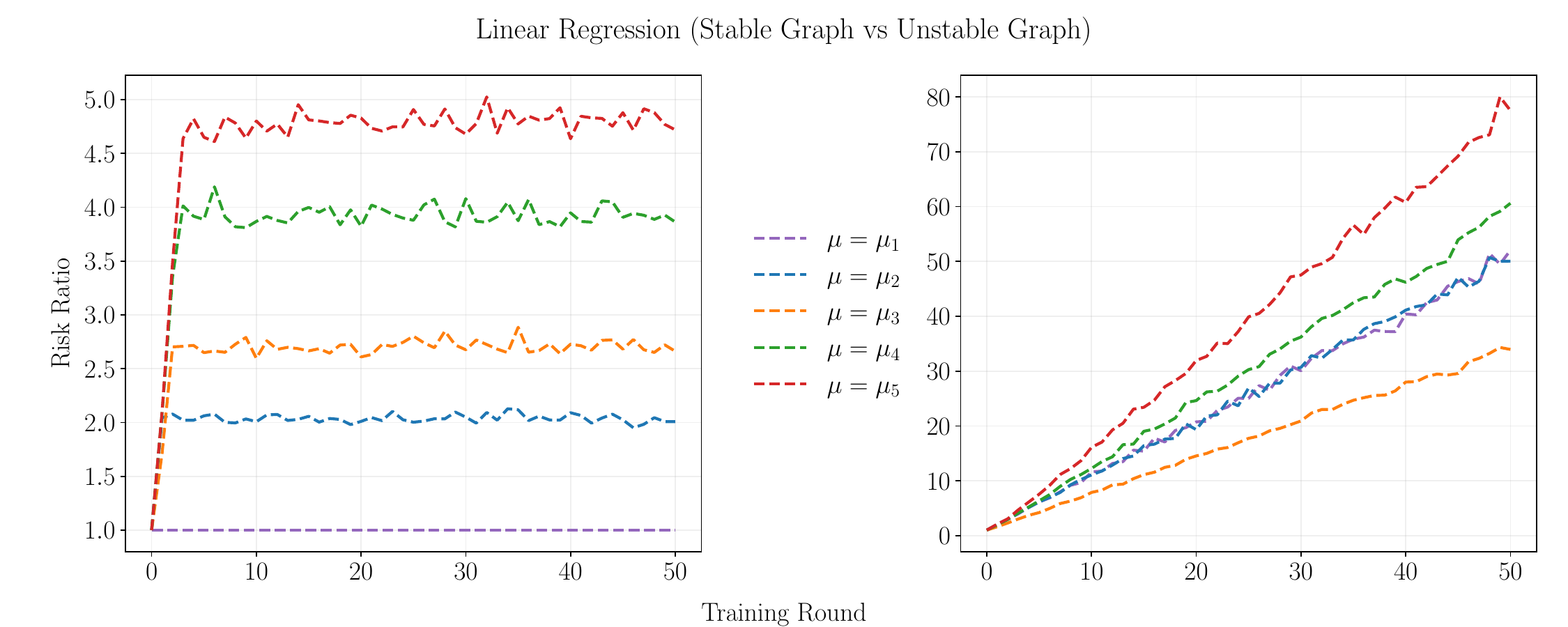}
     \end{subfigure}
     \hfill
     \begin{subfigure}[b]{\textwidth}
         \centering
         \includegraphics[width=\textwidth]{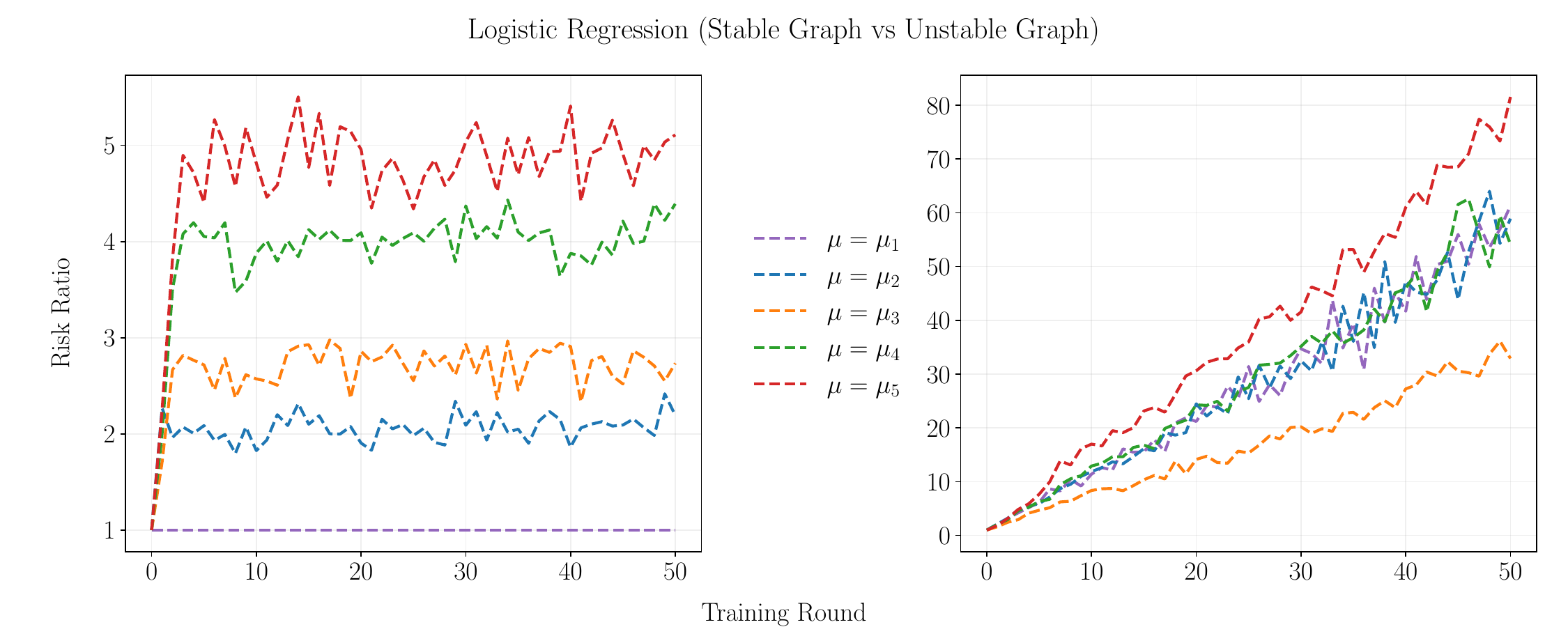}
     \end{subfigure}
     \hfill
     \begin{subfigure}[b]{\textwidth}
         \centering
         \includegraphics[width=\textwidth]{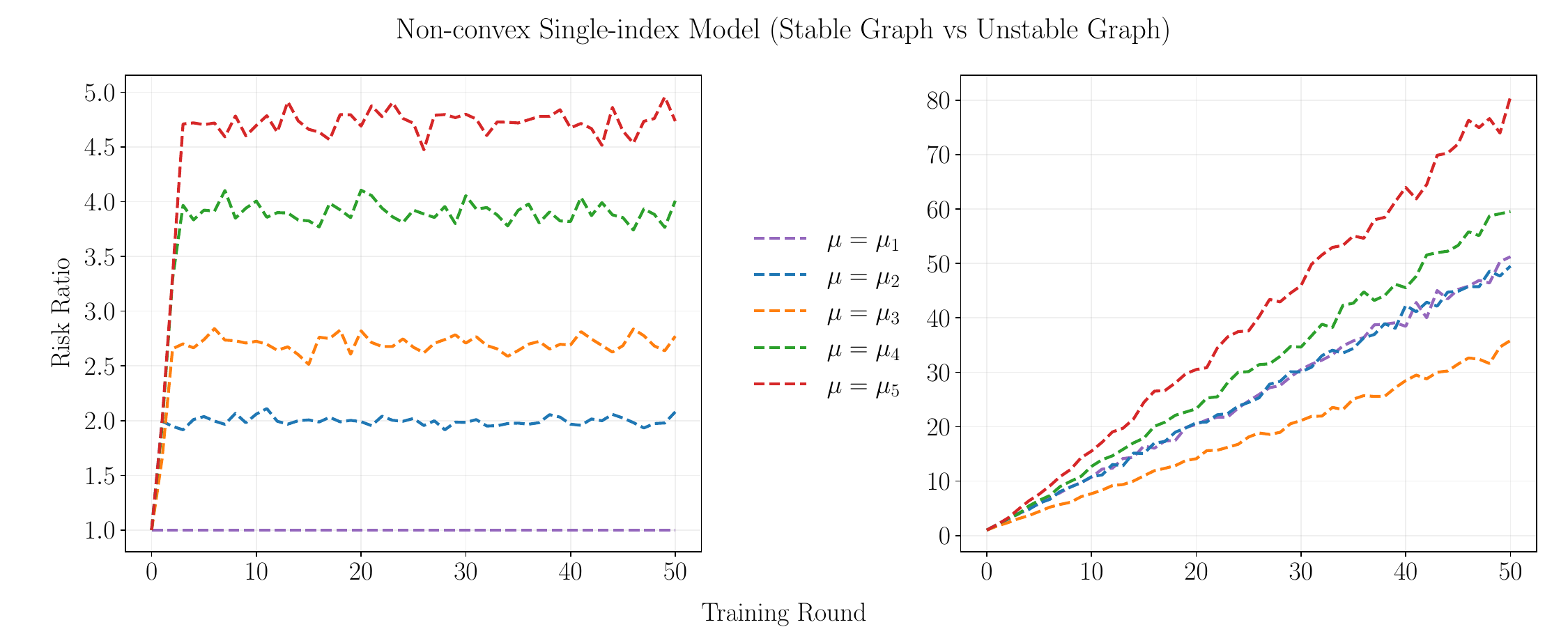}
     \end{subfigure}
        \caption{Comparison of risk ratios for models in the two interaction graphs of \cref{exm:one-diff}. Top: linear regression. Middle: logistic regression. Bottom: Single-index model with quadratic link function. Left: the stable graph represented by the left panel of Figure~\ref{fig:one-diff}. Right: the unstable graph represented by the right panel of Figure~\ref{fig:one-diff}.}
        \label{fig:one_diff_compare}
\end{figure}

\subsection{Real data experiments with MNIST and CIFAR-10}

In this section, we present numerical experiments on some real-world datasets to validate our theoretical findings on model collapse. We first conduct a real data experiment using the MNIST dataset \citep{lecun1998mnist}. We again consider the interaction graphs in Example~\ref{exm:one-diff}, and employ Generative Adversarial Networks (GANs) to train generative models for producing synthetic samples. 
As before, we assume that all $n_{t, \nu \to \mu}$ and $n_{0,\mu}$ are equal to a fixed value $n_{\rm sample} \in \mathbb{N}_+$, and set $n_{\rm sample} = 5000$. We further assume that samples associated with different edges are mutually independent.
To evaluate the quality of the generated samples, we compute the Fr\'echet Inception Distance (FID), a widely used metric for assessing the quality of generated images. Lower FID values indicate better image quality. 
Note that this setting does not fall within the framework proposed in Section~\ref{sec:M}, since the FID value is not the loss function used during training. 
In addition, to mimic the continuing-training procedure used in practice, we initialize each round of model training using the parameters obtained from the previous round. 
Although this example is not covered by our theoretical results, we qualitatively observe the same phenomena, suggesting that our theory may extend well beyond the current assumptions. 

We report the model FID ratios as a function of the number of rounds in Figure~\ref{fig:mnist_fid}.
Note that in the left panel of Figure~\ref{fig:mnist_fid}, the FID ratios initially decrease and then stabilize as the training rounds proceed, potentially due to the use of previous-round parameters for model initialization. 
In contrast, in the right panel of the same figure the FID ratios increase with the training round. 

\begin{figure}[!ht]
\centering
\includegraphics[width=0.5\linewidth]{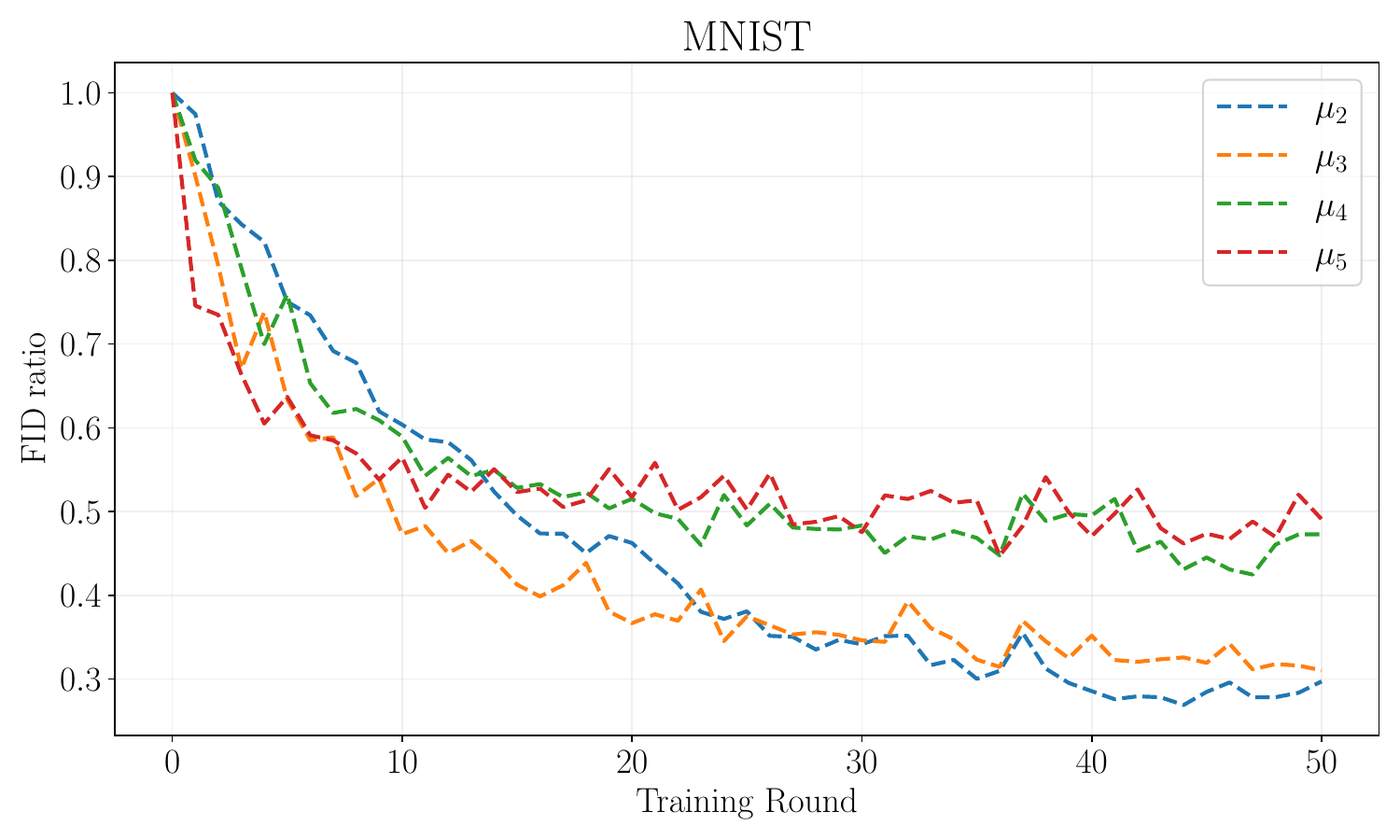} 
\hspace{-0.6em}
\includegraphics[width=0.5\linewidth]{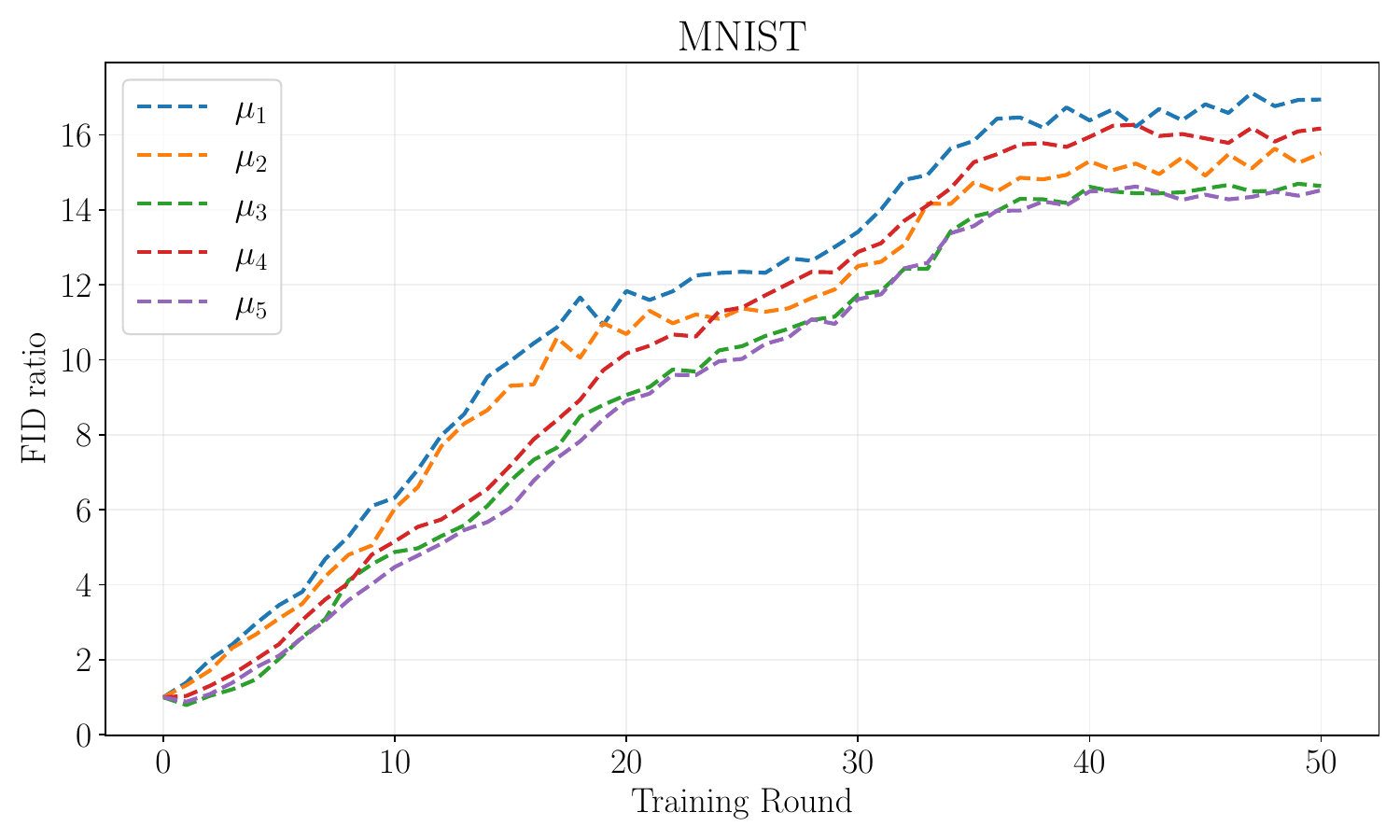}
    \caption{FID ratios achieved by GANs trained on MNIST over 50 rounds in the interactive learning setting. The left panel shows results for the first interaction graph in Example~\ref{exm:one-diff}, and the right panel corresponds to the second interaction graph.  }
    \label{fig:mnist_fid}
\end{figure}

We also conduct the same experiment on the CIFAR-10 dataset \citep{krizhevsky2009learning} and observe similar behavior. The corresponding results are presented in Figure~\ref{fig:cifar_fid}.
\begin{figure}[!ht]
\centering
\includegraphics[width=0.5\linewidth]{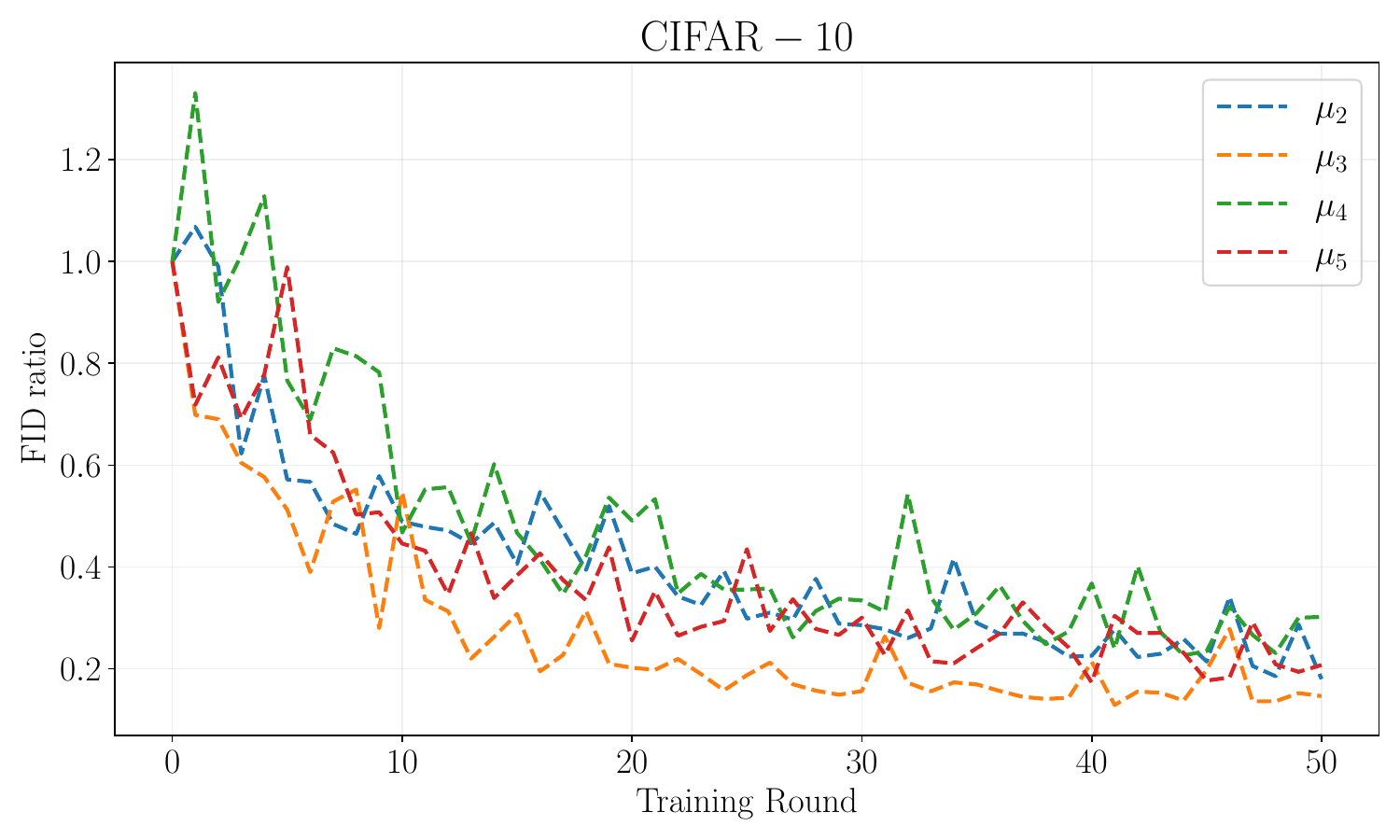} 
\hspace{-0.6em}
\includegraphics[width=0.5\linewidth]{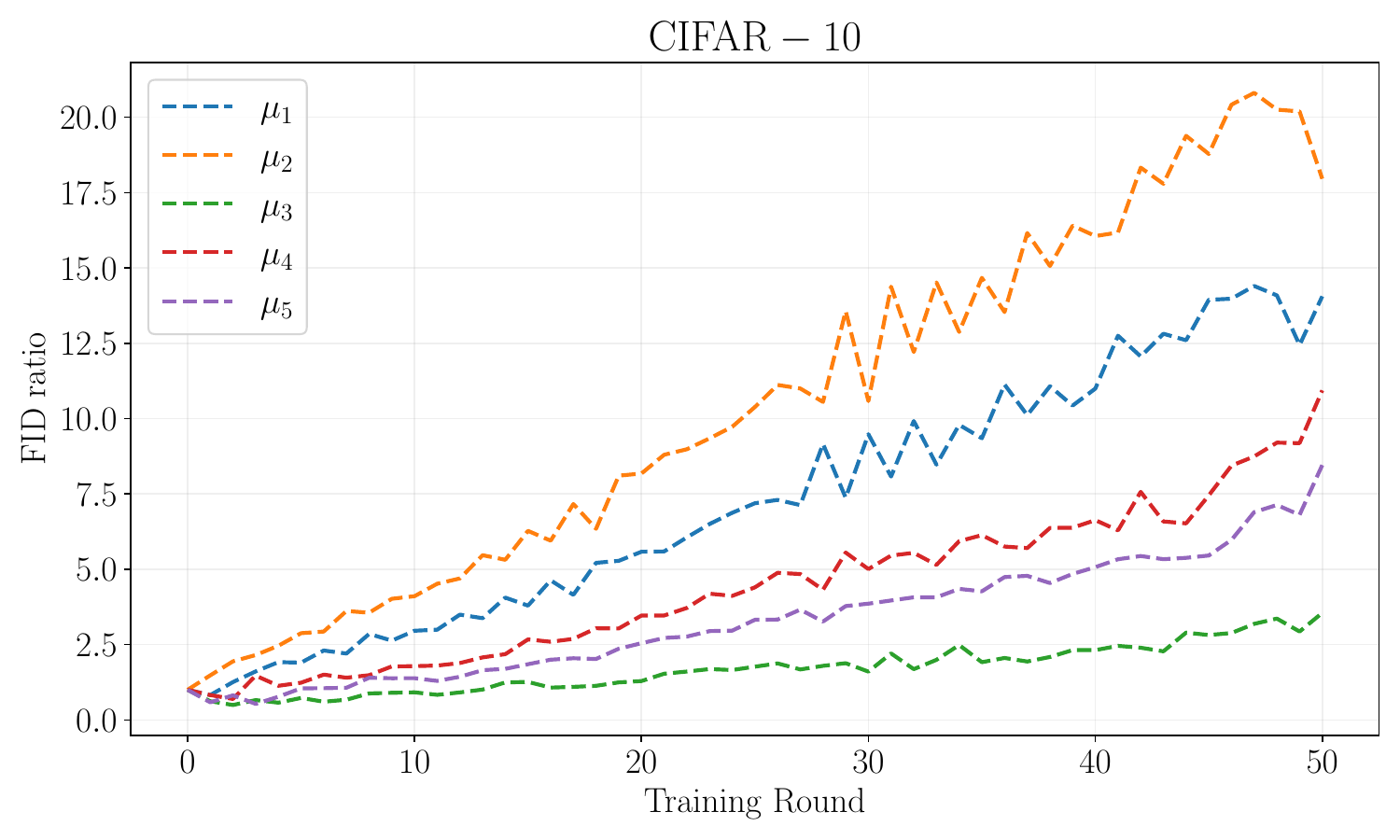}
    \caption{FID ratios achieved by GANs trained on CIFAR-10 over 50 rounds in the interactive learning setting. Similar to Figure \ref{fig:mnist_fid}, the left panel shows results for the first interaction graph in Example~\ref{exm:one-diff}, and the right panel corresponds to the second interaction graph.  }
    \label{fig:cifar_fid}
\end{figure}






\newpage

\bibliographystyle{plainnat}
\bibliography{model_collapse_ref}

\newpage 

\appendix

\section{Proofs of the main results}

We present in this section proofs of our main results. 

\subsection{Proof of Theorem \ref{thm:linear-collapse}}
\label{appendix:thm:linear-collapse}

Iterating \cref{eq:hbeta-update}, we see that 
\begin{equation}\label{eq:hat_beta_expression}
    \hbeta_t - \hat\beta_{0} = \, v_t + \cT_t v_{t-1} + \cdots + \cT_t \cdots \cT_3 v_2 + \cT_t \cdots \cT_2 v_1 = \sum_{s=1}^{t} \Omega_{t, s+1} v_s,
\end{equation}
where for $t \ge s$, we define $\Omega_{t, s} = \cT_t \cdots \cT_s \in \R^{dK \times dK}$, with the convention that $\Omega_{t, t + 1} = I_{dK}$. 
The matrix $\Omega_{t, s}$ can be viewed as a $K \times K$ block matrix, and we denote its $(i, j)$-th block by $\Omega_{t, s, \mu_i, \mu_j} \in \mathbb{R}^{d \times d}$. 

The next lemma is a direct consequence of Assumption \ref{assumption:data-distribution}. 

\begin{lem}
\label{lemma:data-transmission}
    Under Assumption \ref{assumption:data-distribution}, 
    for all $t \in \N_+$, there exist $N_t \in \N_+$, a set of vectors $\{\zeta_{t, 1}, \zeta_{t, 2 }, \cdots, \zeta_{t, N_t}\}$ and a set of matrices $\{Z_{t, 1}, Z_{t, 2}, \cdots, Z_{t, N_t}\}$, such that the following holds: 
\begin{enumerate}
    \item 
    For all $s \in [t]$ and  $(\nu, \mu) \in \cE$, 
    $\eps_{s, \nu \to \mu}$ and $X_{s, \nu \to \mu}$ admit the following decompositions:
\begin{align*}
    & \eps_{s, \nu \to \mu} = (\eps_{s, \nu \to \mu, 1}^{\top}, \eps_{s, \nu \to \mu, 2}^{\top}, \cdots, \eps_{s, \nu \to \mu, h_{s, \nu \to \mu}}^{\top})^{\top}, \\
    & X_{s, \nu \to \mu } = (X_{s, \nu \to \mu, 1}^{\top}, \, X_{s, \nu \to \mu, 2}^{\top}, \, \cdots, X_{s, \nu \to \mu, h_{s, \nu \to \mu}}^{\top})^{\top},
\end{align*}
where $h_{s, \nu \to \mu} \in \N_+$, 
and the number of rows of the matrix $X_{s, \nu \to \mu, h}$ is equal to the length of the vector $\eps_{s, \nu \to \mu, h}$ for all $h \in [h_{s, \nu \to \mu}]$. 
Additionally, 
there exists $i \in [N_t]$, such that $\eps_{s, \nu \to \mu, h} \equiv \zeta_{t, i}$ and $X_{s, \nu \to \mu, h} \equiv Z_{t, i}$. 
On the other hand, for any $t \in \N_+$ and $i \in [N_t]$, there exists $(s, \nu \to \mu, h)$ such that $\eps_{s, \nu \to \mu, h} \equiv \zeta_{t, i}$ and $X_{s, \nu \to \mu, h} \equiv Z_{t, i}$. 

\item There exists $C > 0$ that does not depend on $t$, such that $N_t \leq Ct$, and (recall $\rho_2$ is from Assumption \ref{assumption:noise-vectors})
\begin{align*}
    \cov[\zeta_t] = \cov\big[(\zeta_{t, 1}^{\top}, \, \zeta_{t, 2}^{\top}, \, \cdots, \zeta_{t, N_t}^{\top})^{\top} \big] \succeq \rho_2 I. 
\end{align*}

\end{enumerate}
\end{lem}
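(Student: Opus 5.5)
The plan is to build the atoms $\zeta_{t,i}$, $Z_{t,i}$ as the common refinement of the row-index sets used on the edges, one training cycle and one source model at a time. Fix $s \in [t]$ and $\nu \in \cM$. By Assumption~\ref{assumption:data-distribution}, for every out-neighbor $\mu$ of $\nu$ the pair $(\eps_{s,\nu\to\mu}, X_{s,\nu\to\mu})$ is a sub-collection of the rows of $(\eps_{s,\nu}, X_{s,\nu})$, indexed by some $\mathcal{I}_{s,\nu\to\mu} \subseteq [n_{s,\nu}]$. Let $\mathcal{I}_{s,\nu} = \cup_{\mu} \mathcal{I}_{s,\nu\to\mu}$. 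The finitely many sets $\mathcal{I}_{s,\nu\to\mu}$ generate a partition of $\mathcal{I}_{s,\nu}$ into at most $2^{N'}$ nonempty cells, where $N' \le K$ is the out-degree of $\nu$. Each cell is an intersection of some of the $\mathcal{I}_{s,\nu\to\mu}$ and complements of the others. For each cell $A$ we define $\zeta = \eps_{s,\nu}[A]$ and $Z = X_{s,\nu}[A,:]$.

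Next I check the decomposition property. After permuting rows (which changes neither the OLS quantities nor the statement, up to the identification $\equiv$), each $\mathcal{I}_{s,\nu\to\mu}$ is a disjoint union of cells. So $\eps_{s,\nu\to\mu}$ and $X_{s,\nu\to\mu}$ split into blocks $\eps_{s,\nu\to\mu,h}$ and $X_{s,\nu\to\mu,h}$, each identical to some atom, with matching row counts. Conversely, each cell lies inside some $\mathcal{I}_{s,\nu\to\mu}$, because it is contained in the union, so every atom actually appears. Collecting the atoms over all $s \le t$ and all $\nu$ gives $N_t \le t K 2^{K}$, hence $N_t \le Ct$ with $C = K2^K$.

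For the covariance bound, note that $\zeta_t$ is, after reordering, a sub-vector of $\eps_{[t]} = (\eps_1^\top,\dots,\eps_t^\top)^\top$. Distinct cells are disjoint, and entries from distinct $(s,\nu)$ are distinct coordinates. A principal submatrix of a matrix $\succeq \rho_2 I$ is again $\succeq \rho_2 I$, so it suffices to show $\cov[\eps_{[t]}] \succeq \rho_2 I$. I expect this to be the only subtle point. Cross-time covariances vanish, since $E_s$ and $E_{s'}$ are independent for $s \ne s'$ (Assumption~\ref{ass:independent_covariate}) and mean zero. The covariance is therefore block diagonal, with diagonal blocks $\cov[\eps_s] \succeq \rho_2 I$ by Assumption~\ref{assumption:noise-vectors}(3).

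Here I am identifying $E_s$ with $\eps_s$ restricted to the used rows; unused rows only enlarge the vector, and a principal submatrix still inherits the bound. The atoms are defined only up to the a.s.\ identification $\equiv$, which is harmless. Permuting the rows within each edge's data consistently across $\eps$ and $X$ leaves $X^\top X$ and $X^\top\eps$ unchanged, which justifies the reordering used above.
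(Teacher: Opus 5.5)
Your proposal is correct and follows essentially the same route as the paper: for each $(s,\nu)$ you refine $(X_{s,\nu},\eps_{s,\nu})$ into at most $2^K$ cells generated by the edge index sets, and you take the cells that are actually used as the atoms. From there you count $N_t \le K2^K t$ and get the covariance bound from a principal-submatrix argument, exactly as the paper does. Your covariance step is slightly more careful than the paper's, which simply asserts that $\zeta_t$ is a subvector of $\eps_t$ even though $\zeta_t$ collects atoms from all rounds $s \le t$; your use of temporal independence (Assumption~\ref{ass:independent_covariate}) to make $\cov[\zeta_t]$ block diagonal across rounds fills that gap, as does your remark on consistent row reordering.
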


\begin{proof}[Proof of \cref{lemma:data-transmission}]
    We prove \cref{lemma:data-transmission} in Appendix \ref{proof:lemma:data-transmission}.
\end{proof}

For $\mu \in \cM_l$, $t \in \N_+$ and $i \in [N_t]$, we define
\begin{align}
    & \cP_{t,i} = \{(s,\, \nu_1 \to \nu_2, h): \eps_{s, \nu_1 \to \nu_2, h} \equiv \zeta_{t, i}\}, \label{eq:cPti}\\
    & A_{t, i, \mu} = \sum_{(s, m \to \nu, h) \in \cP_{t, i}} \Omega_{t, s + 1, \mu, \nu} \cT_{s, \nu, m} (X_{s, m \to \nu}^{\top} X_{s, m \to \nu })^{-1} Z_{t, i}^{\top} Z_{t, i}. \label{eq:A-def}
\end{align}
The set $\cP_{t, i}$ consists of indices for all pairs $(\eps_{s, \nu_1 \to \nu_2, h}, X_{s, \nu_1 \to \nu_2, h})$ such that $(\eps_{s, \nu_1 \to \nu_2, h}, X_{s, \nu_1 \to \nu_2, h}) \equiv (\zeta_{t, i}, Z_{t, i})$. 
By the first point of \cref{lemma:data-transmission}, for all $t \in \N_+$ and $i \in [N_t]$ we have $\cP_{t, i} \neq \emptyset$. 
The next lemma establishes additional properties of $\cP_{t, i}$. 

\begin{lem}
\label{lemma:Pti}
    Under Assumption \ref{assumption:data-distribution}, there does not exist $s_1 \neq s_2$ with $s_1, s_2 \in [t]$, $h_1, h_2 \in \N_+$ and $\nu_1, \nu_2, \mu_1, \mu_2 \in \cM$, such that $(s_1, \nu_1 \to \mu_1, h_1), (s_2, \nu_2 \to \mu_2, h_2) \in \cP_{t, i}$ for some $t \in \N_+$ and $i \in [N_t]$.
    Consequently, the index $s \in  [t]$ associated with each $\cP_{t, i}$ is unique, and we denote it by $s_{t, i}$.  
\end{lem}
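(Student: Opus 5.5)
The plan is to use that each block $\zeta_{t,i}$ in \cref{lemma:data-transmission} is, by construction, a sub-block of the noise vector $\eps_{s,\nu}$ generated by some model $\nu$ in some cycle $s$. Under Assumption \ref{assumption:data-distribution}, every $\eps_{s,\nu \to \mu}$ is a subvector of $\eps_{s,\nu}$. Noise vectors from different cycles are independent by Assumption \ref{ass:independent_covariate}. By the lower bound $\mathrm{Cov}[\eps_s] \succeq \rho_2 I$ in Assumption \ref{assumption:noise-vectors}, each coordinate of $\eps_s$ is nondegenerate. The argument is by contradiction, driven by nondegeneracy and independence.

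First I will make the construction of the $\zeta_{t,i}$ explicit, consistent with the proof of \cref{lemma:data-transmission}. For fixed $t$, refine the row index sets $\mathcal{I}_{s,\nu \to \mu} \subseteq [n_{s,\nu}]$ into their atoms. An atom is a maximal set of rows of $(y_{s,\nu}, X_{s,\nu})$ sharing the same membership pattern across the outgoing edges $\{(\nu,\mu) \in \cE\}$. Each $\zeta_{t,i}$ is then the restriction of $\eps_{s,\nu}$ to one atom, and $Z_{t,i}$ is the corresponding rows of $X_{s,\nu}$.

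Now suppose $(s_1,\nu_1 \to \mu_1,h_1)$ and $(s_2,\nu_2 \to \mu_2,h_2)$ both lie in $\cP_{t,i}$ with $s_1 \neq s_2$. Then $\eps_{s_1,\nu_1\to\mu_1,h_1} \equiv \zeta_{t,i} \equiv \eps_{s_2,\nu_2\to\mu_2,h_2}$ almost surely. The left side is a measurable function of $E_{s_1}$ and the right side of $E_{s_2}$, so by Assumption \ref{ass:independent_covariate} they are independent. A random vector that is a.s.\ equal to an independent copy of itself must be a.s.\ constant: for $W \equiv W'$ with $W \perp\!\!\!\perp W'$, we get $\mathrm{Cov}[W] = \mathrm{Cov}[W,W'] = 0$. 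But $\zeta_{t,i}$ consists of coordinates of $\eps_{s_1}$, whose covariance is $\succeq \rho_2 I$ by Assumption \ref{assumption:noise-vectors}(3). Hence $\mathrm{Cov}[\zeta_{t,i}] \succeq \rho_2 I \neq 0$, a contradiction. Therefore all triples in $\cP_{t,i}$ share the same $s$, and $s_{t,i}$ is well defined. Since $\cP_{t,i} \neq \emptyset$ by \cref{lemma:data-transmission}, such an $s_{t,i}$ exists.

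The main obstacle is bookkeeping rather than probability. The paper's statement assumes only Assumption \ref{assumption:data-distribution}, whereas the clean argument above also uses the independence and nondegeneracy assumptions. I would try to avoid those extra hypotheses by arguing purely from the construction: each $\zeta_{t,i}$ is by definition indexed by a single pair $(s,\nu)$ together with an atom of $[n_{s,\nu}]$. Under this reading, the identity $\equiv$ is taken to mean identity of the underlying index sets, which makes uniqueness of $s$ immediate. If the paper intends $\equiv$ to mean almost-sure equality of random vectors, I fall back on the covariance argument, invoking Assumptions \ref{ass:independent_covariate} and \ref{assumption:noise-vectors}, which hold throughout the theorems where this lemma is used.
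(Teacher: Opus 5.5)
Your argument is correct and is essentially the paper's proof. The paper also uses temporal independence (Assumption \ref{ass:independent_covariate}) and the nondegeneracy bound $\mathrm{Cov}[\eps_t] \succeq \rho_2 I$ (Assumption \ref{assumption:noise-vectors}) to conclude that noise blocks from different cycles are independent and nondegenerate, so they cannot both equal $\zeta_{t,i}$ almost surely.

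The paper's notation section defines $\equiv$ as almost-sure equality, so the index-set reading is not available and your fallback covariance argument is the one that applies. Your remark that this needs assumptions beyond Assumption \ref{assumption:data-distribution} matches what the paper's own proof actually invokes.
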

\begin{proof}[Proof of \cref{lemma:Pti}]
By \cref{ass:independent_covariate} and \ref{assumption:noise-vectors}, for any $s_1 \neq s_2$, $h_1, h_2 \in \N_+$ and $\nu_1, \nu_2, \mu_1, \mu_2 \in \cM$, the noise vectors $\veps_{s_1, \nu_1 \to \mu_1, h_1}$ and $\veps_{s_2, \nu_2 \to \mu_2, h_2}$ are independent and both nondegenerate. Therefore, they cannot be equal to the same $\zeta_{t, i}$, completing the proof.
\end{proof}

With $A_{t, i, \mu}$, we find a more convenient expression for $\hat\beta_{t, \mu}$, as stated in the next lemma.

\begin{lem}
\label{lemma:hat_beta_expression}
Under Assumptions \ref{assumption:data-distribution} and \ref{assumption:full-column-rank}, for all $\mu \in \cM_l$ and $t \in \N_+$, it holds that 
\begin{align*}
    \hbeta_{t, \mu} = \hbeta_{0, \mu} + \sum_{i = 1}^{N_t} A_{t, i, \mu} (Z_{t, i}^{\top} Z_{t, i})^{\dagger} Z_{t, i}^{\top} \zeta_{t, i}.
\end{align*}
    
\end{lem}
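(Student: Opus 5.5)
We start from the unrolled recursion \eqref{eq:hat_beta_expression}, $\hbeta_t - \hbeta_0 = \sum_{s=1}^t \Omega_{t,s+1} v_s$, and read off the $\mu$-th block:
\begin{align*}
\hbeta_{t,\mu} - \hbeta_{0,\mu} = \sum_{s=1}^t \sum_{\nu \in \cM} \Omega_{t,s+1,\mu,\nu} v_{s,\nu}.
\end{align*}
Only $\nu \in \cM_l$ contributes, since $v_{s,\nu}=0$ for $\nu\in\cM_u$ and $s\ge 1$. For $\nu \in \cM_l$ we rewrite the mixed noise vector \eqref{eq:v-t-mu} edge by edge. By the definition \eqref{eq:cT-def}, $\big(\sum_{m'} X_{s,m'\to\nu}^\top X_{s,m'\to\nu}\big)^{-1} = \cT_{s,\nu,m}(X_{s,m\to\nu}^\top X_{s,m\to\nu})^{-1}$ for each $m\in\Nin{\nu}$. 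Here we use \cref{assumption:full-column-rank}, which makes each $X_{s,m\to\nu}^\top X_{s,m\to\nu}$ invertible almost surely. Hence
\begin{align*}
v_{s,\nu} = \sum_{m \in \Nin{\nu}} \cT_{s,\nu,m} (X_{s,m\to\nu}^\top X_{s,m\to\nu})^{-1} X_{s,m\to\nu}^\top \eps_{s,m\to\nu}.
\end{align*}

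Next we invoke the block decomposition of \cref{lemma:data-transmission}. For each edge we have $X_{s,m\to\nu}^\top \eps_{s,m\to\nu} = \sum_{h} X_{s,m\to\nu,h}^\top \eps_{s,m\to\nu,h}$, and each pair $(\eps_{s,m\to\nu,h}, X_{s,m\to\nu,h})$ coincides almost surely with some $(\zeta_{t,i}, Z_{t,i})$, $i\in[N_t]$, for every $s\le t$. Substituting gives a triple sum over $(s, m\to\nu, h)$ of
\[
\Omega_{t,s+1,\mu,\nu}\,\cT_{s,\nu,m}(X_{s,m\to\nu}^\top X_{s,m\to\nu})^{-1} Z_{t,i}^\top \zeta_{t,i}.
\]
We regroup this sum according to the index $i$ with $(s,m\to\nu,h)\in\cP_{t,i}$. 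The first point of \cref{lemma:data-transmission} says every index triple belongs to some $\cP_{t,i}$. The regrouping is a genuine partition because each triple is assigned a single $i$: we fix one matching label, or note that if two labels match the same block then the blocks are a.s. identical, and we pick one representative. The result is
\begin{align*}
\hbeta_{t,\mu} - \hbeta_{0,\mu} = \sum_{i=1}^{N_t} \Big(\sum_{(s,m\to\nu,h)\in\cP_{t,i}} \Omega_{t,s+1,\mu,\nu}\cT_{s,\nu,m}(X_{s,m\to\nu}^\top X_{s,m\to\nu})^{-1}\Big) Z_{t,i}^\top \zeta_{t,i}.
\end{align*}

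Finally we insert $Z_{t,i}^\top Z_{t,i}(Z_{t,i}^\top Z_{t,i})^\dagger$ in front of $Z_{t,i}^\top \zeta_{t,i}$. This is legitimate because $Z^\top Z (Z^\top Z)^\dagger$ is the orthogonal projector onto the range of $Z^\top Z$, which equals the range of $Z^\top$; it therefore acts as the identity on $Z^\top\zeta$. Note that a block $Z_{t,i}$ need not have full column rank, which is why the pseudoinverse is needed. With this insertion the bracketed matrix times $Z_{t,i}^\top Z_{t,i}$ is exactly $A_{t,i,\mu}$ from \eqref{eq:A-def}, and the claimed identity follows.

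The main obstacle is bookkeeping rather than analysis. We must make sure that the partition into $\cP_{t,i}$ neither double counts nor drops any terms, and that all equalities hold almost surely (the relations $\equiv$). \Cref{lemma:Pti} is not needed for the identity itself, only for the later use of $s_{t,i}$.
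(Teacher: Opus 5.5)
Your proposal is correct and follows essentially the same route as the paper's proof: unroll \eqref{eq:hat_beta_expression}, rewrite each $v_{s,\nu}$ edge by edge using $\cT_{s,\nu,m}(X_{s,m\to\nu}^\top X_{s,m\to\nu})^{-1}$, split into the blocks of \cref{lemma:data-transmission}, regroup by $\cP_{t,i}$, and insert $Z_{t,i}^\top Z_{t,i}(Z_{t,i}^\top Z_{t,i})^\dagger$ via the projector identity. On the partition point, you do not need to choose a representative, and choosing one would not match the definition \eqref{eq:cPti}. The $\zeta_{t,i}$ are distinct row blocks of the noise with $\mathrm{Cov}[\zeta_t]\succeq\rho_2 I$ (part 2 of \cref{lemma:data-transmission}), so no two labels can coincide almost surely, and each triple lies in exactly one $\cP_{t,i}$.
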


\begin{proof}[Proof of \cref{lemma:hat_beta_expression}]
    We prove \cref{lemma:hat_beta_expression} in Appendix \ref{proof:lemma:hat_beta_expression}. 
\end{proof}

Note that $\mbox{Cov}[\zeta_t] \succeq \rho_2 I$ by the second point of \cref{lemma:data-transmission}. 
Hence, we obtain the following lower bound: 
\begin{align}
\label{eq:beta-lower}
\E\big[\|\hat\beta_{0, \mu} - \hbeta_{t, \mu}\|_2^2\big] \geq \, \rho_2 \sum_{i = 1}^{N_t} \E\Big[  \Tr \big[ A_{t, i, \mu} (Z_{t, i}^{\top}Z_{t, i})^{\dagger} A_{t, i, \mu}^{\top} \big] \Big] = \rho_2 \sum_{i = 1}^{N_t} \E\Big[  \Tr \big[A_{t, i, \mu}^{\top} A_{t, i, \mu} (Z_{t, i}^{\top}Z_{t, i})^{\dagger} \big] \Big]. 
\end{align}
For any $t \in \N_+$ and $i \in [N_t]$, as $\cP_{t, i} \neq \emptyset$ by the first point of \cref{lemma:data-transmission}, there exists $(s_{t, i}, \nu_{t, i, 1} \to \nu_{t, i, 2}, h_{t, i}) \in \cP_{t, i}$. 
Therefore, for any $t \in \N_+$, $i \in [N_t]$ and $\mu \in \cM_l$, 
\begin{align}
\label{eq:Tr-lower-bound}
\begin{split}
     & \E\Big[  \Tr \big[A_{t, i, \mu}^{\top} A_{t, i, \mu} (Z_{t, i}^{\top}Z_{t, i})^{\dagger} \big] \Big] \\
     & \overset{(i)}{\geq} \,  \E\Big[  \Tr \big[A_{t, i, \mu}^{\top} A_{t, i, \mu} (X_{s_{t, i}, \nu_{t, i, 1}}^{\top} X_{s_{t, i}, \nu_{t, i, 1}})^{-1} \big] \Big]  \\
      & \overset{(ii)}{\geq} \,  n_{s_{t, i}, \nu_{t, i, 1}}^{-1} \kappa^{-1} \E\Big[  \Tr \big[A_{t, i, \mu}^{\top} A_{t, i, \mu} 1 \{X_{s_{t, i}, \nu_{t, i, 1}}^{\top} X_{s_{t, i}, \nu_{t, i, 1}} / n_{s_{t, i}, \nu_{t, i, 1}} \preceq \kappa I_d \} \big] \Big] \\
     &{\geq} \,  n_{s_{t, i}, \max}^{-1} \kappa^{-1} \E\Big[  \Tr \big[A_{t, i, \mu}^{\top} A_{t, i, \mu} 1 \{X_{s_{t, i}, \nu_{t, i, 1}}^{\top} X_{s_{t, i}, \nu_{t, i, 1}} / n_{s_{t, i}, \nu_{t, i, 1}} \preceq \kappa I_d \} \big] \Big] \\
     &\geq \,  n_{s_{t, i}, \max}^{-1} \kappa^{-1} \E\Big[  \Tr \big[A_{t, i, \mu}^{\top} A_{t, i, \mu} \min_{v \in \cM} 1 \{X_{s_{t, i}, v}^{\top} X_{s_{t, i}, v} / n_{s_{t, i}, v} \preceq \kappa I_d \} \big] \Big]  , 
\end{split}
\end{align}
where we recall that $\kappa$ is from \cref{ass:concentration}, and $n_{s, \max}$ is from Assumption \ref{ass:sample-size-ratio}. 
In the above equations, $(i)$ is because $X_{s_{t, i}, \nu_{t, i, 1}}^{\top} X_{s_{t, i}, \nu_{t, i, 1}}$ is invertible by Assumption \ref{assumption:full-column-rank}, and $(ii)$ is because 
\begin{align*}
	&  \Tr \big[A_{t, i, \mu}^{\top} A_{t, i, \mu} (X_{s_{t, i}, \nu_{t, i, 1}}^{\top} X_{s_{t, i}, \nu_{t, i, 1}})^{-1}\big] \\
	& = \Tr \big[ A_{t, i, \mu}^{\top} A_{t, i, \mu} (X_{s_{t, i}, \nu_{t, i, 1}}^{\top} X_{s_{t, i}, \nu_{t, i, 1}})^{-1} 1 \{X_{s_{t, i}, \nu_{t, i, 1}}^{\top} X_{s_{t, i}, \nu_{t, i, 1}} / n_{s_{t, i}, \nu_{t, i, 1}} \preceq \kappa I_d \} \big] \\
	& \qquad + \Tr \big[A_{t, i, \mu}^{\top} A_{t, i, \mu} (X_{s_{t, i}, \nu_{t, i, 1}}^{\top} X_{s_{t, i}, \nu_{t, i, 1}})^{-1} \big(1 - 1 \{X_{s_{t, i}, \nu_{t, i, 1}}^{\top} X_{s_{t, i}, \nu_{t, i, 1}} / n_{s_{t, i}, \nu_{t, i, 1}} \preceq \kappa I_d \}\big)\big] \\
	& \geq n_{s_{t, i}, \nu_{t, i, 1}}^{-1} \Tr\big[ A_{t, i, \mu}^{\top} A_{t, i, \mu} (X_{s_{t, i}, \nu_{t, i, 1}}^{\top} X_{s_{t, i}, \nu_{t, i, 1}} / n_{s_{t, i}, \nu_{t, i, 1}})^{-1} 1 \{X_{s_{t, i}, \nu_{t, i, 1}}^{\top} X_{s_{t, i}, \nu_{t, i, 1}} / n_{s_{t, i}, \nu_{t, i, 1}} \preceq \kappa I_d \} \big] \\
	& \geq n_{s_{t, i}, \nu_{t, i, 1}}^{-1} \kappa^{-1} \Tr\big[ A_{t, i, \mu}^{\top} A_{t, i, \mu} 1 \{X_{s_{t, i}, \nu_{t, i, 1}}^{\top} X_{s_{t, i}, \nu_{t, i, 1}} / n_{s_{t, i}, \nu_{t, i, 1}} \preceq \kappa I_d \} \big]. 
\end{align*} 

The next lemma analyzes the sum $\sum_{i = 1}^{N_t}A_{t, i, \mu}$.

\begin{lem}
\label{lemma:sumA}
Under Assumptions \ref{assumption:data-distribution} and \ref{assumption:full-column-rank}, for all $t \in \N_+$ we have 
\begin{align*}
    \sum_{i = 1}^{N_t} A_{t, i, \mu} = \sum_{s = 1}^t \sum_{\nu \in \cM_l} \sum_{m \in \cM} \Omega_{t, s + 1, \mu, \nu} \cT_{s, \nu, m}.
\end{align*}
\end{lem}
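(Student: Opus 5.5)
The identity is a regrouping of a finite double sum: I sum the definition \eqref{eq:A-def} over $i \in [N_t]$, exchange the order of summation using the partition structure from \cref{lemma:data-transmission}, and telescope each block decomposition back to the full Gram matrix.

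\textbf{Step 1: the index sets $\cP_{t,i}$ partition the block triples.} Write
\begin{align*}
\sum_{i=1}^{N_t} A_{t,i,\mu} = \sum_{i=1}^{N_t} \sum_{(s, m\to\nu, h)\in\cP_{t,i}} \Omega_{t,s+1,\mu,\nu}\,\cT_{s,\nu,m}\,(X_{s,m\to\nu}^\top X_{s,m\to\nu})^{-1} Z_{t,i}^\top Z_{t,i}.
\end{align*}
By the first point of \cref{lemma:data-transmission}, every block triple $(s, m\to\nu, h)$ with $s\in[t]$, $(m,\nu)\in\cE$ and $h\in[h_{s,m\to\nu}]$ lies in some $\cP_{t,i}$. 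This index is unique: the $\zeta_{t,i}$ are distinct almost-sure objects, and I will take the decomposition so that distinct indices $i$ correspond to distinct (a.s.\ unequal) noise blocks. Hence $\{\cP_{t,i}\}_{i\in[N_t]}$ partitions the set of all such triples.

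\textbf{Step 2: swap the order of summation.} On $\cP_{t,i}$ we have $Z_{t,i} \equiv X_{s,m\to\nu,h}$, so the double sum becomes
\begin{align*}
\sum_{s=1}^t \sum_{(m,\nu)\in\cE} \Omega_{t,s+1,\mu,\nu}\,\cT_{s,\nu,m}\,(X_{s,m\to\nu}^\top X_{s,m\to\nu})^{-1} \sum_{h=1}^{h_{s,m\to\nu}} X_{s,m\to\nu,h}^\top X_{s,m\to\nu,h}.
\end{align*}

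\textbf{Step 3: telescope the blocks.} The decomposition stacks the rows of $X_{s,m\to\nu}$, so $\sum_h X_{s,m\to\nu,h}^\top X_{s,m\to\nu,h} = X_{s,m\to\nu}^\top X_{s,m\to\nu}$. This matrix is invertible by \cref{assumption:full-column-rank}, so the inner factor collapses to $I_d$ and each term reduces to $\Omega_{t,s+1,\mu,\nu}\,\cT_{s,\nu,m}$.

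\textbf{Step 4: match the index sets.} For $s \in \N_+$, an edge $(m,\nu)\in\cE$ means $m \in \Nin{\nu}$, so $\nu \in \cM_l$. Conversely, for $\nu\in\cM_l$ and $m\notin\Nin{\nu}$, definition \eqref{eq:cT-def} gives $\cT_{s,\nu,m}=0$. Therefore the sum over $(m,\nu)\in\cE$ equals $\sum_{\nu\in\cM_l}\sum_{m\in\cM}$, which is the claimed formula.

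The main obstacle is the uniqueness claim in Step 1, namely that no block triple is counted twice across different $i$. I will justify it from the construction in \cref{lemma:data-transmission}: the $\zeta_{t,i}$ are the distinct pieces of the underlying $\eps_{s,\nu}$, and the covariance lower bound $\cov[\zeta_t]\succeq\rho_2 I$ forces distinct $\zeta_{t,i}$ to be a.s.\ unequal. The identity therefore holds almost surely, which suffices for its later use.
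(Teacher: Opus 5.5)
Your proposal is correct and is essentially the paper's proof: expand the definition of $A_{t,i,\mu}$, regroup the sum over $i$ and $\cP_{t,i}$ into a sum over block triples $(s, m\to\nu, h)$, collapse $\sum_h X_{s,m\to\nu,h}^\top X_{s,m\to\nu,h} = X_{s,m\to\nu}^\top X_{s,m\to\nu}$ against the inverse, and extend the $m$-sum using $\cT_{s,\nu,m}=0$ for $m\notin\Nin{\nu}$. Your Step 1 spells out that each triple lies in exactly one $\cP_{t,i}$, which the paper uses tacitly when it regroups; because $\cP_{t,i}$ is defined through almost-sure equality, this step relies on the nondegeneracy $\cov[\zeta_t]\succeq\rho_2 I$ from Assumption~\ref{assumption:noise-vectors} (beyond the two assumptions listed in the statement), just as the paper's own Lemma~\ref{lemma:Pti} does.
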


\begin{proof}[Proof of \cref{lemma:sumA}]
    We prove \cref{lemma:sumA} in Appendix \ref{proof:lemma:sumA}. 
\end{proof}

We use $v_1 \to v_2 \to \cdots \to v_x$ to represent a path connecting nodes $v_1, v_2, \cdots, v_x \in \cM$, with $(v_i, v_{i + 1}) \in \cE$ for $i = 1, 2, \cdots, x - 1$, and we define $x-1$ as the length of this path. 
For $\ell \in \N_+$, we denote by $\cL_\ell$ the collection of all paths of length $\ell$ in $\cG$. 
Throughout this paper, we make the convention that $\cL_0 = \emptyset$.
Then by \cref{lemma:sumA}, 
\begin{align}
\label{eq:sumA-equality}
\begin{split}
    \E\Big[ \sum_{i = 1}^{N_t} A_{t, i, \mu} \Big] = & \E\Big[ \sum_{s = 1}^t \sum_{\nu \in \cM_l} \sum_{m \in \cM} \Omega_{t, s + 1, \mu, \nu} \cT_{s, \nu, m} \Big] \\
    = & \sum_{x = 1}^t \sum_{v_1 \to v_2 \to \cdots \to v_x \to \mu \in \cL_x} \E\Big[  \cT_{t, \mu, v_x} \cT_{t - 1, v_x, v_{x - 1}} \cdots \cT_{t - x + 1, v_2, v_1} \Big] \\
    \overset{(i)}{=} & \sum_{x = 1}^t \sum_{v_1 \to v_2 \to \cdots \to v_x \to \mu \in \cL_x} \E[  \cT_{t, \mu, v_x}] \E[ \cT_{t - 1, v_x, v_{x - 1}}] \cdots \E[ \cT_{t - x + 1, v_2, v_1}], 
\end{split}
\end{align}
where $(i)$ is by Assumption \ref{ass:independent_covariate}. 
For $x \in \N_+$,  $\ell \in \{0\} \cup [x]$, and $v \in \cM_l^{\infty}$, we denote by $\cL_{x, \ell, v}$ the collection of paths of length $x$ whose last visit to $\mathcal{M}_l^{\infty}$ occurs at model $v \in \mathcal{M}_l^{\infty}$ at length $x - \ell$: 
\begin{align}
\label{eq:Lxellv}
    \cL_{x, \ell, v} = \Big\{ v_1 \to v_2 \to \cdots \to v_x \to v_{x + 1}: \, v_{x - \ell + 1} = v \in \cM_l^{\infty}, \, v_{x - \ell + 2}, v_{x - \ell + 3}, \cdots, v_{x + 1} \notin \cM_l^{\infty}   \Big\}. 
\end{align}
For $x \in \N_+$,
we denote by $\cL_{x}^{\ast}$ the collection of paths of length $x$ that do not hit models in $ \cM_l^{\infty}$: 
\begin{align}
\label{eq:Lxstar}
    \cL_x^{\ast} = \Big\{ v_1 \to v_2 \to \cdots \to v_x \to v_{x + 1}: \, v_i \notin \cM_l^{\infty} \mbox{ for all }i \in [x + 1]\Big\}. 
\end{align}
Similarly, we make the convention that $\cL_0^{\ast} = \cL_{0, \ell, v} = \emptyset$. 
With definitions \eqref{eq:Lxellv} and \eqref{eq:Lxstar},
we then reformulate the last line of \cref{eq:sumA-equality} as 
\begin{align}
\label{eq:two-sums}
\begin{split}
	\E\Big[ \sum_{i = 1}^{N_t} A_{t, i, \mu} \Big] = & \sum_{x = 1}^{t} \sum_{v \in \cM_l^{\infty}} \sum_{\ell = 0}^{x} \sum_{v_1 \to v_2 \to \cdots \to v_{x} \to \mu \in \cL_{x, \ell, v}} \E\big[ \cT_{t,  \mu, v_{x}} \big] \E \big[ \cT_{t - 1, v_{x}, v_{x - 1}}\big] \cdots \E \big[\cT_{t - x + 1, v_2, v_1} \big] \\
	& + \sum_{x = 1}^{t } \sum_{v_1 \to v_2 \to \cdots \to v_{x} \to \mu \in \cL_{x}^{\ast}} \E\big[ \cT_{t,  \mu, v_{x}} \big] \E \big[ \cT_{t - 1, v_{x}, v_{x - 1}}\big] \cdots \E \big[\cT_{t - x + 1, v_2, v_1} \big].
\end{split}
\end{align}
We next analyze the above two sums on the right hand side of \cref{eq:two-sums}, starting from the second sum. 

\begin{lem}
\label{lemma:sum-decay}
    Under Assumptions \ref{ass:independent_covariate}-\ref{ass:sample-size-ratio}, for any $\mu \in \cM_l \backslash \cM_l^{\infty}$, $t \in \N_+$ and $x \in [t]$, it holds that 
\begin{align*}
    & \Big \| \sum_{v_1 \to v_2 \to \cdots \to v_{x} \to \mu \in \cL_{x}^{\ast}} \E\big[ \cT_{t,  \mu, v_{x}} \big] \E \big[ \cT_{t - 1, v_{x}, v_{x - 1}}\big] \cdots \E \big[\cT_{t - x + 1, v_2, v_1} \big] \Big\|_{\op} \\
    & \leq \sum_{v_1 \to v_2 \to \cdots \to v_{x} \to \mu \in \cL_{x}^{\ast}} \big\|\E\big[ \cT_{t,  \mu, v_{x}} \big] \big\|_{\op} \big\| \E \big[ \cT_{t - 1, v_{x}, v_{x - 1}}\big] \big\| _{\op} \cdots \big\| \E \big[\cT_{t - x + 1, v_2, v_1} \big]\big\|_{\op} \\
    & \leq \Big(1 + \frac{\delta + 2\gamma_1}{\alpha}\Big)^{K} \left[ \Big(1 + \frac{\delta + 2\gamma_1}{\alpha}\Big)^{K} - \alpha^{K} \right]^{\lceil x/ K\rceil}. 
\end{align*}
By Assumption \ref{ass:concentration}, we further know that the last line above is no larger than $2 c(\alpha, K)^{\lceil x / K \rceil}$. 
On the other hand, for any $\mu \in \cM_l^{\infty}$, straightforwardly we have 
\begin{align*}
	 \sum_{v_1 \to v_2 \to \cdots \to v_{x} \to \mu \in \cL_{x}^{\ast}} \E\big[ \cT_{t,  \mu, v_{x}} \big] \E \big[ \cT_{t - 1, v_{x}, v_{x - 1}}\big] \cdots \E \big[\cT_{t - x + 1, v_2, v_1} \big] = 0_{d \times d}. 
\end{align*}
\end{lem}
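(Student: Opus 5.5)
The plan is to bound each factor $\|\E[\cT_{s,\nu,m}]\|_{\op}$ by its population counterpart inflated by a constant, and then to read the resulting sum over $\cL_x^{\ast}$ as a backward random walk on $\cG$ driven by the stochastic matrices $P_s$. The factor $(1+(\delta+2\gamma_1)/\alpha)^K - \alpha^K$ is what one obtains from $K$ consecutive steps of that walk that avoid the absorbing set $\cM_u$.

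The first inequality in the statement follows from the triangle inequality and submultiplicativity of $\|\cdot\|_{\op}$, using that the expectations factor across rounds (as already used in \eqref{eq:sumA-equality}). The final claim for $\mu\in\cM_l^\infty$ is immediate: every path in the sum ends at $v_{x+1}=\mu\in\cM_l^\infty$, so none of them belongs to $\cL_x^{\ast}$, the sum is empty, and it equals $0_{d\times d}$. The remark that the bound is at most $2c(\alpha,K)^{\lceil x/K\rceil}$ then follows from Assumption~\ref{ass:concentration}, provided $\big(1+(\delta+2\gamma_1)/\alpha\big)^K\le 2$, which the smallness conditions on $\delta,\gamma_1$ ensure.

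\textbf{Step 1 (single-edge bound).} For $(m,\nu)\in\cE$, write $\eta=\delta+2\gamma_1$ and $\beta_0=1+\eta/\alpha$. The target is
\begin{equation*}
\|\E[\cT_{s,\nu,m}]-\cT^{\ast}_{s,\nu,m}\|_{\op}\le \eta .
\end{equation*}
To get this, split on the event $\{\|\cT_{s,\nu,m}-\cT^{\ast}_{s,\nu,m}\|_{\op}<\delta\}$, which contributes at most $\delta$. On its complement, which has probability at most $\gamma_1$, I would bound both $\|\cT_{s,\nu,m}\|_{\op}$ and $\|\cT^{\ast}_{s,\nu,m}\|_{\op}$ by $1$, giving a contribution of at most $2\gamma_1$. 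The bound on $\|\cT^{\ast}_{s,\nu,m}\|_{\op}$ is trivial. For $\|\cT_{s,\nu,m}\|_{\op}$ I would use that $\cT_{s,\nu,m}$ is similar to $A^{1/2}(A+B)^{-1}A^{1/2}$ with $A,B\succeq0$, so its spectrum lies in $[0,1]$.

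I expect this to be the main obstacle. The operator norm of a non-symmetric matrix is not controlled by its spectrum, so the rough bound above has to be justified, or the splitting has to be redone in a norm where the bound by $1$ is valid. Once $\eta$ is established, $p_{s,m\to\nu}\ge\alpha$ gives $\|\E[\cT_{s,\nu,m}]\|_{\op}\le p_{s,m\to\nu}+\eta\le \beta_0\,p_{s,m\to\nu}$.

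\textbf{Step 2 (random-walk structure).} By Step 1, the middle expression is at most $\beta_0^x\sum_{\cL_x^{\ast}}\prod p$, where each product runs over the edges of the path. Nodes of $\cM_u$ have no incoming edges, so in a path $v_1\to\cdots\to v_{x+1}=\mu$ only $v_1$ can lie in $\cM_u$. Membership in $\cL_x^{\ast}$ then forces $v_2,\dots,v_{x+1}\in\cM_l\setminus\cM_l^\infty$. Tracing the path backward from $\mu$, each step picks an in-neighbour with weight $p$, and these weights sum to $1$. The sum is therefore the probability that this backward chain stays in $\cM_l\setminus\cM_l^\infty$ for its first $x-1$ steps.

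\textbf{Step 3 (blocks of length $K$).} Every node of $\cM_l\setminus\cM_l^\infty$ has a backward path of length at most $K$ to $\cM_u$, and that path carries $p$-weight at least $\alpha^K$. I would group the $x$ backward steps into blocks of $K$. Within one block, the total $\beta_0 p$-weight over all continuations is at most $\beta_0^K$, since each step's weights sum to $\beta_0$. At least one continuation is excluded because it enters $\cM_u$, and it carries at least its $p$-weight $\alpha^K$, since $\beta_0\ge1$. This leaves at most $\beta_0^K-\alpha^K$ per full block. An incomplete final block, or a path that terminates early at $\cM_u$, is bounded by the prefactor $\beta_0^K$. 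Some care with the rounding is needed to obtain $\lceil x/K\rceil$ rather than $\lfloor x/K\rfloor$; for instance, one can charge the last partial block to a full block that still excludes the absorbing path. Iterating the block bound yields the stated estimate.
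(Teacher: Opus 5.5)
Your route is the same as the paper's. You first prove a single-edge estimate $\|\E[\cT_{s,\nu,m}]-p_{s,m\to\nu}I_d\|_{\op}\le\delta+2\gamma_1$ by Jensen and a split on the concentration event. You then cut the backward walk from $\mu$ into blocks of $K$ steps, each losing weight at least $\alpha^K$ along a shortest path into $\cM_u$. Your form $p+\eta\le\beta_0 p$, with $\beta_0=1+(\delta+2\gamma_1)/\alpha$, replaces the paper's binomial expansion of $\prod(p+\eta)$ and gives the same per-block factor. The first inequality, the $\cM_l^\infty$ case and the $2c(\alpha,K)$ remark are fine. However, neither of the two points you leave open can be closed as stated.

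\textbf{Step 1.} Your worry is correct, and the proposal offers no way around it. The paper gets through step (iii) of \eqref{eq:cT-op-concentrate} only by asserting $\|\cT_{s,\nu_2,\nu_1}\|_{\op}\le 1$. That claim fails for the non-symmetric matrix $(A+B)^{-1}A$. Take $A=\mathrm{diag}(1,\epsilon)$ and $B=vv^\top+\epsilon I_2$ with $v=(1,b)^\top$; both are Gram matrices of full-column-rank designs. Then $\|(A+B)^{-1}Ae_1\|_2\to\sqrt{1+b^{-2}}$ as $\epsilon\to0$, which is unbounded in $b$. Assumption~\ref{ass:concentration} controls only the probability of the bad event, and no assumption bounds the size of $\cT_{s,\nu,m}$ on it, so the single-edge bound does not follow. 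Re-norming through the similarity with $A^{1/2}(A+B)^{-1}A^{1/2}$ does not help either. The weight $\sum_m X_{s,m\to\nu}^\top X_{s,m\to\nu}$ is random and changes with $(s,\nu)$, so no common norm makes every factor of the product contractive. You need an extra hypothesis. One option is an almost-sure bound $\|\cT_{s,\nu,m}\|_{\op}\le C$, which replaces $\delta+2\gamma_1$ by $\delta+(1+C)\gamma_1$. Another is a direct bound on $\E\|\cT_{s,\nu,m}-\cT^\ast_{s,\nu,m}\|_{\op}$.

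\textbf{Rounding.} Charging the last partial block to a full one cannot produce $\lceil x/K\rceil$, because the bound with the ceiling is false. Take $\cM=\{u,w\}$, $\cE=\{(u,w)\}$, $\mu=w$ and $x=1$, so $K=2$ and $u\to w\in\cL_1^\ast$. Since $w$ has a single in-neighbour, $\cT_{t,w,u}=I_d=\cT^\ast_{t,w,u}$ identically. Hence the middle expression equals $1$, and the first condition of Assumption~\ref{ass:concentration} holds for every $\delta,\gamma_1>0$. The right-hand side $\beta_0^2(\beta_0^2-\alpha^2)$, however, tends to $1-\alpha^2<1$ as $\delta,\gamma_1\to0$. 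Deterministic designs satisfy all the remaining assumptions. What your block argument actually proves is the bound with exponent $\lfloor x/K\rfloor$. The paper's argument proves the same thing: it sets $a_x=K\lceil x/K\rceil$ but then treats $x-a_x$ as lying in $\{0,\dots,K-1\}$. You should state the floor version. It suffices downstream, since $2\sum_{x\ge1}c(\alpha,K)^{\lfloor x/K\rfloor}\le 2K/(1-c(\alpha,K))=4K\alpha^{-K}$, which is exactly the constant used in \eqref{eq:A-lower}.
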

\begin{proof}[Proof of \cref{lemma:sum-decay}]

We prove \cref{lemma:sum-decay} in Appendix \ref{proof:lemma:sum-decay}. 
\end{proof}

We then switch to analyze the first sum on the right hand side of \cref{eq:two-sums}. 

\begin{lem}
\label{lemma:sum-grow}
Recall that $\cM_l^{\rm c}$ and $\cM_l^{\rm nc}$ are defined in \cref{eq:cM_cnc}.
Under Assumptions \ref{ass:independent_covariate}-\ref{ass:sample-size-ratio}, if $\mu \in \cM_l^c$, then for $t \geq K$, it holds that
	\begin{align*}
		& \Big\| \sum_{x = 1}^{t} \sum_{v \in \cM_l^{\infty}} \sum_{\ell = 0}^{x} \sum_{v_1 \to v_2 \to \cdots \to v_{x} \to \mu \in \cL_{x, \ell, v}} \E\big[ \cT_{t,  \mu, v_{x}} \big] \E \big[ \cT_{t - 1, v_{x}, v_{x - 1}}\big] \cdots \E \big[\cT_{t - x + 1, v_2, v_1} \big]\Big\|_{\op}  \geq  t\, c_2 - \alpha^K (K - 1), 
	\end{align*}
    where we recall $c_1, c_2$ are from Assumption \ref{ass:concentration}. 
	On the other hand, if $\mu \in \cM_l^{\rm nc}$, then straightforwardly, 
	\begin{align*}
		\sum_{x = 1}^{t} \sum_{v \in \cM_l^{\infty}} \sum_{\ell = 0}^{x} \sum_{v_1 \to v_2 \to \cdots \to v_{x} \to \mu \in \cL_{x, \ell, v}} \E\big[ \cT_{t,  \mu, v_{x}} \big] \E \big[ \cT_{t - 1, v_{x}, v_{x - 1}}\big] \cdots \E \big[\cT_{t - x + 1, v_2, v_1} \big] = 0_{d \times d}. 
	\end{align*}
\end{lem}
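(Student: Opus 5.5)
The plan is to compare the path sum with its ``population'' counterpart, in which every $\E[\cT_{s,\nu,m}]$ is replaced by $\cT^{\ast}_{s,\nu,m}=P_{s,\nu,m}I_d$. The population sum is a scalar multiple of $I_d$, and I would show that this scalar grows linearly in $t$ with slope $\alpha^K$. The perturbation would then be controlled uniformly in $x$.

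\textbf{The case $\mu\in\cM_l^{\rm nc}$.} This is trivial, and I would dispose of it first. The key structural fact is that $\cM_l^{\infty}$ is closed under taking in-neighbours. Indeed, if $v\in\cM_l^{\infty}$ and $(w,v)\in\cE$, then $w\notin\cM_u$, since otherwise $v$ would be reachable from $\cM_u$. Also $w$ cannot be reachable from $\cM_u$. Hence $w\in\cM_l^{\infty}$. If some path in $\cL_{x,\ell,v}$ ended at $\mu$, then $v\in\cM_l^{\infty}$ would have a directed path to $\mu$, which forces $\mu\in\cM_l^{\rm c}$. So for $\mu\in\cM_l^{\rm nc}$ every $\cL_{x,\ell,v}$ ending at $\mu$ is empty, and the sum is $0_{d\times d}$.

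\textbf{The case $\mu\in\cM_l^{\rm c}$: population version.} Let $S_x$ be the scalar obtained by summing $\prod P_{\cdot,\cdot,\cdot}$ over paths of length $x$ ending at $\mu$ that visit $\cM_l^{\infty}$. By the closure property above, such a path visits $\cM_l^{\infty}$ exactly on an initial segment. Reading the row-stochastic matrices $P_t,P_{t-1},\dots$ as a backward Markov chain started at $\mu$, $S_x$ is therefore the probability that the chain lies in $\cM_l^{\infty}$ after $x$ backward steps. Once the chain enters $\cM_l^{\infty}$ it never leaves, because rows of $P$ indexed by $\cM_l^{\infty}$ are supported on $\cM_l^{\infty}$. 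Since $\mu\in\cM_l^{\rm c}$, there is a path from $\cM_l^{\infty}$ to $\mu$ of length at most $K-1$. By \cref{ass:sample-size-ratio} each edge along it has weight at least $\alpha$, so $S_x\ge\alpha^{K}$ for all $x\ge K$. Summing over $x$ from $K$ to $t$ gives $\sum_{x}S_x\ge (t-K+1)\alpha^K\ge t\alpha^K-\alpha^K(K-1)$. This is exactly the main term together with the subtracted constant in the claimed bound.

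\textbf{Perturbation.} Write $\E[\cT_{s,\nu,m}]=P_{s,\nu,m}(I_d+E_{s,\nu,m})$. By \cref{ass:concentration}, splitting on the event $\{\|\cT-\cT^{\ast}\|_{\op}\ge\delta\}$ and using $\|\cT\|_{\op}\lesssim 1$ off the concentration event gives $\|E_{s,\nu,m}\|_{\op}\le(\delta+2\gamma_1)/\alpha$. This is the same bound used in \cref{lemma:sum-decay}, and I would reuse its computation. Expanding each product telescopically, the deviation of the matrix sum from $(\sum_x S_x)I_d$ is a sum of terms in which one factor is replaced by $P E$. For a fixed position $j$ of that perturbed factor, the remaining factors lie on two sides. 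On one side they form a chain that has not yet been absorbed into $\cM_l^{\infty}$. On the other they form arbitrary products of row-stochastic-like matrices inflated by $(1+(\delta+2\gamma_1)/\alpha)$ per step. The unabsorbed side decays like $c_1^{\lceil\cdot/K\rceil}$: every $K$ steps it loses at least $\alpha^K$ of its mass, and the inflation is absorbed into the definition of $c_1$. Summing these geometric series over the position of the last visit and over the perturbed position should give a per-$x$ error of at most $K(\delta+2\gamma_1)/\bigl(\alpha(1-\alpha^K)^{1/K+1}(1-c_1)^2\bigr)$. Subtracting this from $\alpha^K$ yields $c_2$ (the $\gamma_2$ term in $c_2$ is slack here). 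Finally, $\|A\|_{\op}\ge u^\top A u$ for a unit vector $u$ gives the operator-norm lower bound.

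\textbf{Expected main obstacle.} The hard step is the uniform-in-$x$ error bound. The factors inside $\cM_l^{\infty}$ do not decay, since that class is stochastic-closed. So the perturbation cannot simply be multiplied out, as this would give errors exponential in $x$. The plan is to exploit that $\E[\cT]$ restricted to $\cM_l^{\infty}$ still has row sums close to one. I would show this by noting that $\sum_{m}\cT_{s,\nu,m}=I_d$ exactly, so $\sum_m\E[\cT_{s,\nu,m}]=I_d$. With this, the products inside $\cM_l^{\infty}$ stay bounded without inflation, and the perturbation only needs to be counted on the unabsorbed segment, where geometric decay is available. Getting the constants to match $c_2$ exactly may require some care, but the structure should be as described.
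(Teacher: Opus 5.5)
Your proposal is correct and is essentially the paper's proof. The paper groups the sum by the last-visit length $\ell$ rather than by $x$, and the step that produces $\sum_{\ell}(t-\ell+1)Q_\ell$ rests on exactly your observation: $\sum_m \cT_{s,\nu,m}=I_d$ together with in-neighbour closure of $\cM_l^{\infty}$ collapses the absorbed prefix to $I_d$. It then lower-bounds the main term by $(t-K+1)\alpha^K$ via one short path from $\cM_l^{\infty}$ to $\mu$, and bounds the error on the unabsorbed segment only by $K(1-\alpha^K)^{\lceil(\ell-1)/K\rceil}\bigl[(1+(\delta+2\gamma_1)/\alpha)^{\ell}-1\bigr]$, which sums to the constant defining $c_2$.

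Two minor corrections. The absorbed block sums are exactly $I_d$ and $1$, not merely bounded, so no perturbation arises there at all. Also, $c_2$ contains no $\gamma_2$ term; $\gamma_2$ enters only through the assumed lower bound on $c_2$. Your estimate therefore lands exactly on $t\,c_2-\alpha^K(K-1)$.
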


\begin{proof}[Proof of \cref{lemma:sum-grow}]

We prove \cref{lemma:sum-grow} in Appendix \ref{proof:lemma:sum-grow}. 
\end{proof}

Substituting the bounds from Lemmas \ref{lemma:sum-decay} and \ref{lemma:sum-grow} into \cref{eq:two-sums}, 
we conclude that for $t \geq K$ and $\mu \in \cM_l^{\rm c}$, 
\begin{align}
\label{eq:A-lower}
\begin{split}
	\Big\|\,\E\Big[ \sum_{i = 1}^{N_t} A_{t, i, \mu} \Big]	\,\Big\|_{\op} \geq & \, t \, c_2 - \alpha^K(K - 1) - \sum_{x = 1}^t \Big(1 + \frac{\delta + 2\gamma_1}{\alpha}\Big)^{K} \left[ \Big(1 + \frac{\delta + 2\gamma_1}{\alpha}\Big)^{K} - \alpha^{K} \right]^{\lceil x/ K\rceil} \\
    \geq & \, t \, c_2 - \alpha^K(K - 1) - \sum_{x = 1}^t 2 c(\alpha, K)^{\lceil x/ K\rceil} \\
	\geq & \, t \, c_2 - \alpha^K (K - 1) - 2K \frac{c(\alpha, K)}{1 - c(\alpha, K)} \\
    \geq & \, t \, c_2 - \alpha^K (K - 1) - 4 K \alpha^{-K},
\end{split}
\end{align}
where the last inequality follows from taking $c(\alpha, K) = 1 - \alpha^K / 2$, as pointed out in \cref{ass:concentration}.
%

We next upper bound the operator norm of 
\begin{equation}\label{eq:sumA-indicator2}
    \E \Big[ \sum_{i = 1}^{N_t} A_{t, i, \mu} \max_{v \in \cM} \big( 1 - 1 \{X_{s_{t, i}, v}^{\top} X_{s_{t, i}, v} / n_{s_{t, i}, v} \preceq \kappa I_d \} \big) \Big].
\end{equation}
The following lemma provides a more tractable characterization of this sum. 
\begin{lem}
\label{lemma:sumA2}
	Under Assumptions \ref{assumption:data-distribution} and \ref{assumption:full-column-rank}, for all $t \in \N_+$ we have 
\begin{align*}
    & \sum_{i = 1}^{N_t} A_{t, i, \mu} \max_{v \in \cM} \big( 1 - 1 \{X_{s_{t, i}, v}^{\top} X_{s_{t, i}, v} / n_{s_{t, i}, v} \preceq \kappa I_d \} \big) \\
    & = \sum_{s = 1}^t \sum_{\nu \in \cM_l} \sum_{m \in \cM} \Omega_{t, s + 1, \mu, \nu} \cT_{s, \nu, m} \max_{v \in \cM} \big( 1 - 1 \{X_{s_{t, i}, v}^{\top} X_{s_{t, i}, v} / n_{s_{t, i}, v} \preceq \kappa I_d \} \big) .
\end{align*}
\end{lem}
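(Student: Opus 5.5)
The plan is to rerun the argument behind \cref{lemma:sumA} one training cycle at a time. The extra factor on each side is constant within a cycle, so it can be pulled out of each block of the sum.

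\textbf{Step 1: the indicator depends only on $s_{t,i}$.} By \cref{lemma:Pti}, every index in $\cP_{t,i}$ carries the same time stamp $s_{t,i}$. Hence the factor
\[
M_s := \max_{v \in \cM}\big(1 - 1\{X_{s,v}^{\top}X_{s,v}/n_{s,v} \preceq \kappa I_d\}\big)
\]
with $s = s_{t,i}$ is a function of $s_{t,i}$ alone. I read the right-hand side of the lemma as carrying $M_s$ inside the $s$-sum (the subscript $s_{t,i}$ there is just $s$). I then split $\sum_{i=1}^{N_t}$ into the blocks $\{i : s_{t,i} = s\}$ for $s \in [t]$. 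This gives
\[
\sum_{i} A_{t,i,\mu} M_{s_{t,i}} = \sum_{s=1}^t M_s \sum_{i: s_{t,i}=s} A_{t,i,\mu}.
\]
It therefore suffices to prove, for each fixed $s$, that
\[
\sum_{i : s_{t,i}=s} A_{t,i,\mu} = \sum_{\nu\in\cM_l}\sum_{m\in\cM} \Omega_{t,s+1,\mu,\nu}\cT_{s,\nu,m}.
\]

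\textbf{Step 2: each block index is counted exactly once.} By the first point of \cref{lemma:data-transmission}, every block $(s, m\to\nu, h)$ equals some $(\zeta_{t,i}, Z_{t,i})$. I claim the index $i$ is unique. Indeed, if $\zeta_{t,i} \equiv \zeta_{t,j}$ with $i \neq j$, then $\cov[\zeta_t]$ would be singular, contradicting $\cov[\zeta_t] \succeq \rho_2 I$. So the sets $\cP_{t,i}$ with $s_{t,i}=s$ partition the time-$s$ indices $\{(s, m\to\nu, h) : (m,\nu)\in\cE,\ h\in[h_{s,m\to\nu}]\}$. Substituting definition \eqref{eq:A-def} and using $Z_{t,i} \equiv X_{s,m\to\nu,h}$ gives
\[
\sum_{i : s_{t,i}=s} A_{t,i,\mu} = \sum_{(m,\nu)\in\cE} \Omega_{t,s+1,\mu,\nu}\cT_{s,\nu,m}(X_{s,m\to\nu}^{\top}X_{s,m\to\nu})^{-1} \sum_{h} X_{s,m\to\nu,h}^{\top}X_{s,m\to\nu,h}.
\]

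\textbf{Step 3: collapse the inner sums.} Because the blocks $X_{s,m\to\nu,h}$ stack the rows of $X_{s,m\to\nu}$, the inner $h$-sum equals $X_{s,m\to\nu}^{\top}X_{s,m\to\nu}$. This matrix is invertible by \cref{assumption:full-column-rank}, so the product reduces to $I_d$. Next, every edge endpoint $\nu$ has $\Nin{\nu}\neq\emptyset$, so $\nu\in\cM_l$. Moreover $\cT_{s,\nu,m} = 0$ for $m \notin \Nin{\nu}$ when $\nu\in\cM_l$. Therefore the sum over edges can be rewritten as $\sum_{\nu\in\cM_l}\sum_{m\in\cM}$, which proves the per-$s$ identity. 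Multiplying by $M_s$ and summing over $s$ yields the lemma.

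The only delicate point is the bookkeeping in Step 2: showing that the regrouping of the $i$-sum is a genuine partition, with no block counted twice and none omitted. This rests on \cref{lemma:Pti} together with the nondegeneracy of $\cov[\zeta_t]$. Everything else is the same algebra as in \cref{lemma:sumA}.
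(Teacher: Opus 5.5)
Your proposal is correct and follows the paper's route. The paper's proof says only that the argument is the same as for \cref{lemma:sumA}, noting that $s_{t,i}=s$ inside the summation; that is your Step 1 (reading the right-hand side's $s_{t,i}$ as $s$) followed by the per-cycle regrouping and collapse in Steps 2--3, and your check that the $\cP_{t,i}$ partition the time-$s$ blocks via nondegeneracy of $\cov[\zeta_t]$ makes explicit a bookkeeping step the paper leaves implicit.
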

\begin{proof}[Proof of \cref{lemma:sumA2} ]
The proof is completely the same as that of \cref{lemma:sumA}, by noting that $s_{t, i} = s$ in the summation.
\end{proof}

We then leverage \cref{lemma:sumA2} to upper bound the operator norm of \eqref{eq:sumA-indicator2}.

\begin{lem}
\label{lemma:A-indicator-upper-bound}
	Under Assumptions \ref{ass:independent_covariate}-\ref{ass:sample-size-ratio}, for any $\mu \in \cM_l$ and $t \in \N_+$, it holds that 
 	\begin{align*}
 		\norm{\E \Big[ \sum_{i = 1}^{N_t} A_{t, i, \mu} \max_{v \in \cM} \big( 1 - 1 \{X_{s_{t, i}, v}^{\top} X_{s_{t, i}, v} / n_{s_{t, i}, v} \preceq \kappa I_d \} \big) \Big]}_{\op} \leq \, \frac{9\,t\, { K^3 \gamma_2}}{\alpha^{K+1}}.
 	\end{align*}
\end{lem}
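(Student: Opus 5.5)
We start from \cref{lemma:sumA2}, which rewrites the quantity as
\[
\sum_{s=1}^t \sum_{\nu\in\cM_l}\sum_{m\in\cM}\Omega_{t,s+1,\mu,\nu}\cT_{s,\nu,m}\,\xi_s,
\qquad
\xi_s=\max_{v\in\cM}\big(1-1\{X_{s,v}^\top X_{s,v}/n_{s,v}\preceq\kappa I_d\}\big).
\]
Here $\xi_s$ is a $\{0,1\}$-valued function of $\cX_s$ only. By a union bound and \cref{ass:concentration}, $\E[\xi_s]\le K\gamma_2$. The argument then proceeds in three steps.

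\textbf{Step 1 (factorize).} Expanding $\Omega_{t,s+1,\mu,\nu}$ into a sum over paths of length $t-s$ ending at $\mu$, exactly as in \cref{eq:sumA-equality}, gives products
\[
\cT_{t,\mu,v_x}\cdots\cT_{s+1,\cdot,\nu}\,\cT_{s,\nu,m}\,\xi_s .
\]
\cref{ass:independent_covariate} makes distinct rounds independent, so each expectation factorizes as
\[
\E[\cT_{t,\cdot,\cdot}]\cdots\E[\cT_{s+1,\cdot,\cdot}]\;\E\big[\cT_{s,\nu,m}\xi_s\big].
\]
Only the round-$s$ factor carries the indicator.

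\textbf{Step 2 (bound the round-$s$ factor).} Since $\cT_{s,\nu,m}\succeq 0$ in the sense of a product of positive semidefinite matrices, $\sum_m\cT_{s,\nu,m}=I_d$ for $\nu\in\cM_l$. Hence each $\cT_{s,\nu,m}$ has eigenvalues in $[0,1]$: it is similar to $G^{-1/2}X^\top XG^{-1/2}\preceq I$ with $G=\sum_m X_m^\top X_m$. The operator norm of $\cT_{s,\nu,m}$ is not directly bounded by $1$ because the matrix is not symmetric. I therefore bound
\[
\|\E[\cT_{s,\nu,m}\xi_s]\|_{\op}\le \E\big[\|\cT_{s,\nu,m}\|_{\op}\xi_s\big].
\]
Writing $\cT_{s,\nu,m}=\cT^*_{s,\nu,m}+(\cT_{s,\nu,m}-\cT^*_{s,\nu,m})$, I split on the event $\{\|\cT-\cT^*\|_{\op}<\delta\}$. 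On that event the norm is at most $1+\delta$. On the complement, which has probability at most $\gamma_1$, I need a crude bound; the condition-number issue can be controlled via $\kappa$-type bounds, or by simply bounding $\E[\|\cT\|_{\op}\xi_s]$ by $O(1/\alpha)$ using the $\kappa$ event. The outcome I aim for is
\[
\|\E[\cT_{s,\nu,m}\xi_s]\|_{\op}\le C\,K\gamma_2/\alpha .
\]

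\textbf{Step 3 (control the propagation factor $\E[\Omega_{t,s+1}]$).} Its $(\mu,\nu)$ block is a product of expected transition blocks. Each factor has norm at most $1+(\delta+2\gamma_1)/\alpha$, which is the same per-step bound used in \cref{lemma:sum-decay}. Naively, over $t-s$ steps this could grow geometrically, so I would instead reuse the block structure from the proof of \cref{lemma:sum-grow} (the paths through $\cM_l^\infty$ versus $\cL^*$). That structure shows the total path-weight sum into $\mu$ stays bounded by a constant of order $K^2/\alpha^K$. In effect, the expected transition matrices are near-stochastic, with the $(1+\cdot)^K-\alpha^K<c(\alpha,K)$ contraction compensating the excess growth every $K$ steps. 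Summing over the at most $K$ choices of $\nu$, the at most $K$ choices of $m$, and the $t$ choices of $s$ then yields
\[
t\cdot K^2\cdot \frac{C K\gamma_2}{\alpha}\cdot\frac{C'}{\alpha^{K}},
\]
matching the form $9tK^3\gamma_2/\alpha^{K+1}$ once the constants are tracked.

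The main obstacle is Step 3: showing that $\|\E[\Omega_{t,s+1,\mu,\nu}]\|_{\op}$ is bounded uniformly in $t-s$ by something like $1/\alpha^K$ rather than growing as $(1+(\delta+2\gamma_1)/\alpha)^{t-s}$. I would first try to bound it by comparing $\E[\cT_s]$ to the row-stochastic $P_s\otimes I_d$, decomposing paths as in \cref{eq:two-sums}, and using the renewal-type counting of \cref{lemma:sum-grow} together with the constant $c_1<1$ from \cref{ass:concentration}.
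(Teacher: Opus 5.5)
\textbf{There is a genuine gap in Step 3, and it is not a matter of tracking constants.}

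You apply the triangle inequality block by block over $(s,\nu,m)$. That forces you to bound $\|\E[\Omega_{t,s+1,\mu,\nu}]\|_{\op}$ uniformly in $t-s$ for each individual $\nu$. The path-weight technique of \cref{lemma:sum-decay} cannot deliver this once paths enter $\cM_l^\infty$. The contraction $(1+\frac{\delta+2\gamma_1}{\alpha})^K-\alpha^K<1$ comes from \cref{eq:sum-p-2}, that is, from every node of $\cM_l\setminus\cM_l^\infty$ sending at least $\alpha^K$ of its weight to $\cM_u$ within $K$ steps. By contrast, $\cM_l^\infty$ is closed under taking in-neighbours, and $P_s$ restricted to it is exactly row-stochastic. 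So nothing absorbs the per-step excess $\delta+2\gamma_1$, and the bound $\sum_{\text{paths}}\prod(p_{\cdot}+\delta+2\gamma_1)$ grows geometrically in $t-s$.

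For $\mu\in\cM_l^{\rm c}$, and in particular $\mu\in\cM_l^\infty$, these are exactly the $\nu$ that matter. \cref{lemma:sum-grow} does not rescue you. There, path segments inside $\cM_l^\infty$ are never bounded in norm; they are summed exactly, which is where the factor $t-\ell+1$ in \cref{eq:many-cT-positive-ell} comes from. A uniform bound on individual blocks would need a separate spectral-gap argument for products of perturbed stochastic matrices on $\cM_l^\infty$. Neither that lemma nor \cref{ass:concentration} supplies one.

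Step 2 is also unsupported as written. On $\{\xi_s=1\}$ the $\kappa$-event fails by definition. Moreover, an upper bound on $X^\top X/n$ says nothing about $\|G^{-1}X_m^\top X_m\|_{\op}$.

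\textbf{The missing idea} is to use $\sum_{m\in\cM}\cT_{s,\nu,m}=I_d$ (you write it down in Step 2) as an exact identity before taking any norms. It gives $\sum_m\cT_{s,\nu,m}\xi_s=\xi_s I_d$, so the indicator step needs no operator-norm bound. Your worry about non-symmetry is legitimate; the paper's proof simply invokes $\|\cT_{s,\nu_2,\nu_1}\|_{\op}\le 1$ at this point.

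The paper then splits each path at its last visit to $\cM_l^\infty$. The upstream segment lies in $\cM_l^\infty$ and sums exactly to $\E[\xi_{t-x+1}]I_d$. Only the downstream segment, which lies in $\cL^\ast$, is bounded in norm, so the decay of \cref{lemma:sum-decay} applies. This yields $\mathtt{M_1},\mathtt{M_2}=O(t)$ and $\mathtt{S_2}=O(1)$.

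An equivalent shortcut also works. The quantity equals $\sum_{s}\E[\xi_s]\sum_{\nu\in\cM_l}\E[\Omega_{t,s+1,\mu,\nu}]$. Since $\sum_{\nu\in\cM}\Omega_{t,s+1,\mu,\nu}=I_d$, the inner sum equals $I_d-\sum_{\nu\in\cM_u}\E[\Omega_{t,s+1,\mu,\nu}]$. Those paths start in $\cM_u$ and never enter $\cM_l^\infty$, so they have summable weight.

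Either way, the exact collapse over $m$ and over the upstream $\nu$'s is essential. Bounding each $(\nu,m)$ block separately discards precisely the cancellation the proof relies on.
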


\begin{proof}[Proof of \cref{lemma:A-indicator-upper-bound}]
	We prove \cref{lemma:A-indicator-upper-bound} in Appendix \ref{proof:lemma:A-indicator-upper-bound}.
\end{proof}

We then combine \cref{lemma:A-indicator-upper-bound} and \cref{eq:A-lower} to obtain the following operator norm lower bound for $\mu \in \cM_l^{\rm c}$: 
\begin{align*}
	& \Big\|\E\Big[ \sum_{i = 1}^{N_t} A_{t, i, \mu} \min_{v \in \cM} 1 \{X_{s_{t, i}, v}^{\top} X_{s_{t, i}, v} / n_{s_{t, i}, v} \preceq \kappa I_d \} \Big] \Big\|_{\op} \\
	& \geq \Big\|\,\E\Big[ \sum_{i = 1}^{N_t} A_{t, i, \mu} \Big]	\,\Big\|_{\op} - \Big\|\E\Big[ \sum_{i = 1}^{N_t} A_{t, i, \mu} \max_{v \in \cM} \big( 1 - 1 \{X_{s_{t, i}, v}^{\top} X_{s_{t, i}, v} / n_{s_{t, i}, v} \preceq \kappa I_d \} \big) \Big] \Big\|_{\op}  \\
	& \geq t \, c_2 - \alpha^K (K - 1) - 4K \alpha^{-K} - \frac{9\,t\, { K^3 \gamma_2}}{\alpha^{K+1}} \ge \left( c_2 - \frac{9 K^3 \gamma_2}{\alpha^{K+1}} \right) t - \frac{5 K}{\alpha^K}. 
\end{align*}
Leveraging Eqs. \eqref{eq:beta-lower}, \eqref{eq:Tr-lower-bound}  and the above lower bound, we see that 
\begin{align*}
	& \E\big[\|\hat\beta_{0, \mu} - \hbeta_{t, \mu}\|^2\big] \\
    & \geq\,  \rho_2 \kappa^{-1} \sum_{i = 1}^{N_t} n_{s_{t, i}, \max}^{-1}  \E\Big[  \Tr \big[A_{t, i, \mu}^{\top} A_{t, i, \mu} \min_{v \in \cM} 1 \{X_{s_{t, i}, v}^{\top} X_{s_{t, i}, v} / n_{s_{t, i}, v} \preceq \kappa I_d \} \big] \Big] \\
	& \geq \,  \rho_2 \kappa^{-1} \big( \sup_{s \in \{0\} \cup  [t]} n_{s, \max} \big)^{-1} \sum_{i = 1}^{N_t} \E\Big[  \Tr \big[A_{t, i, \mu}^{\top} A_{t, i, \mu} \min_{v \in \cM} 1 \{X_{s_{t, i}, v}^{\top} X_{s_{t, i}, v} / n_{s_{t, i}, v} \preceq \kappa I_d \} \big] \Big]  \\ 
	& = \,  \rho_2 \kappa^{-1} \big( \sup_{s \in \{0\} \cup  [t]} n_{s, \max} \big)^{-1} \sum_{i = 1}^{N_t} \E\Big[   \big\| A_{t, i, \mu} \min_{v \in \cM} 1 \{X_{s_{t, i}, v}^{\top} X_{s_{t, i}, v} / n_{s_{t, i}, v} \preceq \kappa I_d \} \big\|_F^2 \Big] \\
	& \geq \,  \rho_2 \kappa^{-1} \big( \sup_{s \in \{0\} \cup  [t]} n_{s, \max} \big)^{-1} N_t^{-1}  \Big\|\E\Big[ \sum_{i = 1}^{N_t} A_{t, i, \mu} \min_{v \in \cM} 1 \{X_{s_{t, i}, v}^{\top} X_{s_{t, i}, v} / n_{s_{t, i}, v} \preceq \kappa I_d \} \Big] \Big\|_{F}^2 \\
	& \geq \,  \rho_2 \kappa^{-1} \big( \sup_{s \in \{0\} \cup  [t]} n_{s, \max} \big)^{-1} N_t^{-1}  \Big\|\E\Big[ \sum_{i = 1}^{N_t} A_{t, i, \mu} \min_{v \in \cM} 1 \{X_{s_{t, i}, v}^{\top} X_{s_{t, i}, v} / n_{s_{t, i}, v} \preceq \kappa I_d \} \Big] \Big\|_{\op}^2 \\
	& \geq \,  \rho_2 \kappa^{-1} \big( \sup_{s \in \{0\} \cup  [t]} n_{s, \max} \big)^{-1} N_t^{-1}  \bigg( \Big( c_2 - \frac{9 K^3 \gamma_2}{\alpha^{K+1}} \Big) t - \frac{5 K}{\alpha^K} \bigg)^2. 
\end{align*} 
Recall $N_t = O(t)$ by \cref{lemma:data-transmission}, and $\sup_{s \in \{0\} \cup [t]} n_{s, \max} / t \to 0$ as $t \to \infty$ by Assumption \ref{ass:sample-size-ratio}.
This implies that the last line above tends to infinity as $t \to \infty$ under the current set of assumptions. 
By the temporal independence assumption (Assumption \ref{ass:independent_covariate}), we have 
\begin{align}
\label{eq:Mc-numerator}
	\liminf_{t \to \infty}\E[\|\beta_{\ast} - \hat\beta_{t, \mu}\|_2^2] = \liminf_{t \to \infty} \E\big[\|\hat\beta_{0, \mu} - \hbeta_{t, \mu}\|_2^2\big]  + \E\big[\|\beta_{\ast} - \hat\beta_{0, \mu}\|_2^2\big] \to \infty \,\,\, \mbox{ as }t \to \infty.  
\end{align} 
Under Assumptions \ref{ass:independent_covariate}\,--\,\ref{assumption:full-column-rank}, we have 
\begin{align}
\label{eq:Mc-denominator}
	\limsup_{t \to \infty}\E[\|\beta_{\ast} - \hat\beta_{t, \mu}^{\ast}\|_2^2] 
    \leq \limsup_{t \to \infty} \rho_1 \,\E\Big[\, \Tr\Big[ \Big( \sum_{\nu \in \Nin{\mu}} X_{t, \nu \to \mu}^{\top} X_{t, \nu \to \mu}\Big)^{-1} \Big]\, \Big] < \infty. 
\end{align}
\cref{eq:Mc-numerator,eq:Mc-denominator} together complete the proof of the theorem.

\subsection{Proof of Theorem \ref{thm:linear-non-collapse}}
\label{appendix:thm:linear-non-collapse}

Recall that for $t \ge s$, we define $\Omega_{t,s} = \cT_t \cT_{t - 1} \cdots \cT_s \in \R^{dK \times dK}$, with the convention that $\Omega_{t,t+1} = I_{dK}$. The matrix $\Omega_{t,s}$ has a $K \times K$ block structure, where the $(i,j)$-th block is denoted by $\Omega_{t,s,\mu_i,\mu_j} \in \mathbb{R}^{d \times d}$.
According to the definition of $\cM_l^{\rm nc}$ in \cref{eq:cM_cnc}, for every $\mu \in \cM_l^{\rm nc}$, there is no path originating in $\cM_l^{\rm c}$ and terminating at $\mu$. 
Invoking this observation together with \cref{eq:hat_beta_expression}, we conclude that for all $\mu \in \cM_l^{\rm nc}$,
\begin{align}
\label{eq:hbeta-starbeta2}
    \hbeta_{t, \mu} - \hbeta_{0, \mu} = \sum_{s = 1}^t \sum_{\nu \in \cM \backslash \cM_l^{\rm c}} \Omega_{t, s + 1, \mu, \nu} v_{s, \nu}, 
\end{align}
where we recall that $v_{s, \nu}$ is defined in \cref{eq:v-t-mu}.  
Under Assumption \ref{ass:independent_covariate}, $v_s$ is independent of $\Omega_{t,s+1}$. Combining this observation with \cref{eq:hbeta-starbeta2} yields
\begin{align}
\label{eq:three-lines}
\begin{split}
    &\E \Big[ \norm{\hbeta_{t, \mu} - \hbeta_{0, \mu}}_2^2 \Big] = \,  \E \Big[\, \Big\|{\sum_{s = 1}^{t} \sum_{\nu \in \cM \backslash \cM_l^{\rm c}} {\Omega}_{t, s + 1, \mu, \nu} v_{s, \nu}}\Big\|_2^2\, \Big] \\
    & \overset{(i)}{\leq} \,  \left( \sum_{s = 1}^t \sum_{\nu \in \cM \backslash \cM_l^{\rm c}} \E\Big[ v_{s, \nu}^{\top} {\Omega}_{t, s + 1, \mu, \nu}^{\top} {\Omega}_{t, s + 1, \mu, \nu} v_{s, \nu} \Big]^{1/2} \right)^2 \\
    & \overset{(ii)}{=} \,  \left( \sum_{s = 1}^t \sum_{\nu \in \cM \backslash \cM_l^{\rm c}} \E\Big[ v_{s, \nu}^{\top} \,\E[\, {\Omega}_{t, s + 1, \mu, \nu}^{\top} {\Omega}_{t, s + 1, \mu, \nu} \,]\, v_{s, \nu} \Big]^{1/2} \right)^2,
\end{split}
\end{align}
where $(i)$ follows from the  Cauchy–Schwarz inequality, and $(ii)$ follows from the fact that $v_s$ is independent of $\Omega_{t,s+1}$.
The remainder of this section is devoted to establishing an upper bound for 
\begin{align*}
    E_{t, s, \mu} = \sum_{\nu \in \cM \backslash \cM_l^{\rm c}} \E\Big[ v_{s, \nu}^{\top} \,\E[\, {\Omega}_{t, s + 1, \mu, \nu}^{\top} {\Omega}_{t, s + 1, \mu, \nu} \,]\, v_{s, \nu} \Big]^{1/2}
\end{align*}
for all $t \in \N_+$, $s \in [t]$ and $\mu \in \cM_l^{\rm nc}$. 
Next, we employ an induction argument to show that this quantity decays exponentially in $t$. 
Specifically, we prove that $E_{t+t_0, s, \mu} \leq \theta E_{t, s, \mu}$ for suitably chosen constants $\theta \in (0,1)$ and $t_0 \in \N_+$.

Recall that $\Omega_{t, s + 1} = \cT_t \cT_{t - 1} \cdots \cT_{s+1}$.
Therefore, for any $t_0 \in \N_+$, we have  $\Omega_{t + t_0, s + 1} = \Omega_{t + t_0, t + 1} \Omega_{t ,s + 1}$. 
Therefore, for all $\mu \in \cM_l^{\rm nc}$ and $\nu \in \cM \backslash \cM_l^{\rm c}$, 
\begin{align}
\label{eq:tilde-Omega}
\Omega_{t + t_0, s + 1, \mu, \nu} v_{s, \nu} = \sum_{m \in \cM} \Omega_{t + t_0, t + 1, \mu, m} \, \Omega_{t, s + 1, m, \nu} v_{s, \nu} = \sum_{m \in \cM_l^{\rm nc}} \Omega_{t + t_0, t + 1, \mu, m}\, \Omega_{t, s + 1, m, \nu} v_{s, \nu}.
\end{align}
Substituting \cref{eq:tilde-Omega} into the definition of $E_{t,s,\mu}$, we obtain
\begin{align*}
    E_{t + t_0, s, \mu} =\, & \sum_{\nu \in \cM \backslash \cM_l^{\rm c}} \Big(\, \E\Big[ \sum_{m_1, m_2 \in \cM_l^{\rm nc}}\, v_{s, \nu}^{\top}   \Omega_{t, s+1, m_1, \nu}^{\top} \, \Omega_{t+t_0, t+1, \mu, m_1}^{\top}  \, \Omega_{t+t_0, t+1, \mu, m_2} \, \Omega_{t, s+1, m_2, \nu} v_{s, \nu} \, \Big]\,\Big)^{1/2} \\
    =\, & \sum_{\nu \in \cM \backslash \cM_l^{\rm c}} \Big(\, \sum_{m_1, m_2 \in \cM_l^{\rm nc}} \E\Big[  \, v_{s, \nu}^{\top}  \Omega_{t, s+1, m_1, \nu}^{\top} \, \E \big[\,  \Omega_{t+t_0, t+1, \mu, m_1}^{\top}  \,  \Omega_{t+t_0, t+1, \mu, m_2}\, \big] \, \Omega_{t, s+1, m_2, \nu} v_{s, \nu} \, \Big]\,\Big)^{1/2},
\end{align*}
where the last equality above follows from \cref{ass:independent_covariate} and the observation that $(\Omega_{t,s+1}, v_s)$ is a function of $(\cX_i, \eps_i)_{s \le i \le t}$, and $\Omega_{t+t_0,t+1}$ is a function of $(\cX_i)_{t+1 \le i \le t+t_0}$.
From the last line above, we further conclude that 
\begin{align}
\label{eq:EE-bounds}
    E_{t + t_0, s, \mu} \leq & \sum_{\nu \in \cM \backslash \cM_l^{\rm c}} \Big(\hspace{-0.5em}\sum_{m_1, m_2 \in \cM_l^{\rm nc}}  \hspace{-0.5em}\E\big[ \| v_{s, \nu}^{\top}   \Omega_{t, s+1, m_1, \nu}^{\top} \|_2 \| \Omega_{t, s+1, m_2, \nu} v_{s, \nu} \|_2 \big]    \big\| \E \big[  \Omega_{t+t_0, t+1, \mu, m_1}^{\top}   \Omega_{t+t_0, t+1, \mu, m_2} \big] \big\|_{\op} \Big)^{1/2} \nonumber \\
    \overset{(i)}{\leq} & \sum_{\nu \in \cM \backslash \cM_l^{\rm c}} \sum_{m \in \cM_l^{\rm nc}} \E \Big[ v_{s, \nu}^{\top} \Omega_{t, s + 1, m, \nu}^{\top} \, \Omega_{t, s + 1, m, \nu} v_{s, \nu} \Big]^{1/2} \, \Big\|\, \E\big[ \, \Omega_{t + t_0, t + 1, \mu, m}^{\top}\,  \Omega_{t + t_0, t + 1, \mu, m} \big] \Big\|_{\op}^{1/2} \nonumber \\
    = & \sum_{m \in \cM_l^{\rm nc}} E_{t, s, m}\, \Big\|\,  \E\big[  \Omega_{t + t_0, t + 1, \mu, m}^{\top} \, \Omega_{t + t_0, t + 1, \mu, m} \big] \Big\|_{\op}^{1/2},
\end{align}
where $(i)$ is obtained by applying the matrix Cauchy-Schwarz inequality as follows:
\begin{align*}
    &\E\big[ \| v_{s, \nu}^{\top}   \Omega_{t, s+1, m_1, \nu}^{\top}  \|_2 \|  \Omega_{t, s+1, m_2, \nu} v_{s, \nu} \|_2 \big] \leq  \E \big[ v_{s, \nu}^{\top} \Omega_{t, s + 1, m_1, \nu}^{\top}  \Omega_{t, s + 1, m_1, \nu} v_{s, \nu} \big]^{1/2} \E \big[ v_{s, \nu}^{\top}  \Omega_{t, s + 1, m_2, \nu}^{\top}  \Omega_{t, s + 1, m_2, \nu} v_{s, \nu} \big]^{1/2}, \\
    & \big\| \E \big[ \Omega_{t+t_0, t+1, \mu, m_1}^{\top}  \Omega_{t+t_0, t+1, \mu, m_2}\, \big] \big\|_{\op} \leq   \big\| \E\big[  \Omega_{t + t_0, t + 1, \mu, m_1}^{\top}  \Omega_{t + t_0, t + 1, \mu, m_1} \big] \big\|_{\op}^{1/2}  \big\| \E\big[ \Omega_{t + t_0, t + 1, \mu, m_2}^{\top}  \Omega_{t + t_0, t + 1, \mu, m_2} \big] \big\|_{\op}^{1/2}. 
\end{align*}
Using \cref{eq:EE-bounds}, we have
\begin{align}
\label{eq:supE-supE}
    \sup_{\mu \in \cM_l^{\rm nc}} E_{t + t_0, s, \mu} \leq \sup_{\mu \in \cM_l^{\rm nc}} E_{t, s, \mu}  \times \sup_{\mu \in \cM_l^{\rm nc}} \Bigg\{ \sum_{m \in \cM_l^{\rm nc}} \Big\|\, \E\big[ \Omega_{t + t_0, t + 1, \mu, m}^{\top} \Omega_{t + t_0, t + 1, \mu, m} \big] \Big\|_{\op}^{1/2} \Bigg\}.
\end{align}
To establish that $\sup_{\mu \in \cM_l^{\rm nc}} E_{t,s,\mu}$ decays exponentially in $t$, we leverage Assumption~\ref{ass:concentration}, which characterizes the concentration behavior of $\cT_t$.
Recall $\cT_t^{\ast}$ is defined in \cref{eq:cT-star-def}. 
Define $\Omega_{t,s+1}^{\ast} = \cT_t^{\ast} \cT_{t-1}^{\ast} \cdots \cT_{s+1}^{\ast}$ for $0 \le s \le t$, with the convention that $\Omega_{t,t+1}^{\ast} = I_{dK}$.
The following lemma shows that, under \cref{ass:concentration}, $\Omega_{t,s+1}^{\ast}$ provides a close approximation to $\Omega_{t,s+1}$ with high probability.
\begin{lem}\label{prop:concentration_Omega}
    Recall $N_{\max} = \sup_{\mu \in \cM} | \Nin{\mu} |$. 
    Under \cref{ass:concentration}, for any $0 \le s \le t$ and $\nu_1, \nu_2 \in \cM$,
\begin{equation}\label{eq:err_bound_Omega}
        \norm{\Omega_{t, s + 1, \nu_1, \nu_2}^* - \Omega_{t, s + 1, \nu_1, \nu_2}}_{\op} \le \, \gamma_1 \sum_{j = 1}^{t - s} N_{\max}^j 
    \end{equation}
    with probability at least $1 - \delta \sum_{j = 1}^{t - s} N_{\max}^j$.
\end{lem}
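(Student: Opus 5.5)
The plan is to compare the two block products path by path, working on a single high-probability event on which every relevant one-step matrix $\cT_{r,m,m'}$ is close to $\cT^{\ast}_{r,m,m'}$.

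\textbf{Path expansion.} Since both $\Omega_{t,s+1}$ and $\Omega^{\ast}_{t,s+1}$ are products of $K\times K$ block matrices, the $(\nu_1,\nu_2)$ block expands as a sum over walks
\[
\nu_2=w_0 \to w_1\to\cdots\to w_{t-s}=\nu_1
\]
in the graph $\cG$ augmented with self-loops at the nodes of $\cM_u$. For $\mu\in\cM_u$ the block $\cT_{r,\mu,\mu}=\cT^{\ast}_{r,\mu,\mu}=I_d$ holds exactly, so those steps contribute no error. Reading the walk backward from $\nu_1$, the set of pairs $(w_{j-1},w_j)$ that can occur at depth $j$ (that is, at time $t-j+1$) has size at most $N_{\max}^j$, because each node has at most $N_{\max}$ in-neighbours. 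Hence the total number of (time, edge) pairs entering the expansion is at most $\sum_{j=1}^{t-s}N_{\max}^j$.

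\textbf{Good event.} Let $G$ be the event that every one of these (time, edge) pairs satisfies the per-edge concentration bound of \cref{ass:concentration}. By a union bound over the at most $\sum_{j=1}^{t-s} N_{\max}^j$ pairs, $G$ has the stated probability. Note that the assumption has a deviation level and a failure probability, and these are exchanged in the lemma's display relative to \cref{ass:concentration}. The argument is symmetric in the two parameters: one of them multiplies the count $\sum_j N_{\max}^j$ in the error bound and the other multiplies it in the failure probability, and I will track them accordingly. On $G$ I then work deterministically.

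\textbf{Deterministic recursion on $G$.} Write $D_k=\Omega_{s+k,s+1}-\Omega^{\ast}_{s+k,s+1}$. Using $\Omega_{s+k,s+1}=\cT_{s+k}\Omega_{s+k-1,s+1}$, the recursion is
\[
D_{k,\nu_1,\nu_2}=\sum_{m}(\cT_{s+k}-\cT^{\ast}_{s+k})_{\nu_1,m}\,\Omega^{\ast}_{s+k-1,s+1,m,\nu_2}+\sum_m \cT_{s+k,\nu_1,m}\,D_{k-1,m,\nu_2}.
\]
The matrices $\cT^{\ast}$ are Kronecker products $P\otimes I_d$ with $P$ row-stochastic, so every block of $\Omega^{\ast}$ has norm at most $1$ and each block row of $\Omega^{\ast}$ has weights summing to $1$. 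Consequently the first sum, restricted to the edges in the backward tree, is bounded by the per-edge deviation summed over at most $N_{\max}$ in-neighbours at each depth. Unrolling the recursion over $k=1,\dots,t-s$ then produces the geometric count $\sum_{j}N_{\max}^j$ times the per-edge deviation.

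\textbf{Main obstacle.} The difficulty is the second sum, where the random blocks $\cT_{s+k,\nu_1,m}$ multiply the earlier errors. Unlike $\cT^{\ast}$, these blocks need not have norm at most $1$, so the crude bound introduces products of factors of the form $(1+\text{deviation})$, i.e. higher-order cross terms. I would first try to use $\sum_m\cT_{r,\nu,m}=I_d$, which holds exactly from the definition \eqref{eq:cT-def}, together with the fact that on $G$ each $\cT_{r,\nu,m}$ is within the deviation level of a nonnegative multiple of $I_d$. This makes the action of $\cT_r$ on block rows a contraction up to an additive term of the order of the deviation. If that does not close, the fallback is to expand the difference of products along each walk exactly by telescoping, keep the first-order terms (which give exactly the count $\sum_j N_{\max}^j$), and absorb the higher-order terms using the smallness of $\delta,\gamma_1$ imposed in \cref{ass:concentration}. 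This last step is where I expect the bookkeeping to be hardest.
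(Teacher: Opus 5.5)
Your plan is the paper's argument. The paper also inducts on $t-s$ using the one-step decomposition $\cT^{\ast}\Omega^{\ast}-\cT\Omega=(\cT^{\ast}-\cT)\Omega^{\ast}+\cT(\Omega^{\ast}-\Omega)$ blockwise, uses $\|\Omega^{\ast}_{t-1,s+1,m,\nu_2}\|_{\op}\le 1$ from row-stochasticity, and its probability bookkeeping is exactly your union bound over the at most $\sum_j N_{\max}^j$ (time, edge) pairs of the in-neighbour tree. It also silently makes the same exchange of $\delta$ and $\gamma_1$ that you flagged. The step you leave open is the one where the paper simply asserts $\|\cT_{t,\nu_1,m}\|_{\op}\le 1$ ``by straightforward calculation.'' You are right that this is not automatic. $\cT=A^{-1}B$ with $0\preceq B\preceq A$ has spectrum in $[0,1]$ but is not symmetric. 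For example, take $d=2$, $\Nin{\mu}=\{\nu,m\}$, $X_{t,\nu\to\mu}^\top X_{t,\nu\to\mu}\approx uu^\top$ with $u=(1,1)^\top$, and $X_{t,m\to\mu}^\top X_{t,m\to\mu}\approx e_2e_2^\top$. Then $\cT_{t,\mu,\nu}\approx e_1u^\top$, whose norm is $\sqrt 2$.

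You do not need the telescoping fallback, though; your first idea closes in one line. Let $\epsilon$ be the per-edge deviation level. On $G$, every block in the recursion satisfies $\|\cT_{r,\nu,m}\|_{\op}\le p_{r,m\to\nu}+\epsilon$, so only closeness to $p\,I_d$ is used, not the identity $\sum_m\cT_{r,\nu,m}=I_d$.
\begin{itemize}
\item If $|\Nin{\nu}|\ge 2$, then \cref{ass:sample-size-ratio} gives $p_{r,m\to\nu}\le 1-\alpha$. By Bernoulli's inequality, the first smallness condition in \cref{ass:concentration} forces $\delta+2\gamma_1<\alpha^{K+1}/(2K)<\alpha$. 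Hence $\|\cT_{r,\nu,m}\|_{\op}\le 1$, whichever of $\delta,\gamma_1$ plays the role of $\epsilon$.
\item If $|\Nin{\nu}|=1$, then $\cT_{r,\nu,m}=\cT^{\ast}_{r,\nu,m}=I_d$ exactly, and the same holds on the self-loops of $\cM_u$.
\end{itemize}
Writing $e_k=\max_{\nu_1,\nu_2}\|D_{k,\nu_1,\nu_2}\|_{\op}$, you get $e_k\le N_{\max}(\epsilon+e_{k-1})$ with $e_0=0$. This unrolls to exactly $\epsilon\sum_{j=1}^{t-s}N_{\max}^j$. Your ``contraction up to an additive term'' variant also works, since $\sum_m\|\cT_{r,\nu,m}\|_{\op}\le 1+|\Nin{\nu}|\epsilon\le|\Nin{\nu}|$ when $|\Nin{\nu}|\ge 2$.

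Avoid the telescoping route: it generates higher-order cross terms that do not fit the stated linear-in-$\epsilon$ count without extra work. So the only thing missing from your proposal is this check. Note that the paper's proof needs it too: its ``$\|\cT\|_{\op}\le1$'' is valid only on the concentration event, not globally.
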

\begin{proof}[Proof of Lemma \ref{prop:concentration_Omega}]
    We prove Lemma \ref{prop:concentration_Omega} in Appendix \ref{appendix:prop:concentration_Omega}. 
\end{proof}

For all $t \ge s \ge 0$ and $\mu, \nu \in \cM$, define the event:
\begin{equation*}
    \mathcal{E}_{t, s + 1, \mu, \nu} = \, \Bigg\{ \norm{\Omega_{t, s + 1, \mu, \nu}^* - \Omega_{t, s + 1, \mu, \nu}}_{\op} \le \, \gamma_1 \sum_{j = 1}^{t - s} N_{\max}^j \Bigg\}.
\end{equation*}
By Lemma \ref{prop:concentration_Omega}, we have
\begin{equation*}
    \P \left( \mathcal{E}_{t, s + 1, \mu, \nu} \right) \ge 1 - \delta \sum_{j = 1}^{t - s} N_{\max}^j.
\end{equation*}
On the event $\mathcal{E}_{t, s + 1, \mu, \nu}$, we have
\begin{align*}
    & \norm{ \Omega_{t, s + 1, \mu, \nu}^{\top} \Omega_{t, s + 1, \mu, \nu} - \Omega_{t, s + 1, \mu, \nu}^{*\top} \Omega_{t, s + 1, \mu, \nu}^{*} }_{\op} \\ &\stackrel{(i)}{\le} \,  2 \norm{\Omega_{t, s + 1, \mu, \nu} - \Omega_{t, s + 1, \mu, \nu}^* }_{\op} + \norm{\Omega_{t, s + 1, \mu, \nu} - \Omega_{t, s + 1, \mu, \nu}^* }_{\op}^2 \\
    &\le \,  2 \gamma_1 \sum_{j = 1}^{t - s} N_{\max}^j + \gamma_1^2 \big( \sum_{j = 1}^{t - s} N_{\max}^j \big)^2,
\end{align*}
where $(i)$ follows from the fact that $\|{\Omega_{t, s + 1, \mu, \nu}^*}\|_{\op} \le 1$. 
By induction, it follows that 
\begin{equation*}
    \norm{\Omega_{t, s + 1, \mu, \nu}}_{\op} \le \, N_{\max}^{t-s}.
\end{equation*}
Combining the above estimates, we get
\begin{align*}
    & \norm{\E \left[ \Omega_{t, s + 1, \mu, \nu}^\top \Omega_{t, s + 1, \mu, \nu} \right]}_{\op} \\
    & \le \,  \norm{\E \left[ \Omega_{t, s + 1, \mu, \nu}^{\top} \Omega_{t, s + 1, \mu, \nu} \mathbbm{1}_{\mathcal{E}_{t, s + 1, \mu, \nu}} \right]}_{\op} + \norm{\E \left[ \Omega_{t, s + 1, \mu, \nu}^{\top} \Omega_{t, s + 1, \mu, \nu} \mathbbm{1}_{\mathcal{E}_{t, s + 1, \mu, \nu}^c} \right]}_{\op} \\
    & \le \, \big\| \Omega_{t, s + 1, \mu, \nu}^{\ast}  \Omega_{t, s + 1, \mu, \nu}^{\ast} \big\|_{\op}  + 2 \gamma_1 \sum_{j = 1}^{t - s} N_{\max}^j + \gamma_1^2 \big( \sum_{j = 1}^{t - s} N_{\max}^j \big)^2 + N_{\max}^{2(t - s)}\, \P( \mathcal{E}_{t, s + 1, \mu, \nu}^c )\\
    & \le \, \Big\| \Omega_{t, s + 1, \mu, \nu}^{\ast}  \Omega_{t, s + 1, \mu, \nu}^{\ast} \Big\|_{\op}  + 2 \gamma_1 \sum_{j = 1}^{t - s} N_{\max}^j + \gamma_1^2 \big( \sum_{j = 1}^{t - s} N_{\max}^j \big)^2 + \, \delta \sum_{j = 1}^{t - s} N_{\max}^{j + 2(t - s)},
\end{align*}
which implies that
\begin{equation}
\label{eq:91}
    \Big\|{\E \left[ \Omega_{t, s + 1, \mu, \nu}^\top  \Omega_{t, s + 1, \mu, \nu} \right]}\Big\|_{\op}^{1/2} \le \, \Big\|{\Omega_{t, s + 1, \mu, \nu}^*}\Big\|_{\op} + \Big( \gamma_1 + \sqrt{2\gamma_1} + \sqrt{\delta} N_{\max}^{t - s} \Big) \sum_{j = 1}^{t - s} N_{\max}^j.
\end{equation}
Leveraging \cref{eq:91}, we conclude that for all $\mu \in \cM_l^{\rm nc}$:
\begin{align}\label{eq:concentration_result}
\begin{split}
    & \sum_{m \in \cM_l^{\rm nc}} \Big\| \E\big[  \Omega_{t + t_0, t + 1, \mu, m}^{\top} \Omega_{t + t_0, t + 1, \mu, m} \big] \Big\|_{\op}^{1/2} \\
    & \le \, \sum_{m \in \cM_l^{\rm nc}} \norm{ \Omega_{t+t_0, t+1, \mu, m}^*}_{\op} + K \Big( \gamma_1 + \sqrt{2\gamma_1} + \sqrt{\delta} N_{\max}^{t_0} \Big) \sum_{j = 1}^{t_0} N_{\max}^j.
\end{split}
\end{align}
%
By \cref{eq:concentration_result}, to upper bound $\sum_{m \in \cM_l^{\rm nc}} \| \E[ \Omega_{t + t_0, t + 1, \mu, m}^{\top} \Omega_{t + t_0, t + 1, \mu, m} ] \|_{\op}^{1/2}$ for $\mu \in \cM_l^{\rm nc}$, it suffices to bound
\begin{equation*}
    \sum_{m \in \cM_l^{\rm nc}} \norm{\Omega_{t+t_0, t+1, \mu, m}^*}_{\op}
\end{equation*}
for $\mu \in \cM_l^{\rm nc}$. 
To this end, we exploit the connection between $\Omega^*$ and the Markov chain $P$ defined in \cref{eq:markov_transition_def}.
%
By definition, $P_t$ is row-stochastic matrix, since $\sum_{\nu \in \cM} P_{t,\mu,\nu} = 1$ for all $\mu \in \cM$. Consequently, the sequence $\{P_t\}_{t \ge 0}$ defines a (possibly time-inhomogeneous) Markov chain.
Recall for $t \ge s \ge 0$ we have $J_{t, s + 1} = P_t P_{t - 1} \cdots P_{s + 1}$, and $J_{t,t+1} = I_K$.
For any $\mu \in \cM_l^{\rm nc}$, we have
\begin{align}
\label{eq:bar-Omega-star}
    \sum_{m \in \cM_l^{\rm nc}} \norm{\Omega_{t+t_0, t+1, \mu, m}^*}_{\op}  = \sum_{m \in \cM_l^{\rm nc}} J_{t+t_0, t+1, \mu, m} \leq 1.  
\end{align}
From \cref{eq:94}, we know that for all $t_0 \geq T_0$, we have
\begin{align*}
    \sup_{t \in \N_+, \, \nu_1 \in \cM_l^{\rm nc}} \sum_{\nu_2 \in \cM_l^{\rm nc}} J_{t + t_0, t + 1, \nu_1, \nu_2} \leq \omega \in (0, 1). 
\end{align*}
Combining the above upper bound with \cref{eq:concentration_result,eq:bar-Omega-star}, we conclude that under Assumption \ref{ass:concentration}, for all $2T_0 - 1 \geq t_0 \geq T_0$, 
\begin{align*}
       &  \sup_{\mu \in \cM_l^{\rm nc}} \left\{ \sum_{m \in \cM_l^{\rm nc}} \norm{\E \left[ \Omega_{t+t_0, t+1, \mu, m}^\top \Omega_{t+t_0, t+1, \mu, m} \right]}_{\op}^{1/2} \right\} \\
       & \le \, \omega + K \Big( \gamma_1 + \sqrt{2\gamma_1} + \sqrt{\delta} N_{\max}^{t_0} \Big) \sum_{j = 1}^{t_0} N_{\max}^j 
       \leq \alpha_0 \in (0, 1).
\end{align*}
As a consequence of the above upper bound and \cref{eq:supE-supE}, we know that for all $T_0 \leq t_0 \leq 2 T_0 - 1$, we have
\begin{equation*}
    \sup_{\mu \in \cM_l^{\rm nc}} {E_{t+t_0, s, \mu}} \le \, \alpha_0 \times \sup_{\mu \in \cM_l^{\rm nc}} {E_{t, s, \mu}},
\end{equation*}
which implies that for all $t \ge s + T_0$:
\begin{align}
\label{eq:96}
\begin{split}
    & \sup_{\mu \in \cM_l^{\rm nc}} {E_{t, s, \mu}} \le \,  \alpha_0^{\,\lfloor(t-s)/T_0\rfloor} \times \sup_{\mu \in \cM_l} E_{s, s, \mu} = \alpha_0^{\,\lfloor(t-s)/T_0\rfloor} \times\sum_{\nu \in \cM \backslash \cM_l^{\rm c}} \E\big[\, v_{s, \nu}^{\top} v_{s, \nu} \,\big]^{1/2} \\
    & \le \,  K \alpha_0^{\,\lfloor(t-s)/T_0\rfloor} \cdot \sup_{\nu \in \cM} \E\big[\, v_{s, \nu}^{\top} v_{s, \nu} \,\big]^{1/2},
\end{split}
\end{align}
where we recall that $v_{s, \nu}$ is from \cref{eq:v-t-mu}.
On the other hand, for $s \leq t \leq s + T_0 - 1$, there exists $C' > 0$ that depends only on $(\cG, \alpha)$, such that 
\begin{align}
\label{eq:97}
    \sup_{\mu \in \cM_l^{\rm nc}} {E_{t, s, \mu}} \leq C' \cdot \sup_{\nu \in \cM} \E\big[\, v_{s, \nu}^{\top} v_{s, \nu} \,\big]^{1/2}. 
\end{align}
Combining \cref{eq:three-lines,eq:96,eq:97}, we conclude that for all $\mu \in \cM_l^{\rm nc}$, there exists $C_{\cG} > 0$ that depends only on $(\cG, \alpha)$, such that 
\begin{align*}
   \E \left[ \|{\hbeta_{t, \mu} - \beta_{\ast}}\|_2^2 \right] \leq &  2\E \left[ \|{\hbeta_{t, \mu} - \hbeta_{0, \mu}}\|_2^2 \right] + 2\E \left[ \|{\hbeta_{0, \mu} - \beta_{\ast}}\|_2^2 \right] \\
   \leq & \Big( \sum_{s = 1}^t E_{t, s, \mu} \Big)^2 + 2\E\big[\, v_{0, \mu}^{\top} v_{0, \mu} \,\big]  \\ 
   \leq &  C_{\cG} \cdot \sup_{s \in \{0 \} \cup [t]} \sup_{\nu \in \cM} \,\E\big[\, v_{s, \nu}^{\top} v_{s, \nu} \,\big]. 
\end{align*}
By assumption $\sup_{t\geq 1}({\sup_{s \in \{0 \} \cup [t]} \sup_{\nu \in \cM} \,\E\big[\, v_{s, \nu}^{\top} v_{s, \nu} \,\big]}) / ({\E[v_{t, \mu}^Tv_{t, \mu}]}) < \infty$, hence completing the proof.

\subsection{Proof of \cref{thm:m_est_variance}}\label{sec:proof_m_variance}
We prove our conclusion by induction. For $t = 0$, the consistency and asymptotic normality of $\hat{\beta}_{0}$ follows from standard arguments in empirical process theory and our assumptions. 
See Appendix \ref{sec:consistency-normality} for a more rigorous treatment.
Furthermore, we have $\sqrt{n_0} (\hat{\beta}_0 - \hat{\beta}_{-1} ) \stackrel{d}{\to} \normal (0, \Sigma_0)$, where $\Sigma_0$ is defined in \cref{eq:def_Sigma_0}.

Now we proceed with the induction step. Assume that the conclusion already holds for rounds $0, 1, \cdots, t-1$, we first show that the $\hat{\beta}_{t, \mu}$'s are all consistent estimators of $\beta_*$. For $\mu \in \cM_u$, this is automatically true since $\hat{\beta}_{t, \mu} = \hat{\beta}_{t-1, \mu}$. For $\mu \in \cM_l$, note that by definition:
\begin{align*}
    \hat{\beta}_{t, \mu} = \, \arg \min_{\beta \in \mathcal{B} } \frac{1}{\sum_{\nu \in \Nin{\mu}} n_{t, \nu \to \mu}} \sum_{\nu \in \Nin{\mu}} L \left( \beta, Z_{t, \nu \to \mu} \right),
\end{align*}
where we use the shorthand
\begin{equation*}
    L \left( \beta, Z_{t, \nu \to \mu} \right) = \, \sum_{i=1}^{n_{t, \nu \to \mu}} L(\beta, z_{t, \nu \to \mu, i}).
\end{equation*}
Recall that $z_{t, \nu \to \mu, i} = \varphi(\hat\beta_{t - 1, \nu}, \varepsilon_{t, \nu \to \mu, i})$ for $\varepsilon_{t, \nu \to \mu, i} \sim_{i.i.d.} \mu_{\varepsilon}$. To establish consistency of $\hat{\beta}_{t, \mu}$, we invoke Theorem 3.2.3 (ii) in \cite{van1996weak}. Define
\begin{align*}
    &\hat{M}_{t, \mu} (\beta) = \,  \frac{1}{\sum_{\nu \in \Nin{\mu}} n_{t, \nu \to \mu}} \sum_{\nu \in \Nin{\mu}} L \left( \beta, Z_{t, \nu \to \mu} \right), \\
    &\tilde{M}_{t, \mu} (\beta) = \,  \frac{1}{\sum_{\nu \in \Nin{\mu}} n_{t, \nu \to \mu}} \sum_{\nu \in \Nin{\mu}} \E \left[ L \left( \beta, Z_{t, \nu \to \mu} \right) \Big\vert \hat{\beta}_{t-1, \nu} \right], \\
    &M (\beta) = \,  \E \left[ L (\beta, \varphi(\beta_*, \veps)) \right].
\end{align*}
By \cref{ass:well_specified}, $M (\beta)$ is continuous and uniquely minimized at $\beta_*$. 
It then suffices to show that (i) $\sup_{\beta \in \Omega} \vert \hat{M}_{t, \mu} (\beta) - M (\beta) \vert \stackrel{P}{\to} 0$ for any compact set $\Omega$; (ii) The sequence $\hat{\beta}_{t, \mu}$ is uniformly tight. 

We first prove claim (i). For any $\delta > 0$, define the event
\begin{equation*}
    \Xi_{t-1, \delta} = \, \left\{ \norm{\hat{\beta}_{t-1, \nu} - \beta_*}_2 \le \delta, \,\, \forall \nu \in \cM \right\}. 
\end{equation*}
Then, our induction hypothesis implies that $\P (\Xi_{t-1, \delta}) \to 1$ as the sample sizes tend to infinity. Conditioned on $\Xi_{t-1, \delta}$, we have
\begin{align*}
    & \sup_{\beta \in \Omega} \left\vert \hat{M}_{t, \mu} (\beta) - \tilde{M}_{t, \mu} (\beta) \right\vert \\
    & \le \, \frac{1}{\sum_{\nu \in \Nin{\mu}} n_{t, \nu \to \mu}} \sum_{\nu \in \Nin{\mu}} \sup_{ \substack{\beta \in \Omega \\ \| \hat{\beta}_{t-1, \nu} - \beta_* \|_2 \le \delta } } \left\vert L \left( \beta, Z_{t, \nu \to \mu} \right) - \E \left[ L \left( \beta, Z_{t, \nu \to \mu} \right) \Big\vert \hat{\beta}_{t-1, \nu} \right] \right\vert.
\end{align*}
By point 1 of \cref{ass:glivenko_cantelli}, the function class
\begin{equation*}
    \left\{ \varepsilon \mapsto L \left( \beta_1,\, \varphi (\beta_2, \, \varepsilon )\right) \big\vert \, \beta_1 \in \Omega, \, \| \beta_2 - \beta_* \|_2 \le \delta \right\}
\end{equation*}
is $\mu_{\veps}$-Glivenko-Cantelli. Therefore, we can apply the uniform law of large numbers to deduce that
\begin{equation*}
    \frac{1}{n_{t, \nu \to \mu}} \sup_{ \substack{\beta \in \Omega \\ \| \hat{\beta}_{t-1, \nu} - \beta_* \|_2 \le \delta } } \left\vert L \left( \beta, Z_{t, \nu \to \mu} \right) - \E \left[ L \left( \beta, Z_{t, \nu \to \mu} \right) \Big\vert \hat{\beta}_{t-1, \nu} \right] \right\vert \stackrel{P}{\to} 0
\end{equation*}
for all $\nu \in \Nin{\mu}$. This implies that
\begin{equation*}
    \sup_{\beta \in \Omega} \left\vert \hat{M}_{t, \mu} (\beta) - \tilde{M}_{t, \mu} (\beta) \right\vert \stackrel{P}{\to} 0
\end{equation*}
as the sample sizes tend to infinity. To finish the proof of (i), it remains to show that $\sup_{\beta \in \Omega} \vert \tilde{M}_{t, \mu} (\beta) - M (\beta) \vert \stackrel{P}{\to} 0$. 
Again, conditioning on $\Xi_{t-1,\delta}$, we obtain that
\begin{align*}
    \sup_{\beta \in \Omega} \vert \tilde{M}_{t, \mu} (\beta) - M (\beta) \vert \le \, \sup_{ \substack{\beta \in \Omega \\ \| \beta' - \beta_* \|_2 \le \delta } } \left\vert \E [L (\beta, \varphi(\beta', \veps))] - \E [L (\beta, \varphi(\beta_*, \veps))] \right\vert.
\end{align*}
By \cref{ass:well_specified}, we know that the mapping $(\beta_1, \beta_2) \mapsto \E [L (\beta_1, \varphi(\beta_2, \veps))]$ is continuous, and therefore uniformly continuous on any compact set. Consequently, for any $\delta' > 0$, we may choose $\delta$ sufficiently small such that
\begin{equation*}
    \sup_{ \substack{\beta \in \Omega \\ \| \beta' - \beta_* \|_2 \le \delta } } \left\vert \E [L (\beta, \varphi(\beta', \veps))] - \E [L (\beta, \varphi(\beta_*, \veps))] \right\vert \le \delta',
\end{equation*}
which further implies that
\begin{equation*}
    \P \left( \sup_{\beta \in \Omega} \vert \tilde{M}_{t, \mu} (\beta) - M (\beta) \vert \ge \delta' \right) \to 0.
\end{equation*}
Since $\delta'$ is arbitrary, we deduce that $\sup_{\beta \in \Omega} \vert \tilde{M}_{t, \mu} (\beta) - M (\beta) \vert \stackrel{P}{\to} 0$, which completes the proof of claim (i). 

As for claim (ii), note that \cref{ass:L-convex} implies that either $L$ is convex or $\mathcal{B}$ is compact, which ensures the uniform tightness of $\hat{\beta}_{t,\mu}$. 
It follows that $\hat{\beta}_{t,\mu}$ is a consistent estimator of $\beta_*$.

We next establish asymptotic normality of $\hat{\beta}_t$ and derive the recursive formula~\eqref{eq:asym_var_recursive} for computing its asymptotic variance. 
The first step is to obtain an asymptotic equivalent expression for $\sqrt{n_t} (\hat{\beta}_t - \hat{\beta}_{-1} )$. Using \cref{ass:L-convex} and the consistency of $\hat{\beta}_{t, \mu}$, we know that $\hat{\beta}_{t, \mu} \in \operatorname{int} \mathcal{B}$ with probability $1 - o(1)$. 
Therefore, $\hat{\beta}_{t, \mu}$ must satisfy the following first-order condition ($\nabla_{\beta_1}$ denotes the gradient with respect to the first argument):
\begin{equation*}
    \frac{1}{\sum_{\nu \in \Nin{\mu}} n_{t, \nu \to \mu}} \sum_{\nu \in \Nin{\mu}} \nabla_{\beta_1} L \left( \hat{\beta}_{t, \mu}, Z_{t, \nu \to \mu} \right) = \, 0,
\end{equation*}
which further implies that
\begin{align*}
    & - \frac{1}{\sum_{\nu \in \Nin{\mu}} n_{t, \nu \to \mu}} \sum_{\nu \in \Nin{\mu}} \nabla_{\beta_1} L \left( \beta_*, Z_{t, \nu \to \mu} \right) \\
    & = \,  \frac{1}{\sum_{\nu \in \Nin{\mu}} n_{t, \nu \to \mu}} \sum_{\nu \in \Nin{\mu}} \left( \nabla_{\beta_1} L \left( \hat{\beta}_{t, \mu}, Z_{t, \nu \to \mu} \right) - \nabla_{\beta_1} L \left( \beta_*, Z_{t, \nu \to \mu} \right) \right) \\
    & = \,  \left( \frac{1}{\sum_{\nu \in \Nin{\mu}} n_{t, \nu \to \mu}} \sum_{\nu \in \Nin{\mu}} \int_{0}^{1} \nabla_{\beta_1}^2 L \left( \alpha \hat{\beta}_{t, \mu} + (1 - \alpha) \beta_* , Z_{t, \nu \to \mu} \right) \d \alpha \right) \left( \hat{\beta}_{t, \mu} - \beta_* \right).
\end{align*}
As our goal is to show $\sqrt{n_t} (\hat{\beta}_{t} - \hat{\beta}_{-1}) \stackrel{d}{\to} \normal (0, \Sigma_{t})$, we can assume without loss of generality that $\| \hat{\beta}_{t, \mu} - \beta_* \|_2 \le 1$, which happens with high probability by consistency. Using point 1 of \cref{ass:glivenko_cantelli} and the uniform law of large numbers, it follows that
\begin{align*}
    \bigg\vert \, & \frac{1}{\sum_{\nu \in \Nin{\mu}} n_{t, \nu \to \mu}} \sum_{\nu \in \Nin{\mu}} \int_{0}^{1} \nabla_{\beta_1}^2 L \left( \alpha \hat{\beta}_{t, \mu} + (1 - \alpha) \beta_* , Z_{t, \nu \to \mu} \right) \d \alpha \\
    & - \frac{1}{\sum_{\nu \in \Nin{\mu}} n_{t, \nu \to \mu}} \sum_{\nu \in \Nin{\mu}} n_{t, \nu \to \mu} \int_{0}^{1} \E \left[ \nabla_{\beta_1}^2 L \left( \alpha \hat{\beta}_{t, \mu} + (1 - \alpha) \beta_*, \varphi(\hat{\beta}_{t-1, \nu}, \veps) \right) \right] \d \alpha \bigg\vert \stackrel{P}{\to} 0
\end{align*}
as the sample sizes tend to infinity, where the expectation above is taken with respect to $\veps \sim \mu_{\veps}$. 
Since $\hat{\beta}_{t, \mu}$ and the $\hat{\beta}_{t-1, \nu}$'s are consistent estimators of $\beta_*$, we deduce from \cref{ass:population_regularity} that for all $\nu \in \Nin{\mu}$,
\begin{equation*}
    \int_{0}^{1} \E \left[ \nabla_{\beta_1}^2 L \left( \alpha \hat{\beta}_{t, \mu} + (1 - \alpha) \beta_*, \varphi(\hat{\beta}_{t-1, \nu}, \veps) \right) \right] \d \alpha \stackrel{P}{\to} \E \left[ \nabla_{\beta_1}^2 L \left( \beta_*, \varphi( \beta_*, \veps) \right) \right] = \E \left[ \nabla_{\beta}^2 L (\beta, \varphi (\beta_*, \veps) ) \right] \Big\vert_{\beta  = \beta_*}
\end{equation*}
as the sample sizes tend to infinity. Further, \cref{ass:well_specified} implies that $\E [ \nabla_{\beta_1}^2 L ( \beta_*, \varphi( \beta_*, \veps) ) ]$ is invertible, we thus obtain that
\begin{align*}
    & \sqrt{n_t} \left( \hat{\beta}_{t, \mu} - \beta_* \right) \\
    = \, & - \E \left[ \nabla_{\beta_1}^2 L \left( \beta_*, \varphi( \beta_*, \veps) \right) \right]^{-1} \frac{\sqrt{n_t}}{\sum_{\nu \in \Nin{\mu}} n_{t, \nu \to \mu}} \sum_{\nu \in \Nin{\mu}} \nabla_{\beta_1} L \left( \beta_*, Z_{t, \nu \to \mu} \right) + o_P (1).
\end{align*}
Now for each $\nu \in \Nin{\mu}$, we analyze the term $\nabla_{\beta_1} L \left( \beta_*, Z_{t, \nu \to \mu} \right)$ in more details. Note that
\begin{align*}
    & \frac{1}{\sqrt{n_{t, \nu \to \mu}}} \nabla_{\beta_1} L \left( \beta_*, Z_{t, \nu \to \mu} \right) = \, \frac{1}{\sqrt{n_{t, \nu \to \mu}}} \sum_{i=1}^{n_{t, \nu \to \mu}} \nabla_{\beta_1} L(\beta_*, z_{t, \nu \to \mu, i}) \\
    = \, & \frac{1}{\sqrt{n_{t, \nu \to \mu}}} \sum_{i=1}^{n_{t, \nu \to \mu}} \nabla_{\beta_1} L( \beta_*, \varphi(\hat\beta_{t - 1, \nu}, \varepsilon_{t, \nu \to \mu, i}) ) \\
    = \, & \frac{1}{\sqrt{n_{t, \nu \to \mu}}}  \sum_{i=1}^{n_{t, \nu \to \mu}} \left( \nabla_{\beta_1} L \left( \beta_*, \varphi(\hat\beta_{t - 1, \nu}, \varepsilon_{t, \nu \to \mu, i}) \right) - \E \left[ \nabla_{\beta_1} L \left( \beta_*, \varphi(\hat{\beta}_{t-1, \nu}, \veps) \right) \right] \right) \\
    & + \sqrt{n_{t, \nu \to \mu}} \, \E \left[ \nabla_{\beta_1} L \left( \beta_*, \varphi(\hat{\beta}_{t-1, \nu}, \veps) \right) \right] := \mbox{(I)} + \mbox{(II)}. 
\end{align*}
For (I), we note that point 2 of \cref{ass:glivenko_cantelli} implies uniform CLT for $\hat{\beta}_{t-1, \nu}$ in a neighborhood of $\beta_*$. Since the limiting Gaussian process is continuous and $\hat{\beta}_{t-1, \nu} - \beta_* = o_P (1)$, we know that
\begin{align*}
	\mbox{(I)} = \, & \frac{1}{\sqrt{n_{t, \nu \to \mu}}} \sum_{i=1}^{n_{t, \nu \to \mu}} \left( \nabla_{\beta_1} L \left( \beta_*, \varphi ( \beta_*, \varepsilon_{t, \nu \to \mu, i}) \right) - \E \left[ \nabla_{\beta_1} L \left( \beta_*, \varphi ( \beta_*, \veps) \right) \right] \right) + o_P (1) \\
	= \, & \frac{1}{\sqrt{n_{t, \nu \to \mu}}}  \sum_{i=1}^{n_{t, \nu \to \mu}} \nabla_{\beta_1} L \left( \beta_*, \varphi ( \beta_*, \varepsilon_{t, \nu \to \mu, i}) \right) + o_P (1),
\end{align*}
where the last equality follows from \cref{ass:well_specified}: $\E [ \nabla_{\beta_1} L ( \beta_*, \varphi ( \beta_*, \veps) ) ] = 0$.
For (II), we invoke \cref{ass:population_regularity} together with the induction hypothesis that $\hat{\beta}_{t-1,\nu}$ is asymptotically normal to deduce that
\begin{align*}
	\mbox{(II)} = \, & \nabla_{\beta_2} \E \left[ \nabla_{\beta_1} L \left( \beta_*, \varphi(\beta_*, \veps) \right) \right]^\top \sqrt{n_{t, \nu \to \mu}} \left( \hat{\beta}_{t-1, \nu} - \beta_* \right) + o_P (1) \\
    \stackrel{(i)}{=} \, & - \nabla_{\beta_1} \E \left[ \nabla_{\beta_1} L \left( \beta_*, \varphi(\beta_*, \veps) \right) \right]^\top \sqrt{n_{t, \nu \to \mu}} \left( \hat{\beta}_{t-1, \nu} - \beta_* \right) + o_P (1) \\
	\stackrel{(ii)}{=} \, & - \E \left[ \nabla_{\beta_1}^2 L \left( \beta_*, \varphi(\beta_*, \veps) \right) \right] \sqrt{n_{t, \nu \to \mu}} \left( \hat{\beta}_{t-1, \nu} - \beta_* \right) + o_P (1),
\end{align*}
where $(i)$ follows from \cref{ass:well_specified}: for any $\beta \in \R^d$,
\begin{equation*}
    0 = \, \frac{\d}{\d \beta} \E \left[ \nabla_{\beta_1} L \left( \beta, \varphi(\beta, \veps) \right) \right] = \nabla_{\beta_1} \E \left[ \nabla_{\beta_1} L \left( \beta, \varphi(\beta, \veps) \right) \right] + \nabla_{\beta_2} \E \left[ \nabla_{\beta_1} L \left( \beta, \varphi(\beta, \veps) \right) \right],
\end{equation*}
and $(ii)$ follows from \cref{ass:population_regularity}. Combining the above estimates, and denoting
\begin{equation*}
	H_* = \, \E \left[ \nabla_{\beta_1}^2 L \left( \beta_*, \varphi(\beta_*, \veps) \right) \right],
\end{equation*}
we obtain that
\begin{align*}
	\sqrt{n_t} \left( \hat{\beta}_{t, \mu} - \beta_* \right) = \, & - \frac{\sqrt{n_t}}{\sum_{\nu \in \Nin{\mu}} n_{t, \nu \to \mu}} \sum_{\nu \in \Nin{\mu}} H_*^{-1} \nabla_{\beta_1} L \left( \beta_*, Z_{t, \nu \to \mu} \right) + o_P (1) \\
	= \, & - \frac{\sqrt{n_t}}{\sum_{\nu \in \Nin{\mu}} n_{t, \nu \to \mu}} \sum_{\nu \in \Nin{\mu}} \sum_{i=1}^{n_{t, \nu \to \mu}} H_*^{-1} \nabla_{\beta_1} L \left( \beta_*, \varphi ( \beta_*, \varepsilon_{t, \nu \to \mu, i}) \right) \\
	& + \frac{\sqrt{n_t}}{\sum_{\nu \in \Nin{\mu}} n_{t, \nu \to \mu}} \sum_{\nu \in \Nin{\mu}} n_{t, \nu \to \mu} \left( \hat{\beta}_{t-1, \nu} - \beta_* \right) + o_P (1).
\end{align*}
For $\mu \in \cM_{l}$, define
\begin{equation*}
	u_{t, \mu} = \, - \frac{1}{\sum_{\nu \in \Nin{\mu}} n_{t, \nu \to \mu}} \sum_{\nu \in \Nin{\mu}} \sum_{i=1}^{n_{t, \nu \to \mu}} H_*^{-1} \nabla_{\beta_1} L \left( \beta_*, \varphi ( \beta_*, \varepsilon_{t, \nu \to \mu, i}) \right)
\end{equation*}
and recall the definition of $\bar{P}_t$ from \cref{eq:bar-P}. Using \cref{ass:limit_proportion}, the above expression for $\sqrt{n_t} ( \hat{\beta}_{t, \mu} - \beta_* )$ can be further simplified to
\begin{equation}\label{eq:simplified_recursion}
	\sqrt{n_t} \left( \hat{\beta}_{t, \mu} - \beta_* \right) = \, \sqrt{n_t} \sum_{\nu \in \cM} \bar{P}_{t, \mu, \nu} \left( \hat{\beta}_{t-1, \nu} - \beta_* \right) + \sqrt{n_t} \, u_{t, \mu} + o_P (1).
\end{equation}
Since $\hat{\beta}_{t, \mu} = \hat{\beta}_{t-1, \mu}$ for $\mu \in \cM_u$, as long as we define
\begin{equation*}
	u_{t, \mu} = \, 0, \quad \forall \mu \in \cM_u,
\end{equation*}
\cref{eq:simplified_recursion} will still be valid for any $\mu \in \cM_u$. It finally follows that
\begin{equation*}
	\sqrt{n_t} \left( \hat{\beta}_t - \hat{\beta}_{-1} \right) = \, (\bar{P}_t \otimes I_p) \sqrt{n_t} \left( \hat{\beta}_{t-1} - \hat{\beta}_{-1} \right) + \sqrt{n_t} u_t + o_P (1).
\end{equation*}
According to our data generating process, $u_t$ is asymptotically normal (see Appendix \ref{sec:consistency-normality}) and independent of $\hat{\beta}_{t-1}$. In particular, we have that $\sqrt{n_t} u_t \stackrel{d}{\to} \normal (0, V_t)$ as the sample sizes tend to infinity, where we recall that $V_t$ is defined according to~\cref{eq:def_V_t}. Further, since $n_t / n_{t-1} \to b_{t, t-1}$ by \cref{ass:limit_proportion}, we deduce from our induction hypothesis that $\sqrt{n_t} ( \hat{\beta}_{t-1} - \hat{\beta}_{-1} ) \stackrel{d}{\to} b_{t, t-1} \Sigma_{t-1}$. We finally conclude that $\hat{\beta}_t$ is also asymptotically normal, in the sense that
\begin{equation*}
    \sqrt{n_t} \left( \hat{\beta}_t - \hat{\beta}_{-1} \right) \stackrel{d}{\to} \normal (0, \Sigma_t), \quad \Sigma_t = \, b_{t, t-1} (\bar{P}_t \otimes I_d) \Sigma_{t-1} (\bar{P}_t \otimes I_d) ^\top + V_t.
\end{equation*}
This closes the induction argument and completes the proof of \cref{thm:m_est_variance}.

\subsection{Proof of \cref{thm:network_collapse}}\label{sec:proof_m_collapse}

To facilitate analysis, we first establish some useful lower and upper bounds on $\Sigma_{T, \mu, \mu}$ in the lemma below.

\begin{lem}\label{thm:m_est_collapse}
    For $t \ge s \ge 1$, define $J_{t, s} = \bar{P}_t \bar{P}_{t-1} \cdots \bar{P}_s$, with the convention that $J_{t, t + 1} = I_{K}$. 
    Recall that $V_*$ is defined in \cref{eq:defn_V_star}, and that $\bar{p}_{0, \mu}$, $\bar{p}_{t, \nu \to \mu}$ and $b_{t, s}$ are defined in \cref{ass:limit_proportion}. Then, the following holds:
    \begin{enumerate}
        \item [(a)] For any $\mu \in \cM_l$ and $T \in \mathbb{N}$, we have
        \begin{equation*}
            \Sigma_{T, \mu, \mu} \succeq \, \left( \sum_{t=1}^{T} \frac{ b_{T, t} (\sum_{\nu \in \cM_l} J_{T, t+1, \mu, \nu})^2}{\sum_{\nu \in \cM_l} \sum_{m \in \Nin{\nu}} \bar{p}_{t, m \to \nu}} + \frac{b_{T, 0}}{\sum_{\nu \in \cM} \bar{p}_{0, \nu}} \right) V_*.
        \end{equation*}
        
        \item [$(b)$] For any $\mu \in \cM_l$ and $T \in \mathbb{N}$, we have
        \begin{equation*}
            \Sigma_{T, \mu, \mu} \preceq \, \left( \sum_{t=1}^{T} \frac{b_{T,t} (\sum_{\nu \in \cM_l} J_{T, t+1, \mu, \nu})^2}{\inf_{\nu \in \cM_l} \sum_{m \in \Nin{\nu}} \bar{p}_{t, m \to \nu}} + \frac{b_{T, 0}}{\inf_{\nu \in \cM} \bar{p}_{0, \nu}} \right) V_*.
        \end{equation*}
    \end{enumerate}
\end{lem}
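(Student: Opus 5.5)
The plan is to unroll the recursion \eqref{eq:asym_var_recursive} of \cref{thm:m_est_variance} into a closed form, and then bound each summand by a scalar multiple of $V_*$ through a Cauchy--Schwarz / convexity argument applied to the overlap structure $q_{t,\nu_1\cap\nu_2}$.

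\textbf{Step 1: unroll the recursion.} Since $b_{t,s}$ is a limit of ratios $n_t/n_s$, we have $b_{t,t-1}b_{t-1,s}=b_{t,s}$. Iterating \eqref{eq:asym_var_recursive} then gives
\begin{equation*}
\Sigma_T=\sum_{t=1}^T b_{T,t}\,(J_{T,t+1}\otimes I_d)\,V_t\,(J_{T,t+1}\otimes I_d)^\top + b_{T,0}\,(J_{T,1}\otimes I_d)\,\Sigma_0\,(J_{T,1}\otimes I_d)^\top .
\end{equation*}
By \eqref{eq:def_Sigma_0} and \eqref{eq:def_V_t}, we can write $V_t=W_t\otimes V_*$ and $\Sigma_0=D_0\otimes V_*$ with scalar $K\times K$ matrices $W_t,D_0$. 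Hence $\Sigma_{T,\mu,\mu}=c_T V_*$, where
\begin{equation*}
c_T=\sum_t b_{T,t}\,a_t^\top W_t a_t+b_{T,0}\,a_0^\top D_0 a_0,
\end{equation*}
with $a_t=(J_{T,t+1,\mu,\nu})_\nu$. Each $a_t$ has nonnegative entries summing to $1$, because the $\bar P_t$ are row-stochastic. It therefore suffices to bound these scalar quadratic forms.

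\textbf{Step 2: bound the $V_t$ terms.} Write $s_\nu=\sum_{m\in\Nin{\nu}}\bar p_{t,m\to\nu}$, and let $S_\nu$ be the index set of round-$t$ data points used by $\nu$, so that $|S_\nu|/n_t\to s_\nu$. Then $q_{t,\nu_1\cap\nu_2}=\lim n_t^{-1}\langle \mathbf 1_{S_{\nu_1}},\mathbf 1_{S_{\nu_2}}\rangle$, and consequently
\begin{equation*}
a^\top W_t a=\lim_{n_t\to\infty} n_t^{-1}\,\|f\|_2^2,\qquad f=\sum_{\nu\in\cM_l}(a_\nu/s_\nu)\mathbf 1_{S_\nu}.
\end{equation*}
In particular $W_t$ is a limiting Gram matrix, and only $\nu\in\cM_l$ contribute.

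For the lower bound, apply Cauchy--Schwarz on the support $\cup_\nu S_\nu$:
\begin{equation*}
\|f\|_2^2\ge \frac{\big(\sum_k f_k\big)^2}{|\cup_\nu S_\nu|}.
\end{equation*}
The numerator satisfies $\sum_k f_k=\sum_\nu a_\nu|S_\nu|/s_\nu\approx n_t\sum_{\nu\in\cM_l}a_\nu$, and the denominator satisfies $|\cup_\nu S_\nu|\le\sum_\nu|S_\nu|\approx n_t\sum_\nu s_\nu$. Together these give $(\sum_{\nu\in\cM_l}J_{T,t+1,\mu,\nu})^2/\sum_{\nu\in\cM_l}s_\nu$, which is the summand in (a).

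For the upper bound, set $A=\sum_{\nu\in\cM_l}a_\nu$ and write $f=A\sum_\nu(a_\nu/A)(\mathbf 1_{S_\nu}/s_\nu)$. Convexity of $\|\cdot\|_2^2$ gives
\begin{equation*}
\|f\|_2^2\le A\sum_\nu a_\nu\,\|\mathbf 1_{S_\nu}\|_2^2/s_\nu^2\approx n_t\,A\sum_\nu a_\nu/s_\nu\le n_t\,A^2/\inf_{\nu\in\cM_l}s_\nu,
\end{equation*}
which is the summand in (b).

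\textbf{Step 3: bound the $\Sigma_0$ term.} Here $D_0$ is diagonal with entries $\bone\{\nu\notin\cM_{\rm nature}\}/\bar p_{0,\nu}$, so $a_0^\top D_0a_0=\sum_{\nu\notin\cM_{\rm nature}}a_\nu^2/\bar p_{0,\nu}$. The upper bound follows from $a_\nu^2\le a_\nu$ and $\sum_\nu a_\nu\le1$, giving at most $1/\inf_\nu\bar p_{0,\nu}$. For the lower bound, Cauchy--Schwarz gives
\begin{equation*}
\sum_{\nu\notin\cM_{\rm nature}}\frac{a_\nu^2}{\bar p_{0,\nu}}\ge\frac{\big(\sum_{\nu\notin\cM_{\rm nature}}a_\nu\big)^2}{\sum_\nu\bar p_{0,\nu}}.
\end{equation*}
This reduces to $1/\sum_\nu\bar p_{0,\nu}$ exactly when the row mass of $J_{T,1,\mu,\cdot}$ on non-nature nodes is $1$.

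\textbf{Main obstacle.} I expect the hardest step to be this $\Sigma_0$ lower bound: when $\mu$ draws mass from natural sources, $\sum_{\nu\notin\cM_{\rm nature}}a_\nu$ can be strictly less than $1$. I would first try to handle it by adopting the convention that the sum over $\nu$ in (a) ranges only over the initialized (non-nature) models, or by restricting to the case $\cM_{\rm nature}=\emptyset$. Failing that, I would replace the constant by $(\sum_{\nu\notin\cM_{\rm nature}}J_{T,1,\mu,\nu})^2$; this weaker form is still sufficient for \cref{thm:network_collapse}, whose growth comes from the $t$-sum. A secondary point is justifying the limit exchange in Step 2, since $W_t$ is defined via limits; this is routine because all quantities are finite sums of convergent ratios by \cref{ass:limit_proportion}.
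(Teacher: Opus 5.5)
Your Steps 1--2 and the upper-bound half of Step 3 are correct and follow the paper's proof closely. The paper also unrolls \eqref{eq:asym_var_recursive} into the same closed form and reduces everything to scalar quadratic forms in $a_\nu=J_{T,t+1,\mu,\nu}$. The only real difference is how the overlap terms are handled. The paper discards the off-diagonal blocks outright, since their coefficients $a_{\nu_1}a_{\nu_2}q_{t,\nu_1\cap\nu_2}/(s_{\nu_1}s_{\nu_2})$ are nonnegative. It then uses $q_{t,\nu\cap\nu}=s_\nu$ to reach $\sum_{\nu\in\cM_l}a_\nu^2/s_\nu$ and applies Cauchy--Schwarz over nodes. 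For (b) it leaves the inequality $q_{t,\nu_1\cap\nu_2}\le\min(s_{\nu_1},s_{\nu_2})$ implicit. You instead write $W_t$ as a limiting Gram matrix of the indicators $\mathbf 1_{S_\nu}$, and use Cauchy--Schwarz over data points for (a) and Jensen for (b). This gives the same two bounds and makes that overlap inequality explicit; the paper's version is shorter.

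The obstacle you flag in Step 3 is genuine, and it is a defect of statement (a) itself, not of your argument. In its proof the paper writes $\sum_{\nu\in\cM}J_{T,1,\mu,\nu}^2\Sigma_{0,\nu,\nu}=\big(\sum_{\nu\in\cM}\bar p_{0,\nu}^{-1}J_{T,1,\mu,\nu}^2\big)V_*$. This silently drops the indicator $\bone\{\nu\notin\cM_{\rm nature}\}$ in \eqref{eq:def_Sigma_0}, and the paper then invokes $\sum_{\nu\in\cM}J_{T,1,\mu,\nu}=1$. That step is legitimate only when $J_{T,1,\mu,\cdot}$ puts no mass on $\cM_{\rm nature}$, for instance when $\cM_{\rm nature}=\emptyset$.

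Otherwise (a) can fail. Take $\cM=\{\mu_1,\mu_2\}$, $\cE=\{(\mu_1,\mu_2)\}$, $\mu_1\in\cM_{\rm nature}$, $n_{0,\mu_2}=n_{t,\mu_1}=n_{t,\mu_1\to\mu_2}=n$ and $n_{t,\mu_2}=0$. Then $b_{t,s}$, $\bar p_{t,\mu_1\to\mu_2}$, $q_{t,\mu_2\cap\mu_2}$ and $\bar p_{0,\mu_2}$ all equal $1$, and both rows of $\bar P_t$ are $(1,0)$. So the $\Sigma_0$ contribution and every term with $t<T$ vanish, and $\Sigma_{T,\mu_2,\mu_2}=V_*$. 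This is as it must be, since $\hat\beta_{T,\mu_2}$ is an ordinary M-estimator on $n$ natural samples. The right-hand side of (a), however, equals $2V_*$.

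Consequently your first fallback changes nothing. The denominator already ranges only over non-nature nodes, because $\bar p_{0,\nu}$ is defined only there; the deficit sits in the numerator. Your second and third fallbacks are the correct repairs. The third is all that \cref{thm:network_collapse} needs. Since $J_{T,T+1}=I_K$, the $t=T$ term alone contributes $b_{T,T}/\sum_{\nu\in\cM_l}s_\nu\ge 1/K$. So the ``$+1$'' in the two-sided comparison used there is absorbed by the $t$-sum, and the growth argument never uses the $\Sigma_0$ term.
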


\begin{proof}[Proof of \cref{thm:m_est_collapse}]
    We prove \cref{thm:m_est_collapse} in \cref{sec:proof_lem_m_collapse}.
\end{proof}

We now continue the proof of \cref{thm:network_collapse}. Under \cref{ass:limit_proportion}, our lower and upper bounds in \cref{thm:m_est_collapse} implies that, for all $\mu \in \cM_l$ and $T \in \mathbb{N}$:
\begin{equation*}
    \Sigma_{T, \mu, \mu} \simeq \left( \sum_{t=1}^{T}  \left( \sum_{\nu \in \cM_l} J_{T, t+1, \mu, \nu} \right)^2 + 1 \right) V_* \implies \frac{\Tr (\Sigma_{T, \mu, \mu})}{\Tr (V_*)} \simeq \, \sum_{t=1}^{T}  \left( \sum_{\nu \in \cM_l} J_{T, t+1, \mu, \nu} \right)^2 + 1,
\end{equation*}
where $\simeq$ hides constants that are independent of $T$. Therefore, a model $\mu \in \cM_l$ collapses as $T \to \infty$ if and only if
\begin{equation*}
    \limsup_{T \to \infty} \sum_{t=1}^{T}  \left( \sum_{\nu \in \cM_l} J_{T, t+1, \mu, \nu} \right)^2 = \infty.
\end{equation*}
We first show that $\forall \mu \in \cM_l^{\infty}$ will collapse. By definition of $\cM_l^{\infty}$, we know that for all $t \ge s$ and $\nu \in \cM_u$, $J_{t, s, \mu, \nu} = 0$. Therefore, for all $t \ge s$ we have
    \begin{equation*}
        \sum_{\nu \in \cM_l} J_{t, s, \mu, \nu} = 1,
    \end{equation*}
    and hence $\sum_{t=1}^{T} ( \sum_{\nu \in \cM_l} J_{T, t+1, \mu, \nu} )^2 = T \to \infty$ as $T \to \infty$.

We next consider $\mu \in \cM_l \backslash \cM_l^{\infty}$. If $\mu \in \cM_l^{\rm c}$, i.e., there is a directed path from some $\nu' \in \cM_{l}^{\infty}$ to $\mu$, then \cref{ass:limit_proportion} implies that there exists $\veps > 0$ and $T_0 \in \mathbb{N}$, such that
    \begin{equation*}
        J_{t+T_0, t, \mu, \nu'} \ge \veps, \,\, \forall t \in \mathbb{N}.
    \end{equation*}
    This further implies that as long as $t - s \ge T_0$, we have
    \begin{align*}
        \sum_{\nu \in \cM_l} J_{t, s, \mu, \nu} \ge \, \sum_{\nu \in \cM_l} J_{t, t-T_0, \mu, \nu'} J_{t-T_0 - 1, s, \nu', \nu} = J_{t, t-T_0, \mu, \nu'} \ge \veps.
    \end{align*}
    Therefore, as $T \to \infty$,
    \begin{equation*}
        \sum_{t=1}^{T}  \left( \sum_{\nu \in \cM_l} J_{T, t+1, \mu, \nu} \right)^2 \ge \, (T - T_0) \veps^2 \to \infty,
    \end{equation*}
    which means that model $\mu$ will collapse. This proves part (a) of \cref{thm:network_collapse}.

    Finally, we consider $\mu \in \cM_l^{\rm nc}$, i.e., $\mu \notin \cM_l^{\infty}$ and there is no directed path from $\cM_l^{\infty}$ to $\mu$. By definition, there is a directed path from $\cM_u$ to $\mu$, meaning that there exists $\veps(\mu) > 0$ and $T_0 (\mu) \in \mathbb{N}$, such that
    \begin{equation*}
        \sum_{\nu \in \cM_u} J_{t+T_0 (\mu), t, \mu, \nu} \ge \veps (\mu) \Longleftrightarrow \sum_{\nu \in \cM_l} J_{t+T_0 (\mu), t, \mu, \nu} \le 1 - \veps (\mu)
    \end{equation*}
    for all $t \in \mathbb{N}$.
    Since $J_{t, s, \nu, \nu} = 1$ for all $\nu \in \cM_u$, the above inequality remains true if $T_0(\mu)$ is replaced by any $T \ge T_0 (\mu)$. Of course, the above inequality also holds for all $\mu \in \cM_u$, so it actually holds for all $\mu \notin \cM_l^{\rm c}$. Let us now define
    \begin{equation*}
        \veps = \min_{\mu \notin \cM_l^{\rm c}} \veps (\mu), \quad T_0 = \max_{\mu \notin \cM_l^{\rm c}} T_0 (\mu).
    \end{equation*}
    We deduce that for all $\mu \notin \cM_l^{\rm c}$ and $t - s \ge T_0$:
    \begin{equation*}
        \sum_{\nu \in \cM_l} J_{t, s, \mu, \nu} \le 1 - \veps.
    \end{equation*}
    Using this inequality, we know that for all $\mu \notin \cM_l^{\rm c}$ and $t - s \ge 2 T_0 + 1$,
    \begin{align*}
        \sum_{\nu \in \cM_l} J_{t, s, \mu, \nu} = \, &  \sum_{\nu' \notin \cM_l^{\rm c} \cup \cM_u} J_{t, t-T_0, \mu, \nu'} \sum_{\nu \in \cM_l} J_{t-T_0-1, s, \nu', \nu} \\
        \le \, & (1 - \veps) \sum_{\nu' \notin \cM_l^{\rm c} \cup \cM_u} J_{t, t-T_0, \mu, \nu'} \le (1 - \veps)^2.
    \end{align*}
    Iterating further, we can use induction to show that for all $\mu \notin \cM_l^{\rm c}$ and $t - s \ge k T_0 + k-1$,
    \begin{equation*}
        \sum_{\nu \in \cM_l} J_{t, s, \mu, \nu} \le (1 - \veps)^k.
    \end{equation*}
    This immediately implies that
    \begin{equation*}
        \limsup_{T \to \infty} \sum_{t=1}^{T}  \left( \sum_{\nu \in \cM_l} J_{T, t+1, \mu, \nu} \right)^2 < \infty,
    \end{equation*}
    so that model $\mu$ will not collapse. This proves part (b) of \cref{thm:network_collapse}.

\subsection{Proof of \cref{thm:glm_example}}\label{sec:proof_glm_example}
To begin with, we state some general conditions under which Assumptions~\ref{ass:well_specified}--\ref{ass:population_regularity} are satisfied.
In this proof, we assume that the response variable $y$ can be represented as follows:
\begin{equation*}
    y = \, \varphi (\beta_*^\top x, \veps).
\end{equation*}
Note that, with a slight abuse of notation, we still use $\varphi$ to denote the link function and $\mu_{\varepsilon}$ to denote the joint distribution of $(x, \varepsilon)$. 
The above representation of $y$ yields the following expression for the loss function: 
\begin{equation*}
    L \left( \beta_1, \varphi (\beta_2^\top x, \veps) \right) = \, - \varphi (\beta_2^\top x, \veps) \cdot \beta_1^\top x + A (\beta_1^\top x).
\end{equation*}
\begin{prop}\label{prop:verify_glm_assumption}
    For a GLM as specified in \cref{eq:glm_specification}, assume that
    \begin{itemize}
        \item [(i)] $\E [ \| x \|_2 ] < \infty$. Further, for any compact set $\Omega$, 
        $$
            \E \Big[ \sup_{\beta \in \Omega} \| x \|_2 \vert \varphi (\beta^\top x, \veps) \vert \Big] < \infty, \quad \E \Big[ \sup_{\beta \in \Omega} \vert A (\beta^\top x) \vert \Big] < \infty.
        $$
        \item [(ii)] The function classes $ \{(x, \varepsilon) \mapsto \varphi(\beta^\top x, \veps): \beta \in \Omega\}$, $\{x \mapsto A (\beta^\top x): \beta \in \Omega\}$ and $\{x \mapsto A'' (\beta^\top x) x x^\top: \beta \in \Omega \}$ are all $\mu_{\veps}$-Glivenko-Cantelli for any compact set $\Omega$. 
        \item [(iii)] The function class $\{(x, \varepsilon) \mapsto  \varphi (\beta^\top x, \veps) x: \beta \in \Omega\}$ is $\mu_{\veps}$-Donsker for any compact set $\Omega$. 
        \item [(iv)] For all $\beta \in \R^d$, both $\E [A'(\beta^\top x)^2 x x^\top]$ and $\E [A'' (\beta^\top x) x x^\top]$ are finite. Further, the mapping $\beta \mapsto \E [A'' (\beta^\top x) x x^\top]$ is continuous.
    \end{itemize}
    Then, Assumptions~\ref{ass:well_specified}-\ref{ass:population_regularity} are satisfied.
\end{prop}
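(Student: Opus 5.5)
The plan is to verify Assumptions~\ref{ass:well_specified}--\ref{ass:population_regularity} one by one for the loss $L(\beta_1, \varphi(\beta_2^\top x,\veps)) = -\varphi(\beta_2^\top x,\veps)\,\beta_1^\top x + A(\beta_1^\top x)$. The guiding observation is that the derivatives have an explicit form:
\begin{align*}
\nabla_{\beta_1} L &= \big(A'(\beta_1^\top x) - \varphi(\beta_2^\top x,\veps)\big)x, \\
\nabla^2_{\beta_1} L &= A''(\beta_1^\top x)\,xx^\top,
\end{align*}
and the Hessian does not depend on $\beta_2$ at all.

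\textbf{Assumption~\ref{ass:well_specified}.}
\begin{itemize}
\item \emph{Twice differentiability} is immediate, since $A$ is $C^2$.
\item \emph{Continuity of $(\beta_1,\beta_2)\mapsto \E[L]$.} Integrability follows from (i), using $|\varphi(\beta_2^\top x,\veps)\,\beta_1^\top x|\le \|\beta_1\|_2\|x\|_2|\varphi|$ and the bound on $\E[\sup|A(\beta^\top x)|]$. Continuity then follows by dominated convergence, with the dominating functions supplied by the $\sup_{\beta\in\Omega}$ envelopes in (i), provided the integrand is continuous in $(\beta_1,\beta_2)$ for a.e.\ $(x,\veps)$.
\item \emph{Fisher consistency.} Conditioning on $x$ and using $\E[y\mid x]=A'(\beta^\top x)$, the population risk becomes $\E[-A'(\beta^\top x)\beta'^\top x + A(\beta'^\top x)]$. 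Pointwise in $x$, this is a strictly convex function of $\beta'^\top x$ minimized at $\beta'^\top x=\beta^\top x$. Hence $\beta$ is a minimizer, and uniqueness holds as long as $\P(v^\top x\ne 0)>0$ for every $v\neq 0$. I would either derive this non-degeneracy or note that it is implicit in the requirement $\E[A''(\beta_*^\top x)xx^\top]\succ 0$, which is the same as positivity of the Hessian.
\item \emph{Moment conditions} come from (i) and (iv). For the squared-gradient moment at $\beta_*$, I use $\E[\|(A'-y)x\|^2]\le 2\E[A'^2\|x\|^2] + 2\E[y^2\|x\|^2]$; the second term is where conditional-variance integrability must be checked.
\end{itemize}

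\textbf{Assumptions~\ref{ass:L-convex}--\ref{ass:population_regularity}.}
\begin{itemize}
\item \emph{Assumption~\ref{ass:L-convex}.} $L$ is convex in $\beta_1$ because $A$ is convex and the other term is linear in $\beta_1$.
\item \emph{Glivenko--Cantelli part of Assumption~\ref{ass:glivenko_cantelli}.} The class $\{L(\beta_1,\varphi(\beta_2^\top x,\veps))\}$ is formed from the GC classes in (ii), namely $\varphi(\beta_2^\top x,\veps)$ and $A(\beta_1^\top x)$, together with the finite-dimensional linear class $\beta_1^\top x$. I would invoke the GC preservation theorem (van der Vaart--Wellner, Section~2.10) for continuous combinations with integrable envelopes. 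The Hessian class is exactly the third class in (ii).
\item \emph{Donsker part.} For fixed $\beta_1$, the gradient class is $A'(\beta_1^\top x)x - \varphi(\beta_2^\top x,\veps)x$, i.e.\ a single square-integrable function (by (iv)) minus the Donsker class in (iii). It is therefore Donsker.
\item \emph{Assumption~\ref{ass:population_regularity}.} Write
\[
\E[\nabla_{\beta_1}L] = \E[A'(\beta_1^\top x)x] - \E[A'(\beta_2^\top x)\,x].
\]
Differentiating in $\beta_1$ under the integral gives $\E[A''xx^\top]$, justified by local domination from (iv) via a mean-value argument and continuity of $\beta\mapsto\E[A''(\beta^\top x)xx^\top]$. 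The derivative in $\beta_2$ is the same expression with a minus sign, so joint continuous differentiability follows from the continuity assumed in (iv).
\end{itemize}

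\textbf{Main obstacle.} I expect the hardest part to be the measure-theoretic justification of the interchanges and the GC preservation step. The specific difficulty is producing integrable envelopes uniformly over compact $\Omega$, in particular for $\sup_\beta |A'(\beta^\top x)|\,\|x\|$. My first attempt is to bound it through the convexity of $A$: $|A'(\beta^\top x)|$ is controlled by differences of $A$ at nearby points, which can then be dominated using (i).
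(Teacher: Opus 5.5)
Your outline follows the paper's proof. It derives the explicit gradient and Hessian and gets Fisher consistency from $\E[y\mid x]=A'(\beta^\top x)$. It uses convexity of $L$ and Glivenko--Cantelli preservation with the envelope from (i). It treats the Donsker class as one square-integrable function minus the class in (iii), and takes the interchange in \cref{ass:population_regularity} from (iv).

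\textbf{Where your route breaks.} The one place you diverge is continuity of $(\beta_1,\beta_2)\mapsto\E[L]$. There you apply dominated convergence to the joint integrand, under the proviso that it is a.e.\ continuous in $(\beta_1,\beta_2)$. That proviso is not implied by (i)--(iv). Read literally, it also fails for the indicator representations of $\varphi$ used for logistic and Poisson regression.

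The fix is to integrate out $\veps$ first, as the paper does: $\E[L]=\E[-A'(\beta_2^\top x)\beta_1^\top x+A(\beta_1^\top x)]$, whose integrand is continuous in $(\beta_1,\beta_2)$ for every $x$. The envelope this needs is exactly your ``main obstacle'', and both of your ideas work:
\begin{itemize}
\item Conditional Jensen gives $\sup_{\beta\in\Omega}|A'(\beta^\top x)|\,\|x\|\le\E\bigl[\sup_{\beta\in\Omega}\|x\|\,|\varphi(\beta^\top x,\veps)|\bigm| x\bigr]$.
\item Convexity with step $h=\|x\|$, comparing $A$ at $(\beta\pm x/\|x\|)^\top x$, gives $\|x\|\,|A'(\beta^\top x)|\le 2\sup_{\beta'\in\Omega_1}|A(\beta'^\top x)|$, where $\Omega_1$ is the closed unit neighbourhood of $\Omega$.
\end{itemize}
Both bounds are integrable by (i).

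\textbf{Smaller points.}
\begin{itemize}
\item The moment $\E[y^2\|x\|^2]$ at $\beta_*$, which you leave open, follows from (iii), since every member of a Donsker class is square integrable.
\item For \cref{ass:population_regularity}, (iv) gives no local envelope for a mean-value argument. Instead, use $A''\ge 0$ and Tonelli to get $\E[A'((\beta+tv)^\top x)x]-\E[A'(\beta^\top x)x]=\int_0^t H(\beta+sv)v\,\mathrm{d}s$, with $H(\beta)=\E[A''(\beta^\top x)xx^\top]$. Then conclude from continuity of $H$.
\item Uniqueness of the minimizer and $H(\beta_*)\succ 0$ both need $x$ to be nondegenerate. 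This cannot be derived, since $x\equiv 0$ satisfies (i)--(iv). Nor can it be taken from positivity of the Hessian, because that is itself part of what \cref{ass:well_specified} asks you to verify. It must be added as a hypothesis, e.g.\ $A''>0$ and $x$ not supported on a hyperplane through the origin. The paper's ``positive definite since $A$ is strictly convex'' silently uses the same condition, so this is a defect of the statement, not of your route.
\end{itemize}
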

\begin{proof}
    We prove \cref{prop:verify_glm_assumption} in \cref{sec:proof_glm_assumption}.
\end{proof}

We next use \cref{prop:verify_glm_assumption} to show that the three GLMs (linear regression, logistic regression and Poisson regression) all satisfy Assumptions~\ref{ass:well_specified}-\ref{ass:population_regularity}. Throughout, for a compact set $\Omega \in \R^d$, let $r_\Omega > 0$ be such that $\Omega \subset \mathsf{B}(0, r_\Omega)$.

\paragraph{Linear regression.} In this setting, we have $A(\xi) = \xi^2 / 2$ and $\varphi (\beta^\top x, \veps) = \beta^\top x + \veps$ with $\E [\veps] = 0$, $\E [\veps^2] < \infty$. 
We next show that Assumptions~\ref{ass:well_specified}-\ref{ass:population_regularity} are satisfied, provided that $\E [\norm{x}_2^4] < \infty$. 
To this end, it suffices to verify conditions (i)-(iv) in \cref{prop:verify_glm_assumption}.
Condition (i) can be verified by noting that
\begin{equation*}
    \sup_{\beta \in \Omega} \| x \|_2 \vert \varphi (\beta^\top x, \veps) \vert \le r_\Omega \norm{x}_2^2 + \norm{x}_2 \vert \veps \vert, \quad \sup_{\beta \in \Omega} \vert A (\beta^\top x) \vert \le \frac{1}{2} r_\Omega^2 \norm{x}_2^2.
\end{equation*}
Condition (ii) can be verified by applying \cref{lem:preserve_gc} to the function classes $\{(x, \veps) \mapsto \beta^\top x \vert \beta \in \Omega \}$ and $\{(x, \veps) \mapsto \veps \}$, and noting that $\E [\norm{x}_2^4]$ is finite. Similarly, (iii) follows from \cref{lem:preserve_donsker}, the compactness of $\Omega$, and our assumption $\E [\norm{x}_2^4] < \infty$. Finally, (iv) can be verified by direct calculation.

\paragraph{Logistic regression.} In this setting, we have $A(\xi) = \log (1 + e^{\xi})$, and 
$$
\varphi (\beta^\top x, \veps) = \bone \{\veps \le A' (\beta^\top x) \} = \bone \{ \veps \le e^{\beta^\top x} / (1 + e^{\beta^\top x} ) \},
$$ 
where $\veps \sim \Unif [0, 1]$. 
We show that Assumptions~\ref{ass:well_specified}-\ref{ass:population_regularity} are satisfied, provided that $\E [\| x \|_2^3] < \infty$. 
To this end, it suffices to verify conditions (i)-(iv) in the statement of \cref{prop:verify_glm_assumption}. Condition (i) can be easily verified, since $\varphi$ is uniformly bounded by $1$ and $A(\xi) \le 1 + \vert \xi \vert$ for all $\xi \in \R$. To verify condition (ii), we first note that the function class $\{ (x, \veps) \mapsto \beta^\top x : \beta \in \Omega \}$ is $\mu_{\veps}$-Glivenko-Cantelli since $\E [\| x \|_2^3] < \infty$, and $A'' (\xi) \in [0, 1/4]$ for all $\xi \in \R$. We can then invoke \cref{lem:preserve_gc} to show that the function classes
$$
\{ (x, \veps) \mapsto A (\beta^\top x): \beta \in \Omega \}, \quad \{ (x, \veps) \mapsto A'' (\beta^\top x) x x^\top: \beta \in \Omega \}, \quad \{ (x, \veps) \mapsto \varphi(\beta^\top x, \veps):  \beta \in \Omega \}
$$ 
are $\mu_{\veps}$-Glivenko-Cantelli. 
To verify condition (iii), we use \cref{lem:indicator_donsker} to show that the function class 
$$
\{ (x, \veps) \mapsto \varphi(\beta^\top x, \veps) x:  \beta \in \Omega \}
$$ 
is $\mu_{\veps}$-Donsker. 
To apply \cref{lem:indicator_donsker}, we set $f_1(t) = A'(t)$ and $f_k(t) = 0$ for $k \geq 2$. We also set $L(t) = 1$, $f(t) = f_1(t)$ and $g(x) = x$.  
Since $A' (z) \in [0, 1]$ and $A''(z) \in [0, 1/4]$ for all $z \in \R$, conditions (i) and (ii) in \cref{lem:indicator_donsker} are automatically satisfied. 
Further, condition (iii) in \cref{lem:indicator_donsker} is verified by our assumption $\E [\| x \|_2^3] < \infty$. Finally, condition (iv) can be established using the boundedness of $A'$ and $A''$ and the dominated convergence theorem.

\paragraph{Poisson regression.} In this setting, we have $A(\xi) = e^{\xi}$, and $\varphi (\beta^\top x, \veps) = \sum_{k=1}^{\infty} \bone_{\veps \le f_k (\beta^\top x)}$, where $\veps \sim \Unif [0, 1]$, and
\begin{equation*}
    f_k (z) := \, \sum_{m=k}^{\infty} \frac{ \exp(m z - e^z) }{m!} = \P \left( \operatorname{Poisson} (e^z) \ge k \right).
\end{equation*}
We show that Assumptions~\ref{ass:well_specified}-\ref{ass:population_regularity} are satisfied, provided that $\E [\exp (R \| x \|_2)] < \infty$ for any $R > 0$. 
To this end, it suffices to verify conditions (i)-(iv) in \cref{prop:verify_glm_assumption}. 
For condition (i), note that our assumption already implies $\E[\|x\|_2] < \infty$.
Further, by the  definition of $\varphi$ and the monotonicity of the $f_k$'s, we have
\begin{align*}
    &\E \Big[ \sup_{\beta \in \Omega} \| x \|_2 \vert \varphi (\beta^\top x, \veps) \vert \Big] \le \,  \E \Big[ \| x \|_2 \sum_{k=1}^{\infty} f_k \big( r_\Omega \| x \|_2 \big) \Big] = \E \Big[ \| x \|_2 \exp \big( r_\Omega \| x \|_2 \big) \Big] < \infty, \\
    & \E \Big[ \sup_{\beta \in \Omega} \vert A (\beta^\top x) \vert \Big] \le \,  \E \Big[ \exp \big( r_\Omega \| x \|_2 \big) \Big] < \infty,
\end{align*}
where $r_{\Omega} = \sup\{\| \beta \|_2: \beta \in \Omega\}$. 
Condition (iv) follows immediately from our moment assumption.
To verify conditions (ii) and (iii), we first use the same argument as that in logistic regression to establish that the function classes
$$\{ (x, \veps) \mapsto A (\beta^\top x): \beta \in \Omega \}, \quad \{ (x, \veps) \mapsto A'' (\beta^\top x) x x^\top: \beta \in \Omega \}
$$ 
are $\mu_{\veps}$-Glivenko-Cantelli. It now remains to show that the function classes
$$
\{ (x, \veps) \mapsto \varphi(\beta^\top x, \veps): \beta \in \Omega \}, \quad \{ (x, \veps) \mapsto \varphi(\beta^\top x, \veps) x: \beta \in \Omega \}
$$ are $\mu_{\veps}$-Donsker, since any $\mu_{\veps}$-Donsker function class is also $\mu_{\veps}$-Glivenko-Cantelli. To this end, we invoke \cref{lem:indicator_donsker}, with $g(x) = 1$ or $x$. 
We next show that conditions (i)-(iii) in the statement of \cref{lem:indicator_donsker} are satisfied. 
To this end, we set $L(t) = L$ and $f(t) = e^{3t} + 3 e^{2t} + e^{t}$ for an absolute constant $L > 0$ (to be determined later). 
This already gives $\E [ g(x)^2 \| x \|_2 L ( R \| x \|_2 ) ] < \infty$ and $\E[ g(x)^2 f ( R \| x \|_2 ) ] < \infty$ for any $R > 0$, completing the proof of (iii) in \cref{lem:indicator_donsker}.
To establish upper bounds on $f_k'$, note that by direct calculation:
\begin{equation*}
    f_k' (z) = \, \frac{ \exp(k z - e^z) }{(k-1)!} \le \frac{1}{(k-1)!} \sup_{\lambda > 0} \lambda^k e^{- \lambda} \stackrel{(i)}{=} \frac{(k / e)^k}{(k-1)!} \stackrel{(ii)}{\le} L \sqrt{k},
\end{equation*}
where $(i)$ follows from direct computation, and $(ii)$ is due to Stirling's formula. Taking $c_1 = -1/2$ then verifies condition (i) in \cref{lem:indicator_donsker}. To establish upper bounds on $f_k$, we resort to Markov inequality:
\begin{equation*}
    f_k (z) = \, \P \left( \operatorname{Poisson} (e^z) \ge k \right) \le k^{-3} \E \big[ \operatorname{Poisson} (e^z)^3 \big] = k^{-3} (e^{3z} + 3 e^{2 z} + e^{z}) = k^{-3} f(z),
\end{equation*}
we can then take $c_2 = 3$. This concludes the proof.

\section{Technical lemmas for the linear regression setting}

In this section, we provide proofs of the technical lemmas supporting our linear regression results.

\subsection{Proof of \cref{lemma:data-transmission}}
\label{proof:lemma:data-transmission}
By \cref{assumption:data-distribution}, we know that $(X_{s, \nu \to \mu}, \veps_{s, \mu \to \nu})$ is a subset of $(X_{s, \nu}, \veps_{s, \nu})$. Therefore, for any fixed $s \in [t]$ and $\nu \in \cM$, there exists a partition of $(X_{s, \nu}, \veps_{s, \nu})$:
\begin{equation*}
    (X_{s, \nu}, \veps_{s, \nu}) = \, \cup_{j=1}^{N_{s, \nu}} (X_{s, \nu, j}, \veps_{s, \nu, j}),
\end{equation*}
such that $N_{s, \nu} \le 2^{\vert \cM \vert} = 2^K$, and for each $\mu \in \cM$, there exists a subset $A_{s, \nu \to \mu} \subset [N_{s, \nu}]$, such that
\begin{equation*}
    (X_{s, \nu \to \mu}, \veps_{s, \mu \to \nu}) = \, \cup_{j \in A_{s, \nu \to \mu}} (X_{s, \nu, j}, \veps_{s, \nu, j}).
\end{equation*}
(If $(\nu, \mu) \notin \cE$, one can simply take $A_{s, \nu \to \mu} = \emptyset$.) We can therefore define
\begin{equation*}
    \{ (Z_{t, i}, \zeta_{t, i}): 1 \le i \le N_t \} := \{ (X_{s, \nu, j}, \veps_{s, \nu, j}): s \in [t], \, \nu \in \cM, \, j \in A_{s, \nu \to \mu} \mbox{ for some } (\nu, \mu) \in \cE \}.
\end{equation*}
This proves part 1. To show part 2, note that for any fixed pair $(s, \nu)$, we have $N_{s, \nu} \le 2^K$. Hence, $N_t \le C t$, where the constant $C = K 2^K$ does not depend on $t$. Finally, by our construction, we know that $\zeta_t = (\zeta_{t, 1}^{\top}, \, \zeta_{t, 2}^{\top}, \, \cdots, \zeta_{t, N_t}^{\top})^{\top}$ is a subvector of $\veps_t$, which naturally implies that $\cov (\zeta_t) \succeq \rho_2 I$ by \cref{assumption:noise-vectors}. This completes the proof of this lemma.


\subsection{Proof of \cref{lemma:hat_beta_expression}}
\label{proof:lemma:hat_beta_expression}
By \cref{eq:hat_beta_expression}, we have for all $\mu \in \cM_l$:
\begin{align*}
    & \hbeta_{t, \mu} - \hbeta_{0, \mu} = \, \sum_{s=1}^{t} \sum_{\nu \in \cM} \Omega_{t, s+1, \mu, \nu} v_{s, \nu} \\
    &\stackrel{(i)}{=} \,  \sum_{s=1}^{t} \sum_{\nu \in \cM_l} \Omega_{t, s+1, \mu, \nu} \Big( \sum_{m \in \Nin{\nu}} X_{s, m \to \nu}^{\top} X_{s, m \to \nu} \Big)^{-1} \sum_{m \in \Nin{\nu}} X_{s, m \to \nu}^{\top} \eps_{s, m \to \nu} \\
    &= \,  \sum_{s=1}^{t} \sum_{\nu \in \cM_l} \sum_{m \in \Nin{\nu}} \Omega_{t, s+1, \mu, \nu} \cT_{s, \nu, m} (X_{s, m \to \nu}^{\top} X_{s, m \to \nu })^{-1} X_{s, m \to \nu}^{\top} \eps_{s, m \to \nu} \\
    &= \,  \sum_{s=1}^{t} \sum_{\nu \in \cM_l} \sum_{m \in \Nin{\nu}} \Omega_{t, s+1, \mu, \nu} \cT_{s, \nu, m} (X_{s, m \to \nu}^{\top} X_{s, m \to \nu })^{-1} \sum_{h = 1}^{h_{s, m \to \nu}} X_{s, m \to \nu, h}^{\top} \eps_{s, m \to \nu, h} \\
    &\stackrel{(ii)}{=} \,  \sum_{i=1}^{N_t} \sum_{(s, m \to \nu, h) \in \cP_{t, i}} \Omega_{t, s+1, \mu, \nu} \cT_{s, \nu, m} (X_{s, m \to \nu}^{\top} X_{s, m \to \nu })^{-1} Z_{t, i}^\top \zeta_{t, i} \\
    &= \,  \sum_{i = 1}^{N_t} A_{t, i, \mu} (Z_{t, i}^{\top} Z_{t, i})^{\dagger} Z_{t, i}^{\top} \zeta_{t, i},
\end{align*}
where $(i)$ follows from the definition of $v_{s, \nu}$, $(ii)$ follows from \cref{lemma:data-transmission} and \ref{lemma:Pti}, and the last line is due to the definition of $A_{t, i, \mu}$, and the identity $Z_{t, i}^{\top} Z_{t, i} (Z_{t, i}^{\top} Z_{t, i})^{\dagger} Z_{t, i}^{\top} = Z_{t, i}^{\top}$. This completes the proof.

\subsection{Proof of \cref{lemma:sumA}}
\label{proof:lemma:sumA}
By definition, we have
\begin{align*}
    \sum_{i = 1}^{N_t} A_{t, i, \mu} = \, & \sum_{i = 1}^{N_t} \sum_{(s, m \to \nu, h) \in \cP_{t, i}} \Omega_{t, s + 1, \mu, \nu} \cT_{s, \nu, m} (X_{s, m \to \nu}^{\top} X_{s, m \to \nu })^{-1} Z_{t, i}^{\top} Z_{t, i} \\
    = \, & \sum_{s=1}^{t} \sum_{\nu \in \cM_l} \sum_{m \in \Nin{\nu}} \Omega_{t, s+1, \mu, \nu} \cT_{s, \nu, m} (X_{s, m \to \nu}^{\top} X_{s, m \to \nu })^{-1} \sum_{h = 1}^{h_{s, m \to \nu}} X_{s, m \to \nu, h}^{\top} X_{s, m \to \nu, h} \\
    = \, & \sum_{s=1}^{t} \sum_{\nu \in \cM_l} \sum_{m \in \Nin{\nu}} \Omega_{t, s+1, \mu, \nu} \cT_{s, \nu, m} (X_{s, m \to \nu}^{\top} X_{s, m \to \nu })^{-1} (X_{s, m \to \nu}^{\top} X_{s, m \to \nu }) \\
    = \, & \sum_{s=1}^{t} \sum_{\nu \in \cM_l} \sum_{m \in \cM} \Omega_{t, s+1, \mu, \nu} \cT_{s, \nu, m},
\end{align*}
where the last line follows from the fact that $\cT_{s, \nu, m} = 0_{d \times d}$ if $m \notin \Nin{\nu}$. This completes the proof.

\subsection{Proof of \cref{lemma:sum-decay}}
\label{proof:lemma:sum-decay}

In this section we prove \cref{lemma:sum-decay}. 
The first inequality follows immediately from triangle inequality:  
\begin{align}
	\label{eq:path-small-1}
		& \Big \| \sum_{v_1 \to v_2 \to \cdots \to v_{x} \to \mu \in \cL_{x}^{\ast}} \E\big[ \cT_{t,  \mu, v_{x}} \big] \E \big[ \cT_{t - 1, v_{x}, v_{x - 1}}\big] \cdots \E \big[\cT_{t - x + 1, v_2, v_1} \big]\Big\|_{\op} \nonumber \\
		& \leq \sum_{v_1 \to v_2 \to \cdots \to v_{x} \to \mu \in \cL_{x}^{\ast}} \big\| \E[ \cT_{t, \mu,  v_{x}}]\big\|_{\op} \, \big\| \E[\cT_{t - 1, v_{x}, v_{x - 1}}] \big\|_{\op} \cdots \big\| \E[ \cT_{t - x + 1, v_2,  v_1}] \big\|_{\op}.  
	\end{align}
	%
By Assumption \ref{ass:concentration}, for all $s \in \N_+$ and $(\nu_1, \nu_2) \in \cE$, 
the two matrices $\E[\cT_{s, \nu_2, \nu_1} ]$ and $p_{s, \nu_1 \to \nu_2} I_d$ are close, and we provide an upper bound for their difference:  
\begin{align}
\label{eq:cT-op-concentrate}
\begin{split}
	 & \left\| \E\big[\cT_{s, \nu_2, \nu_1} \big] - p_{s, \nu_1 \to \nu_2} I_d \right\|_{\op}  \\
	 & \overset{(i)}{\leq} \E\left[ \| \cT_{s, \nu_2, \nu_1} - p_{s, \nu_1 \to \nu_2} I_d  \|_{\op} \right] \\
	 & \overset{(ii)}{\leq}  \E\left[ \left\| \big( \cT_{s, \nu_2, \nu_1}  - p_{s, \nu_1 \to \nu_2} I_d \big) 1 \{ \|\cT_{s, \nu_2, \nu_1} - p_{s, \nu_1 \to \nu_2} I_d\|_{\op} \leq \delta \}\right\|_{\op}\right]     \\
	 & \qquad +  \E\left[\left\|\big( \cT_{s, \nu_2, \nu_1}  - p_{s, \nu_1 \to \nu_2} I_d \big) 1 \{ \|\cT_{s, \nu_2, \nu_1}  - p_{s, \nu_1 \to \nu_2} I_d\|_{\op} > \delta \} \right\|_{\op}  \right]\\
	 & \overset{(iii)}{\leq}  \, \delta + 2 \P \left( \|\cT_{s, \nu_2, \nu_1}  - p_{s, \nu_1 \to \nu_2} I_d\|_{\op} > \delta \right)  \\
	 & \leq \delta + 2 \gamma_1,  
\end{split}
\end{align}
where we recall $p_{s, \nu_1 \to \nu_2}$ is from Assumption \ref{ass:sample-size-ratio}.
In the above display, $(i)$ is by Jensen's inequality, $(ii)$ is by triangle inequality, and $(iii)$ is because $\|\cT_{s, \nu_2, \nu_1}\|_{\op} \leq 1$. 
Leveraging \cref{eq:cT-op-concentrate}, we derive the following upper bound for the quantity displayed in the last line of \cref{eq:path-small-1}: 
\begin{align}
	\label{eq:path-small-2}
		& \sum_{v_1 \to v_2 \to \cdots \to v_{x} \to \mu \in \cL_{x }^{\ast}} \big\| \E[ \cT_{t, \mu, v_{x}}\big\|_{\op} \, \big\|\E[\cT_{t - 1,v_{x}, v_{x - 1}} ]\big\|_{\op} \cdots \big\| \E[ \cT_{t - x + 1, v_2,  v_1} ]\big\|_{\op} \nonumber  \\
		& \leq \sum_{v_1 \to v_2 \to \cdots \to v_{x} \to \mu \in \cL_{x }^{\ast}} (p_{t, v_{x} \to \mu} + \delta + 2\gamma_1)(p_{t - 1, v_{x - 1} \to v_{x}} + \delta + 2\gamma_1) \cdots (p_{t - x + 1, v_1 \to v_2} + \delta + 2\gamma_1). 
	\end{align}
	Recall that $K$ is the number of models. 
	We let $a_x = K\lceil \frac{x}{K} \rceil$, and 
	note that 
	\begin{align}
	\label{eq:path-small-3}
	\begin{split}
		& \sum_{v_1 \to v_2 \to \cdots \to v_{x} \to \mu \in \cL_{x }^{\ast}} (p_{t, v_{x} \to \mu} + \delta + 2\gamma_1)(p_{t - 1, v_{x - 1} \to v_{x}} + \delta + 2\gamma_1) \cdots (p_{t - x + 1, v_1 \to v_2} + \delta + 2\gamma_1) \\
		& \leq \left[ \sup_{v_{K + 1} \in \cM_l \backslash \cM_l^{\infty}, \, s \geq K} \sum_{v_1 \to v_2 \cdots \to v_{K} \to v_{K + 1} \in \cL_{K}^{\ast}} (p_{s, v_{K} \to v_{K + 1}} + \delta + 2\gamma_1) \cdots (p_{s - K + 1, v_1 \to v_{2}} + \delta + 2\gamma_1) \right]^{\lceil x  / K\rceil} \\
		& \, \times \Bigg[ \sup_{v_{x - a_x + 1} \in \cM_l \backslash \cM_l^{\infty}, \, s \geq x - a_x}  \sum_{v_1 \to v_2 \cdots \to v_{x - a_x} \to v_{x - a_x + 1}  \in \cL_{x - a_x}^{\ast}} (p_{s, v_{x - a_x} \to v_{x - a_x + 1} } + \delta + 2\gamma_1) \cdots \\
		& \qquad \qquad  \cdots (p_{s - x + a_x + 1, v_1 \to v_{2}} + \delta + 2\gamma_1) \Bigg] \vee 1 \\
		& \leq \left[ \sup_{v_{K + 1} \in \cM_l \backslash \cM_l^{\infty}, \, s \geq K} \sum_{v_1 \to v_2 \cdots \to v_{K} \to v_{K + 1}  \in \cL_{K}^{\ast}} p_{s, v_{K} \to v_{K + 1} } \cdots p_{s - K + 1, v_1 \to v_{2}} + \sum_{i = 1}^{K} {{K} \choose i}\Big( \frac{\delta + 2 \gamma_1}{\alpha}\Big)^i \right]^{\lceil x / K\rceil} \\
		& \, \times \Bigg[ \sup_{v_{x - a_x + 1} \in \cM_l \backslash \cM_l^{\infty}, \, s \geq x - a_x}  \sum_{v_1 \to v_2 \cdots \to v_{x - a_x} \to v_{x - a_x + 1}  \in \cL_{x - a_x}^{\ast}} p_{s, v_{x - a_x} \to v_{x - a_x + 1} }  \cdots p_{s - x + a_x + 1, v_1 \to v_{2}} \\
		&  \qquad  + \sum_{i = 1}^{K} {{K} \choose i}\Big( \frac{\delta + 2 \gamma_1}{\alpha}\Big)^i  \Bigg] \vee 1. 
	\end{split}
	\end{align}
	where the last inequality is because under Assumption \ref{ass:sample-size-ratio}, for both $k \in \{K, x - a_x\}$ (note that $x - a_x \in \{0, 1, \cdots, K - 1\}$) and a fixed $v_{k + 1}$ (that is to say, we do not sum over $v_{k + 1}$), it holds that 
	\begin{align*}
		& \sum_{v_1 \to v_2 \cdots \to v_{k} \to v_{k + 1} \in \cL_{k}^{\ast}} (p_{s, v_{k} \to v_{k + 1}} + \delta + 2\gamma_1) \cdots (p_{s - k + 1, v_1 \to v_{2}} + \delta + 2\gamma_1) \\
		& = \sum_{v_1 \to v_2 \cdots \to v_{k} \to v_{k + 1} \in \cL_{k}^{\ast}} p_{s, v_k \to v_{k + 1}} \cdots p_{s - k + 1, v_1 \to v_2}  \\
		& + \sum_{v_1 \to v_2 \cdots \to v_{k} \to v_{k + 1} \in \cL_{k}^{\ast}} p_{s, v_k \to v_{k + 1}} \cdots p_{s - k + 1, v_1 \to v_2}  \sum_{i = 1}^k \sum_{1 \leq j_1 <  \cdots <  j_i \leq k } \prod_{r = 1}^i \frac{\delta + 2 \gamma_1}{p_{s - k + j_r, v_{j_r} \to v_{j_r + 1}}} \\
		& \leq \sum_{v_1 \to v_2 \cdots \to v_{k} \to v_{k + 1} \in \cL_{k}^{\ast}} p_{s, v_k \to v_{k + 1}} \cdots p_{s - k + 1, v_1 \to v_2}  + \sum_{i = 1}^K {K \choose i} \Big( \frac{\delta + 2 \gamma_1}{\alpha}\Big)^i, 
	\end{align*}
	where the last inequality is because $\sum_{\nu \in \Nin{\mu}} p_{t, \nu \to \mu} = 1$ for all $\mu \in \cM_l$, hence 
	\begin{align}
	\label{eq:sum-p-1}
		\sum_{v_1 \to v_2 \cdots \to v_{k} \to v_{k + 1} \in \cL_{k}^{\ast}} p_{s, v_k \to v_{k + 1}} \cdots p_{s - k + 1, v_1 \to v_2}  \leq 1\,\,\,\,\,\,\,\,\,\,\mbox{ for }k \in \{K, \,x - a_x\}. 
	\end{align}
	In addition, 
	under Assumption \ref{ass:sample-size-ratio}, 
	\begin{align*}
		\prod_{r = 1}^i \frac{\delta + 2 \gamma_1}{p_{s - k + j_r, v_{j_r} \to v_{j_r + 1}}} \leq \Big( \frac{\delta + 2 \gamma_1}{\alpha} \Big)^i. 
	\end{align*}
	For any fixed $v_{K + 1}$ and a path $v_1 \to v_2 \cdots \to v_{K} \to v_{K + 1} \in \cL_{K}^{\ast}$, by definition of $\cL_K^{\ast}$ we know $v_{K + 1} \in \cM_l \backslash \cM_l^{\infty}$. 
	Therefore, there exists a path $u_1 \to u_2 \to \cdots \to u_J \to v_{K + 1}$ with $J \leq K - 1$, such that $u_1 \in \cM_u$. 
	Observe that (the sum below is taken over $v_1, v_2, \cdots, v_K$)
	\begin{align*}
		& \sum_{v_1 \to v_2 \cdots \to v_{K} \to v_{K + 1} \in \cL_{K}^{\ast}} p_{s, v_K \to v_{K + 1}}\,  \cdots p_{s - K + 1, v_1 \to v_2} + p_{s, u_J \to v_{K + 1}}\, p_{s - 1, u_{J - 1} \to u_J} \cdots p_{s - J + 1, u_1 \to u_2}  \leq 1, \\
		& p_{s, u_J \to v_{K + 1}}\, p_{s - 1, u_{J - 1} \to u_J}  \cdots p_{s - J + 1, u_1 \to u_2} \geq \alpha^K, \qquad \qquad \qquad \qquad \qquad  \qquad    \mbox{ by Assumption \ref{ass:sample-size-ratio}}. 
	\end{align*}
	We then conclude that fixing $v_{K + 1}$, 
	\begin{align}
	\label{eq:sum-p-2}
		\sum_{v_1 \to v_2 \cdots \to v_{K} \to v_{K + 1} \in \cL_{K}^{\ast}} p_{s, v_K \to v_{K + 1}} \cdots p_{s - K + 1, v_1 \to v_2}  \leq 1 - \alpha^K. 
	\end{align}
	Putting together \cref{eq:path-small-3,eq:sum-p-1,eq:sum-p-2}, we arrive at the following upper bound: 
	\begin{align}
	\label{eq:exponential-decay-bound}
	\begin{split}
		& \sum_{v_1 \to v_2 \to \cdots \to v_{x} \to \mu \in \cL_{x }^{\ast}} (p_{t, v_{x} \to \mu} + \delta + 2\gamma_1)(p_{t - 1, v_{x - 1} \to v_{x}} + \delta + 2\gamma_1) \cdots (p_{t - x + 1, v_1 \to v_2} + \delta + 2\gamma_1) \\
		& \leq \Big(1 + \frac{\delta + 2\gamma_1}{\alpha}\Big)^{K} \left[ \Big(1 + \frac{\delta + 2\gamma_1}{\alpha}\Big)^{K} - \alpha^{K} \right]^{\lceil x/ K\rceil},  
	\end{split}
	\end{align}
	which under Assumption \ref{ass:concentration} is no larger than $2 c(\alpha, K)^{\lceil x / K \rceil}$. 
	The proof then follows via putting together \cref{eq:path-small-1,eq:path-small-2,eq:exponential-decay-bound}. Finally, the ``on the other hand'' part follows directly from the definition of $\cL_x^{\ast}$.

\subsection{Proof of \cref{lemma:sum-grow}}
\label{proof:lemma:sum-grow}	

The proof proceeds by analyzing the first sum in \cref{eq:two-sums} separately for different values of $\ell$.
We first introduce a decomposition of this sum: 
\begin{align}
\label{eq:23}
\begin{split}
& \mbox{First sum in \cref{eq:two-sums}} \\
& = \sum_{\ell = 1}^t \sum_{x = \ell}^t \sum_{v \in \cM_l^{\infty}} \sum_{v_1 \to v_2 \to \cdots \to v_{x} \to \mu \in \cL_{x, \ell, v}} \E \big[ \cT_{t, \mu, v_x} \big] \E \big[ \cT_{t - 1, v_{x}, v_{x - 1}} \big]\cdots \E \big[ \cT_{t - x + 1, v_2, v_1} \big] \\
&\,\,\,\,\,\, + 1 \{\mu \in \cM_l^{\infty}\} \sum_{x = 1}^t \sum_{v_1 \to v_2 \to \cdots \to v_{x} \to \mu \in \cL_{x, 0, \mu}} \E \big[ \cT_{t, \mu, v_x} \big] \E \big[ \cT_{t - 1, v_{x}, v_{x - 1}} \big]\cdots \E \big[ \cT_{t - x + 1, v_2, v_1} \big]. 
\end{split}
\end{align}
We then separately consider two cases $\mu \in \cM_l^{\rm c} \backslash \cM_l^{\infty}$ and $\mu \in \cM_l^{\infty}$.

\paragraph{Case I: \,$\mu \in \cM_l^{\rm c} \backslash \cM_l^{\infty}$. }
In this case, we only need to consider terms associated with a strictly positive $\ell$ in \cref{eq:23}, as the second sum is zero. 
Note that for a fixed  $\ell \in \N_+$, 
	\begin{align}
	\label{eq:many-cT-positive-ell}
	\begin{split}
		& \sum_{x = \ell}^t \sum_{v \in \cM_l^{\infty}} \sum_{v_1 \to v_2 \to \cdots \to v_{x} \to \mu \in \cL_{x, \ell, v}} \E \big[ \cT_{t, \mu, v_x} \big] \E \big[ \cT_{t - 1, v_{x}, v_{x - 1}} \big]\cdots \E \big[ \cT_{t - x + 1, v_2, v_1} \big] \\
		& = (t - \ell + 1) \sum_{v \in \cM_l^{\infty}} \sum_{v_1 \to v_2 \to \cdots \to v_{\ell - 1} \to \mu \in \cL_{\ell - 1}^{\ast},  \, v \in \Nin{v_1}} \E[ \cT_{t,\mu, v_{\ell - 1}}] \E[\cT_{t - 1, v_{\ell - 1},  v_{\ell - 2}  }] \\
		& \qquad  \cdots \E[\cT_{t - \ell + 2, v_2,  v_1}] \E[\cT_{t - \ell + 1, v_1,  v}]. 
	\end{split}
	\end{align}
	We then prove that the matrix given in \cref{eq:many-cT-positive-ell} can be well approximated by a scalar times an identity matrix.
	Observe that  
	\begin{align}
	\label{eq:K-gamma-delta-alpha}
		& \Big\| \sum_{v \in \cM_l^{\infty}} \sum_{v_1 \to v_2 \to \cdots \to v_{\ell - 1} \to \mu \in \cL_{\ell - 1}^{\ast},  \, v \in \Nin{v_1}}\big( \E[ \cT_{t, \mu, v_{\ell - 1}}] \E[\cT_{t - 1, v_{\ell - 1}, v_{\ell - 2}}] \cdots \E[\cT_{t - \ell + 2, v_2,  v_1}] \E[\cT_{t - \ell + 1, v_1, v}] \nonumber \\
		& \qquad  - p_{t, v_{\ell - 1} \to \mu}\, p_{t - 1, v_{\ell - 2} \to v_{\ell - 1}} \cdots p_{t - \ell + 2, v_1 \to v_2} p_{t - \ell + 1, v \to v_1} I_d \big)\Big\|_{\op} \nonumber \\
		& \leq \sum_{v \in \cM_l^{\infty}} \sum_{v_1 \to v_2 \to \cdots \to v_{\ell - 1} \to \mu \in \cL_{\ell - 1}^{\ast},  \, v \in \Nin{v_1}}  \Big\|  \E[ \cT_{t, \mu, v_{\ell - 1}}] \E[\cT_{t - 1, v_{\ell - 1}, v_{\ell - 2}}] \cdots \E[\cT_{t - \ell + 2, v_2,  v_1}] \E[\cT_{t - \ell + 1, v_1, v}].  \\
		& \qquad - p_{t, v_{\ell - 1} \to \mu}\, p_{t - 1, v_{\ell - 2} \to v_{\ell - 1}} \cdots p_{t - \ell + 2, v_1 \to v_2} p_{t - \ell + 1, v \to v_1} I_d  \Big\|_{\op}. \nonumber   
	\end{align}
	For $i \in \{0, 1, \cdots, \ell - 1\}$,
	we express $\E[\cT_{t - i, \nu_{\ell - i}, \nu_{\ell - i - 1}}]$ (treating $\mu$ as $v_{\ell}$ and $v$ as $v_0$) as the sum of $p_{t - i, \nu_{\ell - i - 1} \to \nu_{\ell - i}} I_d$ and $\E[\cT_{t - i, \nu_{\ell - i}, \nu_{\ell - i - 1}}] - p_{t - i, \nu_{\ell - i - 1} \to \nu_{\ell - i}}I_d$. 
	We then apply the operator norm bound $\|\mathbb{E}[\mathcal{T}_{t - i, \nu_{\ell - i}, \nu_{\ell - i - 1}}] - p_{t - i, \nu_{\ell - i - 1} \to \nu_{\ell - i}} I_d \|_{\text{op}} \leq \delta + 2 \gamma_1$ from \cref{eq:cT-op-concentrate}, and obtain an upper bound for the final line of \cref{eq:K-gamma-delta-alpha}. 
	Specifically, let $D_i = \E[\cT_{t - i, \nu_{\ell - i}, \nu_{\ell - i - 1}}] - p_{t - i, \nu_{\ell - i - 1} \to \nu_{\ell - i}}I_d$ for $i \in \{0, 1, \cdots, \ell - 1\}$, then
	\begin{align}
	\label{eq:26}
		& \mbox{The last line of \cref{eq:K-gamma-delta-alpha}} \nonumber \\
		& = \sum_{v \in \cM_l^{\infty}} \sum_{v_1 \to v_2 \to \cdots \to v_{\ell - 1} \to \mu \in \cL_{\ell - 1}^{\ast},  \, v \in \Nin{v_1}} \Big\| (D_0 + p_{t, v_{\ell - 1} \to v_{\ell}} I_d) \cdots (D_{\ell - 1} + p_{t - \ell + 1, v_0 \to v_1} I_d) \nonumber  \\
		& \qquad - p_{t, v_{\ell - 1} \to v_{ \ell}} \cdots p_{t - \ell + 1, v_0 \to v_1} I_d \Big\|_{\op} \nonumber  \\
		& \leq  \sum_{v \in \cM_l^{\infty}} \sum_{v_1 \to v_2 \to \cdots \to v_{\ell - 1} \to \mu \in \cL_{\ell - 1}^{\ast},  \, v \in \Nin{v_1}} \prod_{i = 0}^{\ell - 1} p_{t - i, v_{\ell - i - 1} \to v_{\ell - i}}    \sum_{j = 1}^{\ell} \sum_{0 \leq h_1 < \cdots < h_j \leq \ell - 1} \prod_{b = 1}^j \frac{\delta + 2 \gamma_1}{p_{t - h_b, \nu_{\ell - h_b - 1} \to \nu_{\ell - h_b}}} \nonumber  \\
		& \leq  \sum_{v \in \cM_l^{\infty}} \sum_{v_1 \to v_2 \to \cdots \to v_{\ell - 1} \to \mu \in \cL_{\ell - 1}^{\ast},  \, v \in \Nin{v_1}} \prod_{i = 0}^{\ell - 1} p_{t - i, v_{\ell - i - 1} \to v_{\ell - i}} \sum_{j = 1}^{\ell} {\ell \choose j} \Big( \frac{\delta + 2 \gamma_1}{\alpha} \Big)^j. 
	\end{align}
	Let $a_{\ell - 1} = K \lceil (\ell - 1) / K \rceil$. 
	Note that when $\ell - 1 \geq K$, similar to the derivation of \cref{eq:sum-p-2}, it holds that
	\begin{align}
	\label{eq:35}
		& \sum_{v \in \cM_l^{\infty}} \sum_{v_1 \to v_2 \to \cdots \to v_{\ell - 1} \to \mu \in \cL_{\ell - 1}^{\ast},  \, v \in \Nin{v_1}} \prod_{i = 0}^{\ell - 1} p_{t - i, v_{\ell - i - 1} \to v_{\ell - i}} \nonumber \\
		& \leq K \sum_{v_1 \to v_2 \to \cdots \to v_{\ell - 1} \to \mu \in \cL_{\ell - 1}^{\ast}} \prod_{i = 0}^{\ell - 2} p_{t - i, v_{\ell - i - 1} \to v_{\ell - i}}\nonumber \\
		& \leq K \left[ \sup_{v_{K + 1} \in \cM_l \backslash \cM_l^{\infty}, \, s \geq K} \sum_{v_1 \to v_2 \cdots \to v_{K} \to v_{K + 1} \in \cL_{K}^{\ast}} p_{s, v_{K} \to v_{K + 1}} p_{s - 1, v_{K - 1} \to v_K}  \cdots p_{s - K + 1, v_1 \to v_{2}}  \right]^{\lceil (\ell - 1)  / K\rceil}\nonumber \\
		& \, \times \Bigg[ \sup_{v_{\ell - a_{\ell - 1}} \in \cM_l \backslash \cM_l^{\infty}, \, s \geq \ell - 1 - a_{\ell - 1}}  \sum_{v_1 \to \cdots \to v_{\ell - a_{\ell - 1}}  \in \cL_{\ell - 1 - a_{\ell - 1}}^{\ast}} p_{s, v_{\ell - 1 - a_{\ell - 1}} \to v_{\ell - a_{\ell - 1}} }   \cdots p_{s - \ell + a_{\ell - 1} + 2, v_1 \to v_{2}} \Bigg] \vee 1\nonumber \\
		& \leq K (1 - \alpha^K)^{\lceil (\ell - 1) / K \rceil }. 
	\end{align}
	Substituting \cref{eq:35} into \cref{eq:26}, we conclude that
	\begin{align}
	\label{eq:36}
	\begin{split}
		 \mbox{the last line of \cref{eq:K-gamma-delta-alpha}} \leq & K (1 - \alpha^K)^{\lceil (\ell - 1) / K \rceil } \left[  \Big( 1 + \frac{\delta + 2 \gamma_1}{\alpha} \Big)^{\ell} - 1 \right] \\
		 \leq & \frac{K}{(1 - \alpha^K)^{1 / K + 1}} \left[ \Big( 1 + \frac{\delta + 2 \gamma_1}{\alpha} \Big)^{\ell} (1 - \alpha^K)^{\ell / K} - (1 - \alpha^K)^{\ell / K} \right]. 
		\end{split}
	\end{align}	
	%
	%
	%
	%

	For the sake of simplicity, for $\ell \in \N_+$, we define the following $d \times d $ matrices: 
	\begin{align*}
		& Q_{\ell} = \sum_{v \in \cM_l^{\infty}} \sum_{v_1 \to v_2 \to \cdots \to v_{\ell - 1} \to \mu \in \cL_{\ell - 1}^{\ast},  \, v \in \Nin{v_1}}  \E[ \cT_{t, \mu, v_{\ell - 1}}] \E[\cT_{t - 1, v_{\ell - 1}, v_{\ell - 2}}] \cdots \E[\cT_{t - \ell + 2, v_2, v_1}] \E[\cT_{t - \ell + 1, v_1, v}], \\
		& Q_{\ell}^{\ast} = \sum_{v \in \cM_l^{\infty}} \sum_{v_1 \to v_2 \to \cdots \to v_{\ell - 1} \to \mu \in \cL_{\ell - 1}^{\ast},  \, v \in \Nin{v_1}} p_{t, v_{\ell - 1} \to \mu}\, p_{t - 1, v_{\ell - 2} \to v_{\ell - 1}}  \cdots p_{t - \ell + 2, v_1 \to v_2} \,p_{t - \ell + 1, v \to v_1} I_d. 
	\end{align*} 
	Under Assumption \ref{ass:concentration}, $Q_\ell^{\ast}$ can be regarded as a population version of $Q_{\ell}$. 
	Note that 
	\begin{align*}
		\sum_{\ell = 1}^t \sum_{x = \ell}^t \sum_{v \in \cM_l^{\infty}} \sum_{v_1 \to v_2 \to \cdots \to v_{x} \to \mu \in \cL_{x, \ell, v}} \E \big[ \cT_{t, \mu, v_x} \big] \E \big[ \cT_{t - 1, v_{x}, v_{x - 1}} \big]\cdots \E \big[ \cT_{t - x + 1, v_2, v_1} \big] = \sum_{\ell = 1}^t (t - \ell + 1) Q_{\ell}. 
	\end{align*}
	As $\mu \in \cM_l^{\rm c} \backslash \cM_l^{\infty}$, by definition there exists a path $v \to v_1 \to v_2 \to \cdots \to v_{\ell - 1} \to \mu$ of length $\ell \in [K]$ from some $v \in \cM_l^{\infty}$ to $\mu$, 
	where the intermediate nodes $v_1, v_2, \cdots, v_{\ell - 1} \notin \cM_l^{\infty}$. 
	Therefore, $\sum_{\ell = 1}^K \| Q_{\ell}^{\ast} \|_{\op} \geq \alpha^K$. 
	Together, these results yield a lower bound for the matrix largest eigenvalue (recall $\sigma_{\max}(A)$ refers to the largest eigenvalue of $A$):
	\begin{align*}
		& \sigma_{\max} \Big( \sum_{\ell = 1}^t (t - \ell + 1) Q_{\ell} \Big) \\
		& \overset{(i)}{\geq} \sum_{\ell = 1}^t (t - \ell + 1) \|Q_{\ell}^{\ast}\|_{\op} - \sum_{\ell = 1}^t (t - \ell + 1) \|Q_{\ell} - Q_{\ell}^{\ast}\|_{\op} \\
		& \overset{(ii)}{\geq} (t - K + 1) \alpha^K - t \sum_{\ell = 1}^{\infty} \frac{K}{(1 - \alpha^K)^{1 / K + 1}} \left[ \Big( 1 + \frac{\delta + 2 \gamma_1}{\alpha} \Big)^{\ell} (1 - \alpha^K)^{\ell / K} - (1 - \alpha^K)^{\ell / K} \right] \\
		& \overset{(iii)}{\geq} (t - K + 1) \alpha^K - \frac{tK (\delta + 2 \gamma_1)}{\alpha(1 - \alpha^K)^{1 / K + 1} (1 - c_1)^2} \\
		& = t\, \Big( \alpha^K - \frac{K (\delta + 2 \gamma_1)}{\alpha(1 - \alpha^K)^{1 / K + 1} (1 - c_1)^2}  \Big) - \alpha^K (K - 1) = t \, c_2 - \alpha^K (K - 1), 
	\end{align*}
	where $(i)$ uses the fact that $Q_\ell^{\ast}$ is a positive scalar times an identity matrix, 
    $(ii)$ is by \cref{eq:36}, and $(iii)$ is by Assumption \ref{ass:concentration}. 
	The proof is done.

	\paragraph{Case II: $\mu \in \cM_l^{\infty}$.}
	In this case,  the second sum in \cref{eq:23} is zero. On the other hand, when $\mu \in \cM_l^{\infty}$, it holds that 
	\begin{align}
	\label{eq:ell-part2}
	\begin{split}
		\sum_{x = 1}^t \sum_{v_1 \to v_2 \to \cdots \to v_{x} \to \mu \in \cL_{x, 0, \mu}} \E \big[ \cT_{t, \mu, v_x} \big] \E \big[ \cT_{t - 1, v_{x}, v_{x - 1}} \big]\cdots \E \big[ \cT_{t - x + 1, v_2, v_1} \big] = t I_d. 
	\end{split}	
	\end{align}
	Therefore,   
	\begin{align*}
		& \Big\| \,\sum_{x = 1}^{t} \sum_{v \in \cM_l^{\infty}} \sum_{\ell = 0}^{x} \sum_{v_1 \to v_2 \to \cdots \to v_{x} \to \mu \in \cL_{x, \ell, v}} \E\big[ \cT_{t,  \mu, v_{x}} \big] \E \big[ \cT_{t - 1, v_{x}, v_{x - 1}}\big] \cdots \E \big[\cT_{t - x + 1, v_2, v_1} \big]\,\Big\|_{\op} \\
		& \geq t > t \, c_2 - \alpha^K (K - 1).  
	\end{align*}
	The proof is done. Finally, the ``on the other hand'' part follows directly from the definition of $\cM_l^{\rm nc}$.


\subsection{Proof of \cref{lemma:A-indicator-upper-bound}}
\label{proof:lemma:A-indicator-upper-bound}

In this section we prove \cref{lemma:A-indicator-upper-bound}. 
According to  \cref{lemma:sumA2}, 
	\begin{align*}
	\begin{split}
		& \E \Big[ \sum_{i = 1}^{N_t} A_{t, i, \mu} \max_{v \in \cM} \big( 1 - 1 \{X_{s_{t, i}, v}^{\top} X_{s_{t, i}, v} / n_{s_{t, i}, v} \preceq \kappa I_d \} \big) \Big] \\
		&  = \E\Big[ \sum_{s = 1}^t \sum_{\nu \in \cM_l} \sum_{m \in \cM} \Omega_{t, s + 1, \mu, \nu} \cT_{s, \nu, m} \max_{v \in \cM} \big( 1 - 1 \{X_{s, v}^{\top} X_{s, v} / n_{s, v} \preceq \kappa I_d \} \big) \Big] \\
		& = \sum_{x = 1}^t \sum_{v_1  \to \cdots \to v_x \to \mu \in \cL_x} \hspace{-1em}\E\Big[  \cT_{t, \mu, v_x} \cT_{t - 1, v_x, v_{x - 1}} \cdots \cT_{t - x + 1, v_2, v_1} \max_{v \in \cM} \big( 1 - 1 \{X_{t - x + 1, v}^{\top} X_{t - x + 1, v} / n_{t - x + 1, v} \preceq \kappa I_d \} \big) \Big] \\
    & =  \sum_{x = 1}^t \sum_{v_1 \to \cdots \to v_x \to \mu \in \cL_x} \hspace{-1em} \E[  \cT_{t, \mu, v_x}] \E[ \cT_{t - 1, v_x, v_{x - 1}}] \cdots \E[ \cT_{t - x + 1, v_2, v_1}  \max_{v \in \cM} \big( 1 - 1 \{X_{t - x + 1, v}^{\top} X_{t - x + 1, v} / n_{t - x + 1, v} \preceq \kappa I_d \} \big) ]. 
	\end{split}
	\end{align*}
Recall $\cL_{x, \ell, v}$ and $\cL_x^{\ast}$ are defined in \cref{eq:Lxellv,eq:Lxstar}. 
With these definitions, we have
\begin{align}
\label{eq:24}
	\begin{split}
	& \E\Big[ \sum_{i = 1}^{N_t} A_{t, i, \mu} \max_{v \in \cM} \big( 1 - 1 \{X_{s_{t, i}, v}^{\top} X_{s_{t, i}, v} / n_{s_{t, i}, v} \preceq \kappa I_d \} \big) \Big] \\
	& =  \sum_{x = 1}^{t} \sum_{v \in \cM_l^{\infty}} \sum_{\ell = 0}^{x} \sum_{v_1  \to \cdots \to v_{x} \to \mu \in \cL_{x, \ell, v}} \E\big[ \cT_{t,  \mu, v_{x}} \big]  \cdots \E \big[\cT_{t - x + 1, v_2, v_1} \max_{v \in \cM} \big( 1 - 1 \{X_{t - x + 1, v}^{\top} X_{t - x + 1, v} / n_{t - x + 1, v} \preceq \kappa I_d \} \big) \big] \\
	&\qquad  + \sum_{x = 1}^{t } \sum_{v_1  \to \cdots \to v_{x} \to \mu \in \cL_{x}^{\ast}} \E\big[ \cT_{t,  \mu, v_{x}} \big]  \cdots \E \big[\cT_{t - x + 1, v_2, v_1} \max_{v \in \cM} \big( 1 - 1 \{X_{t - x + 1, v}^{\top} X_{t - x + 1, v} / n_{t - x + 1, v} \preceq \kappa I_d \} \big) \big].
\end{split}
\end{align}
We denote by $\mathtt{S_1}$ and $\mathtt{S_2}$ the first and second sums in \cref{eq:24}, respectively.
We separately upper bound these two terms below, starting from $\mathtt{S_2}$.

\paragraph{Upper bounding $\mathtt{S_2}$. } 

When $\mu \in \cM_l^{\infty}$ we have $\mathtt{S}_2 = 0_{d \times d}$. 
In what follows, we assume $\mu \in \cM_l \backslash \cM_l^{\infty}$. 
Note that for all $s \in \N_+$ and $(\nu_1, \nu_2) \in \cE$, by \cref{eq:cT-op-concentrate} we have $\|\E[\cT_{s, \nu_2, \nu_1} - p_{s, \nu_1 \to \nu_2} I_d]\|_{\op} \leq \delta + 2 \gamma_1$. 
In addition, 
\begin{align*}
	 \big\|\, \E\big[ \cT_{s, \nu_2, \nu_1} \max_{v \in \cM} \big( 1 - 1 \{X_{s, v}^{\top} X_{s, v} / n_{s, v} \preceq \kappa I_d \} \big) \big] \, \big\|_{\op} & \overset{(i)}{\leq} \E\big[ \|\cT_{s, \nu_2, \nu_1}\|_{\op} \max_{v \in \cM} \big( 1 - 1 \{X_{s, v}^{\top} X_{s, v} / n_{s, v} \preceq \kappa I_d \} \big) \big]  \\
	& \overset{(ii)}{\leq} 1 - \P\big( \cap_{v \in \cM} \{X_{s, v}^{\top} X_{s, v} / n_{s, v} \preceq \kappa I_d \}  \big) \\
	& \overset{(iii)}{\leq} K \gamma_2, 
\end{align*}
where $(i)$ is by Jensen's inequality, $(ii)$ is because $\|\cT_{s, \nu_2, \nu_1}\|_{\op} \leq 1$, and $(iii)$ is by Assumption \ref{ass:concentration}. 
As a consequence, 
\begin{align*}
 & \|\mathtt{S_2}\|_{\op}\\
 & \leq \sum_{x = 1}^{t } \sum_{v_1  \to \cdots \to v_{x} \to \mu \in \cL_{x}^{\ast}} \| \E\big[ \cT_{t,  \mu, v_{x}} \big] \|_{\op} \cdots \| \E \big[\cT_{t - x + 1, v_2, v_1} \max_{v \in \cM} \big( 1 - 1 \{X_{t - x + 1, v}^{\top} X_{t - x + 1, v} / n_{t - x + 1, v} \preceq \kappa I_d \} \big) \big] \|_{\op}\\
& \leq \sum_{x = 1}^t \sum_{v_1  \to \cdots \to v_{x} \to \mu \in \cL_{x }^{\ast}} K \gamma_2 \,  (p_{t, v_{x} \to \mu} + \delta + 2\gamma_1)(p_{t - 1, v_{x - 1} \to v_{x}} + \delta + 2\gamma_1) \cdots (p_{t - x + 2, v_2 \to v_3} + \delta + 2\gamma_1) \\
& \leq \frac{K \gamma_2}{\alpha} \sum_{x = 1}^t \sum_{v_1  \to \cdots \to v_{x} \to \mu \in \cL_{x }^{\ast}}  (p_{t, v_{x} \to \mu} + \delta + 2\gamma_1)(p_{t - 1, v_{x - 1} \to v_{x}} + \delta + 2\gamma_1) \cdots (p_{t - x + 1, v_1 \to v_2} + \delta + 2\gamma_1). 
\end{align*}
Similar to the derivation of \cref{eq:exponential-decay-bound}, the above quantity is no larger than 
\begin{align*}
	\frac{K \gamma_2}{\alpha}\sum_{x = 1}^t  \Big(1 + \frac{\delta + 2\gamma_1}{\alpha}\Big)^{K} \left[ \Big(1 + \frac{\delta + 2\gamma_1}{\alpha}\Big)^{K} - \alpha^{K} \right]^{\lceil x/ K\rceil}, 
\end{align*}
which is further no larger than $\frac{4K^2 \gamma_2}{\alpha^{K+1}}$ under Assumption \ref{ass:concentration}. 
In summary, for any $\mu \in \cM_{l}$:
\begin{align}
\label{eq:ttS2}
	\|\mathtt{S}_2\|_{\op} \leq \frac{4 K^2 \gamma_2}{\alpha^{K+1}}. 
\end{align}

\paragraph{Upper bounding $\mathtt{S_1}$. } 

We then upper bound $\mathtt{S_1}$. 
When $\mu \in \cM_l^{\rm nc}$, we immediately see that $\mathtt{S}_1 = 0_{d \times d}$. 
Below we consider $\mu \in \cM_l^{\rm c}$. 

Note that $\mathtt{S_1} = \mathtt{M_1} + \mathtt{M_2}$, where 
\begin{align*}
	& \mathtt{M_1} = \sum_{x = 1}^{t} \sum_{v \in \cM_l^{\infty}} \sum_{\ell = 1}^{x} \sum_{v_1  \to \cdots \to v_{x} \to \mu \in \cL_{x, \ell, v}} \E\big[ \cT_{t,  \mu, v_{x}} \big]  \cdots \E \big[\cT_{t - x + 1, v_2, v_1} \max_{v \in \cM} \big( 1 - 1 \{X_{t - x + 1, v}^{\top} X_{t - x + 1, v} / n_{t - x + 1, v} \preceq \kappa I_d \} \big) \big], \\
	& \mathtt{M_2} =   \sum_{x = 1}^t \sum_{v_1 \to v_2 \to \cdots \to v_{x} \to \mu \in \cL_{x, 0, \mu}} \E \big[ \cT_{t, \mu, v_x} \big] \cdots \E \big[ \cT_{t - x + 1, v_2, v_1}  \max_{v \in \cM} \big( 1 - 1 \{X_{t - x + 1, v}^{\top} X_{t - x + 1, v} / n_{t - x + 1, v} \preceq \kappa I_d \} \big) \big]. 
\end{align*}
When $\mu \in \cM_l^{\infty}$, $\mathtt{M_1} = 0_{d \times d}$. As for $\mathtt{M_2}$, observe that 
\begin{align*}
	& \mathtt{M}_2 = \sum_{x = 1}^t \E\Big[ \max_{v \in \cM} \big( 1 - 1 \{X_{t - x + 1, v}^{\top} X_{t - x + 1, v} / n_{t - x + 1, v} \preceq \kappa I_d \} \big) I_d  \Big], \\
	 \Longrightarrow \,\, & \|\mathtt{M}_2\|_{\op} \leq \sum_{x = 1}^t \Big( 1 - \P \big( \cap_{x \in \cM}  \{X_{t - x + 1, v}^{\top} X_{t - x + 1, v} / n_{t - x + 1, v} \preceq \kappa I_d \} \big) \Big) \leq t \, K \gamma_2. 
\end{align*}
On the other hand, when $\mu \in \cM_l^{\rm c} \backslash \cM_l^{\infty}$, then $\mathtt{M_2} = 0_{d \times d}$. 
We then upper bound $\|\mathtt{M_1}\|_{\op}$. 
Note that 
\begin{align*}
	& \mathtt{M_1} = \sum_{\ell = 1}^t \mathtt{M_{1, \ell}}, \\
	& \mathtt{M_{1, \ell}} = \sum_{x = \ell}^t \sum_{v \in \cM_l^{\infty}} \sum_{v_1  \to \cdots \to v_{x} \to \mu \in \cL_{x, \ell, v}} \hspace{-1.2em} \E \big[ \cT_{t, \mu, v_x} \big] \cdots \E \big[ \cT_{t - x + 1, v_2, v_1}  \max_{v \in \cM} \big( 1 - 1 \{X_{t - x + 1, v}^{\top} X_{t - x + 1, v} / n_{t - x + 1, v} \preceq \kappa I_d \} \big)  \big].
\end{align*}
In addition, for any $\ell \in [t - 1]$, 
\begin{align*}
	&\mathtt{M_{1, \ell}} = (t - \ell + 1) \sum_{v \in \cM_l^{\infty}} \sum_{v_1 \to v_2 \to \cdots \to v_{\ell - 1} \to \mu \in \cL_{\ell - 1}^{\ast},  \, v \in \Nin{v_1}} \E[ \cT_{t,\mu, v_{\ell - 1}}] \E[\cT_{t - 1, v_{\ell - 1},  v_{\ell - 2}  }] \\
		& \qquad  \cdots \E[\cT_{t - \ell + 2, v_2,  v_1}] \E[\cT_{t - \ell + 1, v_1,  v}] \E \Big[ \max_{\eta \in \cM} \big( 1 - 1 \{X_{t - x + 1, \eta}^{\top} X_{t - x + 1, \eta} / n_{t - x + 1, \eta} \preceq \kappa I_d \} \big) \Big]. 
\end{align*}
As a consequence, 
\begin{align}
\label{eq:Ml-upper}
	& \frac{\|\mathtt{M_{1, \ell}}\|_{\op}}{K \gamma_2 (t - \ell + 1)} \\
	& \leq  \Big\| \sum_{v \in \cM_l^{\infty}} \sum_{v_1 \to v_2 \to \cdots \to v_{\ell - 1} \to \mu \in \cL_{\ell - 1}^{\ast},  \, v \in \Nin{v_1}}  \E[ \cT_{t, \mu, v_{\ell - 1}}] \E[\cT_{t - 1, v_{\ell - 1}, v_{\ell - 2}}] \cdots \E[\cT_{t - \ell + 2, v_2, v_1}] \E[\cT_{t - \ell + 1, v_1, v}] \Big\|_{\op}. \nonumber 
\end{align}
Leveraging \cref{lemma:sum-decay},    we see that 
	\begin{align}
		\label{eq:K-gamma-delta-alpha2}
		& \Big\| \sum_{v \in \cM_l^{\infty}} \sum_{v_1 \to v_2 \to \cdots \to v_{\ell - 1} \to \mu \in \cL_{\ell - 1}^{\ast},  \, v \in \Nin{v_1}}  \E[ \cT_{t, \mu, v_{\ell - 1}}] \E[\cT_{t - 1, v_{\ell - 1}, v_{\ell - 2}}] \cdots \E[\cT_{t - \ell + 2, v_2, v_1}] \E[\cT_{t - \ell + 1, v_1, v}] \Big\|_{\op} \nonumber \\
		& \leq \sum_{v \in \cM_l^{\infty}} \sum_{v_1 \to v_2 \to \cdots \to v_{\ell - 1} \to \mu \in \cL_{\ell - 1}^{\ast},  \, v \in \Nin{v_1}} \big\| \E[ \cT_{t, \mu, v_{\ell - 1}}]  \big\|_{\op}  \cdots  \big\|\E[\cT_{t - \ell + 2,v_2, v_1}] \big\|_{\op} \big\| \E[\cT_{t - \ell + 1, v_1,  v}]  \big\|_{\op} \nonumber \\
		& \overset{(i)}{\leq} K \sum_{v_1 \to v_2 \to \cdots \to v_{\ell - 1} \to \mu \in \cL_{\ell - 1}^{\ast}} \big\| \E[\cT_{t, \mu, v_{\ell - 1}}] \big\|_{\op}\, \big\| \E[\cT_{t - 1, v_{\ell - 1}, v_{\ell - 2}}]  \big\|_{\op} \cdots \big\| \E[\cT_{t - \ell + 2, v_2, v_1 }] \big\|_{\op} 
		\nonumber  \\
		& \leq K \Big(1 + \frac{\delta + 2\gamma_1}{\alpha}\Big)^{K} \left[ \Big(1 + \frac{\delta + 2\gamma_1}{\alpha}\Big)^{K} - \alpha^{K} \right]^{\lceil  (\ell - 1) / K\rceil}, 
	\end{align}
	where $(i)$ is because for any $s \in \N_+$ and $(\mu_1, \mu_2) \in \cE$, we have $\|\E[\cT_{s, \mu_2, \mu_1}]\|_{\op} \leq 1$.
	Combining \cref{eq:Ml-upper,eq:K-gamma-delta-alpha2}, we conclude that for all $\ell \in [t - 1]$, 
	\begin{align}
	\label{eq:M1l-upper1}
		\|\mathtt{M_{1, \ell}}\|_{\op} \leq K^2 \gamma_2 (t - \ell + 1) \Big(1 + \frac{\delta + 2\gamma_1}{\alpha}\Big)^{K} \left[ \Big(1 + \frac{\delta + 2\gamma_1}{\alpha}\Big)^{K} - \alpha^{K} \right]^{\lceil  (\ell - 1) / K\rceil}. 
	\end{align}
	When $\ell = t$, note that 
	\begin{align*}
		& \mathtt{M_{1, t}} = \sum_{v \in \cM_l^{\infty}} \sum_{v_1 \to v_2 \to \cdots \to v_{t - 1} \to \mu \in \cL_{t - 1}^{\ast},  \, v \in \Nin{v_1}} \E[ \cT_{t,\mu, v_{t - 1}}] \E[\cT_{t - 1, v_{t- 1},  v_{t- 2}  }] \\
		& \qquad  \cdots \E[\cT_{2, v_2,  v_1}] \E \Big[ \cT_{1, v_1,  v} \max_{\eta \in \cM} \big( 1 - 1 \{X_{1, \eta}^{\top} X_{1, \eta} / n_{1, \eta} \preceq \kappa I_d \} \big) \Big].
	\end{align*}
	Using \cref{lemma:sum-decay} again, we see that 
	\begin{align}
	\label{eq:M1l-upper2}
	\begin{split}
		 \|\mathtt{M_{1, t}}\|_{\op} \leq & \sum_{v \in \cM_l^{\infty}} \sum_{v_1 \to v_2 \to \cdots \to v_{t - 1} \to \mu \in \cL_{t - 1}^{\ast},  \, v \in \Nin{v_1}} \big\| \E [ \cT_{t,\mu, v_{t - 1}}] \big\|_{\op} \big\| \E[\cT_{t - 1, v_{t- 1},  v_{t- 2}  }] \big\|_{\op} \\
		& \qquad  \cdots \times  \| \E [ \cT_{2, v_2,  v_1}] \big\|_{\op} \Big\| \E \Big[ \cT_{1, v_1,  v} \max_{\eta \in \cM} \big( 1 - 1 \{X_{1, \eta}^{\top} X_{1, \eta} / n_{1, \eta} \preceq \kappa I_d \} \big) \Big] \Big\|_{\op} \\
		\overset{(i)}{\leq} & \, K^2 \gamma_2 \sum_{v_1 \to v_2 \to \cdots \to v_{t - 1} \to \mu \in \cL_{t - 1}^{\ast}} \big\| \E[\cT_{t, \mu, v_{t - 1}}] \big\|_{\op}\, \big\| \E[\cT_{t - 1, v_{t - 1}, v_{t - 2}}]  \big\|_{\op} \cdots \big\| \E[\cT_{2, v_2, v_1 }] \big\|_{\op} \\
		\leq & \,  K^2 \gamma_2\Big(1 + \frac{\delta + 2\gamma_1}{\alpha}\Big)^{K} \left[ \Big(1 + \frac{\delta + 2\gamma_1}{\alpha}\Big)^{K} - \alpha^{K} \right]^{\lceil  (\ell - 1) / K\rceil}.
	\end{split} 
	\end{align}
	Putting together Assumption \ref{ass:concentration}, \cref{eq:M1l-upper1,eq:M1l-upper2}, we conclude that for any $\mu \in \cM_l$, 
	\begin{align}
	\label{eq:ttS1}
		\|\mathtt{M_1}\|_{\op} \leq \sum_{\ell = 1}^t \|\mathtt{M_{1, \ell}}\|_{\op} \leq \frac{4\,t\, { K^3 \gamma_2}}{\alpha^K}.
 	\end{align}
 	Finally, by \cref{eq:ttS2,eq:ttS1} we see that 
 	\begin{align*}
 		\Big\|\,\E\Big[ \sum_{i = 1}^{N_t} A_{t, i, \mu} \max_{v \in \cM} \big( 1 - 1 \{X_{s_{t, i}, v}^{\top} X_{s_{t, i}, v} / n_{s_{t, i}, v} \preceq \kappa I_d \} \big) \Big] \,\Big\|_{\op} \leq \frac{4\,t\, { K^3 \gamma_2}}{\alpha^K} + \frac{4 K^2 \gamma_2}{\alpha^{K+1}} + t K \gamma_2 \le \frac{9 \,t\, { K^3 \gamma_2}}{\alpha^{K+1}},
 	\end{align*}
    since $K \ge 1$ and $\alpha < 1$.
 	The proof is done. 

\subsection{Proof of Lemma \ref{prop:concentration_Omega}}
\label{appendix:prop:concentration_Omega}

If $\nu_1 \in \cM_u$, then one can verify that $\Omega_{t, s + 1, \nu_1, \nu_2}^{\ast} =  \Omega_{t, s + 1, \nu_1, \nu_2} = \mathbbm{1}_{\nu_1 = \nu_2} I_d$. 
In the rest parts of the proof we consider $\nu_1 \in \cM_l$. 
The proof proceeds by induction on $t-s$. 
If $t-s = 0$, then by definition $\Omega_{s, t + 1, \mu, \nu}^{\ast} = \Omega_{s, t + 1, \mu, \nu} = \mathbbm{1}_{\mu = \nu} I_d $. 
Now suppose \cref{eq:err_bound_Omega} holds for $t-s = k$. 
When $t-s = k + 1$, by definition
    \begin{align*}
        & \Omega_{t, s + 1, \nu_1, \nu_2}^* = \, \sum_{m \in \cM} \cT_{t, \nu_1, m}^* \Omega_{t-1, s + 1, m, \nu_2}^* = \sum_{m \in \Nin{\nu_1}}  \cT_{t, \nu_1, m}^* \Omega_{t-1, s + 1, m, \nu_2}^*, \\
        & \Omega_{t, s + 1, \nu_1, \nu_2} = \, \sum_{m \in \cM} \cT_{t, \nu_1, m} \Omega_{t-1, s + 1, m, \nu_2} = \sum_{m \in \Nin{\nu_1}}  \cT_{t, \nu_1, m} \Omega_{t-1, s + 1, m, \nu_2}.
    \end{align*}
    Therefore,
    \begin{align*}
        & \norm{\Omega_{t, s + 1, \nu_1, \nu_2}^* -  \Omega_{t, s + 1, \nu_1, \nu_2}}_{\op} = \, \Big\|{\sum_{m \in \Nin{\nu_1}} \cT_{t, \nu_1, m}^* \Omega_{t-1, s + 1, m, \nu_2}^* - \sum_{m \in \Nin{\nu_1}} \cT_{t, \nu_1, m} \Omega_{t-1, s + 1, m, \nu_2}}\Big\|_{\op} \\
        & \le \,  \sum_{m \in \Nin{\nu_1}} \left( \big\|{ \cT_{t, \nu_1, m}^* - \cT_{t, \nu_1, m}}\big\|_{\op} \big\|{ \Omega_{t-1, s + 1, m, \nu_2}^*}\big\|_{\op} + \big\|{\Omega_{t-1, s + 1, m, \nu_2}^* - \Omega_{t-1, s + 1, m, \nu_2}}\big\|_{\op} \big\|{ \cT_{t, \nu_1, m}}\big\|_{\op} \right).
    \end{align*}
    By \cref{ass:concentration}, we know that with probability at least $1 - \sum_{\mu \in \Nin{\nu_1}} \delta \geq 1 - N_{\max} \delta$,
    \begin{equation*}
        \big\|{\cT_{t, \nu_1, m}^* - \cT_{t, \nu_1, m}}\big\|_{\op} \le \, \gamma_1, \qquad \forall m \in \Nin{\nu_1}.
    \end{equation*}
    By the induction hypothesis, with probability at least $1 - \delta \sum_{j = 2}^{k + 1} N_{\max}^j$,
    \begin{equation*}
        \norm{\Omega_{t-1, s + 1, m, \nu_2}^* - \Omega_{t-1, s + 1, m, \nu_2}}_{\op} \le \, \gamma_1 \sum_{j = 1}^k N_{\max}^j, \qquad \forall m \in \Nin{\nu_1}.
    \end{equation*}
    Combining the above arguments, we see that with probability at least $1 - \delta \sum_{j = 1}^{k + 1} N_{\max}^j$, 
    we have
    \begin{align*}
        \norm{\Omega_{t, s + 1, \nu_1, \nu_2}^* -  \Omega_{t, s + 1,  \nu_1, \nu_2}}_{\op} \le \, & \sum_{m \in \Nin{\nu_1}} \left( \gamma_1 \norm{ \Omega_{t-1, s + 1, m, \nu_2}^*}_{\op} + \gamma_1 \sum_{j = 1}^k N_{\max}^j \norm{\cT_{t, \nu_1, m}}_{\op} \right) \\
        \stackrel{(i)}{\le} \, & \,\vert \Nin{\nu_1} \vert \left(  \gamma_1 + \gamma_1\sum_{j = 1}^k N_{\max}^j \right) \veps \le \gamma_1 \sum_{j = 1}^{k + 1} N_{\max}.
    \end{align*}
    In the above display, $(i)$ follows from the fact that
    \begin{equation*}
        \norm{\Omega_{t-1, s + 1, m, \nu_2}^*}_{\op} \le 1, \qquad \norm{\cT_{t, \nu_1, m}}_{\op} \le 1,
    \end{equation*}
    which can be verified by straightforward calculation.
    This completes the induction step and the proof of \cref{eq:err_bound_Omega}.

\section{Technical lemmas for the M-estimation setting}

\subsection{Consistency and asymptotic normality of $\hat\beta_{t, \mu}^{\ast}$}
\label{sec:consistency-normality}

In this section, we prove that $\hat\beta_{t, \mu}^{\ast}$ is consistent and asymptotically normal under Assumptions \ref{ass:iid_data}-\ref{ass:population_regularity}.
By definition, 
\begin{align*}
    \hat\beta_{t, \mu}^{\ast} = \arg\min_{\beta \in \mathcal{B}} \,  \frac{1}{\sum_{\nu \in \Nin{\mu}} n_{t, \nu \to \mu} } \sum_{\nu \in \Nin{\mu}} \sum_{i = 1}^{n_{t, \nu \to \mu}} L(\beta, \varphi(\beta_{\ast},  \varepsilon_{t, \nu \to \mu, i})).  
\end{align*}
We first prove consistency. 

\paragraph{Consistency.}
If $\mathcal{B}$ is compact, then 
under Assumption \ref{ass:glivenko_cantelli}, as the sample sizes tend to infinity we have 
\begin{align*}
    \sup_{\beta_1, \beta_2 \in \mathcal{B}}\Bigg| \frac{1}{\sum_{\nu \in \Nin{\mu}} n_{t, \nu \to \mu} } \sum_{\nu \in \Nin{\mu}} \sum_{i = 1}^{n_{t, \nu \to \mu}} L(\beta_1, \varphi(\beta_{2},  \varepsilon_{t, \nu \to \mu, i})) - \E[L(\beta_1, \varphi(\beta_2, \varepsilon))]\, \Bigg| \overset{P}{\to} 0.   
\end{align*}
Therefore, 
\begin{align*}
     &\E[L(\hat\beta_{t, \mu}^{\ast}, \varphi(\beta_{\ast}, \varepsilon))]  \leq  \frac{1}{\sum_{\nu \in \Nin{\mu}} n_{t, \nu \to \mu} } \sum_{\nu \in \Nin{\mu}} \sum_{i = 1}^{n_{t, \nu \to \mu}} L(\hat\beta_{t, \mu}^{\ast}, \varphi(\beta_{\ast},  \varepsilon_{t, \nu \to \mu, i})) \\
    & + \sup_{\beta_1, \beta_2 \in \mathcal{B}}\Bigg| \frac{1}{\sum_{\nu \in \Nin{\mu}} n_{t, \nu \to \mu} } \sum_{\nu \in \Nin{\mu}} \sum_{i = 1}^{n_{t, \nu \to \mu}} L(\beta_1, \varphi(\beta_{2},  \varepsilon_{t, \nu \to \mu, i})) - \E[L(\beta_1, \varphi(\beta_2, \varepsilon))]\, \Bigg| \\
    & \leq \frac{1}{\sum_{\nu \in \Nin{\mu}} n_{t, \nu \to \mu} } \sum_{\nu \in \Nin{\mu}} \sum_{i = 1}^{n_{t, \nu \to \mu}} L(\beta_{\ast}, \varphi(\beta_{\ast},  \varepsilon_{t, \nu \to \mu, i})) \\
    & + \sup_{\beta_1, \beta_2 \in \mathcal{B}}\Bigg| \frac{1}{\sum_{\nu \in \Nin{\mu}} n_{t, \nu \to \mu} } \sum_{\nu \in \Nin{\mu}} \sum_{i = 1}^{n_{t, \nu \to \mu}} L(\beta_1, \varphi(\beta_{2},  \varepsilon_{t, \nu \to \mu, i})) - \E[L(\beta_1, \varphi(\beta_2, \varepsilon))]\, \Bigg| \\
    & \leq \E[L(\beta_{\ast}, \varphi(\beta_{\ast}, \varepsilon)]+ 2\sup_{\beta_1, \beta_2 \in \mathcal{B}}\Bigg| \frac{1}{\sum_{\nu \in \Nin{\mu}} n_{t, \nu \to \mu} } \sum_{\nu \in \Nin{\mu}} \sum_{i = 1}^{n_{t, \nu \to \mu}} L(\beta_1, \varphi(\beta_{2},  \varepsilon_{t, \nu \to \mu, i})) - \E[L(\beta_1, \varphi(\beta_2, \varepsilon))]\, \Bigg| \\
    & = \E[L(\beta_{\ast}, \varphi(\beta_{\ast}, \varepsilon)] + o_P(1). 
\end{align*}
By Assumption \ref{ass:well_specified}, $\beta_{\ast}$ is the unique minimizer of $\beta \mapsto \E[L(\beta, \varphi(\beta_{\ast}, \varepsilon))]$, and the mapping $\beta \mapsto \E[L(\beta, \varphi(\beta_{\ast}, \varepsilon))]$ is continuous. 
Therefore, we conclude that $\hat\beta_{t, \mu}^{\ast} \overset{P}{\to} \beta_{\ast}$. 

We next prove consistency when $L$ is convex and $\mathcal{B}$ is not necessarily compact. 
Since $\beta_{\ast} \in \mathrm{int}\, \mathcal{B}$, there exists $r > 0$, such that $\{x: \|x - \beta_{\ast}\|_2 \leq r\} \subseteq \mathcal{B}$.
Under Assumption \ref{ass:glivenko_cantelli}, we have 
\begin{align*}
    \sup_{\beta_1, \beta_2 \in \{x: \|x - \beta_{\ast}\|_2 \leq r\}}\Bigg| \frac{1}{\sum_{\nu \in \Nin{\mu}} n_{t, \nu \to \mu} } \sum_{\nu \in \Nin{\mu}} \sum_{i = 1}^{n_{t, \nu \to \mu}} L(\beta_1, \varphi(\beta_{2},  \varepsilon_{t, \nu \to \mu, i})) - \E[L(\beta_1, \varphi(\beta_2, \varepsilon))]\, \Bigg| \overset{P}{\to} 0.   
\end{align*}
We next show that for all $k \in \mathbb{N}_+$ satisfying $1 / k < r$, $\mathbb{P}(\|\hat\beta_{t, \mu}^{\ast} - \beta_{\ast}\|_2 > 1 / k) \to 0$ as the sample sizes tend to infinity. 
observe that if $\|\hat\beta_{t, \mu}^{\ast} - \beta_{\ast}\|_2 > 1 / k$, then by convexity, there exists $\tilde \beta$ that satisfies $\|\tilde \beta - \beta_{\ast}\|_2 = 1 / k$, such that 
\begin{align*}
    \frac{1}{\sum_{\nu \in \Nin{\mu}} n_{t, \nu \to \mu} } \sum_{\nu \in \Nin{\mu}} \sum_{i = 1}^{n_{t, \nu \to \mu}} L(\tilde \beta, \varphi(\beta_{\ast},  \varepsilon_{t, \nu \to \mu, i})) \leq \frac{1}{\sum_{\nu \in \Nin{\mu}} n_{t, \nu \to \mu} } \sum_{\nu \in \Nin{\mu}} \sum_{i = 1}^{n_{t, \nu \to \mu}} L(\beta_{\ast}, \varphi(\beta_{\ast},  \varepsilon_{t, \nu \to \mu, i})). 
\end{align*}
By triangle inequality, 
\begin{align*}
    & \E[L(\tilde \beta, \varphi(\beta_{\ast}, \varepsilon))] - \E[L(\beta_{\ast}, \varphi(\beta_{\ast}, \varepsilon))] \\
    & \leq 2\sup_{\beta_1, \beta_2 \in \{x: \|x - \beta_{\ast}\|_2 \leq r\}}\Bigg| \frac{1}{\sum_{\nu \in \Nin{\mu}} n_{t, \nu \to \mu} } \sum_{\nu \in \Nin{\mu}} \sum_{i = 1}^{n_{t, \nu \to \mu}} L(\beta_1, \varphi(\beta_{2},  \varepsilon_{t, \nu \to \mu, i})) - \E[L(\beta_1, \varphi(\beta_2, \varepsilon))]\, \Bigg| = o_P(1). 
\end{align*}
By Assumption \ref{ass:well_specified} we see that  $\inf_{\|\beta - \beta_{\ast}\|_2 = 1/ k} \E[L(\beta, \varphi(\beta_{\ast}, \varepsilon))] > \E[L(\beta_{\ast}, \varphi(\beta_{\ast}, \varepsilon))] $. 
As a consequence, the above event occurs with vanishingly small probability, thus $\mathbb{P}(\|\hat\beta_{t, \mu}^{\ast} - \beta_{\ast}\|_2 > 1 / k) \to 0$ as the sample sizes tend to infinity. 
This holds for all $k \in \mathbb{N}_+$, hence completing the proof of consistency.  

\paragraph{Asymptotic normality. }We then prove asymptotic normality. 
By consistency, we know that $\mathbb{P}(\|\hat\beta_{t, \mu}^{\ast} - \beta_{\ast}\|_2 < r) \to 1$ as the sample sizes tend to infinity. 
Therefore, with probability $1 - o(1)$ the following first-order condition holds: 
\begin{align*}
    \sum_{\nu \in \Nin{\mu}} \sum_{i = 1}^{n_{t, \nu \to \mu}} \nabla_{\beta} L(\beta, \varphi(\beta_{\ast},  \varepsilon_{t, \nu \to \mu, i})) \Big|_{\beta = \hat\beta_{t, \mu}^{\ast}} = 0. 
\end{align*}
Rewriting the expression in integral form, we obtain
\begin{align*}
    & \frac{1}{\sum_{\nu \in \Nin{\mu}} n_{t, \nu \to \mu} }\sum_{\nu \in \Nin{\mu}} \sum_{i = 1}^{n_{t, \nu \to \mu}} \int_0^1 \nabla_{\beta}^2 L(\beta,\, \varphi(\beta_{\ast},  \varepsilon_{t, \nu \to \mu, i})) \Big|_{\beta = (1 - t)\beta_{\ast} + t \hat\beta_{t, \mu}^{\ast}} ( \hat\beta_{t, \mu}^{\ast} - \beta_{\ast}) \mathrm{d} t \\
    & = - \frac{1}{\sum_{\nu \in \Nin{\mu}} n_{t, \nu \to \mu} } \sum_{\nu \in \Nin{\mu}} \sum_{i = 1}^{n_{t, \nu \to \mu}} \nabla_{\beta} L(\beta, \varphi(\beta_{\ast},  \varepsilon_{t, \nu \to \mu, i})) \Big|_{\beta = \beta_{\ast}}. 
\end{align*}
By Assumption \ref{ass:glivenko_cantelli},
\begin{align*}
    \frac{1}{\sum_{\nu \in \Nin{\mu}} n_{t, \nu \to \mu} }\sum_{\nu \in \Nin{\mu}} \sum_{i = 1}^{n_{t, \nu \to \mu}} \int_0^1 \nabla_{\beta}^2 L(\beta,\, \varphi(\beta_{\ast},  \varepsilon_{t, \nu \to \mu, i})) \Big|_{\beta = (1 - t)\beta_{\ast} + t \hat\beta_{t, \mu}^{\ast}}  \mathrm{d} t \overset{P}{\to} \E\big[\nabla_{\beta}^2 L(\beta, \varphi(\beta_{\ast}, \varepsilon)) \big|_{\beta = \beta_{\ast}} \big]. 
\end{align*}
By the central limit theorem, 
\begin{align*}
    & - \frac{1}{\sqrt{\sum_{\nu \in \Nin{\mu}} n_{t, \nu \to \mu} }} \sum_{\nu \in \Nin{\mu}} \sum_{i = 1}^{n_{t, \nu \to \mu}} \nabla_{\beta} L(\beta, \varphi(\beta_{\ast},  \varepsilon_{t, \nu \to \mu, i})) \Big|_{\beta = \beta_{\ast}} \\
    & \overset{d}{\to} \mathsf{N} \Big( 0, \E[\nabla_{\beta}L(\beta, \varphi(\beta_\ast, \varepsilon))  \nabla_{\beta}L(\beta, \varphi(\beta_\ast, \varepsilon))^{\top} ] \big|_{\beta = \beta_{\ast}}  \Big). 
\end{align*}
The proof is done.

\subsection{Proof of \cref{thm:m_est_collapse}}\label{sec:proof_lem_m_collapse}
\noindent \textbf{Proof of $(a)$.} Using the recursive formula of \cref{thm:m_est_variance}, we obtain that
    \begin{align*}
        \Sigma_T = \, \sum_{t=1}^{T} b_{T, t} (J_{T, t+1} \otimes I_p) V_t (J_{T, t+1} \otimes I_p) ^\top + b_{T, 0} (J_{T, 1} \otimes I_p) \Sigma_0 (J_{T, 1} \otimes I_p)^\top.
    \end{align*}
    Therefore, for any $\mu \in \cM_l$, 
    \begin{align*}
        \Sigma_{T, \mu, \mu} = \, & \sum_{t=1}^{T} b_{T, t} \sum_{\nu_1, \nu_2 \in \cM_l} J_{T, t+1, \mu, \nu_1} J_{T, t+1, \mu, \nu_2} V_{t, \nu_1, \nu_2} + b_{T, 0} \sum_{\nu_1, \nu_2 \in \cM} J_{T, 1, \mu, \nu_1} J_{T, 1, \mu, \nu_2} \Sigma_{0, \nu_1, \nu_2},
    \end{align*}
    where we use the fact that $V_{t, \nu_1, \nu_2} = 0$ if $\nu_1 \in \cM_u$ or $\nu_2 \in \cM_u$.
    Since by definition, $V_{t, \nu_1, \nu_2} \succeq 0$ for all $\nu_1, \nu_2 \in \cM$, and $\Sigma_{0, \nu_1, \nu_2} = 0$ if $\nu_1 \neq \nu_2$, we deduce that
    \begin{align*}
        \Sigma_{T, \mu, \mu} \succeq \, & \sum_{t=1}^{T} b_{T, t} \sum_{\nu \in \cM_l} J_{T, t+1, \mu, \nu}^2  V_{t, \nu, \nu} + b_{T, 0} \sum_{\nu \in \cM} J_{T, 1, \mu, \nu}^2 \Sigma_{0, \nu, \nu} \\
        = \, & \left( \sum_{t=1}^{T} b_{T, t} \sum_{\nu \in \cM_l} \frac{1}{\sum_{m \in \Nin{\nu}} \bar{p}_{t, m \to \nu}} J_{T, t+1, \mu, \nu}^2 + b_{T, 0} \sum_{\nu \in \cM} \frac{1}{\bar{p}_{0, \nu}} J_{T, 1, \mu, \nu}^2 \right) V_* \\
        \stackrel{(i)}{\succeq} \, & \left( \sum_{t=1}^{T} b_{T, t} \frac{(\sum_{\nu \in \cM_l} J_{T, t+1, \mu, \nu})^2}{\sum_{\nu \in \cM_l} \sum_{m \in \Nin{\nu}} \bar{p}_{t, m \to \nu}} + b_{T, 0} \frac{(\sum_{\nu \in \cM} J_{T, 1, \mu, \nu})^2}{\sum_{\nu \in \cM} \bar{p}_{0, \nu}} \right) V_* \\
        \stackrel{(ii)}{=} \, & \left( \sum_{t=1}^{T} \frac{ b_{T, t} (\sum_{\nu \in \cM_l} J_{T, t+1, \mu, \nu})^2}{\sum_{\nu \in \cM_l} \sum_{m \in \Nin{\nu}} \bar{p}_{t, m \to \nu}} + \frac{b_{T, 0}}{\sum_{\nu \in \cM} \bar{p}_{0, \nu}} \right) V_*,
    \end{align*}
    where in $(i)$ we use Cauchy-Schwarz inequality, and $(ii)$ follows from the fact that $\sum_{\nu \in \cM} J_{T, 1, \mu, \nu} = 1$. This completes the proof of part (a).

    \vspace{0.25em}
    
    \noindent \textbf{Proof of $(b)$.} Following our calculations in the proof of part $(a)$, we have for all $\mu \in \cM_l$:
    \begin{align*}
        \sum_{\nu_1, \nu_2 \in \cM_l} J_{T, t+1, \mu, \nu_1} J_{T, t+1, \mu, \nu_2} V_{t, \nu_1, \nu_2} \preceq \,  \frac{(\sum_{\nu \in \cM_l} J_{T, t+1, \mu, \nu})^2}{\inf_{\nu \in \cM_l} \sum_{m \in \Nin{\nu}} \bar{p}_{t, m \to \nu}} V_*.
    \end{align*}
    Similarly,
    \begin{align*}
        \sum_{\nu_1, \nu_2 \in \cM} J_{T, 1, \mu, \nu_1} J_{T, 1, \mu, \nu_2} \Sigma_{0, \nu_1, \nu_2} = \, \sum_{\nu \in \cM} J_{T, 1, \mu, \nu}^2 \Sigma_{0, \nu, \nu} \preceq \, \frac{(\sum_{\nu \in \cM} J_{T, 1, \mu, \nu})^2}{\inf_{\nu \in \cM} \bar{p}_{0, \nu}} V_* = \frac{1}{\inf_{\nu \in \cM} \bar{p}_{0, \nu}} V_*.
    \end{align*}
    It finally follows that
    \begin{align*}
        \Sigma_{T, \mu, \mu} \preceq \, & \left( \sum_{t=1}^{T} \frac{b_{T,t} (\sum_{\nu \in \cM_l} J_{T, t+1, \mu, \nu})^2}{\inf_{\nu \in \cM_l} \sum_{m \in \Nin{\nu}} \bar{p}_{t, m \to \nu}} + \frac{b_{T, 0}}{\inf_{\nu \in \cM} \bar{p}_{0, \nu}} \right) V_*,
    \end{align*}
    completing the proof of part (b).

\subsection{Proof of \cref{prop:verify_glm_assumption}}\label{sec:proof_glm_assumption}

We verify Assumptions~\ref{ass:well_specified}-\ref{ass:population_regularity} one by one below.
    
    \paragraph{Verifying \cref{ass:well_specified}.}
    By definition of $L$, we know that the twice differentiability of the mapping $\beta \mapsto L (\beta, z)$ follows from the twice differentiability of $A$. Further, we have
    $$
    \E [L (\beta_1, \varphi(\beta_2, \veps))] = \, \E \left[ - A' (\beta_2^\top x) \beta_1^\top x + A (\beta_1^\top x) \right],
    $$
    whose continuity follows from the continuous differentiability of $A'$ and dominated convergence theorem. Next, we have for any fixed $\beta$:
    \begin{align*}
        \E \left[ \nabla_{\beta_1} L ( \beta, \varphi(\beta^\top x, \veps) ) \right] = \, \E \left[ - \varphi (\beta^\top x, \veps)  x + A' (\beta^\top x) x \right] = \E \left[ x ( A' (\beta^\top x) - \E [ y \vert x ] ) \right] = 0,
    \end{align*}
    since $\E [y \vert x] = A' (\beta^\top x)$. Combining this with the convexity of $L$ verifies that $\beta$ is the unique minimizer of $\beta' \mapsto \E [ L ( \beta', \varphi(\beta, \varepsilon)) ]$. Further, the population Hessian is positive definite since $A$ is strictly convex. Finally, the integrability conditions in \cref{ass:well_specified} can be directly verified.

    \paragraph{Verifying \cref{ass:L-convex}.} This is easily verified because $\mathcal{B} = \R^d$, and the convexity of $L$ follows directly from the convexity of $A$.

    \paragraph{Verifying point 1 of \cref{ass:glivenko_cantelli}.} We first show that the function class
    \begin{equation}\label{eq:loss_fct_class}
        \left\{ (x, \veps) \mapsto - \varphi (\beta_2^\top x, \veps) \cdot \beta_1^\top x + A (\beta_1^\top x) \big\vert \, (\beta_1, \beta_2) \in \Omega^2 \right\}
    \end{equation}
    is $\mu_{\veps}$-Glivenko-Cantelli. Since $\E [ \| x \|_2 ] < \infty$, the function class $\{x \mapsto \beta_1^\top x \vert \beta_1 \in \Omega\}$ is $\mu_{\veps}$-Glivenko-Cantelli. By our assumption, $\{x \mapsto - \varphi (\beta_2^\top x, \veps) \vert \beta_2 \in \Omega\}$ and $\{x \mapsto A (\beta_1^\top x) \vert \beta_1 \in \Omega\}$ are also $\mu_{\veps}$-Glivenko-Cantelli. 
Let $r_\Omega > 0$ be such that $\Omega \subset \mathsf{B} (0, r_\Omega)$, then
    \begin{align*}
        \E \Big[ \sup_{(\beta_1, \beta_2) \in \Omega^2} \left\vert L \left( \beta_1, \varphi (\beta_2, \veps) \right) \right\vert \Big] \le \, r_\Omega \E \Big[ \sup_{\beta \in \Omega} \| x \|_2 \vert \varphi (\beta^\top x, \veps) \vert \Big] + \E \Big[ \sup_{\beta \in \Omega} \vert A (\beta^\top x) \vert \Big] < \infty,
    \end{align*}
    meaning that the function class~\eqref{eq:loss_fct_class} has an $L^1$-integrable envelope. Applying \cref{lem:preserve_gc} shows that the function class \eqref{eq:loss_fct_class} is $\mu_{\veps}$-Glivenko-Cantelli. 
    
    Finally, note that $\nabla_{\beta_1}^2 L \left( \beta_1, \varphi (\beta_2^{\top} x, \veps) \right) = A'' (\beta_1^\top x) x x^\top$, which is assumed to be $\mu_{\veps}$-Glivenko-Cantelli over compact sets. This completes the verification of point 1 of \cref{ass:glivenko_cantelli}.

    \paragraph{Verifying point 2 of \cref{ass:glivenko_cantelli}.} Note that by definition:
    \begin{equation*}
        \nabla_{\beta_1} L \left( \beta_1, \varphi (\beta_2^{\top} x, \veps) \right) = \, x \left( A' (\beta_1^\top x) - \varphi (\beta_2^\top x, \veps) \right).
    \end{equation*}
    Since $\E [A' (\beta_1^\top x)^2 x x^\top] < \infty$, we know that the function class $\{x \mapsto A' (\beta_1^\top x) x \vert \beta_2 \in \Omega\}$ is $\mu_{\veps}$-Donsker. Further, we have assumed that $\{(x, \varepsilon) \mapsto \varphi (\beta_2^\top x, \veps) x \vert \beta_2 \in \Omega\}$ is $\mu_{\veps}$-Donsker, applying \cref{lem:preserve_donsker} completes the verification of point 2 of \cref{ass:glivenko_cantelli}.

    \paragraph{Verifying \cref{ass:population_regularity}.} By direct calculation, we know that
    $$
    \nabla_{\beta_1}^2 L ( \beta_1,\, \varphi (\beta_2, \, \varepsilon )) = \, A'' (\beta_1^\top x) x x^\top.
    $$
    Therefore, \cref{ass:population_regularity} directly follows from point (iv) in the statement of \cref{prop:verify_glm_assumption} and the dominated convergence theorem.

\subsection{Auxiliary lemmas from empirical process theory}
We need the following auxiliary results for the preservation of Glivenko-Cantelli and Donsker classes under composition of functions.
\begin{lem}[Preservation of Glivenko-Cantelli classes, Theorem 3 in \cite{van2000preservation}]\label{lem:preserve_gc}
    Let $\mathcal{F}_1, \dots, \mathcal{F}_k$ be $P$-Glivenko-Cantelli classes of measurable functions, and let $\varphi: \mathbb{R}^k \to \mathbb{R}$ be a continuous function. Define the class $\mathcal{H}$ as the image of these classes under $\varphi$:
    \begin{equation*}
        \mathcal{H} = \left\{ x \mapsto \varphi(f_1(x), \dots, f_k(x)) : f_j \in \mathcal{F}_j \text{ for } j=1,\dots,k \right\}.
    \end{equation*}
    Then, the class $\mathcal{H}$ is $P$-Glivenko-Cantelli, provided that $\mathcal{H}$ has an integrable envelope $H$ (i.e., $H \in L_1(P)$ such that $|h(x)| \le H(x)$ for all $h \in \mathcal{H}$).
\end{lem}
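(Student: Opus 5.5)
We would prove the preservation result by reducing to the finite-dimensional case and exploiting continuity of $\varphi$ via a truncation argument. First, by replacing each $\mathcal{F}_j$ with its image under a truncation at level $M$, i.e.\ $f \mapsto (f \wedge M) \vee (-M)$, we obtain classes $\mathcal{F}_j^M$ that remain $P$-Glivenko--Cantelli: the truncation map is $1$-Lipschitz, so the empirical deviations can only shrink up to an envelope correction. Before truncating, note that each $\mathcal{F}_j$ being Glivenko--Cantelli forces $\sup_{f \in \mathcal{F}_j} |Pf| < \infty$ and an integrable envelope for $\mathcal{F}_j - Pf$. On the bounded set $[-M, M]^k$, $\varphi$ is uniformly continuous, so for each $\eta>0$ there is a finite grid approximation, or alternatively a polynomial approximation via Stone--Weierstrass, $\varphi_\eta$, with $\sup_{[-M,M]^k} |\varphi - \varphi_\eta| \le \eta$.

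Second, we handle the approximating class $\{\varphi_\eta(f_1^M,\dots,f_k^M)\}$. For a polynomial $\varphi_\eta$, the class is a finite linear combination of products of uniformly bounded Glivenko--Cantelli classes. Sums are Glivenko--Cantelli by the triangle inequality. For products of uniformly bounded classes, we use $|\mathbb{P}_n(fg) - P(fg)|$ together with the standard result that uniformly bounded Glivenko--Cantelli classes are closed under products; this is the one place where entropy-type arguments, or the characterization via random $L_1$ covering numbers, would be invoked. Hence the approximating class is Glivenko--Cantelli.

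Third, we control the errors. We decompose $\sup_h |(\mathbb{P}_n - P)h|$ into three pieces: the approximation error $2\eta$; the approximating-class deviation, which tends to $0$ almost surely; and the truncation error. The truncation error is bounded by $(\mathbb{P}_n + P)\big[ H \,1\{\max_j F_j > M\} \big]$, where $F_j$ are envelopes. Controlling this last term is the main obstacle, because it requires that large values of some $f_j$ occur only where the envelope $H$ captures them. We would use the integrable envelope $H$ of $\mathcal{H}$ together with the strong law of large numbers to show $\mathbb{P}_n[H \,1\{\cdot\}] \to P[H\,1\{\cdot\}]$. That limit vanishes as $M \to \infty$ by dominated convergence, provided the envelopes $F_j$ are finite almost surely. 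If the $F_j$ are not themselves integrable, we would first pass to the centered classes and use the fact that Glivenko--Cantelli forces $P^*\|f - Pf\|_{\mathcal F} < \infty$.

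Finally, letting $n\to\infty$, then $\eta \to 0$, then $M\to\infty$ gives the almost-sure uniform convergence. Since this is exactly Theorem 3 of \cite{van2000preservation}, in the paper we would simply cite it rather than reproduce the measurability details.
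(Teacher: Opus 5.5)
\textbf{There is a genuine gap.} The paper gives no proof of this lemma; it is imported from \cite{van2000preservation}, so your closing decision to cite it is exactly what the paper does. The sketch you give before that, however, rests on a false claim. You assert that a $P$-Glivenko--Cantelli class automatically satisfies $\sup_{f\in\mathcal{F}_j}|Pf|<\infty$. It does not; what the Glivenko--Cantelli property gives is integrability of $\sup_{f\in\mathcal{F}_j}|f-Pf|$. For instance, under $P=\mathrm{Unif}[0,1]$ the shift class $\mathcal{F}_1=\{x\mapsto x+c:\ c\in\mathbb{R}\}$ is Glivenko--Cantelli, because $(\mathbb{P}_n-P)(x+c)=(\mathbb{P}_n-P)x$ does not depend on $c$.

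Without that bound you have no integrable envelopes $F_j$, and the centering fallback does not help. Writing $\varphi(f_1,\dots,f_k)=\varphi\bigl((f_1-Pf_1)+Pf_1,\dots\bigr)$ reintroduces the unbounded shifts $Pf_j$, so uniform continuity of $\varphi$ on a fixed cube controls nothing. No argument can close this step, because the lemma as stated is false. Take the shift class above and $\varphi(u)=\sin(e^u)$. The class $\mathcal{H}=\{x\mapsto\sin(\lambda e^{x}):\lambda>0\}$ has envelope $1$. Almost surely $e^{X_1},\dots,e^{X_n}$ are linearly independent over $\mathbb{Q}$, so Kronecker's theorem yields arbitrarily large $\lambda$ with $\sin(\lambda e^{X_i})\ge 1-\eta$ for all $i\le n$. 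Meanwhile $Ph_\lambda=\int_1^e u^{-1}\sin(\lambda u)\,du=O(1/\lambda)$. Hence $\sup_{h\in\mathcal{H}}|(\mathbb{P}_n-P)h|$ stays near $1$ for every $n$.

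The standard correct version (e.g.\ in Kosorok's 2008 monograph) assumes $\sup_{f\in\mathcal{F}_j}|Pf|<\infty$ for each $j$. The paper's applications (classes indexed by a compact $\Omega$ in the GLM examples) satisfy this, so nothing downstream breaks. Both the statement and your sketch, however, need this hypothesis added.

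Granting that hypothesis, two further steps in your sketch need repair.
\begin{itemize}
\item The truncation $\tau_M(u)=(u\wedge M)\vee(-M)$ being $1$-Lipschitz does not make $|(\mathbb{P}_n-P)(\tau_M\circ f)|$ smaller than $|(\mathbb{P}_n-P)f|$. The Glivenko--Cantelli property of the truncated classes needs the same random $L_1(\mathbb{P}_n)$-entropy characterization you invoke for products.
\item On $\{\max_j F_j>M\}$, the difference $|h-\varphi(f_1^M,\dots,f_k^M)|$ is bounded by $H+\sup_{[-M,M]^k}|\varphi|$, not by $H$, and the second term need not be negligible. Use $\varphi(f_1^M,\dots,f_k^M)\,\mathbf{1}\{\max_j F_j\le M\}$ as the approximant instead; this class is still Glivenko--Cantelli, since it is multiplied by a fixed bounded function. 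With that change your bound $(\mathbb{P}_n+P)[H\,\mathbf{1}\{\max_jF_j>M\}]$ is correct, and the order of limits you describe goes through.
\end{itemize}
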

\begin{lem}[Preservation of Donsker classes, Theorem 2.10.6 in \cite{van1996weak}]\label{lem:preserve_donsker}
    Let $\mathcal{F}_1, \dots, \mathcal{F}_k$ be $P$-Donsker classes of measurable functions. Let $\varphi: \mathbb{R}^k \to \mathbb{R}$ be an $L$-Lipschitz function for some constant $L > 0$. Define the class $\mathcal{H}$ as:
    \[
        \mathcal{H} = \left\{ x \mapsto \varphi(f_1(x), \dots, f_k(x)) : f_j \in \mathcal{F}_j \right\}.
    \]
    If there exists at least one tuple $(f_1, \dots, f_k)$ such that $\varphi(f_1, \dots, f_k)$ is square-integrable (i.e., belongs to $L_2(P)$), then the class $\mathcal{H}$ is a $P$-Donsker class.
\end{lem}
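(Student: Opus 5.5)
The proof goes through the characterization of Donsker classes by total boundedness plus asymptotic equicontinuity, and transfers both properties from $\mathcal{F}_1, \dots, \mathcal{F}_k$ to $\mathcal{H}$ using the Lipschitz property of $\varphi$. Write $\rho_P(h,h') = \big(\mathrm{Var}_P(h-h')\big)^{1/2}$. It suffices to show two things: (i) $\mathcal{H}$ is totally bounded for $\rho_P$, or for the $L_2(P)$ metric after adding the $L_2$ function $\varphi(f_1^0,\dots,f_k^0)$; and (ii) $\lim_{\delta \downarrow 0}\limsup_{n}\P^{\ast}\big(\sup_{\rho_P(h,h')<\delta}|\mathbb{G}_n(h-h')|>\epsilon\big)=0$ for every $\epsilon>0$.

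I read the Lipschitz hypothesis in the Euclidean sense, $|\varphi(a)-\varphi(b)|^2 \le L^2\sum_{j}(a_j-b_j)^2$. Then for $h=\varphi(f_1,\dots,f_k)$ and $h'=\varphi(f'_1,\dots,f'_k)$ we have the pointwise bound $|h-h'|^2 \le L^2\sum_j |f_j-f'_j|^2$. Since each Donsker class $\mathcal{F}_j$ is totally bounded in $L_2(P)$ modulo constants, this bound gives (i) at once. It also shows that every $h\in\mathcal{H}$ lies in $L_2(P)$, because the difference between $h$ and the given square-integrable element is controlled in $L_2$.

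For (ii), I would first symmetrize. By the symmetrization inequalities for empirical processes, equicontinuity of $\mathbb{G}_n$ on $\mathcal{H}$ follows from equicontinuity of a multiplier process $n^{-1/2}\sum_i \xi_i h(X_i)$, where I take the $\xi_i$ i.i.d.\ standard Gaussian. Gaussian multipliers are convenient because, conditionally on $X_1,\dots,X_n$, this is a centered Gaussian process indexed by $\mathcal{H}$ with intrinsic metric $d_n(h,h')^2 = n^{-1}\sum_i (h-h')^2(X_i)$. By the pointwise Lipschitz bound, $d_n(h,h')^2 \le L^2\sum_j d_{n}(f_j,f'_j)^2$. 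Now compare with the Gaussian process $\sum_j L\, n^{-1/2}\sum_i \xi_{ij} f_j(X_i)$, indexed by tuples and built from independent multipliers $\xi_{ij}$. The Sudakov--Fernique (Slepian) comparison inequality bounds the conditional expected supremum of the increments of the $\mathcal{H}$-process by that of the sum of the $k$ multiplier processes of the $\mathcal{F}_j$. By the multiplier CLT, each $\mathcal{F}_j$-process is asymptotically equicontinuous because $\mathcal{F}_j$ is Donsker.

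There is one bookkeeping issue: $\rho_P(h,h')<\delta$ does not force $\rho_P(f_j,f'_j)$ small for the representing tuples. I would handle this by working on the product index set $\mathcal{F}_1\times\cdots\times\mathcal{F}_k$ with the product $\rho_P$-metric. Equicontinuity of the composed process on the product set, combined with total boundedness and the continuity of the limiting Gaussian process, then yields the needed statement on $\mathcal{H}$ through a standard finite-approximation argument. That argument takes $\delta$-nets of the product, bounds the within-cell oscillation by the comparison above, and handles the finitely many cell representatives by the finite-dimensional CLT.

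I expect the main obstacle to be the step from the conditional Gaussian comparison to an unconditional in-probability equicontinuity statement. The comparison only controls conditional expectations of suprema over sets defined through the random metric $d_n$, while the target statement is in terms of $\rho_P$. My first approach would be to show $\sup_{f,f'\in\mathcal{F}_j}|d_n(f,f')^2-\rho_P(f,f')^2|\to 0$ in outer probability. This follows because $\{(f-f')^2\}$ is Glivenko--Cantelli when $\mathcal{F}_j$ is Donsker with an $L_2$ envelope; for a general Donsker class I would need to truncate the envelope first. I would then invoke the conditional multiplier central limit theorem, with the usual measurability caveats handled by outer expectations.
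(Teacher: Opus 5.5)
The paper gives no proof of this lemma; it only cites Theorem 2.10.6 of \cite{van1996weak}. Your route is the standard Gaussian-comparison argument for that result: Gaussian multipliers, a conditional Sudakov--Fernique comparison with $\sum_j L\,n^{-1/2}\sum_i\xi_{ij}f_j(X_i)$, and a uniform law of large numbers to pass from $d_n$ to $\rho_P$. There is, however, a genuine gap in how you handle the means $Pf_j$.

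The pointwise bound $|h-h'|^2\le L^2\sum_j|f_j-f_j'|^2$ involves the \emph{uncentered} differences. So it transfers total boundedness of $\mathcal{F}_j$ in $L_2(P)$, not total boundedness in $\rho_P$. ``Modulo constants'' does not pass through a nonlinear $\varphi$, because $\varphi(f+c)-\varphi(f)$ is not constant. The same problem breaks step (ii). On a cell of a product $\rho_P$-net, the means $Pf_j$ and $Pf_j'$ can still differ by arbitrary amounts. The comparison process then carries the term $(Pf_j-Pf_j')\,n^{-1/2}\sum_i\xi_{ij}$, which is not small. In fact the uncentered multiplier process on $\mathcal{F}_j$ is not even bounded unless $\sup_{f\in\mathcal{F}_j}|Pf|<\infty$; take $\mathcal{F}_j$ to be all constants. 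So both your claim that (i) follows ``at once'' and your claim that each $\mathcal{F}_j$-multiplier process is asymptotically equicontinuous ``because $\mathcal{F}_j$ is Donsker'' silently assume $\sup_{f\in\mathcal{F}_j}|Pf|<\infty$.

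No argument can close this gap, because the lemma as printed is false without that hypothesis. The hypothesis is part of Theorem 2.10.6 in \cite{van1996weak} but was dropped here. Here is a counterexample.
\begin{itemize}
\item Let $\mathcal{F}_1$ be all constant functions. This class is Donsker, since $\mathbb{G}_n\equiv 0$ on it.
\item Let $\mathcal{F}_2=\{g\}$, where $Y=g(X)\ge 0$ and $\P(Y>t)=(t\log t)^{-2}$ for large $t$. Then $\E Y^2<\infty$ but $\sum_k\sqrt{\P(Y>k+1)}=\infty$.
\item Let $c\mapsto\psi_c$ run through the countably many piecewise-linear $\psi$ with $\psi(0)=0$, slope $\pm1$ on each $[k,k+1)$ for $k<R$, and slope $0$ beyond $R$. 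Connect consecutive ones by linear interpolation, so that $\|\psi_c-\psi_{c'}\|_\infty\le|c-c'|$.
\end{itemize}
Then $\varphi(c,b)=\psi_c(b)$ is $\sqrt2$-Lipschitz, and $|\varphi(0,g)|\le Y\in L_2(P)$. But $\mathcal{H}$ contains every $\sum_{k<R}s_k\min\{(Y-k)_+,1\}$ with $s\in\{\pm1\}^R$. Hence the Brownian bridge over $\mathcal{H}$ satisfies $\E\sup\ge\sqrt{2/\pi}\sum_{k<R}\mathrm{sd}\big(\min\{(Y-k)_+,1\}\big)$, which tends to $\infty$ as $R\to\infty$. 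So $\mathcal{H}$ is not pre-Gaussian, and therefore not Donsker.

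To fix the proof, add the hypothesis $\sup_{f\in\mathcal{F}_j}|Pf|<\infty$. It holds in every use of the lemma in the paper, where the classes have square-integrable envelopes. Then work with the product $L_2(P)$ metric instead of the product $\rho_P$ metric, and your outline goes through with one more step. Sudakov--Fernique controls the oscillation on each cell separately. To control the maximum over the $N(\delta)$ cells simultaneously, you need an extra ingredient, for example Borell--TIS concentration combined with Sudakov's minoration $\delta\sqrt{\log N(\delta)}\to 0$ for the (now pre-Gaussian in $L_2(P)$) product class.
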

The next lemma establishes the $P$-Donsker property for a wide range of function classes:
\begin{lem}\label{lem:indicator_donsker}
    Let $K \subseteq \R^p$ be a compact set, $\veps \sim \Unif [0, 1]$ be a random variable that is independent of the random vector $x \in \R^p$, and $g : \R^p \to \R$ be a measurable function. Let $\{ f_k \}_{k=1}^{\infty}$ be a sequence of non-decreasing functions taking values in $[0, 1]$. We assume the following statements hold: 
    \begin{itemize}
        \item [(i)] There exist a non-decreasing function $L(t) > 0$ and a constant $c_1 < 2$, such that for all $k \ge 1$, $\vert f_k' (t) \vert \le k^{-c_1} L (t)$ for all $t \in \R$;
        \item [(ii)] There exist a constant $c_2 > 2$ and a non-negative non-decreasing function $f$, such that $f_k (t) \le k^{-c_2} f(t)$ for all $k \ge 1$;
        \item [(iii)] For any $R > 0$:
        \begin{equation*}
            \E \left[ g(x)^2 \| x \|_2 L \big( R \| x \|_2 \big) \right] < \infty, \quad \E \left[ g(x)^2 f \big( R \| x \|_2 \big) \right] < \infty.
        \end{equation*}
    \end{itemize}
     Then, the function class
    \begin{equation*}
        \mathcal{H}_K = \, \Big\{ (x, \veps) \mapsto \sum_{k=1}^{\infty} g(x) \bone_{\veps \le f_k (\beta^\top x)}:  \beta \in K \Big\}
    \end{equation*}
    is $P$-Donsker, where $P$ represents the joint distribution of $(x, \varepsilon)$. 
\end{lem}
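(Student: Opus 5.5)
We plan to verify the bracketing entropy condition: by Theorem 2.5.6 in \cite{van1996weak}, $\mathcal{H}_K$ is $P$-Donsker if $\int_0^1 \sqrt{\log N_{[\,]}(\epsilon, \mathcal{H}_K, L_2(P))}\,\d\epsilon < \infty$. Since $K$ is compact and finite-dimensional, it suffices to show that a ball of radius $\eta$ in $\beta$-space maps into a single bracket of $L_2(P)$-size at most $C\eta^{\theta}$, for some fixed $\theta > 0$ and $C$ independent of $\eta$. Covering $K$ by $O(\eta^{-p})$ such balls then gives $N_{[\,]}(\epsilon) \lesssim \epsilon^{-p/\theta}$, so $\sqrt{\log N_{[\,]}}$ grows only like $\sqrt{\log(1/\epsilon)}$ and the entropy integral is finite.

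\textbf{Brackets.} Write $h_\beta = g(x)\,N_\beta(x,\veps)$ with $N_\beta = \sum_k \bone_{\veps \le f_k(\beta^\top x)}$, and split $g = g^+ - g^-$, treating each part separately. Fix $\beta_0 \in K$ and let $\|\beta - \beta_0\|_2 \le \eta$. Then $a_- := \beta_0^\top x - \eta\|x\|_2 \le \beta^\top x \le a_+ := \beta_0^\top x + \eta\|x\|_2$. Because each $f_k$ is non-decreasing, $N_{a_-} \le N_\beta \le N_{a_+}$, where $N_a := \sum_k \bone_{\veps \le f_k(a)}$. Hence $[\,g^+N_{a_-},\, g^+N_{a_+}\,]$ is a bracket for the $g^+$ part, and similarly for $g^-$. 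The bracket width is $|g|\,\Delta$ with $\Delta = \sum_k \bone_{f_k(a_-) < \veps \le f_k(a_+)}$, and we must bound $\E[g(x)^2\,\E[\Delta^2 \mid x]]$. Set $R = \sup_{\beta\in K}\|\beta\|_2 + 1$, so that for $\eta \le 1$ we have $|a_\pm| \le R\|x\|_2$.

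\textbf{Width bound via a truncation level $M$.} Write $\Delta = \Delta_{\le M} + \Delta_{>M}$ according to $k \le M$ or $k > M$.
\begin{itemize}
\item For the head, Cauchy--Schwarz gives $\Delta_{\le M}^2 \le M\Delta_{\le M}$. By (i) and the mean value theorem, $\E[\Delta_{\le M} \mid x] \le \sum_{k \le M} 2\eta\|x\|_2\, k^{-c_1}L(R\|x\|_2)$. Since $c_1 < 2$, we get $\sum_{k\le M}k^{-c_1} \lesssim M^{(1-c_1)_+}\log M$, so $\E[\Delta_{\le M}^2 \mid x] \lesssim \eta\, M^{2}\,\|x\|_2 L(R\|x\|_2)$.
\item For the tail, $\Delta_{>M} \le \sum_{k>M}\bone_{\veps\le f_k(R\|x\|_2)}$. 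Its conditional second moment is $\sum_{j,k>M}\min(f_j,f_k)$, and by symmetry this is at most $2\sum_{k>M} k f_k$. Using (ii) and $c_2 > 2$, this is $\lesssim M^{2-c_2} f(R\|x\|_2)$.
\end{itemize}
Taking expectations with (iii) gives
\begin{equation*}
\E[g^2\Delta^2] \lesssim \eta M^2 + M^{2-c_2}.
\end{equation*}
Choosing $M = \eta^{-1/c_2}$ balances the two terms and yields a bracket $L_2$-size of order $\eta^{\theta}$ with $\theta = (c_2-2)/(2c_2) > 0$.

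\textbf{Envelope and main obstacle.} The same tail computation with $M = 0$ shows $|h_\beta| \le |g|\,N_{R\|x\|_2}$, and $\E[g^2 N^2_{R\|x\|_2}] \lesssim \E[g^2 f(R\|x\|_2)] < \infty$ by (iii). So the envelope is square integrable and the brackets are genuine $L_2(P)$ brackets. I expect the main delicate step to be the tail second moment. The indicators share the same $\veps$ and are therefore strongly dependent, which is why we use the exact formula $\E[\bone_{\veps\le a}\bone_{\veps\le b}] = \min(a,b)$ rather than an independence bound. This is exactly where $c_2 > 2$ is needed. The head term needs the derivative bound (i) and the moment $\E[g^2\|x\|_2 L]$, and the constraint $c_1 < 2$ only affects the polynomial power of $M$ in that term.
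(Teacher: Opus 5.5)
Your plan is essentially the paper's proof: a net of $K$ in $\beta$-space, monotone brackets built from $\beta_0^\top x \pm \eta\|x\|_2$, and a truncation at level $M$. Assumption (i) controls the head, and the identity $\E[\bone_{\veps\le a}\bone_{\veps\le b}]=\min(a,b)$ together with (ii) controls the tail; the paper's tail bound $(2\sum_{k>B}k\,\E[g^2 f_k])^{1/2}$ is exactly yours. You then balance $M$ against $\eta$, as the paper does. Your minor deviations are all fine:
\begin{itemize}
\item you combine the $g^\pm$ brackets directly (width $|g|\Delta$) instead of reducing to $g\ge 0$ via Lemma~\ref{lem:preserve_donsker};
\item you bound the head with $\Delta_{\le M}^2\le M\Delta_{\le M}$ rather than Minkowski's inequality over $k\le B$;
\item you verify the square-integrable envelope explicitly.
\end{itemize}

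One step needs correcting. The bound $\E[\Delta_{\le M}^2\mid x]\lesssim \eta M^2\|x\|_2L(R\|x\|_2)$ fails when $c_1<0$. The lemma allows this case, and the paper uses it ($c_1=-1/2$ in the Poisson example). There $M\sum_{k\le M}k^{-c_1}\asymp M^{2-c_1}$, so the head is of order $\eta M^{q}$ with $q=2+(-c_1)_+$; your own intermediate estimate $M^{(1-c_1)_+}\log M$ already shows this. The argument survives: take $M=\eta^{-1/(q+c_2-2)}$. The bracket size is then of order $\eta^{(c_2-2)/(2(q+c_2-2))}$, still a positive power of $\eta$ because $c_2>2$, so the entropy integral is still finite.
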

\begin{proof}
    We invoke the Donsker theorem (cf. Section 2.5.2 of \cite{van1996weak}) to show that $\mathcal{H}_K$ is a $P$-Donsker class. 
    Without loss of generality, we may assume that $g$ is nonnegative. 
    Otherwise, we decompose $g$ as $g = g_+ - g_-$, and note that the function classes obtained by replacing $g$ with either $g_+$ or $g_-$ in the definition of $\mathcal{H}_K$ are both $P$-Donsker. It then follows from \cref{lem:preserve_donsker} that $\mathcal{H}_K$ is also $P$-Donsker.
    To prove the lemma, it suffices to show
    \begin{equation}\label{eq:finite_bracket_integral}
        \int_0^1 \sqrt{\log N_{[ \, ]}\left(\gamma, \mathcal{H}_K \cup\{0\}, L^2 (P)\right)} \, \d \gamma < \, \infty,
    \end{equation}
    where $N_{[ \, ]}$ denotes the bracketing number. 
    We prove \cref{eq:finite_bracket_integral} by upper bounding the bracketing number $N_{[ \, ]} ( \gamma, \mathcal{H}_K \cup\{0\}, L^2 (P) )$ for all $\gamma > 0$. 
    For $\delta > 0$ (to be determined later), let $N (\delta, K) = \{ \beta_1, \cdots, \beta_M \}$ be a $\delta$-net of $K$ (under $\ell^2$-norm). Then, we know that
    \begin{equation*}
        M = \left\vert N (\delta, K) \right\vert \le \left( \frac{C_K}{\delta} \right)^p,
    \end{equation*}
    where $C_K > 0$ is a constant that depends only on $K$. 
    We next upper bound the bracketing number by constructing brackets.
    For $j = 1, \cdots, M$, define
    \begin{equation*}
        l_j (x, \veps) = \, \sum_{k=1}^{\infty} g(x) \bone_{\veps \le f_k (\beta_j^\top x - \delta \| x \|_2)}, \quad u_j (x, \veps) = \, \sum_{k=1}^{\infty} g(x) \bone_{\veps \le f_k (\beta_j^\top x + \delta \| x \|_2)}.
    \end{equation*}
    We next show that $\mathcal{H}_K \subset \cup_{j=1}^{M} [l_j, u_j]$. To this end, note that for any $\beta \in K$, there exists $j \in [M]$ such that $\| \beta - \beta_j \|_2 \le \delta$, which implies
    \begin{equation*}
        \beta_j^\top x - \delta \| x \|_2 \le \beta^\top x \le \beta_j^\top x + \delta \| x \|_2.
    \end{equation*}
    The claim then follows from our assumption that $f_k$ is non-decreasing and $g$ is non-negative. 
    Next, we upper bound $\| u_j - l_j \|_{L^2(P)}$. 
    Observe that for any fixed positive integer $B$, we have
    \begin{align*}
        & \| u_j - l_j \|_{L^2(P)} \\
        &\le \,  \sum_{k=1}^B \norm{ g(x) \big( \bone_{\veps \le f_k (\beta_j^\top x + \delta \| x \|_2) } - \bone_{\veps \le f_k (\beta_j^\top x - \delta \| x \|_2)} \big)}_{L^2 (P)} + \norm{\sum_{k=B+1}^{\infty} g(x) \bone_{\veps \le f_k (\beta_j^\top x + \delta \| x \|_2)} }_{L^2(P)} \\
        &\le \,  \sum_{k=1}^B \E \left[ g(x)^2 \big(f_k (\beta_j^\top x + \delta \| x \|_2) - f_k (\beta_j^\top x - \delta \| x \|_2 \big) \right]^{1/2} + \bigg( 2 \sum_{k=B+1}^{\infty} k \E \left[ g(x)^2 f_k (\beta_j^\top x + \delta \| x \|_2) \right] \bigg)^{1/2} \\
        &\le \,  \sum_{k=1}^B \sqrt{ 2k^{-c_1} \delta} \sqrt{\E \big[ g(x)^2 \| x \|_2 L (\beta_j^\top x + \delta \| x \|_2) \big]} + \bigg( 2 \sum_{k=B+1}^{\infty} k^{-c_2 + 1} \E \big[ g(x)^2 f (\beta_j^\top x + \delta \| x \|_2) \big] \bigg)^{1/2} \\
        &\leq \,  C B^{1 - c_1/2} \sqrt{\delta \E \big[ g(x)^2 \| x \|_2 L \big( (r_K + \delta) \| x \|_2 \big) \big]} + C B^{1 - c_2/2} \sqrt{\E \left[ g(x)^2 f \big( (r_K + \delta) \| x \|_2 \big) \right]},
    \end{align*}
    where $C > 0$ is a constant that depends only on $c_1$ and $c_2$, and $r_K > 0$ is a constant that depends only on $K$. 
    Note that setting $B \asymp \gamma^{-c_1'}$ and $\delta \asymp \gamma^{c_2'}$ for some $c_1', c_2' > 0$ gives $\| u_j - l_j \|_{L^2(P)} \leq \gamma$. We finally deduce that
    \begin{equation*}
        \log N_{[ \, ]}\left(\gamma, \mathcal{H}_K \cup\{0\}, L^2 (P)\right) \lesssim p \log \frac{1}{\delta} \lesssim p \log \frac{1}{\gamma}.
    \end{equation*}
    This establishes \cref{eq:finite_bracket_integral}, and hence completes the proof of the lemma.
\end{proof}

\end{document}